\documentclass{article}

\usepackage[dblblindworkshop, final]{neurips_2026}

\usepackage[utf8]{inputenc}
\usepackage[T1]{fontenc}
\usepackage{hyperref}
\usepackage{url}
\usepackage{amsfonts}
\usepackage{nicefrac}
\usepackage{microtype}
\usepackage[dvipsnames]{xcolor}

\usepackage{algorithm}
\usepackage{algorithmic}

\usepackage{booktabs}
\newsavebox{\digitstab}
\newlength{\digitsgap}

\usepackage{amsmath}
\usepackage{amssymb}
\usepackage{mathtools}
\usepackage{amsthm}
\usepackage{wrapfig}
\usepackage{graphicx}
\usepackage{subcaption}
\usepackage{dsfont}
\usepackage{enumitem}
\usepackage{newtxmath}
\usepackage[capitalize,noabbrev]{cleveref}
\usepackage{esvect}
\usepackage{overarrows}
\usepackage{svg}
\usepackage{tabularx}

\usepackage{booktabs}
\usepackage{multirow}
\usepackage{graphicx}
\usepackage{bm}

\usepackage{paracol}
\usepackage{caption}

\crefname{section}{\S\@gobble}{\S\@gobble}
\crefname{subsection}{\S\@gobble}{\S\@gobble}
\crefname{definition}{Def.}{Defs.}
\crefname{Proposition}{Prop.}{Props.}

\AddToHook{env/assumption/begin}{\crefalias{theorem}{assumption}}
\AddToHook{env/definition/begin}{\crefalias{theorem}{definition}}
\AddToHook{env/proposition/begin}{\crefalias{theorem}{proposition}}
\AddToHook{env/corollary/begin}{\crefalias{theorem}{corollary}}

\DeclareMathOperator{\vol}{vol}

\title{Numerical Integration on Bounded Domains \\ via Learned Reflected Diffusions}
\workshoptitle{AI for Stochastic Dynamics}

\author{%
  Kirill Korolev \\
  HSE University\\
  \texttt{kkorolev@hse.ru} \\
  \And
  Artur Goldman \\
  ETH Zürich, ETH AI Center \\
  \texttt{artur.goldman@ai.ethz.ch} \\
  \AND
  Timofei Gritsaev \\
  Constructor University \\
  \texttt{tgritsaev@gmail.com} \\
  \And
  Tigran Ramazyan \\
  HSE University \\
  \texttt{tramazyan@hse.ru} \\
  \And
  Nikita Morozov \\
  HSE University \\
  \texttt{nvmorozov@hse.ru} \\
}

\theoremstyle{plain}
\newtheorem{theorem}{Theorem}[subsection]
\newtheorem{proposition}[theorem]{Proposition}
\newtheorem{lemma}[theorem]{Lemma}

\theoremstyle{definition}

\newtheorem{assumption}[theorem]{Assumption}
\theoremstyle{remark}
\newtheorem{remark}[theorem]{Remark}

\newcommand{\R}{\mathbb{R}}
\newcommand{\N}{\mathbb{N}}

\newcommand{\ind}{\mathds{1}}

\newcommand{\KL}[2]{\mathsf{KL}(#1||#2)}
\newcommand{\dd}{\mathrm{d}}
\newcommand{\rt}{\bar t}

\newcommand{\cC}{\mathcal{C}}
\newcommand{\cD}{\mathcal{D}}

\newcommand{\cF}{\mathcal{F}}

\newcommand{\cL}{\mathcal{L}}

\newcommand{\cN}{\mathcal{N}}

\newcommand{\E}{\mathbb{E}}
\newcommand{\Q}{\mathbb{Q}}
\newcommand{\EE}[1]{\mathbb{E}\left[#1\right]}

\newcommand{\EEs}[2]{\mathbb{E}_{#1}\left[#2\right]}

\renewcommand{\P}{\mathbb{P}}

\newcommand{\PPs}[2]{\mathbb{P}_{#1}\left(#2\right)}

\newcommand{\Var}{\mathrm{Var}}

\DeclareMathOperator{\Law}{Law}
\DeclareMathOperator{\LT}{LT}

\newcommand{\Pf}{\overrightarrow{\P}}
\newcommand{\Qf}{\overrightarrow{\Q}}
\newcommand{\Pb}{\overleftarrow{\P}}
\newcommand{\Qb}{\overleftarrow{\Q}}

\definecolor{mydarkblue}{rgb}{0,0.08,0.45}
\definecolor{dmorange500}{HTML}{FF5F19}
\definecolor{dmblue300}{HTML}{2267EB}
\definecolor{dmred300}{HTML}{FF617B}
\hypersetup{
	colorlinks,
	citecolor=dmblue300,
	linkcolor=dmred300,
	urlcolor=mydarkblue
}

\begin{document}

\maketitle

\begin{abstract}
We study high-dimensional integration on bounded convex domains. Given a non-negative integrand $r$, the goal is to estimate $Z=\int_{\mathcal D}r(x)\,\mathrm dx$ on $\mathcal D\subset\mathbb R^d$. Standard Quasi-Monte Carlo does not adapt its evaluation points to the integrand, while effective importance sampling requires a proposal that captures its concentration. Learned diffusion samplers can supply such a proposal, but existing methods are formulated on $\mathbb R^d$ and do not respect the domain constraint $x\in\mathcal D$. We instead build the proposal from reflected dynamics that stay in $\mathcal D$ by construction. A reflected diffusion progressively transforms the target distribution toward the uniform distribution on $\mathcal D$, and a neural score model is trained to reverse this process. A change-of-measure formula for reflected diffusions yields a training objective without additional boundary terms and an unbiased importance-sampling estimator of $Z$. For non-smooth integrands, where reparameterization gradients may be undefined or uninformative, we derive a REINFORCE estimator of the training gradient using Gaussian proposal densities before reflection, without evaluating reflected transition densities or differentiating $r$. We evaluate on numerous synthetic targets and real-world problems, demonstrating strong performance of the proposed method, as well as favorable scaling with the dimension of the problem and the number of points used for the Monte Carlo estimate.
\end{abstract}

\section{Introduction}
\label{sec:introduction}

We consider the problem of estimating high-dimensional integrals over bounded constrained
domains. Let $\cD\subset\R^d$ be a bounded convex domain, and let $r:\cD \to \R_{\ge 0}$ be
measurable and satisfy $\int_{\cD}r(x)\,\dd x<\infty$. Our goal is to estimate the integral
\begin{equation}
    \label{eq:problem}
    Z = \int_{\cD} r(x)\,\dd x .
\end{equation}
Integrals of this form are a computational bottleneck across the sciences. In statistical
physics and chemistry, free energies and thermodynamic observables are logarithms of such
constants, and learned samplers have been used to estimate them for many-body systems and
lattice field theories \citep{noe2019boltzmann,wirnsberger2020targeted,nicoli2021estimation}.
In computer graphics, every pixel of a rendered image is an integral over light paths,
parameterized so that the integration domain is the unit cube \citep{muller2019neural}. In
computational finance, the price of a path-dependent derivative is an expectation over a
discretized asset path, which the standard inverse-CDF construction turns into an integral
over $[0,1]^d$ with $d$ the number of monitoring dates \citep{glasserman2004monte,dick2013high}.
In Bayesian statistics, the model evidence is an integral of an unnormalized posterior
\citep{kloek1978bayesian, kass1995bayes}, confined to a bounded region whenever the parameters are subject to simplex constraints or under compactly supported priors.

These tasks give rise to integration problems on bounded domains, where the location of integrand mass is unknown in advance. This motivates samplers that respect the domain constraints and adapt to where the mass lies, including when it concentrates near faces or corners, or when the integrand vanishes on parts of the domain.

Existing estimators differ in how much they adapt to $r(x)$. Standard Quasi-Monte Carlo~\citep[QMC]{niederreiter1992random,dick2013high} spreads a deterministic point set over the cube without reference to the integrand. Bayesian quadrature~\citep{o1991bayes,rasmussen2003bayesian} adapts by placing a Gaussian process prior on $r(x)$, but at a cost cubic in the number of evaluations \citep{briol2019probabilistic}. Importance sampling relies on a proposal distribution whose choice determines the estimator’s variance~\citep{hammersley1964monte}. Neural proposals learn that proposal from evaluations of $r(x)$ alone, either with normalizing flows~\citep{rezende2015variational,muller2019neural} or with diffusion samplers~\citep{zhang2022path,vargas2023denoising,berner2024an}. The latter avoids the invertibility constraints that limit flow architectures \citep{kong2020expressive}.  Diffusion samplers, however, are formulated on $\R^d$ with a Gaussian reference process throughout the existing work and do not respect the constraint $x\in\cD$. A more thorough discussion of prior work on numerical integration methods is presented in Appendix~\ref{related_work}.

To design a learnable diffusion-based proposal that respects the domain constraint, we draw inspiration from diffusion models~\citep{song2021scorebased} for constrained generative modeling. Existing approaches fall into two groups. The first changes the geometry in which the process evolves: mirror diffusion models map $\mathcal{D}$ onto $\mathbb{R}^d$ through the gradient of a strictly convex potential and run a standard diffusion in the resulting dual space \citep{liu2023mirror}, while log-barrier diffusions equip the interior of $\mathcal{D}$ with the Riemannian metric induced by the Hessian of a barrier function, under which the process remains in the interior \citep{fishman2023diffusion}. The second group keeps the Euclidean geometry of $\mathcal{D}$ and confines trajectories by reflecting them at the boundary, using reflected Brownian motion as the noising process \citep{fishman2023diffusion, lou2023reflected}. In this work, we follow the second route and adapt reflected diffusions from generative modeling to sampling, where no samples from $\pi$ are available and the
reverse dynamics must be learned from evaluations of $r$ alone.

\textbf{We outline our contributions as follows:}
\begin{itemize}[leftmargin=*,itemsep=1pt,topsep=2pt]
\item We propose \emph{reflected diffusion samplers} (RDS) for integration on bounded domains, extending control-based diffusion samplers~\citep{zhang2022path,vargas2023denoising,berner2025discrete} to constrained $\cD$. Our construction uses reflected Brownian motion with a time-dependent noise schedule. When initialized from the uniform distribution $\nu$ on $\cD$, this process preserves uniform marginals~\citep{loper2020uniform} and defines a tractable reference path measure for the integral \eqref{eq:problem}. Girsanov's theorem~\citep{le2016brownian} then transfers this representation to equivalent controlled reflected diffusions through path-space importance weights.

We show that the unique zero-variance ideal proposal corresponds to the same reflected noising dynamics initialized from the normalized target $\pi(x)=r(x)/Z$, and propose to learn its reversal with a neural network-parameterized drift. The learned reversal by construction also acts as the sampler from $\pi$.
\item We design a training objective for the learned reversal to be the KL divergence to the ideal proposal. For smooth targets, its gradients can be computed via the reparameterization trick. It includes a running control cost with a terminal potential and no boundary term, closely resembling the optimal control objective of~\cite{zhang2022path}. We also derive a REINFORCE gradient estimator with a leave-one-out baseline, which uses only the pre-reflection Gaussian transition densities and does not require gradients of $r(x)$, applying to targets that are non-smooth or vanish to zero on part of $\cD$, where reparameterization gradients are uninformative. Integrands that vanish on part of the domain are handled by adding a positive offset and correcting the resulting integral estimate.
\item We provide rigorous theoretical derivation of all components of the proposed methodology, adapting existing theoretical results to the considered setting of the variance exploding reflected diffusion samplers. In addition, we prove a total-variation accuracy bound for the designed estimator, which is a reflected version of the bound presented in~\cite{guo2026complexity} for diffusion samplers in $\mathbb{R}^d$.
\item We evaluate on Genz integrands~\citep{genz1984testing}, bounded synthetic targets, Feynman parameter integral~\citep{Bonciani_2016} and Asian option pricing~\citep{l2009quasi} against Quasi-Monte Carlo, Bayesian quadrature and normalizing flow importance sampling baselines. The experiments demonstrate strong performance of the proposed method, as well as favorable scaling with the dimension of the problem and the number of points used for the estimate. In addition, we compare against Path Integral Sampler (PIS,~\cite{zhang2022path}) proposal by mapping $\cD$ onto $\mathbb{R}^d$, and demonstrate that the learned RDS proposal achieves lower integration errors against its unconstrained counterpart.
\end{itemize}

Source code: \textcolor{RoyalBlue}{\href{https://github.com/kkorolev1/reflected-diffusion-samplers}{github.com/kkorolev1/reflected-diffusion-samplers}}.

\section{Background on Reflected SDE}
\label{sec:background}

Denote $\overline{\cD}$ and $\partial\cD$ to be the closure and boundary of $\cD$, and $\vol(\cD)$ to be its volume. Consider a reflected SDE on $\cD$ with drift $f:\overline{\cD}\times[0,1]\to\R^d$ and scalar diffusion coefficient $g:[0,1]\to\R$ \citep{pilipenko2014introduction}:
\begin{equation}
    \label{eq:reflected_sde}
    \dd X_t
    =
    f(X_t,t)\,\dd t
    +
    g(t)\,\dd W_t
    +
    \dd L_t.
\end{equation}
Here $W_t$ is a standard $d$-dimensional Brownian motion. Under the
smooth-boundary assumptions used for the theoretical results, a solution of SDE \eqref{eq:reflected_sde} has the form $(X,\ell)$, where $\ell$ is continuous, nondecreasing, starts from zero, and increases only when $X_t\in\partial\cD$. Throughout the manuscript, the associated vector reflection term is the derived process $L_t:=\int_0^t\mathbf n_{\mathrm{in}}(X_s)\,\dd\ell_s$, where $\mathbf n_{\mathrm{in}}$ is the inward unit normal. Thus $\dd L_t$ pushes $X$ inward at boundary times.

Let $\mu_t:=\Law(X_t)$ be the marginal law of $X_t$. Suppose that, for
$t>0$, $\mu_t$ admits a density $p_t$ and that these densities have the
time, space, and boundary-trace regularity needed for the following
identities to hold. Then $p_t$ evolves according to the Fokker--Planck equation with a Neumann boundary condition \citep[Section~2.6.2--2.6.3]{schuss2015brownian}: \vspace{-0.05cm}
\begin{equation}
    \label{eq:fp_with_boundary}
    \begin{cases}
        \displaystyle
        \partial_t p_t(x)
        =
        \operatorname{div}\!\left(
            -p_t(x) f(x,t)
            +
            \frac{g^2(t)}{2}\nabla_x p_t(x)
        \right),
        & x\in\cD,\ t\in(0,1],\\
        \displaystyle
        \mathbf n(x)^\top
        \left(
            p_t(x) f(x,t)
            -
            \frac{g^2(t)}{2}\nabla_x p_t(x)
        \right)
        =0,
        & x\in\partial\cD,\ t\in(0,1],
    \end{cases}
\end{equation}
where $\mathbf n(x)$ is the outward unit normal to $\partial\cD$. The boundary condition prevents probability mass from crossing the boundary, so the density evolves entirely within $\cD$. The initial law is imposed separately through $\mu_t\Rightarrow\mu_0$ as $t\downarrow0$, which follows from path
continuity. Even when $\mu_0$ has a density $p_0$, this does not make~\eqref{eq:fp_with_boundary} valid at $t=0$ without additional initial regularity and boundary compatibility. Without the stated density and regularity, the forward equation is instead interpreted in its weak measure-valued form.

\section{Reflected Diffusion Samplers}
\label{sec:method}

In this section, we present the main theoretical results and the derivation of the proposed method. Each statement points to a fully quantified statement and proof in the Appendix. The two distributions used in this section are the uniform distribution and the normalized target:
\begin{equation}
    \nu(x):=\vol(\cD)^{-1},
    \qquad
    \pi(x):=\frac{r(x)}{Z},
    \qquad x\in\overline{\cD}.
\end{equation}
Path-measure and training theoretical results assume measurable $r$ bounded above and
away from zero. The full-interval score representation requires additional regularity. We later discuss zero- and negative-valued targets. See Appendix~\ref{sec:notation} for notation and conventions.

\subsection{Reference process of the reflected diffusion}
Our goal is to construct an importance-sampling estimator for \eqref{eq:problem} using stochastic dynamics that remain inside the constrained domain $\cD$. To this end, we first introduce a simple reference process whose path measure will serve as a base measure with respect to which a proposal is defined. Ideally, this reference process should satisfy two properties. First, it should remain inside $\overline{\cD}$ by construction. Second, its marginal distribution should be known and easy to sample from. We obtain both properties with the reflected
variance-exploding\footnote{Here, “variance-exploding” refers to the noise schedule inherited from the unconstrained SDE, not to the marginal variance of the reflected process, which remains bounded because $\cD$ is bounded. We follow the terminology of \citet{lou2023reflected}.} SDEs~\citep[RVE SDEs]{lou2023reflected}. Whenever the SDE below is well posed, each initial law determines a probability measure on the trajectory space $\cC =C([0,1];\overline{\cD})$, which we call a path measure. For noise
scales $0<\sigma_0\ll\sigma_1$, define the variance-exploding schedule
\begin{equation}
    \label{eq:rve_noise_schedule}
    \sigma(t):=\sigma_0^{1-t}\sigma_1^t,
    \qquad
    g(t):=\left(\frac{\dd}{\dd t}\sigma(t)^2\right)^{1/2}
    =\sigma(t)\left(2\log\frac{\sigma_1}{\sigma_0}\right)^{1/2}.
\end{equation}
For any initial distribution $\mu_0$, the corresponding RVE SDE is
\begin{equation}
    \label{eq:rve_sde}
    \dd X_t=g(t)\,\dd W_t+\dd L_t,
    \qquad X_0\sim\mu_0,\quad 0\leq t\leq1.
\end{equation}
As the reference process, we use the RVE SDE \eqref{eq:rve_sde} initialized from the uniform distribution $\mu_0 = \nu$, and denote the induced path measure by $\Pf$. The following proposition justifies the use of the RVE SDE \eqref{eq:rve_sde} as a noising process by showing that its marginals remain explicitly known when initialized from the uniform distribution, while, for an arbitrary initial distribution, the terminal law is approximately uniform.
\begin{proposition}
\label{proposition:rve_sde_marginal_informal}
For the reflected noising process in \eqref{eq:rve_sde}, under the assumptions of Proposition~\ref{proposition:rve_sde_marginal}, the uniform distribution is invariant:
\[
    X_0\sim\nu
    \quad\Longrightarrow\quad
    X_t\sim\nu,
    \qquad 0\leq t\leq1.
\] 
Moreover, fix any initial probability distribution $\mu_0$ on $\overline{\cD}$. For each $n\in\N$, let $X^{(n)}$ solve \eqref{eq:rve_sde} with initial law $\mu_0$ and noise schedule $g_n(\cdot)$ corresponding to scales $0<\sigma_{0,n}<\sigma_{1,n}$, and denote its terminal law by $\mu_1^{(n)}:=\Law(X_1^{(n)})$. Suppose that $\int_0^1 g_n^2(t)\,\dd t
    =\sigma_{1,n}^2-\sigma_{0,n}^2
    \underset{n\to\infty}{\longrightarrow} \infty$. Then
$$
    \|\mu_1^{(n)}-\nu\|_{\mathrm{TV}}
    \underset{n\to\infty}{\longrightarrow} 0.
$$
\end{proposition}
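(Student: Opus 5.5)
The plan is to reduce both claims to properties of reflected Brownian motion on $\cD$ run on the clock $s=\sigma(t)^2-\sigma_0^2=\int_0^t g^2(u)\,\dd u$. Since $g$ is deterministic and scalar, the time change $\tau(t):=\int_0^t g^2(u)\,\dd u$ turns \eqref{eq:rve_sde} into a standard reflected Brownian motion $B$ on $\overline{\cD}$ with normal reflection: $X_t=B_{\tau(t)}$ in law. This is pathwise: set $\tilde W_s:=\int_0^{\tau^{-1}(s)}g\,\dd W$ and check that the Skorokhod problem solution transforms accordingly. Well-posedness of the Skorokhod problem for a smooth bounded convex domain is taken from the assumptions of Proposition~\ref{proposition:rve_sde_marginal}. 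Both claims then concern the semigroup $P_s$ of reflected Brownian motion, whose generator is $\frac12\Delta$ with the Neumann boundary condition.

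\textbf{Invariance.} I will show that $\nu$ is invariant for $P_s$ by duality. For $\varphi$ in the domain of the Neumann Laplacian ($\varphi\in C^2(\overline{\cD})$ with $\partial_{\mathbf n}\varphi=0$), Itô's formula for \eqref{eq:rve_sde} gives
\[
\E\varphi(X_t)-\E\varphi(X_0)=\E\int_0^t\tfrac{g^2}{2}\Delta\varphi(X_u)\,\dd u+\E\int_0^t\nabla\varphi(X_u)^\top\mathbf n_{\mathrm{in}}(X_u)\,\dd\ell_u.
\]
The local-time term vanishes because $\partial_{\mathbf n}\varphi=0$. The divergence theorem gives $\int_\cD\Delta\varphi\,\dd x=\int_{\partial\cD}\partial_{\mathbf n}\varphi=0$. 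Hence $\frac{\dd}{\dd t}\int P_{\tau(t)}\varphi\,\dd\nu=0$. To make this rigorous I will apply it to $P_s\varphi$ itself, or equivalently use the fact that the Neumann heat semigroup is symmetric on $L^2(\cD,\dd x)$ and satisfies $P_s1=1$, so $\int P_s\varphi\,\dd x=\int\varphi\,P_s1\,\dd x=\int\varphi\,\dd x$. Neumann functions are dense in $C(\overline{\cD})$, so a measure is determined by its integrals against them, and $\Law(X_t)=\nu$ follows. Equivalently, the constant density satisfies \eqref{eq:fp_with_boundary} with $f=0$ trivially, and one could invoke uniqueness for that problem. The semigroup-symmetry route avoids regularity issues, so I will use it.

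\textbf{Convergence.} By the time change, $\mu_1^{(n)}=\mu_0P_{s_n}$ with $s_n=\sigma_{1,n}^2-\sigma_{0,n}^2\to\infty$. It therefore suffices to show $\sup_{x}\|\delta_xP_s-\nu\|_{\mathrm{TV}}\to0$ as $s\to\infty$, and then integrate over $\mu_0$ using convexity of the TV norm. The Neumann Laplacian on a bounded Lipschitz (here convex) domain has compact resolvent, a simple eigenvalue $0$ with constant eigenfunction because $\cD$ is connected, and a spectral gap $\lambda_1>0$. Hence $\|P_s h-\int h\,\dd\nu\|_{L^2(\nu)}\le e^{-\lambda_1 s/2}\|h\|_{L^2(\nu)}$. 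To pass from $L^2$ to TV for point masses, I will use that $P_1$ has a bounded transition density $p_1(x,y)\le C$, an ultracontractivity or Gaussian heat-kernel bound for Neumann heat kernels on convex or smooth bounded domains. Writing $\delta_xP_{s}=(\delta_xP_1)P_{s-1}$, the density $h_x=p_1(x,\cdot)/\nu$ lies in $L^2(\nu)$ uniformly in $x$. Then $\|\delta_xP_s-\nu\|_{\mathrm{TV}}\le\tfrac12\|P_{s-1}^*h_x-1\|_{L^2(\nu)}\le C'e^{-\lambda_1(s-1)/2}$, using self-adjointness. Alternatively, a coupling argument for reflected Brownian motion on convex domains (synchronous or mirror coupling, with the distance non-increasing under reflection by convexity) gives a Doeblin minorization without spectral theory. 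I will keep that as a backup.

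The main obstacle is the analytic input in the last step: a spectral gap together with a uniform heat-kernel bound for the Neumann problem. This is where the domain regularity assumptions of Proposition~\ref{proposition:rve_sde_marginal} are used. I would cite the standard results rather than reprove them. I would also state the TV bound in the explicit form $\sup_{\mu_0}\|\mu_1^{(n)}-\nu\|_{\mathrm{TV}}\le C e^{-\lambda_1 s_n/2}$, which immediately gives the claimed limit.
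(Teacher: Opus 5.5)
Your proof is correct and shares the paper's skeleton: time-change to unit normally reflected Brownian motion via $X_t=R_{A(t)}$, then invariance and uniform ergodicity of its kernel, then $\mu_1^{(n)}=\mu_0P_{s_n}$ averaged over $\mu_0$. The difference is how you obtain the two key facts.

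\textbf{The paper's route.} It takes both facts as a black box from \citet[Lemma~1]{loper2020uniform}. That lemma supplies stationarity, reversibility, the homogeneous Markov property and $\sup_x\|P_u(x,\cdot)-\nu\|_{\mathrm{TV}}\to0$ for any bounded convex set. The only extra work is using uniqueness in law from Lemma~\ref{lemma:forward_reflected_well_posedness} to identify the time-changed segment on $[0,A(1)]$ with the restriction of a global reflected Brownian motion, so that $P_u$ and the limit $u\to\infty$ are meaningful.

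\textbf{Your route.} You derive the facts from Neumann spectral theory. Invariance comes from symmetry of the heat semigroup together with $P_s1=1$. Uniform ergodicity comes from the spectral gap plus a bounded time-$1$ kernel, which upgrades $L^2$ decay to TV. This buys an explicit rate, $\sup_{\mu_0}\|\mu_1^{(n)}-\nu\|_{\mathrm{TV}}\le Ce^{-\lambda_1 s_n/2}$. For convex $\cD$, Payne--Weinberger gives $\lambda_1\ge\pi^2/\operatorname{diam}(\cD)^2$, which turns this into quantitative guidance on $\sigma_1$; the paper states only the qualitative limit.

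\textbf{What your route costs.} You need analytic input the paper avoids.
\begin{itemize}
\item You must identify the transition kernel of the reflected SDE with the analytic Neumann heat semigroup. This takes a generator/core or Dirichlet-form argument, available under the $C^\infty$-boundary assumption, e.g.\ via the kernel regularity of \citet{burdzy2004heat} already invoked in the paper.
\item You need the ultracontractivity bound on the heat kernel.
\end{itemize}

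Your first invariance sketch is circular as written: under $X_0\sim\nu$, It\^o's formula yields $\int\Delta\varphi\,\dd\mu_t$ rather than $\int\Delta\varphi\,\dd\nu$. So it is the symmetry argument that actually carries the invariance claim, and it rests on exactly that kernel--semigroup identification.
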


Full formulation of Proposition~\ref{proposition:rve_sde_marginal_informal} with proofs is available in Proposition~\ref{proposition:rve_sde_marginal}. In practice one should choose $\sigma_0 \ll \sigma_1$ to control the discrepancy between the terminal distribution $\mu_1$ of $X_1$ and the uniform distribution $\nu$. 

Consequently, since the initial marginal of $\Pf$ is uniform,
\eqref{eq:problem} can be written as an expectation under the reference path
measure $Z=\E_{\Pf}\!\left[{r(X_0)}/{\nu(X_0)}\right]$. When used directly, it corresponds to uniform Monte Carlo. This estimator can have large variance when the mass of $r(x)$ is concentrated in a small region of $\cD$, since most samples from the uniform distribution then make only a negligible contribution to the estimate. This motivates replacing the reference path measure with a more suitable proposal. Generally, any path measure $\Qf$, which is equivalent to $\Pf$ ($\Pf\ll\Qf$ and $\Qf\ll\Pf$), gives the importance weight
\begin{equation}
    W_{\Qf}(X)
    :=
    \frac{r(X_0)}{\nu(X_0)}
    \frac{\dd\Pf}{\dd\Qf}(X),
    \qquad
    Z=\E_{\Qf}[W_{\Qf}(X)].
    \label{eq:path_importance_weight}
\end{equation}
Therefore, every admissible proposal $\Qf$ yields an unbiased importance-sampling
estimator, and the remaining question is how to choose $\Qf$ to reduce
its variance, which is justified in the following proposition. Let $\Qf^\star$ denote a path measure of the RVE SDE \eqref{eq:rve_sde} initialized from a target $\pi$:
\begin{equation}
    \label{eq:optimal_forward_path_law}
    \Qf^\star:\quad
    \dd X_t=g(t)\,\dd W_t+\dd L_t,
    \qquad X_0\sim\pi,\quad 0\leq t\leq1.
\end{equation}

\begin{proposition}
\label{proposition:optimal_proposal_informal}
Under the assumptions of Proposition~\ref{proposition:optimal_proposal}, $\Qf^\star$ is the unique zero-variance proposal among path measures
equivalent to $\Pf$, i.e., $ W_{\Qf^\star}(X)=Z$ with probability one for $X\sim\Qf^\star$.
\end{proposition}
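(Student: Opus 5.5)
The plan is to compute the Radon--Nikodym derivative $\dd\Pf/\dd\Qf^\star$ explicitly and substitute it into \eqref{eq:path_importance_weight}. The key structural fact is that $\Pf$ and $\Qf^\star$ are laws of the \emph{same} reflected SDE \eqref{eq:rve_sde}, with the same $g$, the same reflection, and no drift. They differ only in the initial law, $\nu$ versus $\pi$. By well-posedness of \eqref{eq:rve_sde} (assumed in Proposition~\ref{proposition:rve_sde_marginal}), the solution started at a fixed point $x$ has a path law $\P^x$ on $\cC$, measurable in $x$. Disintegrating with respect to $X_0$ gives
\[
\Pf=\int_{\overline{\cD}}\P^x\,\nu(\dd x),\qquad
\Qf^\star=\int_{\overline{\cD}}\P^x\,\pi(\dd x).
\]
Since $r$ is bounded above and away from zero, $\pi$ and $\nu$ are mutually absolutely continuous, with $\dd\nu/\dd\pi(x)=\nu(x)/\pi(x)=Z\,\nu(x)/r(x)$. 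Because the conditional path laws given $X_0$ coincide, $\Pf\sim\Qf^\star$ and
\[
\frac{\dd\Pf}{\dd\Qf^\star}(X)=\frac{\nu(X_0)}{\pi(X_0)}.
\]
To verify this, test it against bounded functionals $F$ of the path and use the disintegration; Girsanov is not needed here. Plugging into \eqref{eq:path_importance_weight} gives
\[
W_{\Qf^\star}(X)=\frac{r(X_0)}{\nu(X_0)}\cdot\frac{\nu(X_0)}{\pi(X_0)}=Z
\]
$\Qf^\star$-a.s. So $\Qf^\star$ is an admissible, equivalent, zero-variance proposal.

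For uniqueness, suppose $\Qf$ is any path measure equivalent to $\Pf$ with $W_{\Qf}=Z$ $\Qf$-a.s., hence also $\Pf$-a.s. by equivalence. Rearranging gives
\[
\frac{\dd\Pf}{\dd\Qf}(X)=\frac{Z\,\nu(X_0)}{r(X_0)}=\frac{\nu(X_0)}{\pi(X_0)}=\frac{\dd\Pf}{\dd\Qf^\star}(X)
\]
$\Pf$-a.s. Since both derivatives are strictly positive and finite, inverting yields $\dd\Qf/\dd\Pf=\dd\Qf^\star/\dd\Pf$ $\Pf$-a.s., and therefore $\Qf=\Qf^\star$. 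Equivalence is exactly what lets the a.s. statement transfer between measures and makes the inversion of the density legitimate.

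The main technical obstacle is making the disintegration step rigorous. It requires the existence of a measurable kernel $x\mapsto\P^x$ such that the law of the reflected SDE from a random initial condition $\mu_0$ equals $\int\P^x\,\mu_0(\dd x)$. This follows from pathwise uniqueness and strong existence for the Skorokhod problem on a smooth (or convex) domain. Strong existence gives a measurable solution map $X=\Phi(X_0,W)$ with $W$ independent of $X_0$, and the kernel is $\P^x=\Law(\Phi(x,W))$. I would cite this from the well-posedness hypotheses of Proposition~\ref{proposition:rve_sde_marginal} rather than reprove it. The remaining details are routine: $W_{\Qf}$ is integrable and nonnegative, and $r$ is defined on $\overline{\cD}$ with $\nu(\partial\cD)=0$.
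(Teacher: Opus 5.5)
Your proposal is correct and matches the paper's proof: the same disintegration over $X_0$, via the common deterministic-start solution map, gives $\dd\Qf^\star/\dd\Pf=\pi(X_0)/\nu(X_0)$ and hence $W_{\Qf^\star}=Z$. For uniqueness, the paper writes the same density-ratio fact as $W_{\Qf}=Z\,\dd\Qf^\star/\dd\Qf$ with $\Var_{\Qf}(W_{\Qf})=Z^2\chi^2(\Qf^\star\Vert\Qf)$, which vanishes exactly when $\Qf=\Qf^\star$; your direct inversion of the Radon--Nikodym derivatives is equivalent, provided you note that zero variance together with the automatic unbiasedness $\E_{\Qf}[W_{\Qf}]=Z$ is what forces $W_{\Qf}=Z$ almost surely.
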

The full formal statement of Proposition~\ref{proposition:optimal_proposal_informal} with proofs is available in Proposition~\ref{proposition:optimal_proposal}. The conditions and proof of existence and pathwise uniqueness of $\Pf$ and $\Qf^\star$ can be found in Lemma~\ref{lemma:forward_reflected_well_posedness}. Sampling $X_0\sim\pi$ is intractable in general, so we instead consider the time
reversal of the process~\eqref{eq:optimal_forward_path_law}, which we discuss in the next section.

\subsection{Time-reversal of the optimal proposal}

Let $p_t$ denote the marginal density of the forward process
\eqref{eq:optimal_forward_path_law}, initialized from the target $\pi$. 
We apply the time-reversal map $Y_t:=X_{1-t}$ to both the optimal forward process and the reference process, defining their reversed path laws by
\begin{equation}
    X\sim\Qf^\star \Longrightarrow Y\sim\Qb^\star,
    \qquad
    X\sim\Pf \Longrightarrow Y\sim\Pb.
    \label{eq:reverse_path_law_definitions}
\end{equation}

Thus, under $\Qb^\star$, trajectories start from $p_1$ and end at $\pi$, and $\frac{\dd\Qb^\star}{\dd\Pb}(Y) =  \frac{\pi(Y_1)}{\nu(Y_1)}$. The reversed reference law $\Pb$ is characterized by the reflected SDE
\begin{equation}
    \label{eq:reverse_reference_sde}
    \dd Y_t
    =
    g(1-t)\,\dd\overline W_t
    +\dd\overline L_t,
    \qquad
    Y_0\sim\nu,\quad 0\leq t\leq1,
\end{equation}
where $\overline W_t$ is a Brownian motion with respect to the reverse-time filtration, and $\overline L_{\rt}:=\int_0^{\rt}\mathbf n_{\mathrm{in}}(Y_s)\,
\dd\overline\ell_s$. 
Lemma~\ref{lemma:rve_stationary_reversal} shows that \eqref{eq:reverse_reference_sde}
has a unique solution law, with strong existence and pathwise uniqueness. The optimal reverse law $\Qb^\star$ admits the SDE representation
\begin{equation}
    \label{eq:full_interval_ideal_sde}
    \dd Y_t
    =
    g^2(1-t)\nabla\log p_{1-t}(Y_t)\,\dd t
    +g(1-t)\,\dd\overline W_t
    +\dd\overline L_t,
    \qquad
    Y_0\sim p_1,\quad 0\leq t\leq1.
\end{equation}
Lemma~\ref{lemma:scheduled_reflected_time_reversal} establishes the weak representation \eqref{eq:full_interval_ideal_sde} and, under additional conditions, its uniqueness. In practice, $p_1$ is replaced by $\nu$, while the unknown score
$\nabla\log p_t$ is approximated by a neural network
$s_\theta:\overline{\cD}\times[0,1]\to\R^d$. This defines the learned reverse law
$\Qb_\theta$ through
\begin{equation}
    \label{eq:learnable_reverse_rve_sde}
    \Qb_\theta:\quad
    \dd Y_t
    =
    g^2(1-t)s_\theta(Y_t,1-t)\,\dd t
    +g(1-t)\,\dd\overline W_t
    +\dd\overline L_t,
    \qquad
    Y_0\sim\nu,\quad 0\leq t\leq1.
\end{equation}
Theorem~\ref{theorem:kl_reference} gives existence and uniqueness of
\eqref{eq:learnable_reverse_rve_sde} under explicit continuity and
spatial-Lipschitz assumptions on $s_\theta$.

\subsection{Training objective and estimator accuracy}
\label{sec:training_objective}

We train $\Qb_\theta$ by minimizing its reverse KL divergence to the ideal reversed proposal $\Qb^\star$.
\begin{theorem}
\label{theorem:kl_reference_informal}
Under the assumptions of Theorem~\ref{theorem:kl_reference}, for a realization of \eqref{eq:learnable_reverse_rve_sde} driven by Brownian motion $\overline W$,
\begin{equation}
    \label{eq:learned_reference_density_informal}
    \log\frac{\dd\Qb_\theta}{\dd\Pb}(Y)
    =
    \int_0^1
        g(1-t)s_\theta(Y_t,1-t)^\top
        \dd\overline W_t
    +
    \frac{1}{2}
    \int_0^1
        g^2(1-t)
        \lVert s_\theta(Y_t,1-t)\rVert_2^2
        \,\dd t,
    \quad \Qb_\theta\text{-a.e.}
\end{equation}
Consequently, the path KL admits a representation
\begin{equation}
    \KL{\Qb_\theta}{\Qb^\star}
    =\cL(\theta)+\log Z,
\end{equation} \vspace{-0.05cm}
where \vspace{-0.05cm}
\begin{equation}
    \label{eq:training_loss}
    \cL(\theta)
    :=
    \EEs{\Qb_\theta}{
        \frac{1}{2}
        \int_0^1
            g^2(1-t)
            \lVert s_\theta(Y_t,1-t)\rVert_2^2
            \,\dd t
        +\log\frac{\nu(Y_1)}{r(Y_1)}
    }.
\end{equation}
\end{theorem}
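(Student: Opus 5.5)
The plan is to obtain \eqref{eq:learned_reference_density_informal} from Girsanov's theorem applied to the two reflected SDEs \eqref{eq:reverse_reference_sde} and \eqref{eq:learnable_reverse_rve_sde}. Both have the same initial law $\nu$, the same diffusion coefficient $g(1-t)$, and differ only in the drift $b_\theta(y,t):=g^2(1-t)s_\theta(y,1-t)$. The key observation is that the drift difference lies in the range of the diffusion coefficient, since
\[
b_\theta=g(1-t)\,u_\theta \qquad\text{with}\qquad u_\theta(y,t):=g(1-t)s_\theta(y,1-t).
\]
First, we invoke the well-posedness part of the theorem. Under continuity and spatial Lipschitz assumptions on $s_\theta$, the Skorokhod problem on the smooth bounded convex domain is Lipschitz (Tanaka/Lions--Sznitman), which gives strong existence and pathwise uniqueness for \eqref{eq:learnable_reverse_rve_sde}. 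Lemma~\ref{lemma:rve_stationary_reversal} gives the same for \eqref{eq:reverse_reference_sde}.

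Next, take a strong solution $Y$ of \eqref{eq:learnable_reverse_rve_sde} driven by $\overline W$. Define the process $B_t:=\overline W_t+\int_0^t u_\theta(Y_s,s)\,\dd s$ and the exponential martingale
\[
\mathcal E_t=\exp\!\Big(-\int_0^t u_\theta^\top \dd\overline W-\tfrac12\int_0^t\|u_\theta\|^2\dd s\Big).
\]
Because $\overline{\cD}\times[0,1]$ is compact and $s_\theta$ is continuous, $u_\theta$ is bounded. Novikov's condition therefore holds and $\mathcal E$ is a true martingale. Under $\dd\tilde{\P}=\mathcal E_1\,\dd\P$, the process $B$ is a Brownian motion, and $Y$ solves $\dd Y_t=g(1-t)\,\dd B_t+\dd\overline L_t$ with $Y_0\sim\nu$. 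The local time $\overline\ell$ is unchanged, since it is a pathwise functional of $Y$ determined by the Skorokhod map. By uniqueness in law for \eqref{eq:reverse_reference_sde}, we get $\Law_{\tilde\P}(Y)=\Pb$.

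This gives $\frac{\dd\Pb}{\dd\Qb_\theta}(Y)=\E[\mathcal E_1\mid Y]$. To remove the conditional expectation, note that $\mathcal E_1$ is a functional of $Y$. Indeed, $Y$ is a strong solution, so $\overline W$ and hence $\mathcal E_1$ are $\sigma(Y)$-measurable. Equivalently, one can recover $g(1-t)\,\dd\overline W_t=\dd Y_t-b_\theta\,\dd t-\dd\overline L_t$ from the path. Taking logarithms and inverting the density yields
\[
\log\frac{\dd\Qb_\theta}{\dd\Pb}(Y)=\int_0^1 g(1-t)s_\theta^\top\dd\overline W_t+\tfrac12\int_0^1 g^2(1-t)\|s_\theta\|^2\dd t.
\]
Since $\mathcal E_1>0$, the two measures are equivalent. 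I expect this step to be the main technical obstacle. It requires checking that reflection does not add any Radon--Nikodym contribution. The point is that $\overline L$ is of bounded variation and $\sigma(Y)$-measurable, so it cancels identically in the change of measure. This relies on the uniqueness results from the Skorokhod map.

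For the KL identity, decompose the density with the chain rule:
\[
\log\frac{\dd\Qb_\theta}{\dd\Qb^\star}=\log\frac{\dd\Qb_\theta}{\dd\Pb}-\log\frac{\dd\Qb^\star}{\dd\Pb}.
\]
The second term is $\log\frac{\pi(Y_1)}{\nu(Y_1)}=\log r(Y_1)-\log Z-\log\nu(Y_1)$, using the stated identity $\frac{\dd\Qb^\star}{\dd\Pb}(Y)=\frac{\pi(Y_1)}{\nu(Y_1)}$. That identity follows from Proposition~\ref{proposition:optimal_proposal_informal}'s setup: $\Qf^\star$ and $\Pf$ share the same transition kernel and differ only in their initial law. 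Now take the $\Qb_\theta$-expectation. The stochastic integral $\int g\,s_\theta^\top\dd\overline W$ has a bounded integrand, so it is a square-integrable martingale and has zero mean. Since $r$ is bounded above and away from zero, $\log(\nu/r)$ is bounded and all terms are integrable. Collecting terms gives $\KL{\Qb_\theta}{\Qb^\star}=\cL(\theta)+\log Z$.
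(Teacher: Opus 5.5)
Your overall route matches the paper's: Girsanov against $\Pb$ on a realization of \eqref{eq:learnable_reverse_rve_sde}, then the chain rule with the terminal tilt, then the mean-zero martingale term. The gap is the step you flag as the main obstacle, that $\mathcal E_1$ is a function of the path $Y$. It is not established, and both reasons you give fail.

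\emph{Strong solutions.} Being a strong solution means $\sigma(Y_s:s\le t)\subseteq\sigma(Y_0,\overline W_s:s\le t)$ modulo null sets. So $Y$ is a functional of its driver, not the converse.

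\emph{Skorokhod map.} The Skorokhod map produces $\overline L$ from the \emph{input} path $Y_0+\int b_\theta\,\dd t+\int g\,\dd\overline W$, not from the output $Y$. Pathwise, $Y$ does not determine the push at all. For example, the path frozen at a boundary point $y_0$ is the reflection of $y_0-ct\,\mathbf n_{\mathrm{in}}(y_0)$ for every $c\ge0$, with push $ct\,\mathbf n_{\mathrm{in}}(y_0)$.

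\emph{Bounded variation.} Bounded variation does not make the boundary contribution cancel. Substituting $g\,\dd\overline W=\dd Y-b_\theta\,\dd t-\mathbf n_{\mathrm{in}}(Y)\,\dd\overline\ell$ into $\int u_\theta^\top\dd\overline W$ leaves $-\int_0^1 s_\theta(Y_t,1-t)^\top\mathbf n_{\mathrm{in}}(Y_t)\,\dd\overline\ell_t$. This is nonzero unless $s_\theta$ is tangential on $\partial\cD$.

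\emph{Consequence.} Without $\sigma(Y)$-measurability you only have $\frac{\dd\Pb}{\dd\Qb_\theta}(Y)=\E[\mathcal E_1\mid\sigma(Y)]$. This gives neither the pointwise formula \eqref{eq:learned_reference_density_informal} nor the KL identity. Conditional Jensen yields only the inequality $\KL{\Qb_\theta}{\Qb^\star}\le\cL(\theta)+\log Z$.

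The missing idea is an almost-sure (not pathwise) recovery of the regulator from $Y$, which uses the nondegenerate noise. The paper proceeds as follows.
\begin{itemize}
\item Take a smoothed signed distance $\varphi$ with $\varphi>0$ in $\cD$, and $\varphi=0$, $\nabla\varphi=\mathbf n_{\mathrm{in}}$ on $\partial\cD$.
\item Apply It\^o's formula and the Tanaka identity to the nonnegative semimartingale $\varphi(Y)$, giving $\overline\ell=\frac12\LT^0(\varphi(Y))$.
\item Use $\inf g>0$, so that $Y$ spends zero Lebesgue time on $\partial\cD$, to write $\overline\ell_t=\lim_n\frac n2\int_0^t\ind_{\{0\le\varphi(Y_s)\le1/n\}}g(1-s)^2\lVert\nabla\varphi(Y_s)\rVert_2^2\,\dd s$. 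This is a Borel functional of the path (Lemma~\ref{lemma:normal_regulator_recovery}).
\end{itemize}
Only then do $\overline W$ and the stochastic integral admit Borel versions in $Y$, the latter via an a.s.\ convergent subsequence of left-point sums (Lemma~\ref{lemma:regulator_brownian_coordinate_recovery}). With $\mathcal E_1=z(Y)$ a.s., the rest of your argument goes through.

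A minor point: your claim that $\overline\ell$ is unchanged under $\tilde\P$ needs no functional representation. The same pair $(Y,\overline\ell)$ on the same space satisfies the regulator conditions under any equivalent measure.
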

The full formal statement and proof are available in Theorem~\ref{theorem:kl_reference}. This result shows that minimising $\cL$ is equivalent to minimising $\KL{\Qb_\theta}{\Qb^\star}$ up to the constant $\log Z$, which is not needed for training. 
\begin{remark}
\label{rem:irreducible_initial_law}
Since $\Qb_\theta$ starts from $\nu$, whereas $\Qb^\star$ starts from $p_1$, the
chain rule for KL gives
\[
    \KL{\Qb_\theta}{\Qb^\star}
    =\KL{\nu}{p_1}
    +\int_{\cD}
      \KL{\Qb_\theta(\,\cdot\mid Y_0=y)}
      {\Qb^\star(\,\cdot\mid Y_0=y)}
      \nu(y)\,\dd y.
\]
Consequently, even perfect matching of the conditional dynamics leaves the
irreducible term $\KL{\nu}{p_1}$, which is determined by the noising schedule.
\end{remark} 
The objective in \eqref{eq:training_loss} has the same structure as the standard control-based losses used in diffusion samplers \citep{zhang2022path, vargas2023denoising}. It combines a quadratic running cost with a terminal potential that favors states of high
target density. Despite the reflected dynamics, no additional boundary term appears in the loss. After training, trajectories from $\Qb_\theta$ give the importance-sampling estimator
\begin{equation}
    \label{eq:importance_sampling_estimator}
    \widehat Z
    =
    \frac{1}{N}
    \sum_{i=1}^N
    \frac{r(Y_1^{(i)})}{\nu(Y_1^{(i)})}
    \frac{\dd\Pb}{\dd\Qb_\theta}\bigl(Y^{(i)}\bigr),
    \qquad Y^{(i)}\sim\Qb_\theta,
\end{equation}
where $\dd\Pb / \dd\Qb_\theta$ is calculated using \eqref{eq:learned_reference_density_informal}.
If the learned law equals the ideal reverse law and the noised initial
distribution is exactly uniform, the weight is the constant $Z$ and
$Y_1\sim\pi$, so the learned
process can also be viewed as a sampler for the normalized target.

Closeness of the learned and ideal path laws controls the relative error of the normalizing-constant estimate. We quantify this relation for the
continuous estimator~\eqref{eq:importance_sampling_estimator} on $[0,1]$. The full statement and proof appear in
Theorem~\ref{theorem:continuous_normalizing_constant_accuracy}. 

\begin{theorem}
    \label{theorem:continuous_normalizing_constant_accuracy_informal}
Under the assumptions of
Theorem~\ref{theorem:continuous_normalizing_constant_accuracy}, fix $\theta$
and let $\widehat Z$ be the estimator \eqref{eq:importance_sampling_estimator},
computed from $N\geq1$ paths, each with law $\Qb_\theta$. Then
$\mathbb E[\widehat Z]=Z$ and, for every $\varepsilon>0$,
\begin{equation}
    \Pr\!\left(\left|\frac{\widehat Z}{Z}-1\right|\geq\varepsilon\right)
    \leq\frac{2\|\Qb^\star-\Qb_\theta\|_{\mathrm{TV}}}{\varepsilon}.
    \label{eq:continuous_normalizing_constant_tail_informal}
\end{equation}
\end{theorem}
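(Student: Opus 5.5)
Write $w(Y):=\frac{r(Y_1)}{\nu(Y_1)}\frac{\dd\Pb}{\dd\Qb_\theta}(Y)$ for the single-path weight, so that $\widehat Z=\frac1N\sum_i w(Y^{(i)})$. The plan has two parts: unbiasedness by a change of measure, and the tail bound by comparing $w/Z$ with a likelihood ratio between $\Qb_\theta$ and $\Qb^\star$.

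\textbf{Unbiasedness.} Under the assumptions of Theorem~\ref{theorem:kl_reference}, $\Qb_\theta$ and $\Pb$ are equivalent. The Girsanov density \eqref{eq:learned_reference_density_informal} is then strictly positive and finite. Time reversal is a bijection of path space, and the time-$1$ marginal of $\Pb$ is the time-$0$ marginal of $\Pf$, which is $\nu$. Hence
\[
\E_{\Qb_\theta}[w]=\E_{\Pb}\!\left[\frac{r(Y_1)}{\nu(Y_1)}\right]=\int_\cD r=Z,
\]
and averaging over $N$ i.i.d.\ paths preserves the mean. This step is the same as \eqref{eq:path_importance_weight}, transported by time reversal.

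\textbf{Likelihood-ratio form of the weight.} Using $\frac{\dd\Qb^\star}{\dd\Pb}(Y)=\pi(Y_1)/\nu(Y_1)=r(Y_1)/(Z\nu(Y_1))$, I would write
\[
\frac{w(Y)}{Z}=\frac{\dd\Qb^\star}{\dd\Pb}\,\frac{\dd\Pb}{\dd\Qb_\theta}=\frac{\dd\Qb^\star}{\dd\Qb_\theta}(Y)=:\rho(Y),\qquad \Qb_\theta\text{-a.s.}
\]
This needs $\Qb^\star\ll\Pb$, which holds because $r$ is bounded above and away from zero. It also needs $\Pb\sim\Qb_\theta$, which Theorem~\ref{theorem:kl_reference} gives. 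I expect this to be the main obstacle, because the Radon--Nikodym chain rule must be justified carefully with the reflected path measures. In particular, $\frac{\dd\Qb^\star}{\dd\Pb}$ has to be identified purely through the initial marginal of the forward process, using the Markov property and the fact that both forward laws share the same transition kernel (Lemma~\ref{lemma:forward_reflected_well_posedness}). Once this identity holds, the rest is elementary.

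\textbf{Tail bound.} Since $\E_{\Qb_\theta}[\rho]=1$ and $\rho\ge 0$, I would use
\[
\E_{\Qb_\theta}|\rho-1|=\int\Bigl|\dd\Qb^\star-\dd\Qb_\theta\Bigr|=2\|\Qb^\star-\Qb_\theta\|_{\mathrm{TV}},
\]
with the convention $\|\cdot\|_{\mathrm{TV}}=\sup_A|\cdot(A)|$. The equivalence of the two measures makes the identity exact. Then
\[
\frac{\widehat Z}{Z}-1=\frac1N\sum_{i=1}^N\bigl(\rho(Y^{(i)})-1\bigr),
\]
and the triangle inequality gives $\E\bigl|\widehat Z/Z-1\bigr|\le \E_{\Qb_\theta}|\rho-1|$. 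Markov's inequality then yields the bound \eqref{eq:continuous_normalizing_constant_tail_informal}.

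This bound does not improve with $N$. That matches the statement, since no variance assumption is imposed. As a sanity check, when $\Qb_\theta=\Qb^\star$ we get $\rho\equiv1$ and the bound is $0$, consistent with Proposition~\ref{proposition:optimal_proposal_informal}.
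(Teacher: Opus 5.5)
Your proposal is correct and follows the paper's proof essentially step for step. You identify $w/Z$ with $\dd\Qb^\star/\dd\Qb_\theta$ through the terminal tilt of Proposition~\ref{proposition:optimal_proposal} and the Girsanov equivalence of Theorem~\ref{theorem:kl_reference}, then use $\E_{\Qb_\theta}|\rho-1|=2\|\Qb^\star-\Qb_\theta\|_{\mathrm{TV}}$, the triangle inequality over the $N$ paths, and Markov's inequality; the only cosmetic difference is that you get unbiasedness from the uniform terminal marginal of $\Pb$ rather than from $\E_{\Qb_\theta}[\dd\Qb^\star/\dd\Qb_\theta]=1$, and neither step actually needs the paths to be independent.
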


Pinsker's
inequality connects this total-variation bound to either KL direction:
\begin{equation}
    2\|\Qb^\star-\Qb_\theta\|_{\mathrm{TV}}
    \leq\sqrt{2\min\{\KL{\Qb^\star}{\Qb_\theta},\KL{\Qb_\theta}{\Qb^\star}\}}.
    \label{eq:continuous_pinsker_comparison_informal}
\end{equation}
The training optimizes $\KL{\Qb_\theta}{\Qb^\star}$ via
Theorem~\ref{theorem:kl_reference_informal}. Under additional score assumptions,
Theorem~\ref{theorem:continuous_ideal_to_learned_kl} expresses
the other term
$\KL{\Qb^\star}{\Qb_\theta}$ as the initial-law mismatch $\KL{p_1}{\nu}$
plus an integrated score error, 
which brings additional mathematical insight about the nature of this bound. 

One can also note that the bound~\eqref{eq:continuous_normalizing_constant_tail} does not depend on $N$. Remark~\ref{remark:dependence_on_n} shows that an explicit dependence on $N$ can be given through $\chi^2$ divergence.

\subsection{Practical implementation}
\label{sec:practical}

In practice, we simulate \eqref{eq:learnable_reverse_rve_sde} using a
discrete variance-exploding (VE) noise schedule together with the reflection
operator used in reflected diffusion models \citep{lou2023reflected}. Let
$\Delta=\{0=t_0<\dots<t_K=1\}$ be a time discretization, set $|\Delta|:=\max_{0\leq k<K}(t_{k+1}-t_k)$ and
\begin{equation}
    \overline\sigma_k:=\sigma(1-t_k),
    \qquad
    g_k^2:=\overline\sigma_k^2-\overline\sigma_{k+1}^2,
    \qquad k=0,\ldots,K-1,
\end{equation}
where $\sigma(\cdot)$ is defined in \eqref{eq:rve_noise_schedule}. Starting from $Y_0\sim\nu$, at each step $k=0,\ldots,K-1$ we first sample the unconstrained update and then apply a reflection operator
\begin{equation}
\begin{aligned}
\label{eq:euler_scheme}
    \widetilde{Y}_{k+1}
    &=
    Y_k
    +
    g_k^2 s_\theta(Y_k,1-t_k)
    +
    g_k\xi_k,
    \qquad
    \xi_k\stackrel{\mathrm{i.i.d.}}{\sim}\cN(0,I_d),\\
    Y_{k+1} &= \operatorname{refl}_{\cD}(\widetilde{Y}_{k+1}),
\end{aligned}
\end{equation}
where $\operatorname{refl}_{\cD}:\R^d\to\overline{\cD}$ denotes the domain-specific reflection map. Under standard regularity assumptions, the scheme converges weakly to
the reflected SDE as $|\Delta|\to0$ \citep{lou2023reflected}.

\textbf{Reparameterized gradients.} For the given discretization, define the running and terminal costs
\begin{equation}
\label{eq:costs}
    \phi_k(y;\theta)
    :=
    \frac12 g_k^2
    \lVert s_\theta(y,1-t_k)\rVert_2^2,
    \qquad
    \psi(y):=\log\frac{\nu(y)}{r(y)}.
\end{equation}
Then the finite-step version of the objective \eqref{eq:training_loss} admits the following representation:
\begin{equation}
    \label{eq:discrete_objective_reinforce}
    \cL_{\text{RP}}(\theta)
    =
    \E_{\tau\sim q(\cdot;\theta)}
    \left[
        \sum_{k=0}^{K-1}\phi_k(Y_k;\theta)
        +\psi(Y_K)
    \right],
\end{equation}
where $q(\cdot;\theta)$ is the distribution induced by the model over trajectories in discretized time. We compute gradients using the reparameterization trick, expressing each stochastic transition as a deterministic function of the current state and an independent Gaussian noise variable, and backpropagating through the simulated trajectory. Algorithm~\ref{algorithm:rds_training} summarizes the resulting training procedure.

\begin{algorithm}[t]
\caption{Training reflected diffusion samplers}
\label{algorithm:rds_training}
\begin{algorithmic}[1]
\STATE {\bfseries Input:} Initial parameters $\theta$, target $r$, domain $\cD$,
number of steps $K$, time discretization $\{t_k\}_{k=0}^{K-1}$,
variance increments $\{g_k^2\}_{k=0}^{K-1}$,
batch size $B$, number of iterations $M$
\STATE Define $\phi_k(y;\theta),\psi(y)$ from \eqref{eq:costs} and
$\widetilde q_k(\widetilde y\mid y;\theta)$ from
\eqref{eq:latent_transition_density}

\FOR{$m=0,\ldots,M-1$}
    \STATE Sample
    $Y_0^{(i)}\stackrel{\mathrm{i.i.d.}}{\sim}\nu,
    \quad i=1,\ldots,B$
    \FOR{$k=0,\ldots,K-1$}
        \STATE Sample
        $\xi_k^{(i)}\stackrel{\mathrm{i.i.d.}}{\sim}\cN(0,I_d),
        \quad i=1,\ldots,B$
        \STATE $\widetilde Y_{k+1}^{(i)}
        \gets Y_k^{(i)}
        +g_k^2s_\theta(Y_k^{(i)},1-t_k)
        +g_k\xi_k^{(i)}, \quad Y_{k+1}^{(i)}
        \gets\operatorname{refl}_{\cD}
        (\widetilde Y_{k+1}^{(i)})$
    \ENDFOR

    \IF{$\text{algorithm is RDS-RP}$}
    \STATE $\displaystyle
    \cL_{\mathrm{RP}}(\theta)
    \gets
    \frac1B\sum_{i=1}^B
    \left[
        \sum_{k=0}^{K-1}\phi_k(Y_k^{(i)};\theta)
        +\psi(Y_K^{(i)})
    \right]$

    \STATE $\displaystyle
    \theta\gets
    \operatorname{OptStep}
    \left(
        \theta,
        \nabla_\theta\cL_{\mathrm{RP}}(\theta)
    \right)$
    \ELSIF{$\text{algorithm is RDS-RF}$}
    \STATE Detach augmented trajectories
    $(Y_0,\widetilde Y_1,\ldots,\widetilde Y_K,Y_K)$

    \STATE Compute $c_k^{(i)}$ from
    \eqref{eq:reinforce_cost_to_go} and set  $b_k^{(i)}\gets\frac{1}{B-1} \sum_{\ell\neq i} c_k^{(\ell)}$

    \STATE $\displaystyle
    \cL_{\mathrm{RF}}(\theta)
    \gets
    \frac1B\sum_{i=1}^B\sum_{k=0}^{K-1}
    \left[
        \phi_k(Y_k^{(i)};\theta)
        +
        (c_k^{(i)}-b_k^{(i)})\,
        \log\widetilde q_k
        (\widetilde Y_{k+1}^{(i)}
        \mid Y_k^{(i)};\theta)
    \right]$

    \STATE $\displaystyle
    \theta\gets
    \operatorname{OptStep}
    \left(
        \theta,
        \nabla_\theta\cL_{\mathrm{RF}}(\theta)
    \right)$

    \ENDIF
\ENDFOR
\end{algorithmic}
\end{algorithm}

\textbf{REINFORCE gradients.} The reparameterization gradient used above relies on differentiating the
terminal cost through the sampled state. For non-smooth targets this derivative may be undefined, while on regions where the target is constant it vanishes and cannot capture changes in probability mass across the region boundaries. We therefore use a score-function gradient estimator, inspired by the REINFORCE technique \citep{williams1992simple}, that
differentiates the trajectory distribution instead.  Before reflection is applied, the unconstrained transitions have Gaussian density
\begin{equation}
    \label{eq:latent_transition_density}
    \widetilde q_k(
        \widetilde Y_{k+1}\mid Y_k;\theta
    )
    =
    \cN\!\left(
        \widetilde Y_{k+1};
        Y_k+g_k^2s_\theta(Y_k,1-t_k),
        g_k^2I_d
    \right).
\end{equation}
To calculate true densities of a reflected transition, one has to compute an infinite sum of Gaussian densities corresponding to pre-images of the reflection map, see~\cite{lou2023reflected}. However, we show that the gradient of~\eqref{eq:discrete_objective_reinforce} can be expressed without evaluating computationally expensive reflected transition densities. We augment the trajectory with the pre-reflection states, $\widetilde\tau:=(Y_0,\widetilde Y_1,Y_1,\ldots,\widetilde Y_K,Y_K)$,
and denote its law by $\widetilde q(\cdot;\theta)$. Then, $\nabla_\theta\cL_{\text{RP}}(\theta)$ admits a simplified form.

\begin{theorem}
\label{theorem:reinforce_gradient_informal}
Under the assumptions of
Theorem~\ref{theorem:reinforce_gradient}, the gradient of
\eqref{eq:discrete_objective_reinforce} satisfies
\begin{equation}
    \nabla_\theta\cL_{\text{RP}}(\theta)
    =
    \EEs{\widetilde{\tau} \sim \widetilde{q}(\cdot;\theta)}{
        \sum_{k=0}^{K-1}
        \nabla_\theta
        \phi_k(Y_k;\theta)
        +
        \sum_{k=0}^{K-1}
        c_k\,
        \nabla_\theta
        \log
        \widetilde q_k(
            \widetilde Y_{k+1}\mid Y_k;\theta
        )
    },
\end{equation} \vspace{-0.1cm}
where  \vspace{-0.1cm}
\begin{equation}
    \label{eq:reinforce_cost_to_go}
    c_k :=
    \sum_{j=k+1}^{K-1}\phi_j(Y_j;\theta)
    +\psi(Y_K).
\end{equation}
\end{theorem}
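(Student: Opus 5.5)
The plan is to rewrite $\cL_{\text{RP}}$ as an expectation over the augmented trajectory $\widetilde\tau$, whose law factorizes into Gaussian pieces. Reflection is then a deterministic post-processing step with no $\theta$-dependence, and the REINFORCE identity applies term by term.

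\textbf{Factorization.} Since $Y_{k+1}=\operatorname{refl}_{\cD}(\widetilde Y_{k+1})$ is deterministic and independent of $\theta$, the augmented law has a density with respect to $\nu\otimes\prod_k(\mathrm{Leb}_{\R^d}\otimes\delta_{\operatorname{refl}_{\cD}(\widetilde y_{k+1})})$, namely
\[
\widetilde q(\widetilde\tau;\theta)=\nu(y_0)\prod_{k=0}^{K-1}\widetilde q_k(\widetilde y_{k+1}\mid y_k;\theta).
\]
The integrand $F(\widetilde\tau;\theta):=\sum_k\phi_k(Y_k;\theta)+\psi(Y_K)$ depends only on the reflected states. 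Marginalizing $\widetilde\tau$ onto $(Y_0,\dots,Y_K)$ recovers $q(\cdot;\theta)$, so $\cL_{\text{RP}}(\theta)=\E_{\widetilde q(\cdot;\theta)}[F(\widetilde\tau;\theta)]$. This formulation avoids the reflected transition densities entirely.

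\textbf{Differentiation.} Write $\cL_{\text{RP}}(\theta)=\int F(\widetilde\tau;\theta)\,\widetilde q(\widetilde\tau;\theta)\,\dd\widetilde\tau$. The product rule gives two pieces:
\[
\E_{\widetilde q}\big[\nabla_\theta F\big]+\E_{\widetilde q}\Big[F\sum_{k}\nabla_\theta\log\widetilde q_k(\widetilde Y_{k+1}\mid Y_k;\theta)\Big].
\]
The first piece is $\E\sum_k\nabla_\theta\phi_k(Y_k;\theta)$, because $\psi$ does not depend on $\theta$. Notably, $r$ is never differentiated.

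\textbf{Causality reduction.} To reduce $F$ to the cost-to-go $c_k$ in front of the $k$-th score term, I would show that $\E[\phi_j(Y_j;\theta)\,\nabla_\theta\log\widetilde q_k]=0$ for $j\le k$. Condition on $\mathcal F_k:=\sigma(Y_0,\widetilde Y_1,\dots,Y_k)$; then $\phi_j(Y_j;\theta)$ is $\mathcal F_k$-measurable. Moreover,
\[
\E\big[\nabla_\theta\log\widetilde q_k(\widetilde Y_{k+1}\mid Y_k;\theta)\,\big|\,\mathcal F_k\big]=\nabla_\theta\int\widetilde q_k(\widetilde y\mid Y_k;\theta)\,\dd\widetilde y=0,
\]
since $\widetilde q_k$ is a normalized Gaussian. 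Removing these terms leaves exactly $c_k=\sum_{j>k}\phi_j+\psi(Y_K)$.

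\textbf{Main obstacle.} The main difficulty is justifying the interchange of $\nabla_\theta$ and the integral, which is where the assumptions of Theorem~\ref{theorem:reinforce_gradient} enter. I would assume the following:
\begin{itemize}[leftmargin=*,itemsep=1pt,topsep=2pt]
\item $s_\theta$ is $C^1$ in $\theta$, with $s_\theta$ and $\nabla_\theta s_\theta$ bounded uniformly on $\overline{\cD}\times[0,1]$ and on a neighbourhood of $\theta$;
\item $r$ is bounded above and away from zero, so that $\psi$ is bounded;
\item $g_k>0$ for every $k$.
\end{itemize}
Under these assumptions the score satisfies
\[
\nabla_\theta\log\widetilde q_k=\big(\widetilde Y_{k+1}-Y_k-g_k^2s_\theta\big)^\top\nabla_\theta s_\theta,
\]
which is linear in a Gaussian variable with bounded coefficients (the prefactor $g_k^2/g_k^2=1$ cancels). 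I would then apply dominated convergence via the mean value theorem. A Gaussian-tail envelope dominates $|F|\,\sup_{\theta'}|\nabla_\theta\widetilde q|$ uniformly for $\theta'$ near $\theta$, because the means vary in a bounded set. This envelope is integrable in $\widetilde y_{k+1}$, while $y_k$ ranges over the bounded set $\overline{\cD}$. The same integrability also justifies the tower-property argument in the causality step.
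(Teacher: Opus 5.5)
Your proof is correct, but it takes a different route from the paper's.

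\textbf{The paper's route.} It works on the reflected-state path space. It writes $q(\tau;\theta)=\nu(Y_0)\prod_k q_k(Y_{k+1}\mid Y_k;\theta)$ with the intractable reflected transition densities $q_k$. It then applies the log-derivative trick and the causality reduction there, with $\mathcal F_k=\sigma(Y_0,\dots,Y_k)$. Only afterwards does it replace $\nabla_\theta\log q_k$ by $\nabla_\theta\log\widetilde q_k$. That replacement uses Lemma~\ref{lemma:reflected_score_identity}, a Fisher-type identity $\nabla_\theta\log q_k(y\mid x;\theta)=\E[\nabla_\theta\log\widetilde q_k(\widetilde Y_{k+1}\mid x;\theta)\mid Y_{k+1}=y,\,Y_k=x]$ derived from a Dirac-delta pushforward representation. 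It also uses the Markov observation $\E[c_k\mid Y_k,\widetilde Y_{k+1}]=h_k(Y_{k+1})$.

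\textbf{Your route.} You never introduce $q_k$. Because $\operatorname{refl}_{\cD}$ is deterministic and $\theta$-free, the Gaussian scores appear immediately, and you do the causality step with the enlarged filtration that includes the pre-reflection states.

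\textbf{What each buys.} Your route is more economical and more rigorous. It does not need $q_k$ to exist as a positive, $\theta$-differentiable density, and it avoids the formal delta-function manipulation and the unspecified ``regularity conditions'' of the lemma. Your Gaussian-envelope dominated-convergence argument also actually justifies exchanging $\nabla_\theta$ with the expectation, which the paper leaves implicit. Your added $C^1$ and boundedness hypotheses on $s_\theta$ are tacitly required by the paper's proof too. The paper's route buys an interpretation. Its lemma shows the Gaussian score is a conditionally unbiased proxy for the reflected score. Hence the reflected-density REINFORCE estimator is the conditional expectation, given the reflected path, of the implemented augmented estimator (a Rao--Blackwellization). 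This explains why the tractable estimator targets the same gradient, at the cost of some extra variance.

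\textbf{A small notational slip.} If your reference measure on $y_0$ is $\nu$ itself, the $y_0$-factor of the density should be $1$, not $\nu(y_0)$. Also, the ``product'' with $\delta_{\operatorname{refl}_{\cD}(\widetilde y_{k+1})}$ is really a kernel composition, not a product measure. Both issues disappear if you parametrize by $(Y_0,\widetilde Y_1,\dots,\widetilde Y_K)$ alone with Lebesgue reference measure. The density is then $\nu(y_0)\prod_k\widetilde q_k(\widetilde y_{k+1}\mid\operatorname{refl}_{\cD}(\widetilde y_k);\theta)$, with the convention $\operatorname{refl}_{\cD}(\widetilde y_0):=y_0$.
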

Here $\nabla_\theta$ denotes differentiation with the state held fixed. Full formal statement of Theorem~\ref{theorem:reinforce_gradient_informal} with proofs is available in Theorem~\ref{theorem:reinforce_gradient}. This result allows the gradient to be
computed using only the tractable pre-reflection Gaussian transitions.
In particular, no reflected transition density or derivative of the reflection
map is required. To reduce the variance of the score-function term, for a batch of $B$ independent trajectories we use the leave-one-out~\citep{kool2019buy} baseline $b_k^{(i)}$ and replace $c_k^{(i)}$ in by $c_k^{(i)}-b_k^{(i)}$. We summarize the training procedure using the REINFORCE gradient estimator in Algorithm~\ref{algorithm:rds_training}.

\textbf{Evaluation.} For a batch of $N$ trajectories generated by the reflected scheme \eqref{eq:euler_scheme}, the final importance-sampling estimator~\eqref{eq:importance_sampling_estimator} under a time discretization corresponds to
\begin{equation}
\label{eq:discrete_Z_estimator}
\widehat Z
=
\frac{1}{N}
\sum_{i=1}^{N}
\frac{r(Y_K^{(i)})}{\nu(Y_K^{(i)})}
\exp\left(
-\sum_{k=0}^{K-1}
\left[
g_k
s_\theta(Y_k^{(i)},1-t_k)^\top
\xi_k^{(i)}
+
\frac12 g_k^2
\|
s_\theta(Y_k^{(i)},1-t_k)
\|_2^2
\right]
\right).
\end{equation}
Under the assumption that a discretized step of~\eqref{eq:reverse_reference_sde} preserves the uniform distribution on $\cD$, which holds for $\cD$ considered in our experiments, the discrete-time estimate~\eqref{eq:discrete_Z_estimator} remains unbiased. The formal proof and discussion are available in Proposition~\ref{prop:discrete_estimator_unbiased} and Remark~\ref{remark:discrete_reference_invariance}.
\section{Experiments}
\label{sec:experiments}
\textbf{Benchmarks}. We evaluate our method on bounded-domain integration problems defined on the unit hypercube
$\overline{\cD}=[0,1]^d$, using the same reflection operator as in~\cite{lou2023reflected}. We consider Genz Continuous and Gaussian Peak to test smooth and non-smooth interior concentration, Genz Product Peak for sharply localized factorized structure, Genz Corner Peak for mass concentrated at a boundary corner, and Genz Discontinuous for discontinuous support with a zero-valued region \citep{genz1984testing, kaarnioja2025explicit}. GMM Cube tests multimodal exploration, Stripe Cube tests concentration near boundary faces, and Funnel Cube tests adaptation to strongly varying conditional scales. Finally, we consider two real world integrals - Elliptic Two-Loop and Asian option. The Elliptic Two-Loop integral is a finite five-dimensional Feynman parameter integral over the unit cube~\citep{Bonciani_2016, Borowka_2018}. And the Asian option benchmark \citep{l2009quasi} provides a high-dimensional problem with strong coordinate dependence, a non-smooth payoff, and a large zero-valued region. We additionally compare against PIS on the benchmark from sampling literature, namely Digits~\citep{blessing2024beyond}, in Section~\ref{sec:digits}. Complete definitions and parameter settings are given in Appendix~\ref{app_sec:benchmarks}.

\begin{figure*}[!t]
    \centering
    \begin{minipage}{0.32\textwidth}
        \centering
        \includegraphics[width=\linewidth]{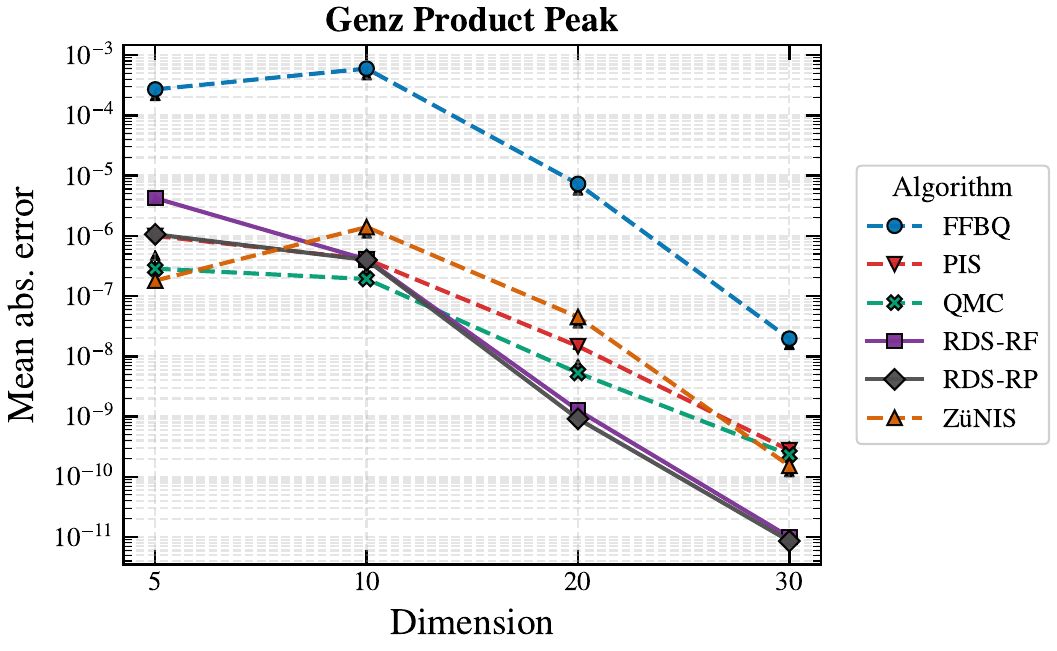}
    \end{minipage}
    \hfill
    \begin{minipage}{0.32\textwidth}
        \centering
        \includegraphics[width=\linewidth]{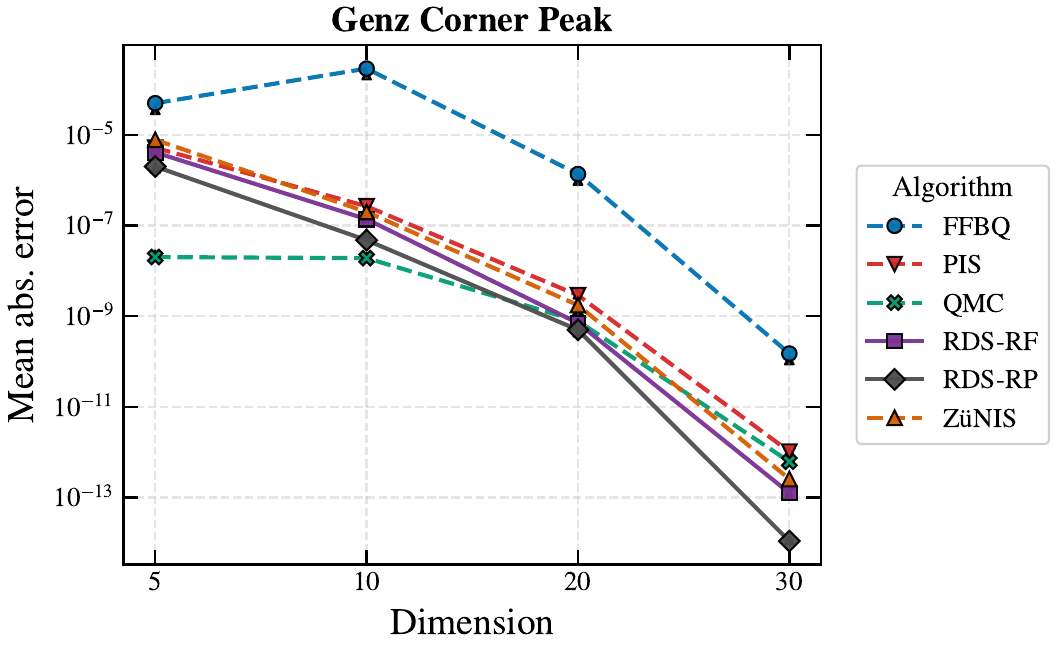}
    \end{minipage}
    \hfill
    \begin{minipage}{0.32\textwidth}
        \centering
        \includegraphics[width=\linewidth]{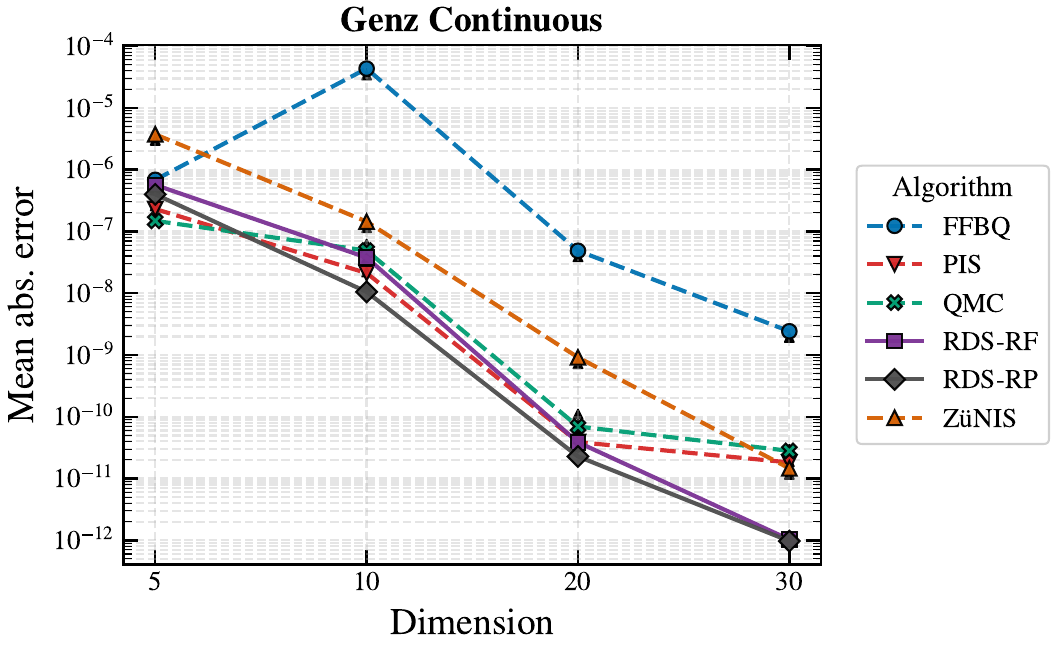}
    \end{minipage}

    \caption{
        Mean absolute error as a function of dimension on Genz test integrals for the number of samples $N=2^{20}$. Error bars show variability across independent
        evaluation runs.
    }
    \label{fig:dimension_scaling}
\end{figure*}

\begin{figure*}[!t]
    \centering
    \begin{minipage}{0.32\textwidth}
        \centering
        \includegraphics[width=\linewidth]{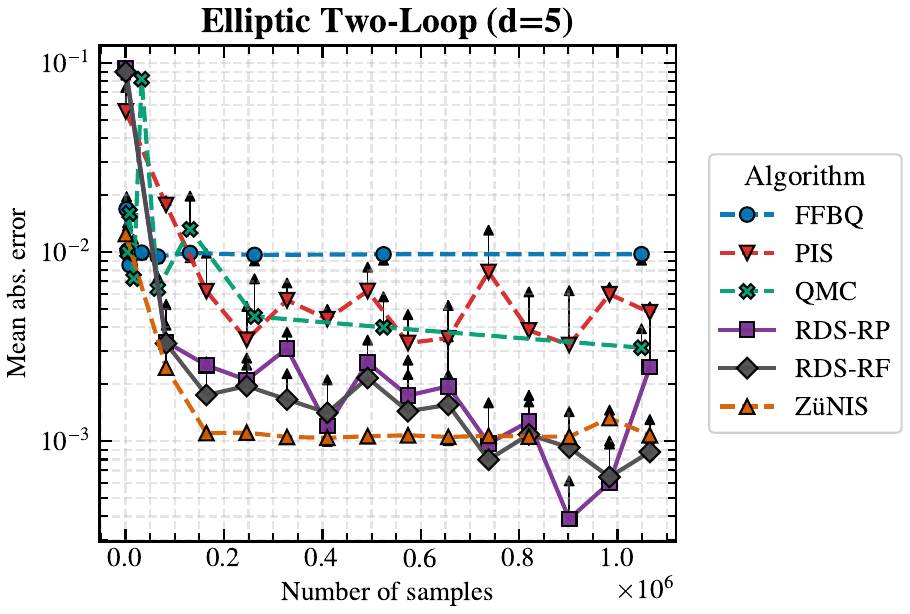}
    \end{minipage}
    \hfill
    \begin{minipage}{0.32\textwidth}
        \centering
        \includegraphics[width=\linewidth]{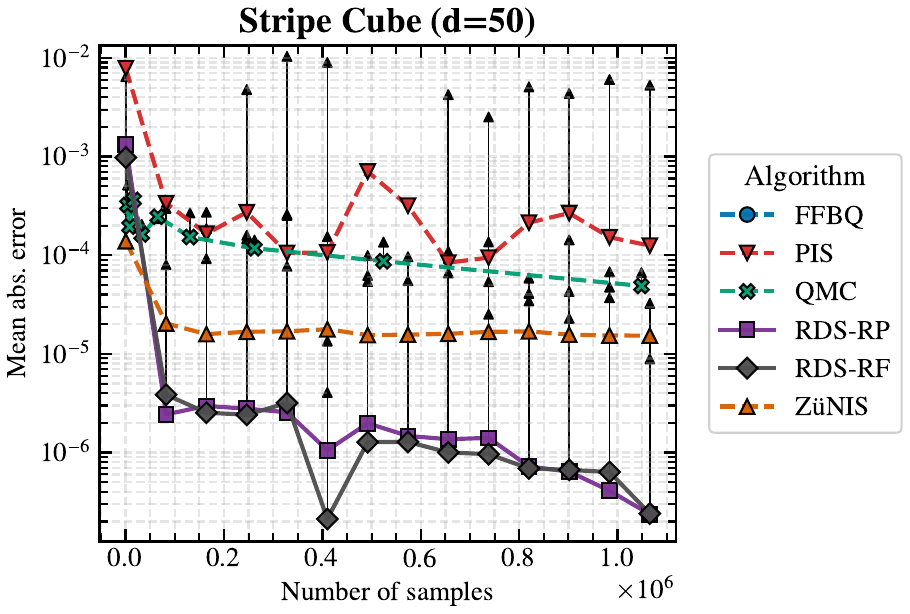}
    \end{minipage}
    \hfill
    \begin{minipage}{0.32\textwidth}
        \centering
        \includegraphics[width=\linewidth]{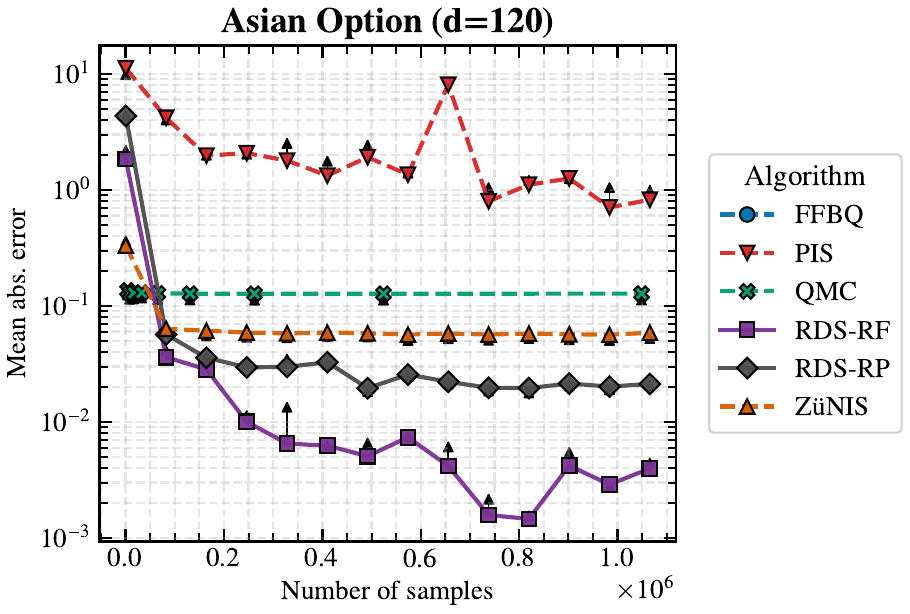}
    \end{minipage}

    \caption{
        Mean absolute error as a function of the number of samples used for estimation
        on representative benchmarks. Results for FFBQ are missing on some plots due to memory limitations of this algorithm in large dimensions.
    }
    \label{fig:nfe_scaling}
\end{figure*}

\textbf{Baselines}. We compare the reparameterization and reinforced variants of our method, denoted RDS-RP and RDS-RF, respectively, against Path Integral Sampler \citep[PIS]{zhang2022path}, Quasi-Monte Carlo~\citep{niederreiter1992random,dick2013high},
Fast Fourier Bayesian Quadrature~\citep[FFBQ]{warren2024fast}, and the normalizing flow importance sampling library Z\"uNIS \citep{deutschmann2024accelerating}. For PIS, we map $z\in\mathbb{R}^d$ to $x\in[0,1]^d$ with the sigmoid transformation $x=\sigma(z)$ and train on the change-of-variables target $\widetilde r(z)=r(\sigma(z))\left|\det J_\sigma(z)\right|$,
which preserves the original integral. We report the absolute normalizing-constant error $\Delta Z = |\widehat Z-Z|$, and show mean and standard deviation across independent evaluation runs. Full experimental details are in Appendix~\ref{app_sec:experimental_details}.

Overall, RDS performs well across a broad range of targets with substantially different geometry. In particular, it remains competitive on smooth, multimodal, boundary-concentrated, and high-dimensional problems, indicating that the learned proposal is not specialized to a particular class of integrands. The strong performance across both the dimension and target-evaluation sweeps also shows that the method can effectively adapt its sampling distribution to the regions that contribute most to the integral. Figure~
\ref{fig:dimension_scaling} illustrates the dependence of the estimation error on dimension, while Figure~\ref{fig:nfe_scaling} compares methods at matched sample budgets. The full results are given in Appendix~\ref{app_sec:experiments_results}. 

\subsection{Problems with smooth target functions}

RDS-RP is closely related to PIS, as both optimize a diffusion proposal using pathwise reparameterization gradients. The main difference is that RDS-RP evolves directly on the constrained domain and handles the boundary by reflection. This is particularly helpful for targets such as Genz Corner Peak and Stripe Cube, where a substantial amount of mass may lie close to the boundary. In PIS, the transformation to an unconstrained space sends such points far into the tails, which makes these regions harder to represent accurately. This is consistent with the stronger performance of RDS-RP on boundary-concentrated benchmarks. Besides normalizing-constant estimation, we inspect samples generated by the
learned reverse process. Figure~\ref{fig:synthetic_samples} shows representative
RDS samples for several cube-supported targets. These visualizations provide
a complementary check that the learned dynamics capture the geometry of the
target rather than only producing accurate estimates of $Z$.

\subsection{Problems with zero-valued target regions}
\label{sec:zero-reward-reg}
The path-measure and REINFORCE results require measurable $r(x)$ bounded above
and away from zero, but several benchmarks vanish on subsets of $\cD$. For
these targets we use $r_\varepsilon(x):=r(x)+\varepsilon$ with a small $\varepsilon>0$. Since $Z_\varepsilon=Z+\varepsilon\vol(\cD)$,
we estimate $Z_\varepsilon$ and report the corrected estimate
$\widehat Z=\widehat Z_\varepsilon-\varepsilon\vol(\cD)$, where $\vol(\cD)=1$ for the unit cube.

\begin{wrapfigure}{r}{0.34\textwidth}
    \centering
    \vspace{-1.5em}
    \includegraphics[width=0.85\linewidth]{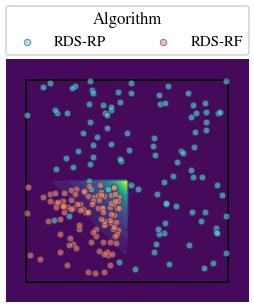}
    \caption{Visual sample comparison between RDS-RP and RDS-RF on Genz Discontinuous.}
    \label{fig:genz_discont}
    \vspace{-0.5cm}
\end{wrapfigure}

Zero-valued regions are particularly challenging for optimal-control methods trained with reparameterization: it is not applicable to discontinuous functions. When the target is locally constant, $\nabla_x \log r(x)$ provides no information about how probability mass should be moved, while at the boundary of such regions the derivative may be undefined. This issue is present in both Genz Discontinuous and the Asian option benchmark, whose targets vanish over subsets of the domain. On these problems, RDS-RF consistently outperforms the reparameterized RDS-RP variant, demonstrating the advantage of differentiating the trajectory distribution rather than the terminal target itself. The samples for Genz Discontinuous in Figure~\ref{fig:genz_discont} illustrate this behavior particularly clearly. RDS-RF successfully concentrates probability mass inside the non-zero-valued region and recovers the sharp support geometry despite the absence of informative local gradients outside it. This improved adaptation also translates into more accurate estimates of $Z$ on both benchmarks.

\subsection{High-Dimensional Sampling}
\label{sec:digits}

\begin{figure*}[!t]
    \centering
    \def\imgw{0.215\textwidth}
    \def\ms#1#2{\begin{tabular}[c]{@{}c@{}}#1\\[-2pt]{\scriptsize$\pm$#2}\end{tabular}}
    \def\digitsrows#1{%
        \setlength{\tabcolsep}{3pt}%
        \begin{tabular}[b]{@{}lcc@{}}
            \toprule
            Method & $\mathcal{W}_2 \downarrow$ & ELBO $\uparrow$ \\
            \midrule \addlinespace[#1]
            RDS-RP & \ms{\textbf{145.40}}{0.76} & \ms{\textbf{$-$154.50}}{0.08} \\ \addlinespace[#1]
            RDS-RF & \ms{161.87}{0.63}          & \ms{$-$163.75}{0.42} \\ \addlinespace[#1]
            PIS    & \ms{185.56}{0.33}          & \ms{$-$172.16}{0.67} \\ \addlinespace[#1]
            \bottomrule
        \end{tabular}}
    \sbox{\digitstab}{\footnotesize\digitsrows{0pt}}
    \setlength{\digitsgap}{\dimexpr(\imgw-\ht\digitstab-\dp\digitstab)/4\relax}

    \setlength{\tabcolsep}{4pt}
    \begin{tabular}{@{}cccc@{}}
        \includegraphics[width=\imgw]{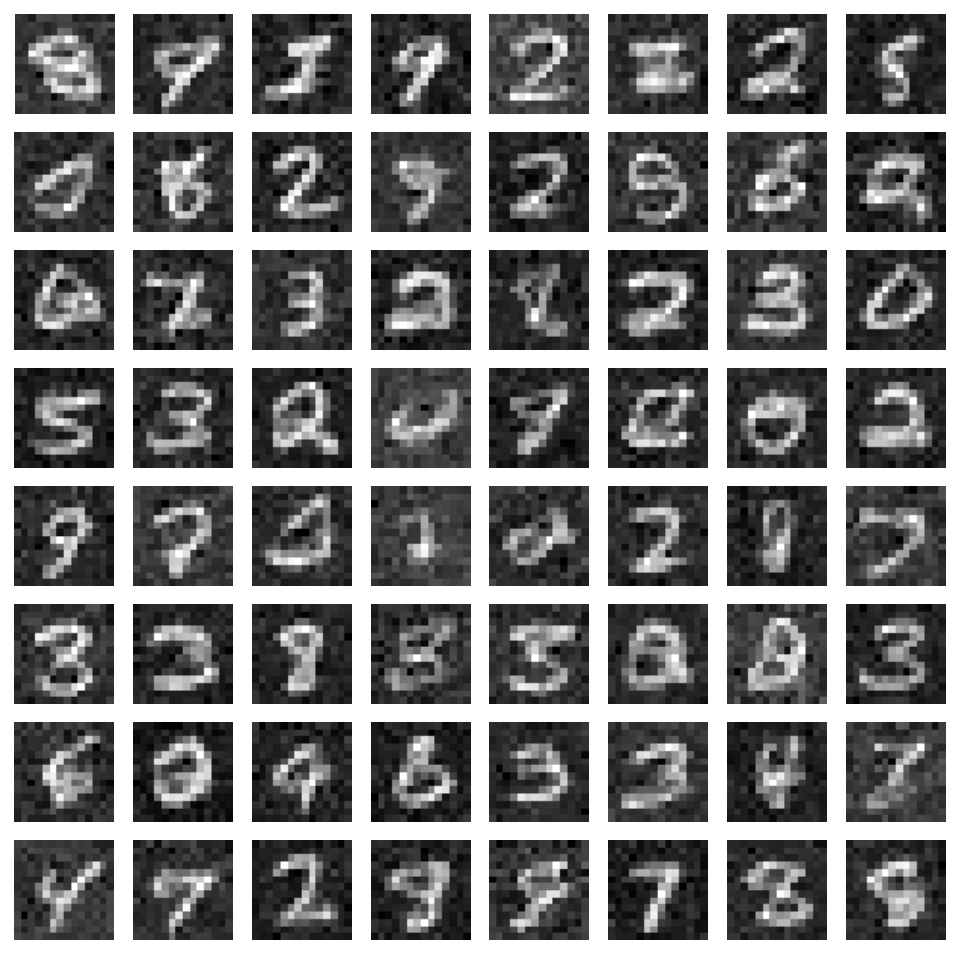} &
        \includegraphics[width=\imgw]{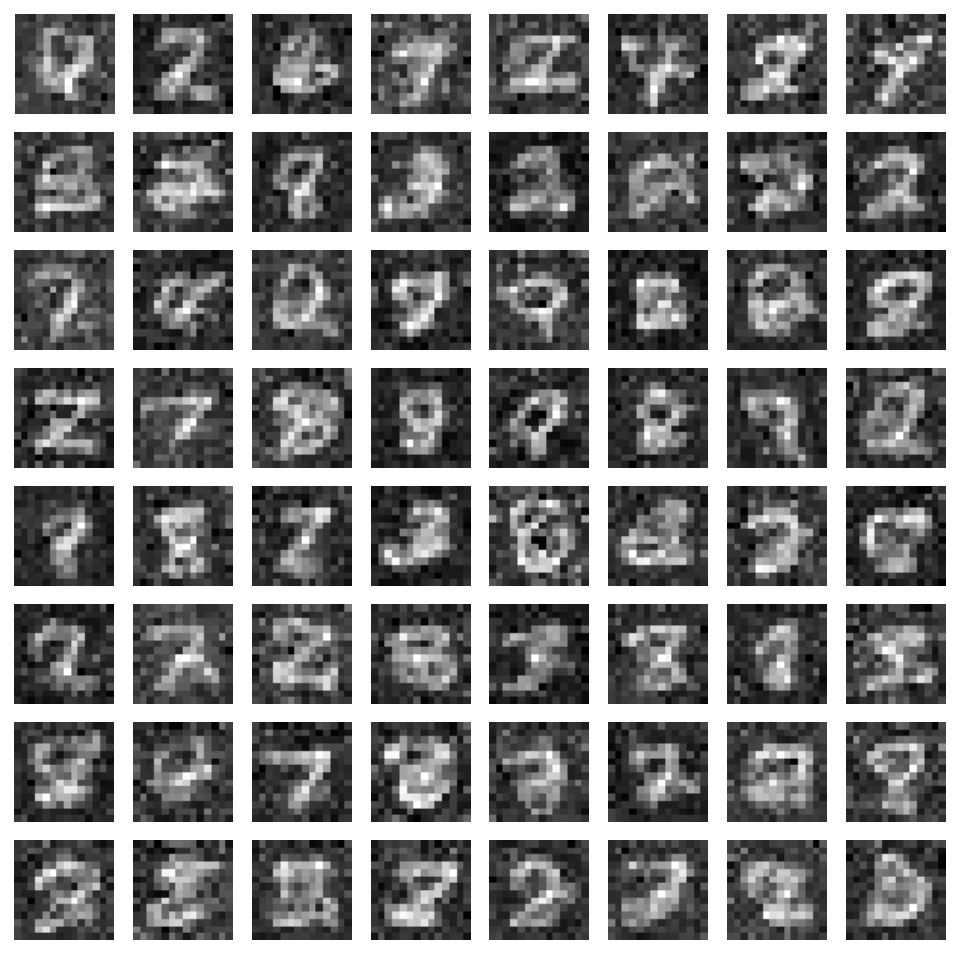} &
        \includegraphics[width=\imgw]{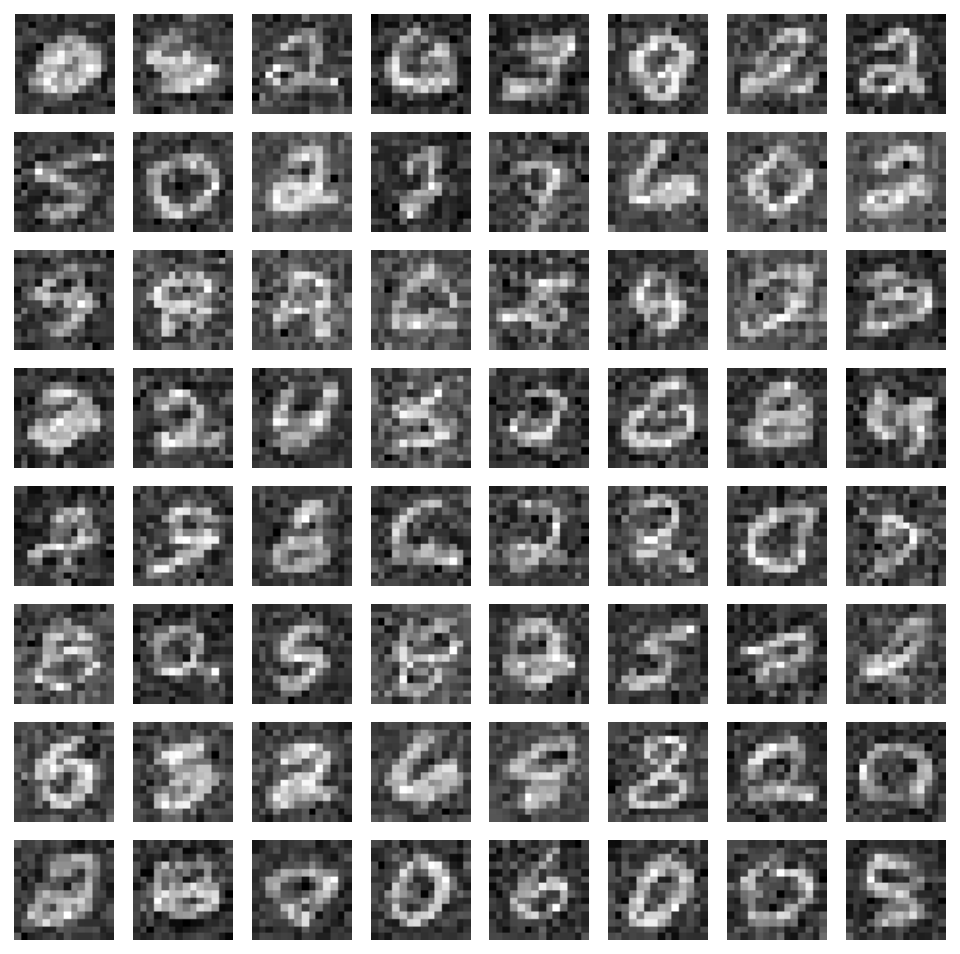} &
        {\footnotesize\digitsrows{\digitsgap}} \\[2pt]
        \small RDS-RP & \small RDS-RF & \small PIS & \small Metrics \\
    \end{tabular}

    \caption{
        Sample visualizations and metrics on the Digits benchmark~\citep{blessing2024beyond}.
    }
    \label{fig:digits}
\end{figure*}

In addition, to demonstrate the benefits of learning a reflected diffusion proposal (RDS) on a bounded domain in comparison to an unconstrained diffusion proposal (PIS) from another angle, we consider the high-dimensional Digits benchmark from the diffusion sampling literature~\citep{blessing2024beyond}. The target $\widetilde r(z)$ is a NICE~\citep{dinh2014nice} normalizing flow model trained on downsampled $14 \times 14$ images of MNIST~\citep{lecun1998gradient}. While the images naturally lie in a unit hypercube, the benchmark was designed for $\mathbb{R}^d$ sampling, so the normalizing flow was trained on images preprocessed with a shifted logit transform. Mirroring the previously described setup for PIS, to define an equivalent target with the same value of $Z$ on $\overline{\cD}=[0,1]^d$, we train RDS on $r(x)=\widetilde r(\sigma^{-1}(x))\left|\det J_{\sigma^{-1}}(x)\right|$. As discussed in Section~\ref{sec:training_objective}, our learned model also acts as a sampler for the normalized target $\pi=r/Z$. And, since the ground truth value of $Z$ is unknown, we compare RDS against PIS using sampling metrics, namely ELBO, which lower-bounds the value of $\log Z$, and $\mathcal{W}_2$ to the set of ground truth samples (see~\cite{blessing2024beyond} and Appendix~\ref{app_sec:experiments} for more details).

RDS-RP and PIS in essence have an identical KL training target, and the gradients are identically computed through reparameterization. We parameterize both models with neural networks of the same architecture and use the same number of discretization steps, so the comparison targets the only crucial difference that lies in the underlying diffusion dynamics: Brownian motion on $\mathbb{R}^d$ and reflected Brownian motion on $[0,1]^d$. Figure~\ref{fig:digits} presents the comparison results, demonstrating that RDS-RP and RDS-RF outperform PIS across both metrics.

\section{Conclusion}
\vspace{-0.2cm}

We presented reflected diffusion samplers for integration over bounded domains. The
construction replaces the Gaussian reference process of existing diffusion samplers with a
reflected variance-exploding process that leaves the uniform distribution on $\cD$ invariant,
where trajectories never leave the domain, and a Girsanov change of measure for reflected
diffusions supplies both the training objective and the importance weights used to estimate
$Z$. Across a number of benchmarks, in our experiments the developed approach supplies accurate estimates of the integral. While the proposed approach is currently limited to non-negative targets, one can consider a decomposition $\int_{\cD} r(x)\,\dd x = \int_{\cD} r^{+}(x)\,\dd x - \int_{\cD} r^{-}(x)\,\dd x$, then estimate both parts as discussed in Section~\ref{sec:zero-reward-reg}. We leave the analysis and evaluation of this adaptation for future work.

\begin{ack}
This research was supported in part through computational resources of HPC facilities at HSE University~\citep{kostenetskiy2021hpc}. The work of Kirill Korolev, Tigran Ramazyan and Nikita Morozov was supported by the grant for research centers in the field of AI provided by the Ministry of Economic Development of the Russian Federation in accordance with the agreement 000000C313925P4E0002 and the agreement with HSE University №139-15-2025-009. Artur Goldman is supported by the ETH AI Center through an ETH AI Center doctoral fellowship.
\end{ack}

\bibliography{bibliography}

\begin{thebibliography}{59}
\providecommand{\natexlab}[1]{#1}
\providecommand{\url}[1]{\texttt{#1}}
\expandafter\ifx\csname urlstyle\endcsname\relax
  \providecommand{\doi}[1]{doi: #1}\else
  \providecommand{\doi}{doi: \begingroup \urlstyle{rm}\Url}\fi

\bibitem[Akhound-Sadegh et~al.(2024)Akhound-Sadegh, Rector-Brooks, Bose, Mittal, Lemos, Liu, Sendera, Ravanbakhsh, Gidel, Bengio, Malkin, and Tong]{akhound-sadegh2024iterated}
Akhound-Sadegh, T., Rector-Brooks, J., Bose, J., Mittal, S., Lemos, P., Liu, C.-H., Sendera, M., Ravanbakhsh, S., Gidel, G., Bengio, Y., Malkin, N., and Tong, A.
\newblock Iterated denoising energy matching for sampling from boltzmann densities.
\newblock In \emph{Forty-first International Conference on Machine Learning}, 2024.
\newblock URL \url{https://openreview.net/forum?id=gVjMwLDFoQ}.

\bibitem[Alfaro et~al.(2008)Alfaro, Hilhorst, and Matano]{alfaro2008singular}
Alfaro, M., Hilhorst, D., and Matano, H.
\newblock The singular limit of the allen--cahn equation and the fitzhugh--nagumo system.
\newblock \emph{Journal of Differential Equations}, 245\penalty0 (2):\penalty0 505--565, 2008.

\bibitem[Berner et~al.(2024)Berner, Richter, and Ullrich]{berner2024an}
Berner, J., Richter, L., and Ullrich, K.
\newblock An optimal control perspective on diffusion-based generative modeling.
\newblock \emph{Transactions on Machine Learning Research}, 2024.
\newblock ISSN 2835-8856.
\newblock URL \url{https://openreview.net/forum?id=oYIjw37pTP}.

\bibitem[Berner et~al.(2026)Berner, Richter, Sendera, Rector-Brooks, and Malkin]{berner2025discrete}
Berner, J., Richter, L., Sendera, M., Rector-Brooks, J., and Malkin, N.
\newblock From discrete-time policies to continuous-time diffusion samplers: Asymptotic equivalences and faster training.
\newblock \emph{Transactions on Machine Learning Research}, 2026.
\newblock ISSN 2835-8856.
\newblock URL \url{https://openreview.net/forum?id=xLE3xJUuDO}.
\newblock Expert Certification.

\bibitem[Blessing et~al.(2024)Blessing, Jia, Esslinger, Vargas, and Neumann]{blessing2024beyond}
Blessing, D., Jia, X., Esslinger, J., Vargas, F., and Neumann, G.
\newblock Beyond {ELBOs}: A large-scale evaluation of variational methods for sampling.
\newblock \emph{International Conference on Machine Learning (ICML)}, 2024.

\bibitem[Bonciani et~al.(2016)Bonciani, Del~Duca, Frellesvig, Henn, Moriello, and Smirnov]{Bonciani_2016}
Bonciani, R., Del~Duca, V., Frellesvig, H., Henn, J.~M., Moriello, F., and Smirnov, V.~A.
\newblock Two-loop planar master integrals for higgs → 3 partons with full heavy-quark mass dependence.
\newblock \emph{Journal of High Energy Physics}, 2016\penalty0 (12), December 2016.
\newblock ISSN 1029-8479.
\newblock \doi{10.1007/jhep12(2016)096}.
\newblock URL \url{http://dx.doi.org/10.1007/JHEP12(2016)096}.

\bibitem[Borowka et~al.(2018)Borowka, Heinrich, Jahn, Jones, Kerner, Schlenk, and Zirke]{Borowka_2018}
Borowka, S., Heinrich, G., Jahn, S., Jones, S., Kerner, M., Schlenk, J., and Zirke, T.
\newblock pysecdec: A toolbox for the numerical evaluation of multi-scale integrals.
\newblock \emph{Computer Physics Communications}, 222:\penalty0 313–326, January 2018.
\newblock ISSN 0010-4655.
\newblock \doi{10.1016/j.cpc.2017.09.015}.
\newblock URL \url{http://dx.doi.org/10.1016/j.cpc.2017.09.015}.

\bibitem[Briol et~al.(2019)Briol, Oates, Girolami, Osborne, and Sejdinovic]{briol2019probabilistic}
Briol, F.-X., Oates, C.~J., Girolami, M., Osborne, M.~A., and Sejdinovic, D.
\newblock Probabilistic integration.
\newblock \emph{Statistical Science}, 34\penalty0 (1):\penalty0 1--22, 2019.

\bibitem[Burdzy et~al.(2004)Burdzy, Chen, and Sylvester]{burdzy2004heat}
Burdzy, K., Chen, Z.-Q., and Sylvester, J.
\newblock The heat equation and reflected brownian motion in time-dependent domains.
\newblock \emph{The Annals of Probability}, 32\penalty0 (1B):\penalty0 775--804, 2004.

\bibitem[Cattiaux(1988)]{cattiaux1988time}
Cattiaux, P.
\newblock Time reversal of diffusion processes with a boundary condition.
\newblock \emph{Stochastic processes and their Applications}, 28\penalty0 (2):\penalty0 275--292, 1988.

\bibitem[Chen et~al.(2016)Chen, Georgiou, and Pavon]{chen2016relation}
Chen, Y., Georgiou, T.~T., and Pavon, M.
\newblock On the relation between optimal transport and schr{\"o}dinger bridges: A stochastic control viewpoint.
\newblock \emph{Journal of Optimization Theory and Applications}, 169\penalty0 (2):\penalty0 671--691, 2016.

\bibitem[Chen et~al.(2021)Chen, Georgiou, and Pavon]{chen2021stochastic}
Chen, Y., Georgiou, T.~T., and Pavon, M.
\newblock Stochastic control liaisons: Richard sinkhorn meets gaspard monge on a schrodinger bridge.
\newblock \emph{Siam Review}, 63\penalty0 (2):\penalty0 249--313, 2021.

\bibitem[Deutschmann \& G{\"o}tz(2024)Deutschmann and G{\"o}tz]{deutschmann2024accelerating}
Deutschmann, N. and G{\"o}tz, N.
\newblock Accelerating hep simulations with neural importance sampling.
\newblock \emph{Journal of High Energy Physics}, 2024\penalty0 (3):\penalty0 83, 2024.

\bibitem[Dick et~al.(2013)Dick, Kuo, and Sloan]{dick2013high}
Dick, J., Kuo, F.~Y., and Sloan, I.~H.
\newblock High-dimensional integration: the quasi-monte carlo way.
\newblock \emph{Acta Numerica}, 22:\penalty0 133--288, 2013.

\bibitem[Dinh et~al.(2014)Dinh, Krueger, and Bengio]{dinh2014nice}
Dinh, L., Krueger, D., and Bengio, Y.
\newblock {NICE}: Non-linear independent components estimation.
\newblock \emph{arXiv preprint arXiv:1410.8516}, 2014.

\bibitem[Fishman et~al.(2023)Fishman, Klarner, Bortoli, Mathieu, and Hutchinson]{fishman2023diffusion}
Fishman, N., Klarner, L., Bortoli, V.~D., Mathieu, E., and Hutchinson, M.~J.
\newblock Diffusion models for constrained domains.
\newblock \emph{Transactions on Machine Learning Research}, 2023.
\newblock ISSN 2835-8856.
\newblock URL \url{https://openreview.net/forum?id=xuWTFQ4VGO}.
\newblock Expert Certification.

\bibitem[Foote(1984)]{foote1984regularity}
Foote, R.~L.
\newblock Regularity of the distance function.
\newblock \emph{Proceedings of the American Mathematical Society}, 92\penalty0 (1):\penalty0 153--155, 1984.

\bibitem[Genz(1984)]{genz1984testing}
Genz, A.
\newblock Testing multidimensional integration routines.
\newblock In \emph{Proc. of international conference on Tools, methods and languages for scientific and engineering computation}, pp.\  81--94, 1984.

\bibitem[Glasserman(2004)]{glasserman2004monte}
Glasserman, P.
\newblock \emph{Monte Carlo methods in financial engineering}, volume~53.
\newblock Springer, 2004.

\bibitem[Gunter et~al.(2014)Gunter, Osborne, Garnett, Hennig, and Roberts]{gunter2014sampling}
Gunter, T., Osborne, M.~A., Garnett, R., Hennig, P., and Roberts, S.~J.
\newblock Sampling for inference in probabilistic models with fast bayesian quadrature.
\newblock \emph{Advances in neural information processing systems}, 27, 2014.

\bibitem[Guo et~al.(2026)Guo, Tao, and Chen]{guo2026complexity}
Guo, W., Tao, M., and Chen, Y.
\newblock Complexity analysis of normalizing constant estimation: from jarzynski equality to annealed importance sampling and beyond.
\newblock In \emph{The Fourteenth International Conference on Learning Representations}, 2026.
\newblock URL \url{https://openreview.net/forum?id=96fJALwotm}.

\bibitem[Hammersley \& Handscomb(1964)Hammersley and Handscomb]{hammersley1964monte}
Hammersley, J.~M. and Handscomb, D.~C.
\newblock \emph{Monte Carlo Methods}.
\newblock Methuen \& Co. London; John Wiley \& Sons, New York, 1964.

\bibitem[Havens et~al.(2025)Havens, Miller, Yan, Domingo-Enrich, Sriram, Levine, Wood, Hu, Amos, Karrer, Fu, Liu, and Chen]{havens2025adjoint}
Havens, A.~J., Miller, B.~K., Yan, B., Domingo-Enrich, C., Sriram, A., Levine, D.~S., Wood, B.~M., Hu, B., Amos, B., Karrer, B., Fu, X., Liu, G.-H., and Chen, R. T.~Q.
\newblock Adjoint sampling: Highly scalable diffusion samplers via adjoint matching.
\newblock In \emph{Forty-second International Conference on Machine Learning}, 2025.
\newblock URL \url{https://openreview.net/forum?id=6Eg1OrHmg2}.

\bibitem[Holk et~al.(2026)Holk, Strauch, and Trottner]{holk2026statistical}
Holk, A., Strauch, C., and Trottner, L.
\newblock Statistical guarantees for denoising reflected diffusion models.
\newblock \emph{Journal of Machine Learning Research}, 27\penalty0 (116):\penalty0 1--50, 2026.

\bibitem[Kaarnioja(2025)]{kaarnioja2025explicit}
Kaarnioja, V.
\newblock Explicit solutions of genz test integrals.
\newblock \emph{Applied Mathematics Letters}, 163:\penalty0 109444, 2025.

\bibitem[Kass \& Raftery(1995)Kass and Raftery]{kass1995bayes}
Kass, R.~E. and Raftery, A.~E.
\newblock Bayes factors.
\newblock \emph{Journal of the american statistical association}, 90\penalty0 (430):\penalty0 773--795, 1995.

\bibitem[Kloek \& Van~Dijk(1978)Kloek and Van~Dijk]{kloek1978bayesian}
Kloek, T. and Van~Dijk, H.~K.
\newblock Bayesian estimates of equation system parameters: an application of integration by monte carlo.
\newblock \emph{Econometrica: Journal of the Econometric Society}, pp.\  1--19, 1978.

\bibitem[Kong \& Chaudhuri(2020)Kong and Chaudhuri]{kong2020expressive}
Kong, Z. and Chaudhuri, K.
\newblock The expressive power of a class of normalizing flow models.
\newblock In \emph{International conference on artificial intelligence and statistics}, pp.\  3599--3609. PMLR, 2020.

\bibitem[Kool et~al.(2019)Kool, van Hoof, and Welling]{kool2019buy}
Kool, W., van Hoof, H., and Welling, M.
\newblock Buy 4 {REINFORCE} samples, get a baseline for free!
\newblock In \emph{ICLR 2019 Workshop: Deep Reinforcement Learning Meets Structured Prediction}, 2019.
\newblock URL \url{https://openreview.net/forum?id=r1lgTGL5DE}.

\bibitem[Kostenetskiy et~al.(2021)Kostenetskiy, Chulkevich, and Kozyrev]{kostenetskiy2021hpc}
Kostenetskiy, P.~S., Chulkevich, R., and Kozyrev, V.
\newblock Hpc resources of the higher school of economics.
\newblock In \emph{Journal of Physics: Conference Series}, volume 1740, pp.\  012050. IOP Publishing, 2021.

\bibitem[Le~Gall(2016)]{le2016brownian}
Le~Gall, J.-F.
\newblock \emph{Brownian motion, martingales, and stochastic calculus}, volume 274.
\newblock Springer, 2016.

\bibitem[LeCun et~al.(1998)LeCun, Bottou, Bengio, and Haffner]{lecun1998gradient}
LeCun, Y., Bottou, L., Bengio, Y., and Haffner, P.
\newblock Gradient-based learning applied to document recognition.
\newblock \emph{Proceedings of the IEEE}, 86\penalty0 (11):\penalty0 2278--2324, 1998.

\bibitem[Liu et~al.(2023)Liu, Chen, Theodorou, and Tao]{liu2023mirror}
Liu, G.-H., Chen, T., Theodorou, E., and Tao, M.
\newblock Mirror diffusion models for constrained and watermarked generation.
\newblock \emph{Advances in Neural Information Processing Systems}, 36:\penalty0 42898--42917, 2023.

\bibitem[Loper(2020)]{loper2020uniform}
Loper, J.
\newblock Uniform ergodicity for brownian motion in a bounded convex set.
\newblock \emph{Journal of Theoretical Probability}, 33\penalty0 (1):\penalty0 22--35, 2020.

\bibitem[Lou \& Ermon(2023)Lou and Ermon]{lou2023reflected}
Lou, A. and Ermon, S.
\newblock Reflected diffusion models.
\newblock In \emph{International Conference on Machine Learning}, pp.\  22675--22701. PMLR, 2023.

\bibitem[L’Ecuyer(2009)]{l2009quasi}
L’Ecuyer, P.
\newblock Quasi-monte carlo methods with applications in finance.
\newblock \emph{Finance and Stochastics}, 13\penalty0 (3):\penalty0 307--349, 2009.

\bibitem[Midgley et~al.(2023)Midgley, Stimper, Simm, Sch{\"o}lkopf, and Hern{\'a}ndez-Lobato]{midgley2023flow}
Midgley, L.~I., Stimper, V., Simm, G. N.~C., Sch{\"o}lkopf, B., and Hern{\'a}ndez-Lobato, J.~M.
\newblock Flow annealed importance sampling bootstrap.
\newblock In \emph{The Eleventh International Conference on Learning Representations}, 2023.
\newblock URL \url{https://openreview.net/forum?id=XCTVFJwS9LJ}.

\bibitem[M{\"u}ller et~al.(2019)M{\"u}ller, McWilliams, Rousselle, Gross, and Nov{\'a}k]{muller2019neural}
M{\"u}ller, T., McWilliams, B., Rousselle, F., Gross, M., and Nov{\'a}k, J.
\newblock Neural importance sampling.
\newblock \emph{ACM Transactions on Graphics (ToG)}, 38\penalty0 (5):\penalty0 1--19, 2019.

\bibitem[Nicoli et~al.(2021)Nicoli, Anders, Funcke, Hartung, Jansen, Kessel, Nakajima, and Stornati]{nicoli2021estimation}
Nicoli, K.~A., Anders, C.~J., Funcke, L., Hartung, T., Jansen, K., Kessel, P., Nakajima, S., and Stornati, P.
\newblock Estimation of thermodynamic observables in lattice field theories with deep generative models.
\newblock \emph{Physical review letters}, 126\penalty0 (3):\penalty0 032001, 2021.

\bibitem[Niederreiter(1992)]{niederreiter1992random}
Niederreiter, H.
\newblock \emph{Random number generation and quasi-Monte Carlo methods}.
\newblock SIAM, 1992.

\bibitem[No{\'e} et~al.(2019)No{\'e}, Olsson, K{\"o}hler, and Wu]{noe2019boltzmann}
No{\'e}, F., Olsson, S., K{\"o}hler, J., and Wu, H.
\newblock Boltzmann generators: Sampling equilibrium states of many-body systems with deep learning.
\newblock \emph{Science}, 365\penalty0 (6457):\penalty0 eaaw1147, 2019.

\bibitem[O'Hagan(1991)]{o1991bayes}
O'Hagan, A.
\newblock Bayes--hermite quadrature.
\newblock \emph{Journal of statistical planning and inference}, 29\penalty0 (3):\penalty0 245--260, 1991.

\bibitem[Ott et~al.(2023)Ott, Tiemann, Hennig, and Briol]{ott2023bayesian}
Ott, K., Tiemann, M., Hennig, P., and Briol, F.-X.
\newblock Bayesian numerical integration with neural networks.
\newblock In \emph{Uncertainty in Artificial Intelligence}, pp.\  1606--1617. PMLR, 2023.

\bibitem[Owen(1997)]{owen1997monte}
Owen, A.~B.
\newblock Monte carlo variance of scrambled net quadrature.
\newblock \emph{SIAM Journal on Numerical Analysis}, 34\penalty0 (5):\penalty0 1884--1910, 1997.

\bibitem[Pilipenko(2014)]{pilipenko2014introduction}
Pilipenko, A.
\newblock \emph{An introduction to stochastic differential equations with reflection}.
\newblock Universit{\"a}tsverlag Potsdam, 2014.

\bibitem[Rasmussen \& Ghahramani(2003)Rasmussen and Ghahramani]{rasmussen2003bayesian}
Rasmussen, C.~E. and Ghahramani, Z.
\newblock Bayesian monte carlo.
\newblock \emph{Advances in neural information processing systems}, pp.\  505--512, 2003.

\bibitem[Rezende \& Mohamed(2015)Rezende and Mohamed]{rezende2015variational}
Rezende, D. and Mohamed, S.
\newblock Variational inference with normalizing flows.
\newblock In \emph{International conference on machine learning}, pp.\  1530--1538. PMLR, 2015.

\bibitem[Schuss(2015)]{schuss2015brownian}
Schuss, Z.
\newblock \emph{Brownian dynamics at boundaries and interfaces}, volume 758.
\newblock Springer, 2015.

\bibitem[Sendera et~al.(2024)Sendera, Kim, Mittal, Lemos, Scimeca, Rector-Brooks, Adam, Bengio, and Malkin]{sendera2024improved}
Sendera, M., Kim, M., Mittal, S., Lemos, P., Scimeca, L., Rector-Brooks, J., Adam, A., Bengio, Y., and Malkin, N.
\newblock Improved off-policy training of diffusion samplers.
\newblock \emph{Advances in Neural Information Processing Systems}, 37:\penalty0 81016--81045, 2024.

\bibitem[Song et~al.(2021)Song, Sohl-Dickstein, Kingma, Kumar, Ermon, and Poole]{song2021scorebased}
Song, Y., Sohl-Dickstein, J., Kingma, D.~P., Kumar, A., Ermon, S., and Poole, B.
\newblock Score-based generative modeling through stochastic differential equations.
\newblock In \emph{International Conference on Learning Representations}, 2021.
\newblock URL \url{https://openreview.net/forum?id=PxTIG12RRHS}.

\bibitem[Taraldsen(2018)]{taraldsen2018optimal}
Taraldsen, G.
\newblock Optimal learning from the doob-dynkin lemma.
\newblock \emph{arXiv preprint arXiv:1801.00974}, 2018.

\bibitem[Vargas et~al.(2023)Vargas, Grathwohl, and Doucet]{vargas2023denoising}
Vargas, F., Grathwohl, W.~S., and Doucet, A.
\newblock Denoising diffusion samplers.
\newblock In \emph{International Conference on Learning Representations}, 2023.
\newblock URL \url{https://openreview.net/forum?id=8pvnfTAbu1f}.

\bibitem[Vargas et~al.(2024)Vargas, Padhy, Blessing, and N{\"u}sken]{vargas2024transport}
Vargas, F., Padhy, S., Blessing, D., and N{\"u}sken, N.
\newblock Transport meets variational inference: Controlled monte carlo diffusions.
\newblock In \emph{The Twelfth International Conference on Learning Representations}, 2024.
\newblock URL \url{https://openreview.net/forum?id=PP1rudnxiW}.

\bibitem[Virtanen et~al.(2020)Virtanen, Gommers, Oliphant, Haberland, Reddy, Cournapeau, Burovski, Peterson, Weckesser, Bright, et~al.]{virtanen2020scipy}
Virtanen, P., Gommers, R., Oliphant, T.~E., Haberland, M., Reddy, T., Cournapeau, D., Burovski, E., Peterson, P., Weckesser, W., Bright, J., et~al.
\newblock Scipy 1.0: fundamental algorithms for scientific computing in python.
\newblock \emph{Nature methods}, 17\penalty0 (3):\penalty0 261--272, 2020.

\bibitem[Warren \& Ramos(2024)Warren and Ramos]{warren2024fast}
Warren, H. and Ramos, F.
\newblock Fast fourier bayesian quadrature.
\newblock In \emph{International Conference on Artificial Intelligence and Statistics}, pp.\  4555--4563. PMLR, 2024.

\bibitem[Williams(1987)]{williams1987time}
Williams, R.
\newblock On time-reversal of reflected brownian motions.
\newblock In \emph{Seminar on Stochastic Processes, 1987}, pp.\  265--276. Springer, 1987.

\bibitem[Williams(1992)]{williams1992simple}
Williams, R.~J.
\newblock Simple statistical gradient-following algorithms for connectionist reinforcement learning.
\newblock \emph{Machine learning}, 8\penalty0 (3):\penalty0 229--256, 1992.

\bibitem[Wirnsberger et~al.(2020)Wirnsberger, Ballard, Papamakarios, Abercrombie, Racani{\`e}re, Pritzel, Jimenez~Rezende, and Blundell]{wirnsberger2020targeted}
Wirnsberger, P., Ballard, A.~J., Papamakarios, G., Abercrombie, S., Racani{\`e}re, S., Pritzel, A., Jimenez~Rezende, D., and Blundell, C.
\newblock Targeted free energy estimation via learned mappings.
\newblock \emph{The Journal of Chemical Physics}, 153\penalty0 (14), 2020.

\bibitem[Zhang \& Chen(2022)Zhang and Chen]{zhang2022path}
Zhang, Q. and Chen, Y.
\newblock Path integral sampler: A stochastic control approach for sampling.
\newblock In \emph{International Conference on Learning Representations}, 2022.
\newblock URL \url{https://openreview.net/forum?id=_uCb2ynRu7Y}.

\end{thebibliography}
\bibliographystyle{neurips_2026}
\appendix

\section{Related work}
\label{related_work}

\textbf{Quasi-Monte Carlo} (QMC) approximates
\eqref{eq:problem} using low-discrepancy point sets:
$\widehat Z_N=N^{-1}\sum_{i=1}^N r(x_i)$.
Unlike independent Monte Carlo samples, QMC points are
constructed to cover $[0,1]^d$ more uniformly, typically
using digital nets or lattice rules
\citep{niederreiter1992random,dick2013high}.
For integrands satisfying suitable regularity conditions,
QMC can achieve faster convergence than ordinary
Monte Carlo. Randomized QMC methods, such as scrambled
Sobol sequences, retain low-discrepancy properties while
allowing randomized error estimation \citep{owen1997monte}.
However, conventional QMC point sets are constructed
independently of the integrand and do not adapt to the
location of its mass.

\textbf{Bayesian quadrature} (BQ) models the integrand
using a Gaussian process and conditions on its
evaluations at selected points
\citep{o1991bayes,rasmussen2003bayesian}.
Since integration is a linear functional, the Gaussian
process posterior induces a Gaussian posterior over
the integral, providing both an estimate and a measure
of uncertainty. Evaluation points can also be selected
adaptively to reduce the posterior uncertainty
\citep{gunter2014sampling}.
For fixed kernel hyperparameters, the posterior
variance depends on the evaluation locations rather
than the observed function values. Consequently, its
uncertainty estimates depend on the adequacy of the
chosen prior \citep{briol2019probabilistic}.
Standard GP-based BQ requires $O(n^3)$ computation
for $n$ observations, although scalable variants,
including Fast Fourier Bayesian Quadrature,
reduce this cost \citep{warren2024fast}.

\textbf{Importance sampling} (IS) estimates
\eqref{eq:problem} using a proposal density $q$
on $\mathcal D$:
\begin{equation}
    \widehat{Z}_N
    =
    \frac{1}{N}\sum_{i=1}^{N}
    \frac{r(x_i)}{q(x_i)},
    \qquad x_i\stackrel{\mathrm{i.i.d.}}{\sim}q.
    \label{eq:is-estimate}
\end{equation}
Provided $q$ is positive wherever $r$ is positive
up to sets of measure zero, the estimator is unbiased.
Its variance satisfies
\begin{equation}
    \operatorname{Var}_q(\widehat{Z}_N)
    =
    \frac{Z^2}{N}\chi^2(\pi\Vert q),
    \label{eq:is-variance}
\end{equation}
where $\pi=r/Z$ and $\chi^2$ denotes the chi-square
divergence. The optimal proposal is $q=\pi$, which
gives zero variance \citep{hammersley1964monte}.
Since sampling directly from $\pi$ is generally
intractable, the effectiveness of IS depends on
constructing a proposal that accurately represents
the regions where the target concentrates.

\textbf{Normalizing flows} (NF) provide adaptive
proposal densities through invertible transformations
with tractable Jacobian determinants
\citep{dinh2014nice,rezende2015variational}.
For sampling from unnormalized targets,
\citet{midgley2023flow} combine normalizing flows
with annealed importance sampling to improve
the coverage of poorly represented regions.
For numerical integration, \citet{muller2019neural}
train a flow directly from evaluations of the
unnormalized integrand, without requiring samples
from the normalized target. They derive
importance-weighted gradients for minimizing
$\mathrm{KL}(\pi\Vert q_\theta)$ and also consider
the chi-square divergence, which directly controls
the IS variance. Their method is applied to
light-transport integration and path guiding.
Normalizing flows have also been applied to sampling equilibrium
distributions and estimating free-energy differences in molecular
systems~\citep{noe2019boltzmann}. Building on neural importance sampling,
\citet{deutschmann2024accelerating} introduce
Z\"uNIS, a flow-based importance-sampling library
for high-energy physics simulations.
Normalizing flows require invertible architectures
with tractable Jacobians, which restricts the
available transformations; expressivity limitations
for particular flow architectures have been studied
by \citet{kong2020expressive}.

\textbf{Diffusion samplers}
\citep{zhang2022path,vargas2023denoising,
berner2024an,sendera2024improved,
akhound-sadegh2024iterated,havens2025adjoint}
learn stochastic dynamics for sampling from
unnormalized target distributions without requiring
invertible neural architectures.
Unlike conventional diffusion models
\citep{song2021scorebased}, which learn from
samples of the target distribution, diffusion
samplers are trained using its unnormalized density.
The Path Integral Sampler (PIS)
\citep{zhang2022path} formulates sampling as a
stochastic optimal control problem, where a
reference diffusion is reweighted by a terminal
potential determined by the target. Its training
objective combines a quadratic control cost with
a terminal density term, obtained through
Girsanov's theorem.
Denoising Diffusion Samplers (DDS)
\citep{vargas2023denoising} instead formulate
sampling through the time reversal of a diffusion
that transforms the target towards a tractable
reference distribution. Both approaches lead
to related path-space optimization objectives.
The resulting path measures also permit
normalizing-constant estimation through
importance sampling
\citep{zhang2022path,guo2026complexity}.

Most of these sampling methods are formulated
using diffusions on $\mathbb R^d$ and do not
directly enforce bounded-domain constraints.
Reflected diffusions have previously been used
for constrained generative modeling
\citep{lou2023reflected,fishman2023diffusion},
while mirror diffusion models
\citep{liu2023mirror} handle constraints through
transformations to an unconstrained dual space.
In contrast to these data-driven generative
models, our method learns a reflected diffusion
proposal from an unnormalized target, without
requiring target samples. It operates directly
on the bounded integration domain and handles
zero-valued regions by adding a positive offset
to the integrand and correcting the final
estimate.

\section{Formal notation and assumptions}
\label{sec:formal_notation_and_assumptions}
\subsection{Notation and conventions}
\label{sec:notation}
\textbf{General notation.} The domain $\cD\subset\R^d$ is bounded and convex. Its closure, boundary,
and Lebesgue volume are $\overline{\cD}$, $\partial\cD$, and
$\vol(\cD)$. For topological spaces $E$ and $F$, we denote by $C(E;F)$ the set of
continuous maps from $E$ to $F$ and abbreviate $C(E):=C(E;\R)$. When $E$ is
compact and $F$ is metric, this space carries the uniform topology
induced by metric $d_\infty(f, g)=\sup_{x\in E}d_F(f(x), g(x))$ unless
stated otherwise. For $E\subseteq\R^d$ and $f:E\to\R^m$,
\begin{equation}
    \operatorname{Lip}_{x\in E}(f)
    :=\sup_{x\neq y\in E}
      \frac{\lVert f(x)-f(y)\rVert_2}{\lVert x-y\rVert_2}.
\end{equation}
Forward and reverse time are $t$ and $\rt$. Forward and reverse processes are
denoted by $X$ and $Y$. Symbols used in only one proof are defined locally.

For probability measures on a common measurable space, $\mu_1\ll\mu_0$
denotes absolute continuity, and
$\mu_1\ll\mu_0\ll\mu_1$ denotes equivalence, meaning mutual absolute
continuity. We also use
\begin{align}
    \lVert\mu_1-\mu_0\rVert_{\mathrm{TV}}
    &:=\sup_E|\mu_1(E)-\mu_0(E)|,\\
    \KL{\mu_1}{\mu_0}&:=\int\log\!\left(\frac{\dd\mu_1}{\dd\mu_0}\right)\,\dd\mu_1,\\
    \chi^2(\mu_1\Vert\mu_0)
    &:=\int\left(\frac{\dd\mu_1}{\dd\mu_0}-1\right)^2\,\dd\mu_0.
\end{align}
The last two are $+\infty$ when $\mu_1\not\ll\mu_0$. The same notation
applies to densities through their induced measures.
The $\chi^2$-divergence and the variances of the integrable importance weights
below are understood as extended nonnegative quantities and may equal
$+\infty$.

If a continuous process $X$ is defined under a construction probability $\mathbf P$, we denote its induced path law on $C([0,1];\overline{\cD})$ by $\Law(X)=X_\#\mathbf P$. We write $\zeta$ for a generic path, with $\zeta(t)$ its value at time $t$, while $X_t$ denotes the corresponding coordinate of a random path $X$.

For a real-valued continuous semimartingale $X$, a level $a\in\R$, and time
$t\geq0$, we write $\LT_t^a(X)$ for the right local time of $X$ at $a$
accumulated up to $t$, using the version that is right-continuous in $a$.
Informally, it measures the quadratic variation accumulated while $X$ lies
immediately to the right of $a$. Its Tanaka characterization and one-sided
occupation approximation are given by
\citet[Propositions~9.2 and~9.9]{le2016brownian}.

Finally, we set
\begin{equation}
    A(t):=\int_0^t g(s)^2\,\dd s.
    \label{eq:rve_variance_clock}
\end{equation}

\textbf{Domain related definitions.} We use the following concrete meaning of boundary regularity. We say that
$\partial\cD$ is $C^1$ near $z\in\partial\cD$ if there are an open
neighborhood $\mathsf U_z$ of $z$ and a function
$b_z\in C^1(\mathsf U_z)$ such that
\[
    \cD\cap\mathsf U_z=\{x\in\mathsf U_z:b_z(x)<0\},\qquad
    \partial\cD\cap\mathsf U_z=\{x\in\mathsf U_z:b_z(x)=0\},
\]
and $\nabla b_z(x)\neq0$ on $\partial\cD\cap\mathsf U_z$. Any such $b_z$ is
called a \emph{local defining function}. The nonvanishing-gradient condition
makes its zero set a $C^1$ hypersurface, by the implicit function theorem, and
the gradient is perpendicular to that hypersurface.

The defining function is not unique, but the oriented unit normal is. Indeed,
at $z$ the gradients of any two local defining functions are nonzero and
perpendicular to the same tangent hyperplane, so they are scalar multiples.
Because both functions are negative on the interior side, that scalar is
positive. Consequently, we define the outward and inward unit normals by
\begin{equation}
    \mathbf n_{\mathrm{out}}(z)
    :=\frac{\nabla b_z(z)}{\lVert\nabla b_z(z)\rVert_2},
    \qquad
    \mathbf n_{\mathrm{in}}(z):=-\mathbf n_{\mathrm{out}}(z).
\end{equation}
These definitions do not depend on the choice of $b_z$.
Assumption~\ref{assumption:theoretical_domain} imposes the stronger
$C^\infty$ version of this condition at every boundary point.

\textbf{Solutions of reflected SDEs.} Fix $T>0$ and a continuous reflected process
$X=(X_t)_{0\leq t\leq T}$ in $\overline{\cD}$. A \emph{scalar regulator}
for $X$ is the real-valued boundary-push process
$\ell=(\ell_t)_{0\leq t\leq T}$: almost surely, the map $t\mapsto\ell_t$ is
continuous and nondecreasing, $\ell_0=0$, and its Stieltjes measure is
supported on the boundary-contact times,
\[
    \int_0^t\ind_{\{X_s\notin\partial\cD\}}\,\dd\ell_s=0,
    \qquad 0\leq t\leq T.
\]
The associated vector reflection term is
$L_t:=\int_0^t\mathbf n_{\mathrm{in}}(X_s)\,\dd\ell_s$.
Fix a Borel drift
$f:\overline{\cD}\times[0,T]\to\R^d$, a deterministic
$g\in L^2([0,T])$, and an initial law $\mu$ on $\overline{\cD}$.
A \emph{weak solution} of the form \eqref{eq:reflected_sde} on $[0,T]$ is a tuple
\[
    (\Omega,\cF,(\cF_t)_{0\leq t\leq T},\mathbf P;
      \xi_0,W,X,\ell)
\]
such that the filtered probability space satisfies the usual conditions,
$\xi_0$ is an $\cF_0$-measurable $\overline{\cD}$-valued random variable
with law $\mu$, $W$ is a $d$-dimensional Brownian motion relative to
$(\cF_t)$, and $X$ and $\ell$ are continuous $(\cF_t)$-adapted processes.
They satisfy $X_0=\xi_0$, $X_t\in\overline{\cD}$, the preceding regulator
conditions,
$\int_0^T\lVert f(X_s,s)\rVert_2\,\dd s<\infty$ almost surely, and, almost
surely, for every $t\leq T$,
\[
    X_t=X_0+\int_0^t f(X_s,s)\,\dd s
      +\int_0^t g(s)\,\dd W_s
      +\int_0^t\mathbf n_{\mathrm{in}}(X_s)\,\dd\ell_s.
\]
We call the full tuple a weak realization and the law of $X$ its weak
solution law. 

For a strong solution, we are given
$
    (\Omega,\cF,(\cF_t)_{0\leq t\leq T},\mathbf P;\xi_0,W),
$
where the filtered probability space satisfies the usual conditions,
$\xi_0$ is an $\cF_0$-measurable $\overline{\cD}$-valued random variable with
law $\mu$, and $W$ is a $d$-dimensional Brownian motion relative to
$(\cF_t)$. A \emph{strong solution} for these data is a continuous adapted
pair $(X,\ell)$ satisfying the state, regulator, integrability, and
integral-equation conditions in the weak-solution definition above, including
$X_0=\xi_0$, and such that, for every $t\leq T$,
\[
    \sigma(X_s,\ell_s:0\leq s\leq t)
    \subseteq
    \sigma(\xi_0,W_s:0\leq s\leq t)\vee\mathcal N_{\mathbf P}.
\]
Here $\mathcal A\vee\mathcal B:=\sigma(\mathcal A\cup\mathcal B)$ for
sigma-fields $\mathcal A$ and $\mathcal B$, and $\mathcal N_{\mathbf P}$ consists of the $\mathbf P$-null events and
their complements. Thus the condition allows changes on null events, but no
additional randomness: both $X$ and $\ell$ up to time $t$ are determined by
the prescribed initial variable and Brownian path up to time $t$.

This is the normal-reflection specialization of
\citet[Definitions~2.2.2 and~2.3.1 and
Remark~2.2.4]{pilipenko2014introduction}. The strong formulation is the one
used in the source's discussion preceding Theorem~2.5.4. The source keeps
$\xi_0$ separate and requires $X_0=\xi_0$. We retain that convention for both
weak and strong solutions. Since our SDE displays prescribe only an initial
law, the notation $X_0\sim\mu$ is shorthand for including a variable
$\xi_0\sim\mu$ in the realization and imposing $X_0=\xi_0$.

A displayed \emph{law definition} is the primary specification of a path
measure. If it is given by an SDE, existence and uniqueness in law must be
established separately. An \emph{SDE characterization} instead describes an
already-defined law: weak existence identifies that law as one solution law,
whereas pathwise uniqueness, strong existence, and uniqueness in law are
additional conclusions only when explicitly stated.

\subsection{Assumptions for theoretical results}
\label{sec:theoretical_assumptions}
\begin{assumption}
    \label{assumption:bounded_convex_domain}
    The domain $\cD\subset\R^d$ is bounded and convex.
\end{assumption}
Assumption~\ref{assumption:bounded_convex_domain} is the standing domain
condition throughout the appendix.

The integration problem and the algorithmic construction can be formulated
without assuming a smooth boundary. The local $C^1$ condition in
Section~\ref{sec:notation} is enough to define the unit normal. Some of our
theoretical arguments, however, invoke standard results on normal reflection,
Neumann heat equations, and reflected time reversal under stronger boundary
regularity~\citep{holk2026statistical, pilipenko2014introduction, burdzy2004heat,cattiaux1988time,williams1987time}.
We therefore isolate the following $C^\infty$ condition as an additional
theory-only assumption rather than include it in the general problem
description.
\begin{assumption}
    \label{assumption:theoretical_domain}
    The boundary of $\cD$ is $C^\infty$: for every
    $z\in\partial\cD$, there are a neighborhood $\mathsf U_z$ of $z$ and a
    function $b_z\in C^\infty(\mathsf U_z)$ such that
    \begin{equation}
        \begin{aligned}
            \cD\cap\mathsf U_z
            =\{x\in\mathsf U_z:b_z(x)<0\},\;
            &\partial\cD\cap\mathsf U_z
            =\{x\in\mathsf U_z:b_z(x)=0\},\\
            \nabla b_z(x)\neq0\; 
            &\text{ for }x\in\mathsf U_z\cap\partial\cD.
        \end{aligned}
    \end{equation}
\end{assumption}
Assumption~\ref{assumption:theoretical_domain} is stronger than the local
$C^1$ condition in Section~\ref{sec:notation}, so the unit normals are defined
at every boundary point and can be used in the vector reflection term.

A density is determined only Lebesgue-almost everywhere: changing it on a
null set leaves its law unchanged, but makes pointwise boundary values and
$\nabla\log p_t$ ambiguous. Let $k_u(y,x)$ denote the transition density of
unit normally reflected Brownian motion after elapsed time $u>0$. For $t>0$
we choose the representative
\[
    p_t(x):=
    \int_{\cD}
    k_{A(t)}(y,x)\,\pi(y)\,\dd y.
\]
The schedule makes the elapsed time in this kernel strictly positive.
Under Assumption~\ref{assumption:theoretical_domain},
\citet[Theorems~2.4(i) and~2.5]{burdzy2004heat} show that the kernel is
strictly positive and jointly $C^2$ in its spatial variables up to the
boundary at positive elapsed times. Compactness therefore permits
differentiation under the integral, so this is a strictly positive
$C^2(\overline{\cD})$ representative of the marginal density. The static-domain
specialization of \citet[Theorem~2.9]{burdzy2004heat} also gives its
homogeneous Neumann boundary condition. This choice is needed for the
pointwise reverse drift in
Lemma~\ref{lemma:scheduled_reflected_time_reversal}. Throughout, a score on
$\overline{\cD}$ means its stated continuous extension from the interior, and
at $t=0$ the representative is $p_0=\pi$.

The integration problem \eqref{eq:problem} permits $r$ to vanish. Some results use the following stronger target
assumption.
\begin{assumption}
    \label{assumption:positive_target_reduction}
    The target $r$ admits a Borel measurable extension to
    $\overline{\cD}$, denoted by $r$ as well, and there are constants
    $0<m_r\leq M_r<\infty$ such that
    \[
        m_r\leq r(x)\leq M_r,
        \qquad x\in\overline{\cD}.
    \]
\end{assumption}
We henceforth use this extension and $\pi=r/Z$ on $\overline{\cD}$ as well.
The two-sided bounds directly make $r$ and the normalized density $\pi$
bounded above and away from zero on $\overline{\cD}$.
Proposition~\ref{proposition:optimal_proposal},
Theorem~\ref{theorem:kl_reference}, and
Theorem~\ref{theorem:reinforce_gradient} invoke this assumption.
Extending these results to a target with zeros requires a separate support
and absolute-continuity analysis and is not asserted here.

\subsection{Score and learned-control assumptions}
Let $p_t$ be the forward marginal density from the
target-started law \eqref{eq:optimal_forward_path_law}.

\begin{assumption}
    \label{assumption:closed_endpoint_score_continuity}
    For every $t\in[0,1]$, the chosen representative $p_t$ is strictly
    positive on $\overline{\cD}$, and its interior score admits an extension
    such that
    \[
        \bigl[(t,x)\mapsto\nabla\log p_t(x)\bigr]
        \in C\bigl([0,1]\times\overline{\cD};\R^d\bigr).
    \]
\end{assumption}
\begin{assumption}
    \label{assumption:closed_endpoint_score}
    \begin{equation}
        \sup_{t\in[0,1]}
        \operatorname{Lip}_{x\in\overline{\cD}}
        \bigl(\nabla\log p_t(x)\bigr)<\infty.
        \label{eq:closed_endpoint_score_lipschitz}
    \end{equation}
\end{assumption}

Note that these assumptions impose additional regularity on $\pi$.
Indeed, since $p_0=\pi$, Assumption~
\ref{assumption:closed_endpoint_score_continuity} requires
$\nabla\log\pi$ to exist in $\cD$ and to admit a continuous extension to
$\overline{\cD}$. Moreover, the positive-time densities satisfy the
homogeneous Neumann condition, so joint continuity down to $t=0$ forces
\[
    \mathbf n_{\mathrm{out}}^\top\nabla\log\pi=0
    \qquad\text{on }\partial\cD.
\]
Under Assumption~\ref{assumption:positive_target_reduction},
$\nabla\log\pi=-\nabla V$. Thus this requires the interior gradient of $V$
to extend continuously to $\overline{\cD}$ and to satisfy
$\mathbf n_{\mathrm{out}}^\top\nabla V=0$ on $\partial\cD$.
Assumption~\ref{assumption:closed_endpoint_score} additionally requires
$\nabla V$ to be Lipschitz on $\overline{\cD}$. The two assumptions are
stronger than these target-only consequences: they also require the
positive-time scores to converge jointly to the target score as $t\downarrow
0$ and to remain uniformly spatially Lipschitz over the full time interval.
None of these score conditions follows from $V\in C(\overline{\cD})$ alone.

The preceding assumptions concern the exact score and hence the ideal reverse
law. We separately impose the following regularity conditions on the learned
field used in the implemented law.

\begin{assumption}
    \label{assumption:control_continuity}
    \[
        \bigl[(t,x)\mapsto s_\theta(x,t)\bigr]
        \in C\bigl([0,1]\times\overline{\cD};\R^d\bigr).
    \]
\end{assumption}

Since $[0,1]\times\overline{\cD}$ is compact, joint
continuity makes $s_\theta$ bounded and Borel.

\begin{assumption}
    \label{assumption:control_spatial_lipschitz}
    \[
        \sup_{t\in[0,1]}
        \operatorname{Lip}_{x\in\overline{\cD}}
        \bigl(s_\theta(x,t)\bigr)<\infty.
    \]
\end{assumption}
Assumption~\ref{assumption:control_spatial_lipschitz} is used to prove
uniqueness for the learned SDE.

\section{Existence and reversal of the reflected laws}
\label{sec:formal_reflected_reduction}

The arguments in this section separate two tasks. First, the reflected-SDE
and Skorokhod results of
\citet[Theorems~2.5.1--2.5.3]{pilipenko2014introduction} and the
Yamada--Watanabe discussion preceding
\citet[Theorem~2.5.4]{pilipenko2014introduction} provide
well-posedness for equations with prescribed coefficients. We use them to
construct the zero-drift forward law and the continuous learned law, and
to upgrade the ideal reverse equation to a unique strong solution when its
score drift is spatially Lipschitz. The required domain hypotheses are
verified in Lemma~\ref{lemma:manuscript_domain_pilipenko_geometry}.

Second, well-posedness of an equation with a prescribed score drift does not
by itself identify that equation with the reversal of the forward law. We
first define the ideal reverse law by reversing the already constructed
target-started forward process. The time-homogeneous reflected-reversal
statement of \citet[Section~2, equation~(4)]{holk2026statistical}, followed by
the local clock calculation in
Lemma~\ref{lemma:scheduled_reflected_time_reversal},
identifies this particular reversal as a weak solution of the scheduled score
SDE on $[0,1)$. Score continuity is needed only to extend this
representation to the closed endpoint, while spatial Lipschitzness supplies
the subsequent uniqueness upgrade.

\subsection{Geometric prerequisites for reflected SDEs}
\label{sec:reflected_sde_geometric_prerequisites}

The following lemma supplies a technical prerequisite for the existence and
uniqueness arguments in this section. It verifies that the standing domain
assumptions imply the geometric conditions required by the reflected
Skorokhod and SDE results used below.

For $x\in\partial\cD$ and $\rho>0$, define the inward normal sets used in
\citet[Section~2.5]{pilipenko2014introduction} by
\[
    \mathsf N_{x,\rho}
    :=\{n\in\R^d:\lVert n\rVert_2=1,
        \ B(x-\rho n,\rho)\cap\cD=\varnothing\},
    \qquad
    \mathsf N_x:=\bigcup_{\rho>0}\mathsf N_{x,\rho}.
\]
The uniform exterior-sphere condition requires a radius
$\rho_{\mathrm{ext}}>0$ such that
$\mathsf N_x=\mathsf N_{x,\rho_{\mathrm{ext}}}\ne\varnothing$ for every
$x\in\partial\cD$. The uniform-cone condition in
\citet[Theorem~2.5.2]{pilipenko2014introduction} requires constants
$\delta_{\mathrm c}>0$ and $\alpha\in[0,1)$ such that, for every
$x\in\partial\cD$, some unit vector $v_x$ satisfies, for every
$y\in B(x,\delta_{\mathrm c})\cap\partial\cD$,
\begin{equation}
    \{z\in\R^d:\langle z-y,v_x\rangle
        \geq\alpha\lVert z-y\rVert_2\}
      \cap B(x,\delta_{\mathrm c})
      \subset\overline{\cD}.
    \label{eq:manuscript_domain_uniform_cone}
\end{equation}
The closure in \eqref{eq:manuscript_domain_uniform_cone} is essential because
the cone contains its boundary vertex $y$.

\begin{lemma}
    \label{lemma:manuscript_domain_pilipenko_geometry}
Under Assumptions~\ref{assumption:bounded_convex_domain} and
\ref{assumption:theoretical_domain}, $\cD$ satisfies the uniform
exterior-sphere and uniform-cone conditions just defined.
\end{lemma}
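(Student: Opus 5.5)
The two conditions are handled separately. The exterior-sphere condition comes from convexity alone. The cone condition combines convexity with the uniform interior-ball property of a compact $C^2$ boundary, which Assumption~\ref{assumption:theoretical_domain} grants in the much stronger $C^\infty$ form.

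\textbf{Exterior sphere.} Fix $x\in\partial\cD$. Since $\cD$ is convex and open (a bounded domain), there is a supporting hyperplane at $x$: some unit $u$ with $\langle z-x,u\rangle\le 0$ for all $z\in\overline{\cD}$. Because $\partial\cD$ is $C^1$ near $x$, this hyperplane must be the tangent hyperplane, so $u=\mathbf n_{\mathrm{out}}(x)$. Take $n=\mathbf n_{\mathrm{in}}(x)=-u$. For any $\rho>0$ and $z\in\cD$ we have $\langle z-x,u\rangle<0$, since an interior point cannot lie on a supporting hyperplane. Every point of $B(x-\rho n,\rho)=B(x+\rho u,\rho)$ satisfies $\langle w-x,u\rangle>0$. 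Hence the ball misses $\cD$ and $n\in\mathsf N_{x,\rho}$ for every $\rho$. Conversely, any $n\in\mathsf N_{x,\rho}$ must equal $\mathbf n_{\mathrm{in}}(x)$. Otherwise the open ball $B(x-\rho n,\rho)$ would contain points $x+\epsilon v$ for a tangent-adjacent direction $v$ pointing into $\cD$, which exist by the $C^1$ graph representation near $x$. Thus $\mathsf N_x=\{\mathbf n_{\mathrm{in}}(x)\}=\mathsf N_{x,\rho}$ for every $\rho$, and any $\rho_{\mathrm{ext}}>0$ works.

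\textbf{Uniform cone.} The plan is as follows.
\begin{enumerate}
\item Compactness of $\partial\cD$ and $C^2$ regularity give a uniform interior radius $\rho_{\mathrm{int}}>0$ with $B(y+\rho_{\mathrm{int}}\mathbf n_{\mathrm{in}}(y),\rho_{\mathrm{int}})\subset\cD$ for all $y\in\partial\cD$. This is the standard consequence of bounded curvature; one can cite \citet{foote1984regularity} on the regularity of the distance function in a tubular neighbourhood.
\item Since $\mathbf n_{\mathrm{in}}$ is continuous on the compact set $\partial\cD$, it is uniformly continuous. Choose $\delta_{\mathrm c}$ so that $\lVert y-x\rVert<\delta_{\mathrm c}$ implies $\langle\mathbf n_{\mathrm{in}}(y),\mathbf n_{\mathrm{in}}(x)\rangle\ge\cos\beta$ for a fixed small $\beta$.
\item Set $v_x=\mathbf n_{\mathrm{in}}(x)$ and $\alpha=\cos\gamma$ with $\gamma<\pi/2-\beta$. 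Every cone direction $w$ from $y$ then satisfies $\angle(w,\mathbf n_{\mathrm{in}}(y))\le\gamma+\beta<\pi/2$.
\item A ray from $y$ making angle $\phi<\pi/2$ with $\mathbf n_{\mathrm{in}}(y)$ stays inside the interior ball for length up to $2\rho_{\mathrm{int}}\cos\phi$. Shrinking $\delta_{\mathrm c}$ below $\rho_{\mathrm{int}}\cos(\gamma+\beta)$ therefore puts every cone point within $B(x,\delta_{\mathrm c})$, which lies within $2\delta_{\mathrm c}$ of $y$, inside the ball and hence inside $\overline{\cD}$. The vertex $y$ itself lies in $\overline{\cD}$.
\end{enumerate}

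Convexity offers an alternative to step 1: the convex hull of $y$ and an interior ball is contained in $\overline{\cD}$. One could take a fixed interior ball $B(c,r_0)\subset\cD$ and use cones toward it. However, that only gives cones aimed at $c$ with uniform aperture, and $\lVert z-y\rVert$ may then need to be limited by $\delta_{\mathrm c}$ relative to the diameter. This route also works, since every $y$ near $x$ sees the ball $B(c,r_0)$ under a uniformly positive angle by boundedness of $\cD$. I would present this convexity route because it avoids curvature estimates. Take $v_x=(c-x)/\lVert c-x\rVert$. For $y$ close to $x$, the cone at $y$ with small aperture around $v_x$ is contained in the cone from $y$ over $B(c,r_0)$, whose half-angle is at least $\arcsin(r_0/\operatorname{diam}\cD)$. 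Its truncation to length $\le\lVert c-y\rVert$ lies in $\overline{\cD}$ by convexity. Choosing $\delta_{\mathrm c}$ small relative to $r_0$ handles both the drift of direction from $v_x$ to $(c-y)/\lVert c-y\rVert$ and the length truncation.

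\textbf{Main obstacle.} The hardest step is this uniformity bookkeeping: bounding the angle between $v_x$ and $c-y$ by $O(\delta_{\mathrm c}/r_0)$, then fixing $\alpha$ close to $1$ but strictly below $1$ so that the aperture of the cone stays inside the uniform visibility cone.
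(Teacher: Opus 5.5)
Your proposal is correct.

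\textbf{Exterior sphere.} This half is essentially the paper's argument. The supporting hyperplane keeps $B(x-\rho\mathbf n_{\mathrm{in}}(x),\rho)$ disjoint from $\cD$ for every $\rho$, and $C^1$ smoothness rules out any other direction. The only difference is how the uniqueness is shown. You pick a direction $v$ in the intersection of the open half-spaces $\{\langle v,\mathbf n_{\mathrm{in}}(x)\rangle>0\}$ and $\{\langle v,n\rangle<0\}$. The paper instead tests $2\rho\langle n,y-x\rangle+\lVert y-x\rVert_2^2\geq0$ along boundary curves and along $x+\varepsilon\mathbf n_{\mathrm{in}}(x)$.

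\textbf{Uniform cone.} Here you take a genuinely different route. The paper never uses convexity in this half:
\begin{itemize}
\item it takes $\alpha=1/2$ and the fixed axis $-\nabla b(x_0)/\lVert\nabla b(x_0)\rVert_2$ of a local defining function;
\item it shrinks to a ball on which $\nabla b$ varies by at most $\lVert\nabla b(x_0)\rVert_2/4$;
\item it integrates $\langle\nabla b,h\rangle$ along the segment from $y$ to $z$ to get $b(z)\leq-\tfrac14\lVert\nabla b(x_0)\rVert_2\lVert z-y\rVert_2<0$;
\item it makes the radius uniform with a finite subcover of $\partial\cD$.
\end{itemize}
That needs only $C^1$ regularity and compactness, so it also covers nonconvex $C^1$ domains.

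Your Route~1 replaces this first-order estimate with the uniform interior-ball property. That property needs $C^2$ and an external tubular-neighbourhood result, so the route is correct but heavier.

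Your Route~2 trades in the opposite direction. It uses only convexity, boundedness and a single ball $B(c,r_0)\subset\cD$, with no boundary regularity at all. It therefore proves the cone condition for every bounded open convex set. The cost is an aperture $\alpha$ tied to $r_0/\operatorname{diam}\cD$, close to $1$ for thin domains, plus the drift bookkeeping you describe.

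When you write Route~2 out, take care with the truncation step. Truncating at length $\lVert c-y\rVert_2$ is only safe for apertures small relative to the visibility half-angle. It is cleaner to use the following fact: a ray from $y$ at angle $\phi$ below that half-angle leaves $B(c,r_0)$ no earlier than $\lVert c-y\rVert_2\cos\phi\geq r_0\cos\phi$. Hence $2\delta_{\mathrm c}<r_0\cos\phi$ already places every cone point of $B(x,\delta_{\mathrm c})$ in $\overline{\cD}$.
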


\begin{proof}
We verify the two conditions separately. For the exterior-sphere condition,
we place a ball tangent to the boundary at $x$ on the exterior side of its
supporting hyperplane. Convexity will keep the whole domain outside that
ball. We then check that smoothness leaves no other possible direction in
$\mathsf N_x$. This second step is needed for the equality
$\mathsf N_x=\mathsf N_{x,\rho_{\mathrm{ext}}}$ in the definition.

Fix $x\in\partial\cD$. The inward unit normal
$\mathbf n_{\mathrm{in}}(x)$ is unique by
Assumption~\ref{assumption:theoretical_domain}.
The supporting hyperplane at $x$ leaves a convex domain entirely on its
inward side, so
\[
    \langle y-x,\mathbf n_{\mathrm{in}}(x)\rangle\geq0,
    \qquad y\in\overline{\cD}.
\]
For any $\rho>0$, the point
$x-\rho\mathbf n_{\mathrm{in}}(x)$ is the center of a radius-$\rho$ ball
tangent at $x$ and placed on the exterior side. For every
$y\in\overline{\cD}$,
\[
\begin{aligned}
    \lVert y-(x-\rho\mathbf n_{\mathrm{in}}(x))\rVert_2^2
    &=\lVert y-x\rVert_2^2
      +2\rho\langle y-x,\mathbf n_{\mathrm{in}}(x)\rangle+\rho^2 \geq \rho^2.
\end{aligned}
\]
Thus every point of the domain is at least $\rho$ from the center, and hence
$B(x-\rho\mathbf n_{\mathrm{in}}(x),\rho)\cap\cD=\varnothing$ for every
$\rho>0$.

It remains to show that no other unit vector belongs to $\mathsf N_x$.
Suppose that $n\in\mathsf N_{x,\rho}$ for some $\rho>0$. Disjointness from
the open ball gives the following inequality first for $y\in\cD$ and then,
by continuity and density of $\cD$ in $\overline{\cD}$, for every
$y\in\overline{\cD}$:
\[
    2\rho\langle n,y-x\rangle+\lVert y-x\rVert_2^2\geq0.
\]
To identify the direction $n$, let $\gamma$ be any smooth boundary curve
with $\gamma(0)=x$. Substituting $y=\gamma(t)$ and dividing by $t>0$ gives
$\langle n,\gamma'(0)\rangle\geq0$ in the limit $t\downarrow0$. Approaching
through $t<0$ reverses the inequality and gives
$\langle n,\gamma'(0)\rangle\leq0$. Hence
$\langle n,\gamma'(0)\rangle=0$. Therefore $n$ is orthogonal to every tangent
vector at $x$. The smooth boundary has a one-dimensional normal space, so
$n$ is either $\mathbf n_{\mathrm{in}}(x)$ or
$-\mathbf n_{\mathrm{in}}(x)$. To select the sign, a local defining function
$b$ at $x$ gives
\[
    b\bigl(x+\varepsilon\mathbf n_{\mathrm{in}}(x)\bigr)
    =-\varepsilon\lVert\nabla b(x)\rVert_2+o(\varepsilon)<0
\]
for all sufficiently small $\varepsilon>0$. Hence
$x+\varepsilon\mathbf n_{\mathrm{in}}(x)\in\cD$. Substituting these
interior points in the same inequality gives
\[
    2\rho\varepsilon
    \langle n,\mathbf n_{\mathrm{in}}(x)\rangle+\varepsilon^2\geq0.
\]
Dividing by $\varepsilon$ and letting $\varepsilon\downarrow0$ gives
$\langle n,\mathbf n_{\mathrm{in}}(x)\rangle\geq0$. Thus
$n=\mathbf n_{\mathrm{in}}(x)$, and
\[
    \mathsf N_x=\mathsf N_{x,\rho}=\{\mathbf n_{\mathrm{in}}(x)\}
    \qquad \rho>0.
\]
Any fixed $\rho_{\mathrm{ext}}>0$ therefore verifies the uniform exterior-sphere condition.

For the cone condition, set $\alpha:=1/2$. Around each boundary point $x_0$,
choose a local defining function $b$ and use the inward normal at $x_0$ as a
fixed cone axis. After shrinking the neighborhood, continuity of $\nabla b$
makes its variation at most $\lVert\nabla b(x_0)\rVert_2/4$. Consequently, if
a displacement $h$ satisfies the cone inequality
$\langle h,v_{x_0}\rangle\geq\alpha\lVert h\rVert_2$, then the defining
function decreases strictly along $h$: the fixed-gradient term is at most
$-\lVert\nabla b(x_0)\rVert_2\lVert h\rVert_2/2$, whereas the gradient
variation contributes at most
$\lVert\nabla b(x_0)\rVert_2\lVert h\rVert_2/4$. Compactness of the boundary
will then make the neighborhood radius uniform. The value $1/2$ is only a
convenient fixed aperture. Any $\alpha\in(1/4,1)$ would work with the same
gradient-variation bound.

Fix $x_0\in\partial\cD$ and choose a defining function $b$ from
Assumption~\ref{assumption:theoretical_domain}, so locally
$\cD=\{b<0\}$ and $\nabla b(x_0)\ne0$. Since $\nabla b(x_0)$ points outward,
set
\[
    m:=\lVert\nabla b(x_0)\rVert_2,
    \qquad
    v_{x_0}:=-\frac{\nabla b(x_0)}{m}.
\]
Thus $v_{x_0}$ points inward. By continuity of $\nabla b$, after shrinking
the defining-function neighborhood we may choose $\rho_{x_0}>0$ such that
$B(x_0,2\rho_{x_0})$ lies in that neighborhood and
\[
    \sup_{w\in B(x_0,2\rho_{x_0})}
    \lVert\nabla b(w)-\nabla b(x_0)\rVert_2\leq\frac m4.
\]
This bound says that throughout the neighborhood, the gradient remains close
to the fixed outward direction at $x_0$.

Let $y\in B(x_0,\rho_{x_0})\cap\partial\cD$ and suppose that
$z\in B(x_0,\rho_{x_0})$ satisfies
\[
    \langle z-y,v_{x_0}\rangle
        \geq\alpha\lVert z-y\rVert_2.
\]
In other words, the displacement from $y$ to $z$ lies in a cone pointing in
the fixed inward direction $v_{x_0}$. Write $h:=z-y$. The segment from $y$
to $z$ remains in $B(x_0,\rho_{x_0})$. At every point $w$ on that segment,
\[
\begin{aligned}
    \langle\nabla b(w),h\rangle
    &\leq \langle\nabla b(x_0),h\rangle
       +\frac m4\lVert h\rVert_2 =-m\langle v_{x_0},h\rangle
       +\frac m4\lVert h\rVert_2
      \leq-m\left(\alpha-\frac14\right)\lVert h\rVert_2
      =-\frac m4\lVert h\rVert_2.
\end{aligned}
\]
The first term represents movement in the ideal inward direction at $x_0$. The gradient variation can cost at most $m\lVert h\rVert_2/4$, so the net
directional derivative remains negative. Since $b(y)=0$, the fundamental
theorem of calculus gives
\[
    b(z)-b(y)
    =\int_0^1\langle\nabla b(y+th),h\rangle\,\dd t
    \leq-\frac m4\lVert h\rVert_2.
\]
Hence $b(z)<0$ when $z\ne y$, while for $z=y$ the point already lies in
$\partial\cD$. Thus every such $z$ belongs to $\overline{\cD}$.

The radius $\rho_{x_0}$ constructed so far may depend on $x_0$. To obtain one
radius for the whole boundary, use that boundedness of $\cD$ makes
$\partial\cD$ compact. The sets $B(x_0,\rho_{x_0}/2)$ cover the boundary.
Choose a finite
subcover with centers $x_1,\ldots,x_N$ and set
\[
    \delta_{\mathrm c}:=\min_{1\leq i\leq N}\frac{\rho_{x_i}}2>0.
\]
For each $x\in\partial\cD$, choose $i$ such that
$x\in B(x_i,\rho_{x_i}/2)$ and set $v_x:=v_{x_i}$. Then
$B(x,\delta_{\mathrm c})\subset B(x_i,\rho_{x_i})$. Therefore both the cone
vertex $y$ and the point $z$ remain in the neighborhood where the preceding
local argument applies. It applies simultaneously to every
$y\in B(x,\delta_{\mathrm c})\cap\partial\cD$ and every point in the cone
with vertex $y$, axis $v_x$, aperture parameter $\alpha$, and intersection
with $B(x,\delta_{\mathrm c})$. This is exactly
\eqref{eq:manuscript_domain_uniform_cone}.
\end{proof}

\subsection{Forward reflected path law}

\begin{lemma}
    \label{lemma:forward_reflected_well_posedness}
    Under Assumptions~\ref{assumption:bounded_convex_domain} and
    \ref{assumption:theoretical_domain}, let $T\in(0,\infty)$ and let
    $b:[0,T]\times\overline{\cD}\to\R^d$ and
    $\sigma:[0,T]\times\overline{\cD}\to\R^{d\times d}$ be bounded,
    jointly continuous, and uniformly Lipschitz in space. For any
    probability law $\mu$ on $\overline{\cD}$, the normally reflected SDE
    \[
        \dd X_t=b(t,X_t)\,\dd t+\sigma(t,X_t)\,\dd W_t
        +\mathbf n_{\mathrm{in}}(X_t)\,\dd\ell_t,
        \qquad X_0\sim\mu,\quad 0\leq t\leq T,
    \]
    where $W$ is a standard $d$-dimensional Brownian motion and $\ell$ is
    the scalar normal regulator, has a pathwise-unique strong solution
    and is unique in law. In particular, the forward RVE SDE
    \eqref{eq:rve_sde} defines $\Pf$ and $\Qf^\star$, and unit normally
    reflected Brownian motion is well defined on $[0,\infty)$.
\end{lemma}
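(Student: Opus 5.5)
The plan is to reduce everything to the reflected-SDE well-posedness theory of \citet[Theorems~2.5.1--2.5.3]{pilipenko2014introduction}, whose geometric hypotheses have already been verified in Lemma~\ref{lemma:manuscript_domain_pilipenko_geometry}. Those theorems are stated for normal reflection in a domain satisfying the uniform exterior-sphere and uniform-cone conditions, with Lipschitz coefficients. The first step is therefore to check that $\overline{\cD}$, $b$ and $\sigma$ fit that framework exactly.

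\textbf{Step 1: Skorokhod problem.} By Lemma~\ref{lemma:manuscript_domain_pilipenko_geometry}, the Skorokhod problem for normal reflection on $\overline{\cD}$ is solvable for continuous input paths starting in $\overline{\cD}$ \citep[Theorem~2.5.2]{pilipenko2014introduction}. Under the exterior-sphere condition, solutions satisfy the standard stability estimate (the Lions--Sznitman/Saisho type bound). I will state explicitly that the regulator produced there coincides with our scalar $\ell$ and vector $L_t=\int_0^t\mathbf n_{\mathrm{in}}(X_s)\,\dd\ell_s$. This holds because $\mathsf N_x=\{\mathbf n_{\mathrm{in}}(x)\}$, as shown in the lemma's proof, so Pilipenko's set-valued normal reduces to our single-valued normal.

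\textbf{Step 2: Pathwise uniqueness and strong existence.} Fix the data $(\xi_0,W)$ on a filtered space. Pathwise uniqueness follows from the standard It\^o argument applied to $\lVert X_t-X'_t\rVert^2$. The reflection cross terms are controlled by the exterior-sphere inequality $\langle x'-x,\mathbf n_{\mathrm{in}}(x)\rangle\ge -\lVert x-x'\rVert^2/(2\rho_{\mathrm{ext}})$. For our convex domain this holds even with $0$ on the right-hand side, since the supporting-hyperplane inequality from the lemma's proof gives it directly. The Lipschitz coefficients then yield a Gronwall bound, possibly after an exponential weighting by $\ell_t+\ell'_t$ (unnecessary here, by convexity). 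Strong existence is given by \citet[Theorem~2.5.3]{pilipenko2014introduction}, through Picard iteration or Euler approximation combined with Skorokhod continuity. Boundedness of $b$ and $\sigma$ guarantees the integrability conditions in our solution definition. Yamada--Watanabe, as discussed before \citet[Theorem~2.5.4]{pilipenko2014introduction}, then gives uniqueness in law for arbitrary initial laws $\mu$. The standard route is to condition on $\xi_0$, or simply to note that the argument allows an $\cF_0$-measurable initial condition.

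\textbf{Step 3: Specializations.} For \eqref{eq:rve_sde}, I take $b\equiv0$ and $\sigma(t,x)=g(t)I_d$. The schedule \eqref{eq:rve_noise_schedule} makes $g$ continuous and bounded on $[0,1]$, and $\sigma$ is constant in $x$, so all hypotheses hold. The initial laws $\nu$ and $\pi$ give $\Pf$ and $\Qf^\star$, respectively. For unit reflected Brownian motion on $[0,\infty)$, I apply the result on each $[0,n]$ with $\sigma\equiv I_d$. Pathwise uniqueness makes the solutions consistent under restriction, so they patch together into a single process on $[0,\infty)$.

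\textbf{Main obstacle.} The main obstacle is bookkeeping rather than analysis: verifying that Pilipenko's hypotheses and solution notion (Definitions~2.2.2 and~2.3.1) match ours exactly, including the separate initial variable $\xi_0$ and the time-dependent coefficients. If the cited theorem covers only time-homogeneous coefficients, I would first try to augment the state with time as an extra coordinate. That fails because time is not reflected, so I would instead rerun the Picard/Gronwall argument directly. The argument uses only the Skorokhod-map Lipschitz estimate and is indifferent to $t$-dependence, given joint continuity and a uniform Lipschitz constant.
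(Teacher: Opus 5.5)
Your proposal follows the paper's proof: existence from \citet[Theorem~2.5.3]{pilipenko2014introduction} under the geometry of Lemma~\ref{lemma:manuscript_domain_pilipenko_geometry}, then pathwise uniqueness, then Yamada--Watanabe, then the same specializations. The only differences are that you derive pathwise uniqueness by hand via It\^o's formula and the supporting-hyperplane inequality, where the paper cites Theorem~2.5.1 and Exercise~2.5.3, plus two bookkeeping points. First, the paper reads Theorem~2.5.3 as giving only weak existence, so strong existence should be credited to Yamada--Watanabe rather than to a Picard scheme, which would need a Lipschitz Skorokhod map you have not established. Second, the paper handles the horizon by extending $b,\sigma$ constantly after $T$, which gives reflected Brownian motion on $[0,\infty)$ directly instead of by patching.
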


\begin{proof}
Extend $b$ and $\sigma$ constantly in time after $T$. This preserves
boundedness, joint continuity, and the uniform spatial Lipschitz bounds.
Assumptions~\ref{assumption:bounded_convex_domain} and
\ref{assumption:theoretical_domain}, through
Lemma~\ref{lemma:manuscript_domain_pilipenko_geometry}, supply the uniform
exterior-sphere and uniform-cone conditions. Together with boundedness and
continuity of the coefficients, these give weak existence by
\citet[Theorem~2.5.3]{pilipenko2014introduction}. The exterior-sphere
condition and uniform spatial Lipschitz bounds give pathwise uniqueness
of the process and scalar regulator by
\citet[Theorem~2.5.1 and Exercise~2.5.3]{pilipenko2014introduction}. The
Yamada--Watanabe discussion immediately preceding
\citet[Theorem~2.5.4]{pilipenko2014introduction} gives strong existence and
uniqueness in law. Restricting to $[0,T]$ proves the finite-interval claim.
No density assumption on $\mu$ is used.

For \eqref{eq:rve_sde}, take $T=1$, $b=0$, and $\sigma(t,x)=g(t)I_d$.
Equation~\eqref{eq:rve_noise_schedule} verifies the coefficient hypotheses.
Taking $b=0$ and $\sigma=I_d$ in the same global construction gives unit
normally reflected Brownian motion on $[0,\infty)$.
\end{proof}

\subsection{Uniform marginals and reversal of the reference law}
\label{sec:clock_stationarity}
Let $P_u(x,\dd y)$ be the transition kernel of unit normally reflected
Brownian motion on $\overline{\cD}$. We use $P_uf(x)$ for its action on
functions and $\mu P_u$ for its action on measures.

\begin{proposition}
    \label{proposition:rve_sde_marginal}
    Suppose Assumptions~\ref{assumption:bounded_convex_domain} and
    \ref{assumption:theoretical_domain} hold, and let $X$ follow
    \eqref{eq:rve_sde}. If $X_0\sim\nu$, then
    $X_t\sim\nu$ for every $t\in[0,1]$. If $X_0\sim\mu_0$ is arbitrary, then
    for every fixed $t\in(0,1]$,
    \begin{equation}
        A(t)\longrightarrow\infty
        \quad\Longrightarrow\quad
        \lVert\Law(X_t)-\nu\rVert_{\mathrm{TV}}\longrightarrow0
        \label{eq:rve_arbitrary_initial_tv_mixing}
    \end{equation}
    along any family of noise schedules. The limit varies the schedule, not
    $t$.
\end{proposition}

\begin{proof}
The two domain assumptions have different roles. Bounded convexity is the
hypothesis used for the stationarity and uniform-ergodicity conclusions of
\citet[Lemma~1]{loper2020uniform}, whereas smoothness keeps
\eqref{eq:rve_sde} within the manuscript's single-valued inward-normal and
unique-law framework. Moreover,
\eqref{eq:rve_noise_schedule} gives $g\in C([0,1])$ with
$\inf_{[0,1]}g>0$, so the clock $A$ is finite and strictly increasing.
For $u\in[0,A(1)]$, set $R_u:=X_{A^{-1}(u)}$.
The time-changed martingale
$\int_0^{A^{-1}(u)}g(s)\,\dd W_s$ has bracket $uI_d$, so it is Brownian. The
increasing clock preserves the boundary support of the scalar regulator
$\ell^R_u:=\ell_{A^{-1}(u)}$. Hence $R$ is unit normally reflected Brownian
motion and
\begin{equation}
    X_t=R_{A(t)},\qquad 0\leq t\leq1.
    \label{eq:rve_reference_clock}
\end{equation}
This constructs only the segment $0\leq u\leq A(1)$.
Assumptions~\ref{assumption:bounded_convex_domain} and
\ref{assumption:theoretical_domain} supply the domain hypotheses of
Lemma~\ref{lemma:forward_reflected_well_posedness}. The constant
coefficients $b=0$ and $\sigma=I_d$ satisfy its coefficient hypotheses.
The lemma therefore gives a global unit normally reflected Brownian
solution with the same initial law.
Uniqueness in law identifies the constructed segment with the restriction of
this global solution. By \citet[Lemma~1(1)]{loper2020uniform}, the global
solution is a homogeneous Markov process with transition kernel $P_u$. Hence,
conditionally on $R_0$, the law of $R_u$ is $P_u(R_0,\cdot)$. The limit
$u\to\infty$ below concerns this global Markov family, not an extension of the
finite realization.
If $X_0\sim\nu$, stationarity in
\citet[Lemma~1(1)]{loper2020uniform} proves the first claim.
For a general $\mu_0$, the uniform ergodicity in
\citet[Lemma~1(2)]{loper2020uniform} gives
\begin{equation}
    \lVert\mu_0P_u-\nu\rVert_{\mathrm{TV}}
    \leq\sup_{x\in\overline{\cD}}
       \lVert P_u(x,\cdot)-\nu\rVert_{\mathrm{TV}}
    \longrightarrow0
    \qquad (u\to\infty).
    \label{eq:rve_mixture_tv_bound}
\end{equation}
For fixed physical time $t$, \eqref{eq:rve_reference_clock} gives
$\Law(X_t)=\mu_0P_{A(t)}$. Substituting $u=A(t)$ into
\eqref{eq:rve_mixture_tv_bound} therefore proves
\eqref{eq:rve_arbitrary_initial_tv_mixing} whenever a family of noise
schedules makes $A(t)\to\infty$. Thus this is a limit in accumulated clock
time, not a limit $t\to\infty$.
\end{proof}

\begin{lemma}
    \label{lemma:rve_stationary_reversal}
    Suppose Assumptions~\ref{assumption:bounded_convex_domain} and
    \ref{assumption:theoretical_domain} hold. The reflected SDE
    \[
        \dd Y_{\rt}
        =g(1-\rt)\,\dd\overline W_{\rt}
        +\mathbf n_{\mathrm{in}}(Y_{\rt})\,\dd\overline\ell_{\rt},
        \qquad Y_0\sim\nu,\quad 0\leq\rt\leq1,
    \]
    admits a strong solution, has pathwise uniqueness, and is unique in law.
    Its unique solution law is the reversed reference law $\Pb$ defined in
    \eqref{eq:reverse_path_law_definitions}. The Brownian motion
    $\overline W$ and scalar regulator $\overline\ell$ in a realization of this
    law need not be literal pathwise reversals of the forward drivers.
\end{lemma}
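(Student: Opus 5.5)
The proof splits into two parts: well-posedness of the displayed reverse equation, and identification of its law with $\Pb$.

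\textbf{Well-posedness.} Apply Lemma~\ref{lemma:forward_reflected_well_posedness} with $T=1$, $b=0$ and $\sigma(\rt,x)=g(1-\rt)I_d$. By \eqref{eq:rve_noise_schedule}, $\rt\mapsto g(1-\rt)$ is continuous and bounded on $[0,1]$, and it does not depend on $x$. It is therefore jointly continuous and trivially uniformly Lipschitz in space. The lemma then gives a pathwise-unique strong solution that is unique in law for the initial law $\nu$.

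\textbf{Identification with $\Pb$.} The plan is to use the time change of Proposition~\ref{proposition:rve_sde_marginal}, which reduces both processes to unit normally reflected Brownian motion $R$, and then invoke reversibility of $R$ under $\nu$.

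Under $\Pf$, the forward process satisfies $X_t=R_{A(t)}$, where $R$ is started from $\nu$.

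For the reverse equation, take its unique solution $Y$ and define the reverse clock $\overline A(\rt):=\int_0^{\rt}g(1-s)^2\,\dd s=A(1)-A(1-\rt)$. Repeating the time-change argument from the proof of Proposition~\ref{proposition:rve_sde_marginal} shows that $\overline R_u:=Y_{\overline A^{-1}(u)}$, for $u\in[0,A(1)]$, is unit normally reflected Brownian motion started from $\nu$. The bracket of the time-changed martingale is $uI_d$, and the regulator keeps its boundary support under the increasing clock. By uniqueness in law, $\overline R$ has the law of $R$ restricted to $[0,A(1)]$. Consequently $Y_{\rt}=\overline R_{A(1)-A(1-\rt)}$.

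On the other hand, the reversal of $X$ is $X_{1-\rt}=R_{A(1-\rt)}=\widetilde R_{A(1)-A(1-\rt)}$, where $\widetilde R_u:=R_{A(1)-u}$. So the two sides match once we know that $\widetilde R$ and $R$ have the same law on $[0,A(1)]$, i.e.\ that stationary reflected Brownian motion is reversible. Since the law of a path is determined by its finite-dimensional distributions, it suffices to check these.

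\textbf{Reversibility (main obstacle).} This needs the symmetry $\nu(\dd x)P_u(x,\dd y)=\nu(\dd y)P_u(y,\dd x)$. The preferred route is to cite the symmetry of the Neumann heat kernel, $k_u(x,y)=k_u(y,x)$, from \citet{burdzy2004heat}. Together with the Markov property from \citet[Lemma~1]{loper2020uniform}, this gives equality of all finite-dimensional distributions of $R$ under $\nu$ and of its time reversal. If that citation is not directly available, the fallback is to argue that the generator $\tfrac12\Delta$ with Neumann boundary condition is self-adjoint on $L^2(\nu)$ by Green's identity, so $P_u$ is $\nu$-symmetric. A second alternative is to cite \citet{williams1987time} or \citet{cattiaux1988time} for the time reversal of stationary reflected Brownian motion.

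Finally, the Brownian motion $\overline W$ and regulator $\overline\ell$ are produced by solving the reverse equation and are only identified in law. This justifies the closing remark in the statement that they need not be literal pathwise reversals of the forward drivers.
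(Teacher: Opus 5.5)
Your proposal is correct and follows essentially the paper's proof: time-change to stationary unit-clock reflected Brownian motion, use its reversibility under $\nu$, and close with uniqueness from Lemma~\ref{lemma:forward_reflected_well_posedness}. The one structural difference is the direction of the argument. You time-change an arbitrary solution $Y$ into the unit clock and invoke uniqueness in law of unit reflected Brownian motion. The paper instead time-changes $\widehat R_u=R_{A(1)-u}$ into the $\rt$-clock, checks that $\Pb$ is a weak solution law of the displayed SDE, and then invokes uniqueness in law of that SDE.

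The reversibility step you flag as the main obstacle is taken in the paper directly from \citet[Lemma~1(1)]{loper2020uniform}, the same result you already cite for the Markov property. So the heat-kernel-symmetry or Green's-identity detour is not needed.
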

\begin{proof}
Assumptions~\ref{assumption:bounded_convex_domain} and
\ref{assumption:theoretical_domain} supply the domain hypotheses of
Lemma~\ref{lemma:forward_reflected_well_posedness}. With $b=0$,
$\sigma(t,x)=g(t)I_d$, and initial law $\nu$, the schedule
\eqref{eq:rve_noise_schedule} supplies its coefficient hypotheses, so
$\Pf$ is well defined. Bounded convexity is used again in
\citet[Lemma~1]{loper2020uniform}, while $g>0$ makes the forward and reverse
clock changes invertible.
Let $X\sim\Pf$ and set $Y_{\rt}:=X_{1-\rt}$. By
\eqref{eq:reverse_path_law_definitions}, $\Law(Y)=\Pb$ by definition. By
the definition of $\Pf$, $X_0\sim\nu$. Equation
\eqref{eq:rve_reference_clock} gives a unit normally reflected Brownian motion
$R$, with $R_0=X_0\sim\nu$, such that
\[
    X_t=R_{A(t)},\qquad 0\leq t\leq1.
\]
Thus, for a Brownian motion $W'$ and a scalar regulator $\ell'$, the
unit-clock process satisfies
\[
    \dd R_u=\dd W'_u+\mathbf n_{\mathrm{in}}(R_u)\,\dd\ell'_u,
    \qquad R_0\sim\nu,\quad 0\leq u\leq A(1).
\]
By \citet[Lemma~1(1)]{loper2020uniform}, the initial law
$R_0\sim\nu$ makes $R$ stationary. We first reverse $R$ on the unit-clock
interval $[0,A(1)]$ by setting
\[
    \widehat R_u:=R_{A(1)-u},\qquad 0\leq u\leq A(1).
\]
By the reversibility conclusion of
\citet[Lemma~1(1)]{loper2020uniform},
\[
    \Law\bigl((\widehat R_u)_{0\leq u\leq A(1)}\bigr)
    =\Law\bigl((R_u)_{0\leq u\leq A(1)}\bigr).
\]
Indeed, repeated detailed balance reverses every finite-dimensional Markov
product, and path continuity upgrades equality of finite-dimensional
distributions to equality of laws on
$C([0,A(1)];\overline{\cD})$.

We next return from unit-clock time $u$ to reverse physical time $\rt$. Define
\[
    \overline A(\rt):=A(1)-A(1-\rt)
    =\int_0^{\rt}g(1-s)^2\,\dd s,
    \qquad 0\leq\rt\leq1.
\]
Since $A(0)=0$ and $A$ is increasing, $\overline A(0)=0$,
$\overline A(1)=A(1)$, and $\overline A$ maps $[0,1]$ increasingly onto
$[0,A(1)]$.
Then
\[
    Y_{\rt}=X_{1-\rt}=R_{A(1-\rt)}
    =R_{A(1)-\overline A(\rt)}
    =\widehat R_{\overline A(\rt)}.
\]
Applying the deterministic time change $u=\overline A(\rt)$ to both processes
in the preceding path-law identity therefore gives
\[
    \Law\bigl((Y_{\rt})_{0\leq\rt\leq1}\bigr)
    =\Law\bigl((R_{\overline A(\rt)})_{0\leq\rt\leq1}\bigr).
\]
We now time-change the unit-clock SDE displayed above. Define
\[
    Z_{\rt}:=R_{\overline A(\rt)},\qquad
    M_{\rt}:=W'_{\overline A(\rt)},\qquad
    \overline\ell_{\rt}:=\ell'_{\overline A(\rt)}.
\]
Because $\overline A'(\rt)=g(1-\rt)^2$, the continuous local martingale
$M$ has bracket $\overline A(\rt)I_d$. Hence
\[
    \overline W_{\rt}
    :=\int_0^{\rt}g(1-s)^{-1}\,\dd M_s
\]
has bracket $\rt I_d$ and is Brownian by the characterization in
\citet[Theorem~5.12]{le2016brownian}, while stochastic-integral associativity
from \citet[Theorem~5.6]{le2016brownian} gives
\[
    M_{\rt}=\int_0^{\rt}g(1-s)\,\dd\overline W_s.
\]
The same increasing time change preserves the boundary support of the scalar
regulator. Consequently,
\[
    \dd Z_{\rt}=g(1-\rt)\,\dd\overline W_{\rt}
       +\mathbf n_{\mathrm{in}}(Z_{\rt})\,\dd\overline\ell_{\rt},
    \qquad Z_0\sim\nu.
\]
Since $Z$ has path law $\Pb$ by the preceding law identity, this proves that
$\Pb$ is a weak solution law of the displayed SDE.

By \eqref{eq:rve_noise_schedule}, the coefficients $b=0$ and
$\sigma(\rt,x)=g(1-\rt)I_d$ are bounded and continuous. Independence of
$x$ gives spatial Lipschitz constant zero. Together with Assumptions~
\ref{assumption:bounded_convex_domain} and \ref{assumption:theoretical_domain},
these verify the hypotheses of
Lemma~\ref{lemma:forward_reflected_well_posedness} for $T=1$ and initial
law $\nu$, giving strong existence, pathwise uniqueness, and uniqueness
in law. Since the weak
solution just constructed has law $\Pb$, this is the unique solution law.
\end{proof}

\subsection{Ideal reverse path law}

\begin{lemma}
    \label{lemma:scheduled_reflected_time_reversal}
    Suppose Assumptions~\ref{assumption:bounded_convex_domain} and
    \ref{assumption:theoretical_domain} hold. Let $X\sim\Qf^\star$, let
    $p_t$ be the density of $X_t$, and set $Y_{\rt}:=X_{1-\rt}$ for
    $0\leq\rt\leq1$. Then the path law
    $\Qb^\star:=\Law((Y_{\rt})_{0\leq\rt\leq1})$ is well defined.

    If Assumption~\ref{assumption:closed_endpoint_score_continuity} also
    holds, then $\Qb^\star$ is a weak solution law of
    \eqref{eq:full_interval_ideal_sde} on $[0,1]$. If, in addition,
    Assumption~\ref{assumption:closed_endpoint_score} holds, then
    \eqref{eq:full_interval_ideal_sde} has pathwise uniqueness, a unique
    strong solution, and uniqueness in law. In particular, $\Qb^\star$ is
    its unique solution law.
\end{lemma}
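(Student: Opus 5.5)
The plan is to handle the three assertions in order: well-definedness, the weak representation, and then uniqueness. Throughout I reuse the clock-change device from the proofs of Proposition~\ref{proposition:rve_sde_marginal} and Lemma~\ref{lemma:rve_stationary_reversal}.

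\textbf{Well-definedness.} Lemma~\ref{lemma:forward_reflected_well_posedness}, applied with $b=0$, $\sigma(t,x)=g(t)I_d$ and $\mu=\pi$, gives a unique law $\Qf^\star$ on $\cC$. The reversal map $\zeta\mapsto(\zeta(1-\rt))_{\rt}$ is an isometry of $C([0,1];\overline{\cD})$, hence Borel. Therefore $\Qb^\star$, the pushforward of $\Qf^\star$ under this map, is a well-defined probability measure on $\cC$.

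\textbf{Weak representation.} The idea is to reduce to unit time, reverse there, and transport back.
\begin{enumerate}
\item As in \eqref{eq:rve_reference_clock}, write $X_t=R_{A(t)}$, where $R$ is unit normally reflected Brownian motion with $R_0\sim\pi$. The marginal density of $R_u$ is $q_u(x)=\int k_u(y,x)\pi(y)\,\dd y$, so $p_t=q_{A(t)}$ with the representative chosen in Section~\ref{sec:theoretical_assumptions}.
\item For $u>0$, Assumption~\ref{assumption:theoretical_domain} and \citet{burdzy2004heat} make $q_u$ strictly positive, $C^2$ up to the boundary, and Neumann. This is the regularity under which the time-homogeneous reflected-reversal formula \citet[Section~2, equation~(4)]{holk2026statistical} applies.
\item That formula shows that $\widehat R_u:=R_{A(1)-u}$ is, on $[0,A(1))$, a weak solution of
\[
\dd\widehat R_u=\nabla\log q_{A(1)-u}(\widehat R_u)\,\dd u+\dd B_u+\mathbf n_{\mathrm{in}}(\widehat R_u)\,\dd\widehat\ell_u,
\qquad \widehat R_0\sim p_1 .
\]
The interval is open at $A(1)$ because the score of $q_0=\pi$ is not controlled there.
\item Apply the deterministic time change $u=\overline A(\rt)=A(1)-A(1-\rt)$, exactly as in Lemma~\ref{lemma:rve_stationary_reversal}. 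Since $\overline A'(\rt)=g(1-\rt)^2$, the drift becomes $g^2(1-\rt)\nabla\log p_{1-\rt}(Y_{\rt})\,\dd\rt$. Define $\overline W_{\rt}:=\int_0^{\rt}g(1-s)^{-1}\,\dd B_{\overline A(s)}$. By \citet[Theorems~5.6 and~5.12]{le2016brownian}, $\overline W$ is Brownian and reproduces the noise term $g(1-\rt)\,\dd\overline W_{\rt}$. The regulator keeps its boundary support. This yields \eqref{eq:full_interval_ideal_sde} on $[0,1)$.
\end{enumerate}

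\textbf{Extension to the closed endpoint.} Under Assumption~\ref{assumption:closed_endpoint_score_continuity}, the drift $(\rt,x)\mapsto g^2(1-\rt)\nabla\log p_{1-\rt}(x)$ is continuous on the compact set $[0,1]\times\overline{\cD}$, hence bounded. So the drift integral extends continuously to $\rt=1$. The Brownian motion $\overline W$ extends continuously to $\rt=1$ (the bounded integrand makes its bracket finite). $Y$ is path-continuous by construction. The regulator increment can then be defined on $[0,1]$ as the remainder $Y-Y_0-\text{drift}-\text{noise}$. It remains to check that this remainder is still of the form $\int\mathbf n_{\mathrm{in}}(Y_s)\,\dd\overline\ell_s$ with $\overline\ell$ nondecreasing and boundary-supported; a limit argument on the closed set of regulator paths should handle this. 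These steps show $\Qb^\star$ is a weak solution law on $[0,1]$.

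\textbf{Uniqueness.} Under Assumption~\ref{assumption:closed_endpoint_score} as well, the drift is bounded, jointly continuous, and uniformly spatially Lipschitz, and $\sigma=g(1-\rt)I_d$ is constant in space. Lemma~\ref{lemma:forward_reflected_well_posedness} with initial law $p_1$ then gives pathwise uniqueness, strong existence, and uniqueness in law. Since the weak solution just constructed has law $\Qb^\star$, it is the unique solution law.

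\textbf{Main obstacle.} I expect the hardest step to be the rigorous invocation of the reflected reversal result in step 3. This means matching its hypotheses: initial law with a density, positivity and regularity of $q_u$ only for $u>0$, and the right reverse filtration. It also requires confirming that the output Brownian motion and regulator are genuinely adapted to the reversed filtration, so that the time change preserves the solution structure. I would first try to verify the hypotheses on each interval $[0,A(1)-\varepsilon]$, where all regularity is uniform, and then let $\varepsilon\downarrow0$ using consistency of the solutions across $\varepsilon$.
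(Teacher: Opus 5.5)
Your route is the paper's: the clock change $X_t=R_{A(t)}$ to unit normally reflected Brownian motion started from $\pi$, and the reflected reversal formula of \citet{holk2026statistical} on $[0,A(1))$. It continues with the deterministic time change $u=\overline A(\rt)$ and $\overline W_{\rt}=\int_0^{\rt}g(1-s)^{-1}\,\dd B_{\overline A(s)}$, extension to $\rt=1$ via boundedness of the score, and uniqueness from Lemma~\ref{lemma:forward_reflected_well_posedness} with initial law $p_1$. Two bookkeeping remarks follow. To match the source, take its conductivity $f_{\mathrm H}\equiv1/2$ and rescale its boundary clock by $1/2$, which recovers the paper's normal-regulator convention. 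The $\varepsilon$-truncation you plan for the ``main obstacle'' is not needed, since the cited result already gives the representation on the half-open interval with an enlargement of the reversed filtration.

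The step you leave open, the regulator at $\rt=1$, does not go through as sketched. Defining $\overline L$ as the remainder $Y-Y_0-\text{drift}-\text{noise}$ only shows that $\overline L_{\rt}$ converges as $\rt\uparrow1$. It says nothing about whether the nondecreasing process $\overline\ell_{\rt}$ stays bounded, and $\overline\ell_{\rt}$ is the total variation of $\overline L$ on $[0,\rt]$ because $\lVert\mathbf n_{\mathrm{in}}\rVert_2=1$. A closedness argument for the Skorokhod conditions presupposes that the regulators already converge, so it cannot supply this.

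The missing observation is the one the paper uses. The reversed boundary clock produced by the reversal theorem is the time reversal of the forward boundary clock of $R$, so it is continuous on the closed interval $[0,A(1)]$. After the time change, $\overline\ell$ therefore extends continuously to $[0,1]$ with the same monotonicity. Boundary support is then automatic, since a continuous nondecreasing function charges no single point.

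If you prefer an intrinsic argument, apply It\^o's formula on $[0,\rt]$, $\rt<1$, to $\varphi(Y)$ with $\varphi$ from Lemma~\ref{lemma:normalized_boundary_coordinate}. Because $\nabla\varphi=\mathbf n_{\mathrm{in}}$ on $\partial\cD$, the regulator contributes exactly $\overline\ell_{\rt}$. Every other term converges as $\rt\uparrow1$: the drift is bounded under Assumption~\ref{assumption:closed_endpoint_score_continuity}, the stochastic integral is $L^2$-bounded, and $Y$ is continuous. This forces $\lim_{\rt\uparrow1}\overline\ell_{\rt}<\infty$.
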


\begin{proof}
\emph{Step 1: define the law.}
Assumptions~\ref{assumption:bounded_convex_domain} and
\ref{assumption:theoretical_domain} supply the domain hypotheses of
Lemma~\ref{lemma:forward_reflected_well_posedness}. With $b=0$ and
$\sigma(t,x)=g(t)I_d$, \eqref{eq:rve_noise_schedule} supplies its coefficient
hypotheses. Applying it with $T=1$ and initial law $\pi$ constructs
$X\sim\Qf^\star$ with continuous paths. Thus $Y_{\rt}:=X_{1-\rt}$ is
continuous on $[0,1]$ and $\Qb^\star=\Law(Y)$ is well defined.

\emph{Step 2: identify the SDE on $[0,1)$.}
Set $T:=A(1)$. By \eqref{eq:rve_reference_clock}, there is a unit normally
reflected Brownian motion $R$, started from the density $\pi$, with
\[
    X_t=R_{A(t)}.
\]
At this stage we need only a weak-solution representation of the already
defined reversed path: we must exhibit a filtered extension carrying a
Brownian motion and an adapted scalar normal regulator for which the reverse
SDE holds. Pathwise uniqueness and construction from a prescribed Brownian
motion are addressed separately in Step~4.

Our domain is open, connected, bounded, and has $C^\infty$ boundary. In the
source notation of \citet[Equation~(6)]{holk2026statistical}, take
the conductivity $f_{\mathrm H}\equiv1/2$ in their constrained forward model.
Then $\nabla f_{\mathrm H}=0$, $\sqrt{2f_{\mathrm H}}I_d=I_d$, and their
conormal coefficient is
$\nu_{\mathrm H}=f_{\mathrm H}\mathbf n_{\mathrm{in}}
=\mathbf n_{\mathrm{in}}/2$. Thus their forward model is unit normally
reflected Brownian motion, and its vector reflection term is
$\mathbf n_{\mathrm{in}}\,\dd(\ell^{\mathrm H}/2)$, which has exactly our
inward-normal form. The reflected time-reversal statement in
\citet[Equation~(4)]{holk2026statistical} and its subsequent
specialization to the conductivity model therefore apply to the law of $R$.
Under $f_{\mathrm H}\equiv1/2$, the source's specialized reverse drift
$\nabla f_{\mathrm H}+2f_{\mathrm H}\nabla\log q_{T-u}$ reduces to
$\nabla\log q_{T-u}$.

If $q_u$ denotes the density of $R_u$ and
$\widehat R_u:=R_{T-u}$, the cited result provides an enlargement of the
reversed natural filtration carrying a Brownian motion
$B^{\leftarrow}$ and the reversed boundary clock
$\overline\ell^{\mathrm H}$ such that, for $0\leq u<T$,
\[
    \dd\widehat R_u
      =\nabla\log q_{T-u}(\widehat R_u)\,\dd u
       +\dd B_u^{\leftarrow}
       +\frac12\mathbf n_{\mathrm{in}}(\widehat R_u)\,
        \dd\overline\ell_u^{\mathrm H}.
\]
Consequently, the law of the actual reversed path $\widehat R$ is a weak
solution law of the displayed SDE on $[0,T)$. The cited result makes no
uniqueness assertion for these coefficients.
Set
\[
    \lambda_u^{\leftarrow}:=\frac12\overline\ell_u^{\mathrm H}.
\]
Then the last term is
$\mathbf n_{\mathrm{in}}(\widehat R_u)\,\dd\lambda_u^{\leftarrow}$.
This forced rescaling only converts scalar local-time conventions. It does
not choose or change the reflected path. Define the reverse clock
\[
    \overline A(\rt):=T-A(1-\rt)
      =\int_0^{\rt}g(1-s)^2\,\dd s.
\]
Then $Y_{\rt}=\widehat R_{\overline A(\rt)}$ and
$q_{A(t)}=p_t$. Since
$\overline A'(\rt)=g(1-\rt)^2$, the standard deterministic time change
provides a Brownian motion $\overline W$ such that
\[
    \dd B^{\leftarrow}_{\overline A(\rt)}
      =g(1-\rt)\,\dd\overline W_{\rt}.
\]
Set $\overline\ell_{\rt}:=
\lambda^{\leftarrow}_{\overline A(\rt)}$. The time-changed boundary term is
therefore
$\int_0^{\rt}\mathbf n_{\mathrm{in}}(Y_s)\,\dd\overline\ell_s$.
Since $\overline A$ is increasing, $\overline\ell$ is supported on the
boundary times of $Y$. Substituting
$T-\overline A(s)=A(1-s)$ in the source drift therefore gives
\begin{equation}
    Y_{\rt}=Y_0
    +\int_0^{\rt}g(1-s)^2\nabla\log p_{1-s}(Y_s)\,\dd s
    +\int_0^{\rt}g(1-s)\,\dd\overline W_s
    +\int_0^{\rt}\mathbf n_{\mathrm{in}}(Y_s)\,
       \dd\overline\ell_s,
    \qquad 0\leq\rt<1.
    \label{eq:half_open_full_ideal_reverse_in_proof}
\end{equation}
No score assumption is used here: for $\rt<1$, the remaining forward clock
$A(1-\rt)$ is positive, and the representative fixed above supplies the
required positive-time score.

\emph{Step 3: include $\rt=1$.}
Equation~\eqref{eq:half_open_full_ideal_reverse_in_proof} has so far been
proved only for $\rt<1$. To obtain the SDE on $[0,1]$, we check that every
term has a limit as
$\rt\uparrow1$. The possible obstruction is the score as the forward time
$1-\rt$ approaches $0$.

Assumption~\ref{assumption:closed_endpoint_score_continuity} gives
\[
    K:=\sup_{(t,x)\in[0,1]\times\overline{\cD}}
       \lVert\nabla\log p_t(x)\rVert_2<\infty.
\]
Hence the drift integral converges. The Brownian motion has its standard
continuous extension to $1$, and its stochastic integral does too because
$g(1-\cdot)\in L^2([0,1])$.
The reversed boundary clock supplied by the cited result is continuous on
$[0,T]$. Therefore $\overline\ell$ from Step~2 extends continuously to
$[0,1]$ with the same monotonicity and boundary-support properties.
Thus the scalar regulator also extends correctly. Since $Y$ is continuous,
passing to the limit in
\eqref{eq:half_open_full_ideal_reverse_in_proof} proves
\eqref{eq:full_interval_ideal_sde} on $[0,1]$. Since
$Y_{\rt}=X_{1-\rt}$, its law is $\Qb^\star$, with
$Y_0\sim p_1$ and $Y_1\sim p_0=\pi$.

\emph{Step 4: prove uniqueness.}
Set
\[
    b(\rt,x):=g(1-\rt)^2\nabla\log p_{1-\rt}(x),
    \qquad
    \sigma(\rt,x):=g(1-\rt)I_d.
\]
Assumption~\ref{assumption:closed_endpoint_score_continuity}, compactness
of $[0,1]\times\overline{\cD}$, and \eqref{eq:rve_noise_schedule} make $b$
bounded and jointly continuous. Assumption~
\ref{assumption:closed_endpoint_score} makes it uniformly Lipschitz in
space. The same schedule makes $\sigma$ bounded and continuous, with
spatial Lipschitz constant zero. Assumptions~
\ref{assumption:bounded_convex_domain} and \ref{assumption:theoretical_domain}
supply the domain hypotheses of
Lemma~\ref{lemma:forward_reflected_well_posedness}. Applying it with $T=1$
and initial law $p_1$ gives strong existence, pathwise uniqueness, and
uniqueness in law. The weak-solution
identification on $[0,1]$ in Step~3 then shows that $\Qb^\star$ is the
unique solution law of \eqref{eq:full_interval_ideal_sde}.
\end{proof}

\section{Girsanov change of drift for reflected diffusions}
\label{sec:reflected_girsanov_path_space}

This section proves the path-law change of measure used in
Theorem~\ref{theorem:kl_reference}.  We first apply ordinary Girsanov theory
on a filtered probability space carrying a reflected solution. We then show
that the resulting density depends only on the observed path, so it descends
to a Radon--Nikodym derivative between coordinate path laws.  The second step
is where reflection requires additional work.  Theorem~\ref{theorem:reflected_girsanov}
combines the two steps, after which we specialize it to the learned and
reference diffusions.

\subsection{Construction and the path density}

\begin{lemma}
    \label{lemma:construction_space_reflected_girsanov}
    Fix $T>0$.  Let
    $(\Omega,\cF,(\cF_t)_{0\leq t\leq T},\mathbf P)$ satisfy the usual
    conditions. Suppose it carries an $\cF_0$-measurable
    $\overline{\cD}$-valued random variable $\xi_0$, an $\R^d$-valued
    Brownian motion $W$ relative to $(\cF_t)_{0\leq t\leq T}$, a continuous
    adapted $\overline{\cD}$-valued process $X$ with $X_0=\xi_0$, and a
    continuous adapted nondecreasing process $\ell$ with $\ell_0=0$. Set
    $L_t:=\int_0^t\mathbf n_{\mathrm{in}}(X_s)\,\dd\ell_s$, and suppose
    \begin{equation}
        \dd X_t=b_t\,\dd t+g(t)\,\dd W_t
        +\dd L_t,
        \qquad 0\leq t\leq T,
        \label{eq:generic_reflected_construction}
    \end{equation}
    where $b$ is progressively measurable with
    $\int_0^T\lVert b_t\rVert_2\,\dd t<\infty$ almost surely,
    $g\in L^2([0,T])$ is deterministic, and $\dd\ell_s$ is supported on $\{s:X_s\in\partial\cD\}$.
    Let $\vartheta$ be a bounded predictable
    $\R^d$-valued process and set
    \begin{equation}
        \mathcal E_t^\vartheta:=\exp\left(
        \int_0^t\vartheta_s^\top\dd W_s
        -\frac12\int_0^t\lVert\vartheta_s\rVert_2^2\,\dd s
        \right).
        \label{eq:generic_construction_density}
    \end{equation}
    Then $\mathcal E^\vartheta$ is a uniformly integrable
    $\mathbf P$-martingale, and
    $\dd\widetilde{\mathbf Q}:=\mathcal E_T^\vartheta\,\dd\mathbf P$
    defines a probability
    equivalent to $\mathbf P$.  Under $\widetilde{\mathbf Q}$,
    \[
        \widetilde W_t:=W_t-\int_0^t\vartheta_s\,\dd s
    \]
    is Brownian relative to $(\cF_t)_{0\leq t\leq T}$, the initial law of
    $X$ is unchanged, and the same solution pair $(X,\ell)$ satisfies the
    stated regulator conditions and
    \begin{equation}
        \dd X_t=(b_t+g(t)\vartheta_t)\,\dd t
        +g(t)\,\dd\widetilde W_t
        +\dd L_t.
        \label{eq:generic_changed_reflected_construction}
    \end{equation}
\end{lemma}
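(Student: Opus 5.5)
The plan is to apply the classical Girsanov theorem on the construction space and then check that the reflected structure is untouched by the change of probability.

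\emph{Step 1: the exponential is a uniformly integrable martingale.} Since $\vartheta$ is bounded, say $\lVert\vartheta_s\rVert_2\leq C$, Novikov's criterion holds trivially:
\[
\E\exp\Bigl(\tfrac12\int_0^T\lVert\vartheta_s\rVert_2^2\,\dd s\Bigr)\leq e^{C^2T/2}<\infty .
\]
Hence $\mathcal E^\vartheta$ is a true martingale on the compact interval $[0,T]$, and therefore uniformly integrable, with $\E[\mathcal E^\vartheta_T]=1$. Because $\mathcal E^\vartheta_T$ is an exponential of an almost surely finite random variable, it is strictly positive. Consequently $\widetilde{\mathbf Q}$ is a probability measure equivalent to $\mathbf P$.

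\emph{Step 2: the Brownian driver under $\widetilde{\mathbf Q}$.} Girsanov's theorem, as in \citet{le2016brownian}, shows that $\widetilde W_t=W_t-\int_0^t\vartheta_s\,\dd s$ is a continuous $\widetilde{\mathbf Q}$-local martingale relative to $(\cF_t)$. Its bracket is still $tI_d$, because brackets are invariant under an equivalent change of measure. Lévy's characterization \citep[Theorem~5.12]{le2016brownian} then makes $\widetilde W$ an $(\cF_t)$-Brownian motion under $\widetilde{\mathbf Q}$. The usual conditions are preserved, since $\widetilde{\mathbf Q}$ and $\mathbf P$ have the same null sets.

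\emph{Step 3: initial law and reflected equation.} For the initial law, take $E\in\cF_0$. The martingale property and $\mathcal E^\vartheta_0=1$ give
\[
\widetilde{\mathbf Q}(E)=\E_{\mathbf P}[\mathcal E^\vartheta_T\ind_E]=\E_{\mathbf P}[\mathcal E^\vartheta_0\ind_E]=\mathbf P(E).
\]
Since $\xi_0$ is $\cF_0$-measurable, its law is unchanged.

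Next, substitute $\dd W_t=\dd\widetilde W_t+\vartheta_t\,\dd t$ into \eqref{eq:generic_reflected_construction}. This requires identifying $\int g\,\dd W$, defined under $\mathbf P$, with $\int g\,\dd\widetilde W+\int g\vartheta\,\dd s$ under $\widetilde{\mathbf Q}$. Here $g$ is deterministic and in $L^2$, so I will use that stochastic integrals of locally bounded or deterministic $L^2$ integrands are compatible with equivalent changes of measure. Concretely, the integral is a limit in probability of Riemann-type sums, and convergence in probability is preserved under equivalence. The Lebesgue integral $\int g\vartheta\,\dd s$ is finite by Cauchy--Schwarz.

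The remaining objects are $X$, $\ell$, the drift integral and $L$. These are pathwise defined processes that do not change. The almost-sure properties also persist, because they are statements about null events: continuity, monotonicity, $\ell_0=0$, $X_t\in\overline{\cD}$, the support of $\dd\ell$ on boundary times, and $\int\lVert b_t\rVert_2\,\dd t<\infty$. Together these give \eqref{eq:generic_changed_reflected_construction}.

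\emph{Main obstacle.} The main point is the identification of the stochastic integral across the two measures. I would handle it by approximating $g$ by step functions: the identity is trivial for step functions, and one passes to the limit using $L^2$ isometry under each measure separately, combined with equivalence. Everything else is routine.
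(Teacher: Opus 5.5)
Your proposal is correct and follows the paper's proof step for step. It uses Novikov's criterion via boundedness of $\vartheta$ (the paper cites Le Gall's Theorem~5.23), Girsanov plus L\'evy's characterization for $\widetilde W$, and the identity $\E_{\mathbf P}[\mathcal E_T^\vartheta\mid\cF_0]=\mathcal E_0^\vartheta=1$ for the initial law, and it notes that paths, adaptedness and almost-sure regulator properties survive an equivalent change of measure. Your step-function argument identifying $\int g\,\dd W$ with $\int g\,\dd\widetilde W+\int g\vartheta\,\dd s$ under $\widetilde{\mathbf Q}$ usefully justifies what the paper compresses into ``substitution gives''; keep that version rather than the Riemann-sum phrasing, since such sums need not converge for a merely $L^2$ integrand $g$.
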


\begin{proof}
By \citet[Section~5.1.3(iv)]{le2016brownian},
$\int_0^\cdot\vartheta_s^\top\dd W_s$ is a continuous local martingale.
Boundedness of $\vartheta$ gives
$\E_{\mathbf P}\exp(\frac12\int_0^T\lVert\vartheta_s\rVert_2^2\,\dd s)<\infty$.
The finite-horizon form of \citet[Theorem~5.23]{le2016brownian}, obtained by
stopping at $T$, therefore makes $\mathcal E^\vartheta$ a uniformly
integrable martingale. \citet[Theorem~5.22]{le2016brownian} makes
$\widetilde W$ a continuous local martingale under
$\widetilde{\mathbf Q}$. Its quadratic covariations are unchanged, so
\citet[Theorem~5.12]{le2016brownian} makes it Brownian. Substitution gives
\eqref{eq:generic_changed_reflected_construction}.  The probability change
does not alter the paths $(X,\ell)$ or their adaptedness, and equivalence
preserves the regulator conditions. Finally,
since $\xi_0=X_0$ is $\cF_0$-measurable by hypothesis,
$\E_{\mathbf P}[\mathcal E_T^\vartheta\mid\cF_0]
=\mathcal E_0^\vartheta=1$, so the law of the
initial variable $\xi_0$ is unchanged.
\end{proof}

Let $\omega\in\Omega$ denote a construction-space outcome
and let $\zeta\in C([0,T];\overline{\cD})$ denote a generic path. The observed
path associated with $\omega$ is $X(\omega)$.  Our goal is to identify a Borel
functional $z$ on path space such that, for every bounded Borel functional
$F$,
\begin{equation}
\begin{aligned}
    \E_{\widetilde{\Q}}[F]
    &=\int F(\zeta)\,\widetilde{\Q}(\dd\zeta)=\int z(\zeta)F(\zeta)\,\P(\dd\zeta)
      =\E_{\P}[zF].
\end{aligned}
    \label{eq:pushforward_density_target}
\end{equation}
This is the defining identity for
$z=\dd\widetilde{\Q}/\dd\P$.  Lemma~\ref{lemma:construction_space_reflected_girsanov} initially gives only
\[
\begin{aligned}
    \int F(\zeta)\,\widetilde{\Q}(\dd\zeta)
    &=\int_\Omega F(X(\omega))\,
      \widetilde{\mathbf Q}(\dd\omega)=\int_\Omega \mathcal E_T^\vartheta(\omega)
      F(X(\omega))\,\mathbf P(\dd\omega).
\end{aligned}
\]
The final expression is perfectly well defined, but it is still an
expectation on the construction space.  We cannot yet rewrite it as an
expectation under $\P$, because $\mathcal E_T^\vartheta$ need not be a
function of the path $X$.

Conditioning converts this construction-space weight into one that retains
only the information contained in $X$:
\[
\begin{aligned}
    \int_\Omega \mathcal E_T^\vartheta(\omega)
      F(X(\omega))\,\mathbf P(\dd\omega)&=\int_\Omega
      \left(
        \E_{\mathbf P}[\mathcal E_T^\vartheta F(X)\mid\sigma(X)]
      \right)(\omega)\,\mathbf P(\dd\omega)\\
      &=\int_\Omega
      \left(
        \E_{\mathbf P}[\mathcal E_T^\vartheta\mid\sigma(X)]
      \right)(\omega)
      F(X(\omega))\,\mathbf P(\dd\omega).
\end{aligned}
\]
The first equality is the tower property.
The second holds because $F(X)$
is $\sigma(X)$-measurable and can therefore be pulled out of the inner
conditional expectation.  The conditioning is introduced precisely because
pushforward through $X$ discards the remaining construction-space
information: the conditional expectation is the $\sigma(X)$-measurable
weight that preserves all averages against bounded functionals of $X$.

Since $X_\#\mathbf P=\P$ is a probability measure and
$\mathcal E_T^\vartheta\geq 0$, the conditional-expectation form of the
Doob--Dynkin lemma in \citet[Lemma~3]{taraldsen2018optimal}, applied with
$\Gamma=\mathcal E_T^\vartheta$ and $Y=X$, provides a Borel path functional
$z$ such that
\[
    \left(
      \E_{\mathbf P}[\mathcal E_T^\vartheta\mid\sigma(X)]
    \right)(\omega)
    =z(X(\omega))
    \quad\text{for $\mathbf P$-almost every }\omega\in\Omega.
\]
The definition of the pushforward law $\P=X_\#\mathbf P$ now gives
\[
\begin{aligned}
    \int_\Omega z(X(\omega))F(X(\omega))\,\mathbf P(\dd\omega)
    &=\int z(\zeta)F(\zeta)\,\P(\dd\zeta)=\E_{\P}[zF].
\end{aligned}
\]
Together with the preceding identities, this is
\eqref{eq:pushforward_density_target} and therefore proves
\[
    \frac{\dd\widetilde{\Q}}{\dd\P}=z
    \qquad\P\text{-almost surely}.
\]
There is no reason in general for $z(X(\omega))$ to equal
$\mathcal E_T^\vartheta(\omega)$.  If the latter is not determined by $X$,
two outcomes of the construction can have the same observed path but
different values of the proposed likelihood.  Writing
$\dd\widetilde{\Q}/\dd\P(X(\omega))
=\mathcal E_T^\vartheta(\omega)$ would then be ill-defined on coordinate
path space.  

In our case $(X,\ell)$ is supplied as the
solution pair. After passage to coordinate path space only $X$ remains, and
the definition of a solution does not by itself show that $\ell$ is a
function of $X$. Establishing this is the purpose of the following recovery
argument. For
$L_t:=\int_0^t\mathbf n_{\mathrm{in}}(X_s)\,\dd\ell_s$, the reflected SDE
gives
\[
    W_t=\int_0^t g(s)^{-1}
    \bigl(\dd X_s-b(X_s,s)\,\dd s-\dd L_s\bigr).
\]
This does not reconstruct $W$ from $X$ until $\ell$, and hence $L$, has been
reconstructed from $X$. For intuition on how to establish that $\ell$ is a
function of the observed path $X$, see the one-dimensional half-line
argument in \citet[Theorem~1.3.1]{pilipenko2014introduction} and the preceding discussion in \citet[Section~1.3]{pilipenko2014introduction}.

The higher-dimensional argument uses the same idea after reducing the
boundary behavior to one dimension. For the sake of completeness, we provide full construction and proof of the argument. Similar arguments are present in \citet[Eq.~3]{holk2026statistical}, but the exact form differs from the one obtained in our Lemma~\ref{lemma:normal_regulator_recovery} due to differences in problem formulation. 

We choose a smooth function $\varphi$
that vanishes precisely on $\partial\cD$ and whose gradient there is the
inward unit normal.  Tanaka's formula applied to the nonnegative scalar
process $\varphi(X)$ recovers the scalar boundary regulator from the local
time of $\varphi(X)$ at zero. For
$L_t:=\int_0^t\mathbf n_{\mathrm{in}}(X_s)\,\dd\ell_s$, this also recovers
$L$. Substituting it into the reflected SDE then
recovers $W$.  The next three lemmas make the boundary coordinate, regulator
recovery, and Brownian-coordinate recovery precise.

\begin{lemma}
    \label{lemma:normalized_boundary_coordinate}
    Suppose $\cD$ is bounded and has $C^\infty$ boundary.  There is
    $\varphi\in C^\infty(\R^d)$ such that
    \begin{equation}
        \varphi\geq0\ \text{on }\overline{\cD},
        \qquad
        \{x\in\overline{\cD}:\varphi(x)=0\}=\partial\cD,
        \qquad
        \nabla\varphi=\mathbf n_{\mathrm{in}}
        \ \text{on }\partial\cD.
        \label{eq:normalized_boundary_coordinate}
    \end{equation}
\end{lemma}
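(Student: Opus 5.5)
The plan is to build $\varphi$ from the signed distance to the boundary near $\partial\cD$ and patch it smoothly to a positive function in the interior. Let $\delta(x):=\operatorname{dist}(x,\partial\cD)$ for $x\in\overline{\cD}$, and let $\rho(x)$ denote the signed distance, positive in $\cD$ and negative outside. Since $\partial\cD$ is a compact $C^\infty$ hypersurface (Assumption~\ref{assumption:theoretical_domain} plus boundedness), the tubular neighbourhood theorem gives $\eta>0$ such that, on $\mathsf T_\eta:=\{x\in\R^d:|\rho(x)|<\eta\}$, each $x$ has a unique nearest boundary point $\mathrm{pr}(x)$. On that tube $\rho\in C^\infty(\mathsf T_\eta)$, by the regularity of the distance function \citep{foote1984regularity}, and $\nabla\rho(x)=\mathbf n_{\mathrm{in}}(\mathrm{pr}(x))$. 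In particular $\nabla\rho=\mathbf n_{\mathrm{in}}$ on $\partial\cD$, $\rho=0$ exactly on $\partial\cD$, and $\rho>0$ on $\cD\cap\mathsf T_\eta$.

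Next I would glue. Fix a smooth nondecreasing $h:\R\to\R$ with $h(s)=s$ for $s\le\eta/3$, $h(s)=\eta/2$ for $s\ge 2\eta/3$, and $h>0$ on $(0,\infty)$; for instance, integrate a smooth cutoff that equals $1$ on $(-\infty,\eta/3]$, vanishes on $[2\eta/3,\infty)$ and has the right total integral. Take a smooth cutoff $\chi\in C^\infty(\R^d)$ with $\chi=1$ on $\{|\rho|\le 2\eta/3\}$ and $\operatorname{supp}\chi\subset\mathsf T_\eta$. Then define
\[
    \varphi(x):=\chi(x)\,h(\rho(x))+\bigl(1-\chi(x)\bigr)\,\frac{\eta}{2}.
\]
On $\mathsf T_\eta$ both terms are smooth. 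Off $\operatorname{supp}\chi$ the function equals the constant $\eta/2$. Where $2\eta/3\le|\rho|<\eta$, the intersection with $\overline{\cD}$ has $h(\rho)=\eta/2$, so $\varphi=\eta/2$ there as well. Outside $\overline{\cD}$ there is no constraint beyond smoothness, which both pieces provide. Hence $\varphi\in C^\infty(\R^d)$.

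The three properties in \eqref{eq:normalized_boundary_coordinate} are then checked on $\overline{\cD}$. Near the boundary, where $\rho\le\eta/3$, we have $\chi=1$ and so $\varphi=\rho$; this gives $\varphi\ge0$, $\varphi=0$ exactly on $\partial\cD$, and $\nabla\varphi=\nabla\rho=\mathbf n_{\mathrm{in}}$ there. At interior points with $\rho>\eta/3$, $\varphi$ is a convex combination of $h(\rho)>0$ and $\eta/2>0$, so it is strictly positive. Interior points outside the tube are at distance at least $\eta$ from the boundary and have $\varphi=\eta/2>0$. This establishes the zero set and the sign.

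The main obstacle is the first step: the existence of a uniform tube on which $\rho$ is $C^\infty$ with gradient equal to the inward normal of the projection. If I cannot cite it directly, I would prove it via the normal map $(z,s)\mapsto z+s\,\mathbf n_{\mathrm{in}}(z)$ on $\partial\cD\times(-\eta,\eta)$. Its differential at $s=0$ is invertible, so the inverse function theorem gives local diffeomorphisms. Compactness of $\partial\cD$ and a Lebesgue-number argument upgrade these to a global diffeomorphism onto the tube for small $\eta$. One then identifies $\rho(z+s\,\mathbf n_{\mathrm{in}}(z))=s$ using the nearest-point characterization, which gives smoothness and $\nabla\rho=\mathbf n_{\mathrm{in}}\circ\mathrm{pr}$. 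Convexity is not needed for this lemma.
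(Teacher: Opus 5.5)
Your proof is correct and takes essentially the paper's route: the signed distance, which is smooth on a tube by Foote's regularity result and has gradient $\mathbf n_{\mathrm{in}}$ on $\partial\cD$, composed with a smooth nondecreasing truncation that is the identity near $0$ and a positive constant deeper inside $\cD$. The only difference is in the gluing: the paper truncates on both sides of $0$ (to $\pm 2\varepsilon$), so its piecewise definition matches up directly, whereas you truncate only the positive side and use the spatial cutoff $\chi$ to reach the constant $\eta/2$ outside the tube, which works equally well.
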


\begin{proof}
Define the signed distance from the boundary, with the sign chosen positive
inside $\cD$, by
\[
    d_{\mathrm s}(x):=
    \begin{cases}
        \operatorname{dist}(x,\partial\cD),
          &x\in\overline{\cD},\\
        -\operatorname{dist}(x,\partial\cD),
          &x\notin\overline{\cD}.
    \end{cases}
\]
Theorem~1 and the preliminary lemma of
\citet{foote1984regularity} give a neighborhood of the compact
smooth hypersurface $\partial\cD$ with a unique nearest-point projection, on
which the unsigned distance is smooth away from $\partial\cD$.  In every
local defining chart, \citet[Remark~2]{foote1984regularity} shows that assigning a
sign according to the two sides extends the distance smoothly across the
hypersurface.  With the sign positive in $\cD$, every local construction is
the restriction of the same globally defined function $d_{\mathrm s}$, so no gluing is
needed: the local neighborhoods can be united to give an open neighborhood
$N$ of $\partial\cD$ on which $d_{\mathrm s}$ is smooth.  Moreover, for
$z\in\partial\cD$ and all sufficiently small $s>0$, the point
$z+s\mathbf n_{\mathrm{in}}(z)$ lies in $\cD$ and has $z$ as its nearest
boundary point.
Consequently
\[
    d_{\mathrm s}\bigl(z+s\mathbf n_{\mathrm{in}}(z)\bigr)=s.
\]
Since $d_{\mathrm s}$ vanishes on $\partial\cD$, its derivative there annihilates every
tangent vector. The last identity gives its derivative in the inward-normal
direction.  Hence
\begin{equation}
    \nabla d_{\mathrm s}(z)=\mathbf n_{\mathrm{in}}(z),
    \qquad z\in\partial\cD.
    \label{eq:signed_distance_inward_gradient}
\end{equation}

It remains only to extend this local coordinate smoothly away from the
boundary.  Compactness of $\partial\cD$ allows us to choose
$\varepsilon>0$ such that
$\{x:\operatorname{dist}(x,\partial\cD)<3\varepsilon\}\subset N$, and choose
a nondecreasing function $\rho\in C^\infty(\R)$ satisfying
\[
    \rho(s)=s\quad (|s|\leq\varepsilon),
    \qquad
    \rho(s)=2\varepsilon\quad (s\geq2\varepsilon),
    \qquad
    \rho(s)=-2\varepsilon\quad (s\leq-2\varepsilon).
\]
This is the static version of the modified signed-distance construction in
\citet[Section~5.1, equations~(5.2)--(5.3)]{alfaro2008singular}.  Writing
$N_{3\varepsilon}:=
\{x:\operatorname{dist}(x,\partial\cD)<3\varepsilon\}$, define
\[
    \varphi(x):=
    \begin{cases}
        \rho(d_{\mathrm s}(x)), &x\in N_{3\varepsilon},\\
        2\varepsilon, &x\in\cD\setminus N_{3\varepsilon},\\
        -2\varepsilon,
          &x\in\R^d\setminus(\overline{\cD}\cup N_{3\varepsilon}).
    \end{cases}
\]
Near the boundary of $N_{3\varepsilon}$, the first expression is already the
corresponding constant, so these definitions give a function in
$C^\infty(\R^d)$.  Monotonicity of $\rho$ and $\rho(s)=s$ near zero show that
$\varphi>0$ in $\cD$ and $\varphi=0$ on $\partial\cD$.  Finally,
\eqref{eq:signed_distance_inward_gradient} and $\rho'(0)=1$ give
$\nabla\varphi=\mathbf n_{\mathrm{in}}$ on $\partial\cD$.
\end{proof}

\begin{lemma}
    \label{lemma:normal_regulator_recovery}
    Fix $T>0$, let $\varphi$ be as in
    Lemma~\ref{lemma:normalized_boundary_coordinate}, and let
    $g\in C([0,T])$ satisfy
    $\inf_{0\leq t\leq T}\lvert g(t)\rvert>0$.
    Let $(\Omega,\cF,(\cF_t)_{0\leq t\leq T},\mathbf P)$ satisfy the usual
    conditions. Suppose it carries an $\R^d$-valued Brownian motion $W$
    relative to $(\cF_t)_{0\leq t\leq T}$, a continuous adapted
    $\overline{\cD}$-valued process $X$ with $X_0$ being
    $\cF_0$-measurable, a bounded progressively measurable process $b$, and a
    continuous adapted nondecreasing process $\ell$ with $\ell_0=0$. Suppose
    \begin{equation}
        X_t=X_0+\int_0^t b_s\,\dd s
        +\int_0^t g(s)\,\dd W_s+L_t,
        \qquad
        L_t=\int_0^t\mathbf n_{\mathrm{in}}(X_s)\,\dd\ell_s,
        \label{eq:recovery_reflected_sde}
    \end{equation}
    where $\dd\ell$ is supported on
    $\{t:X_t\in\partial\cD\}$.
    Then, almost surely, $X$ spends zero Lebesgue time on the boundary and
    \begin{equation}
        \ell_t=\frac12\LT_t^0(\varphi(X))
        =\lim_{n\to\infty}\frac n2\int_0^t
        \ind_{\{0\leq\varphi(X_s)\leq1/n\}}
        g(s)^2\lVert\nabla\varphi(X_s)\rVert_2^2\,\dd s
        \quad(0\leq t\leq T).
        \label{eq:regulator_local_time_recovery}
    \end{equation}
    Consequently, there are fixed Borel maps
    \[
        \mathfrak l:C([0,T];\overline{\cD})\to C([0,T];\R),
        \qquad
        \mathfrak L:C([0,T];\overline{\cD})
        \to C([0,T];\R^d)
    \]
    such that $\ell=\mathfrak l(X)$ and $L=\mathfrak L(X)$ almost surely.
\end{lemma}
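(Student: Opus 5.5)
Apply Itô's formula to the scalar semimartingale $Y_t:=\varphi(X_t)$, where $\varphi\in C^\infty(\R^d)$ is the boundary coordinate from Lemma~\ref{lemma:normalized_boundary_coordinate}. Because $X$ is a continuous semimartingale with martingale part $\int g\,\dd W$,
\[
    Y_t=Y_0+\int_0^t\Bigl(\nabla\varphi(X_s)^\top b_s+\tfrac12 g(s)^2\Delta\varphi(X_s)\Bigr)\dd s
      +\int_0^t g(s)\nabla\varphi(X_s)^\top\dd W_s
      +\int_0^t\nabla\varphi(X_s)^\top\mathbf n_{\mathrm{in}}(X_s)\,\dd\ell_s .
\]
The measure $\dd\ell$ charges only boundary times, and there $\nabla\varphi=\mathbf n_{\mathrm{in}}$ has unit norm. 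Hence the last integral equals $\ell_t$. Write $A_t$ for the finite-variation part other than $\ell$. The bracket of $Y$ is then $\langle Y\rangle_t=\int_0^t g(s)^2\lVert\nabla\varphi(X_s)\rVert_2^2\,\dd s$.

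Next, apply Tanaka's formula \citep[Proposition~9.2]{le2016brownian}. Since $Y\geq0$ on $\overline{\cD}$, we have $Y^+=Y$, so
\[
    Y_t=Y_0+\int_0^t\ind_{\{Y_s>0\}}\,\dd Y_s+\tfrac12\LT_t^0(Y).
\]
Compare this with the Itô expansion. The boundary set is $\{Y=0\}$, so $\int_0^t\ind_{\{Y_s=0\}}\,\dd Y_s=\frac12\LT^0_t(Y)$. The left side splits into three pieces:
\begin{itemize}
\item the contribution of $\ell$, which is all of $\ell_t$ because $\ell$ lives on $\{Y=0\}$;
\item a drift piece $\int\ind_{\{Y=0\}}\,\dd A$;
\item a martingale piece $\int\ind_{\{Y=0\}}g\nabla\varphi^\top\dd W$.
\end{itemize}
The key intermediate claim is that $X$ spends zero Lebesgue time on $\partial\cD$. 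By the occupation-time formula, $\int_0^t\ind_{\{Y_s=0\}}\,\dd\langle Y\rangle_s=\int\ind_{\{a=0\}}\LT^a_t\,\dd a=0$. On $\{Y=0\}$ the integrand $g^2\lVert\nabla\varphi\rVert^2$ equals $g^2$, which is bounded below by $\inf g^2>0$. Therefore $\mathrm{Leb}\{s\le t:X_s\in\partial\cD\}=0$. Consequently the drift piece vanishes, since $b$ and $\Delta\varphi$ are bounded, and the martingale piece has zero bracket and also vanishes. This gives $\ell_t=\frac12\LT^0_t(Y)$.

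The limit representation in \eqref{eq:regulator_local_time_recovery} then follows from the right-local-time occupation approximation \citep[Proposition~9.9]{le2016brownian}:
\[
    \LT^0_t(Y)=\lim_n n\int_0^t\ind_{\{0\le Y_s\le 1/n\}}\,\dd\langle Y\rangle_s,
\]
after inserting the bracket formula. I expect the main obstacle to be this bookkeeping: checking that the right local time (rather than a symmetric version) is the one matching $Y\geq0$, and justifying zero boundary occupation before dropping the drift and martingale pieces.

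For the Borel maps, define $\mathfrak l(\zeta)_t:=\limsup_n\frac n2\int_0^t\ind_{\{0\le\varphi(\zeta(s))\le1/n\}}g(s)^2\lVert\nabla\varphi(\zeta(s))\rVert^2\,\dd s$ for $t$ rational. Set it to $0$ whenever this fails to extend to a continuous nondecreasing function, and otherwise extend by continuity. Each approximant is a Borel (indeed continuous) functional of $\zeta$, because the integrand is Borel in $(s,\zeta)$ and integrals preserve measurability. Limsups over rationals preserve this, and the set on which the extension is continuous is Borel. Hence $\mathfrak l$ is Borel, and $\ell=\mathfrak l(X)$ almost surely by the previous step.

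Finally, set $\mathfrak L(\zeta)_t:=\int_0^t\mathbf n_{\mathrm{in}}(\zeta(s))\,\dd\mathfrak l(\zeta)_s$. Here $\mathbf n_{\mathrm{in}}$ is replaced by $\nabla\varphi$, which is continuous on $\R^d$ and agrees with $\mathbf n_{\mathrm{in}}$ where $\dd\ell$ charges. This Stieltjes integral is a limit of Riemann sums, each Borel in $\zeta$, so $\mathfrak L$ is Borel and $L=\mathfrak L(X)$ almost surely.
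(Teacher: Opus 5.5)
Your proposal is correct and follows the paper's proof essentially step for step. You apply It\^o's formula to $\varphi(X)$, use the occupation-density formula to get zero boundary time, restrict Tanaka's identity to $\{\varphi(X)=0\}$ (you use the positive-part form where the paper uses the negative-part form, which is equivalent for a nonnegative process), and invoke the one-sided occupation approximation for the limit. The only deviation is cosmetic: you build $\mathfrak l$ from $\limsup$'s at rational times plus continuous extension, whereas the paper takes a uniform limit of the approximating paths. Your route works equally well. However, your parenthetical claim that each approximant is a continuous functional of $\zeta$ is false in general, since the indicator of the closed set $\{\varphi\le 1/n\}$ only yields upper semicontinuity. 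Borel measurability, which is all you actually use, still holds.
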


\begin{proof}
Set $Y_t^\partial:=\varphi(X_t)\geq0$. This is the boundary-distance
coordinate. The multidimensional It\^o formula in
\citet[Theorem~5.10]{le2016brownian} gives
\begin{equation}
    \dd Y_t^\partial
    =\langle\nabla\varphi(X_t),b_t\rangle\,\dd t
    +g(t)\langle\nabla\varphi(X_t),\dd W_t\rangle
    +\frac12g(t)^2\Delta\varphi(X_t)\,\dd t
    +\dd\ell_t.
    \label{eq:boundary_coordinate_ito}
\end{equation}
Indeed, $\dd\ell$ is supported on the boundary, where
$\nabla\varphi=\mathbf n_{\mathrm{in}}$.  The quadratic variation is
\[
    \langle Y^\partial\rangle_t
    =\int_0^t g(s)^2
      \lVert\nabla\varphi(X_s)\rVert_2^2\,\dd s.
\]
The occupation-density formula in
\citet[Corollary~9.7]{le2016brownian} assigns zero quadratic
variation to the single level $Y^\partial=0$.  Since
$\lVert\nabla\varphi\rVert_2=1$ on the boundary and
$\inf_t\lvert g(t)\rvert>0$,
\begin{equation}
    \int_0^T\ind_{\{X_s\in\partial\cD\}}\,\dd s=0.
    \label{eq:zero_boundary_lebesgue_time}
\end{equation}

Apply the negative-part Tanaka identity in
\citet[Proposition~9.2]{le2016brownian} to the nonnegative process
$Y^\partial$.
It gives
\[
    \frac12\LT_t^0(Y^\partial)
    =\int_0^t\ind_{\{Y_s^\partial=0\}}\,\dd Y_s^\partial.
\]
Inserting \eqref{eq:boundary_coordinate_ito}, the restricted martingale has
zero quadratic variation, both ordinary-time terms vanish by
\eqref{eq:zero_boundary_lebesgue_time} because their integrands are bounded,
and the regulator term equals $\ell_t$.  This proves the first equality in
\eqref{eq:regulator_local_time_recovery}. \citet[Proposition~9.9]{le2016brownian} gives the
second.

\emph{Measurability.}
For a path $\zeta\in C([0,T];\overline{\cD})$, let
\[
    \mathfrak l_n(\zeta)_t
    :=\frac n2\int_0^t
      \ind_{\{0\leq\varphi(\zeta(s))\leq1/n\}}
      g(s)^2\lVert\nabla\varphi(\zeta(s))\rVert_2^2\,\dd s.
\]
Each $\mathfrak l_n$ is a Borel map into $C([0,T];\R)$, and every resulting
time path is nondecreasing.
Define $\mathfrak l(\zeta)$ as their uniform limit when they are uniformly
Cauchy, and as the zero path otherwise.  This is a Borel map.  On the
probability-one event in \eqref{eq:regulator_local_time_recovery}, the
approximants converge pointwise to the continuous nondecreasing path $\ell$. Such convergence is uniform on a compact interval.  Hence
$\mathfrak l(X)=\ell$.  Finally define
\[
    \mathfrak L(\zeta)_t
    :=\int_0^t\nabla\varphi(\zeta(s))\,\dd\mathfrak l(\zeta)_s.
\]
Deterministic Riemann--Stieltjes sums show that $\mathfrak L$ is Borel, and
the boundary support of $\dd\ell$ gives $\mathfrak L(X)=L$.
\end{proof}

\begin{lemma}
    \label{lemma:regulator_brownian_coordinate_recovery}
    In Lemma~\ref{lemma:normal_regulator_recovery}, suppose in addition that
    $b_t=b(X_t,t)$ for some
    $b\in C(\overline{\cD}\times[0,T];\R^d)$.  Then the paths $L$ and $W$
    are almost surely Borel functions of $X$.  Moreover, for every
    $v\in C(\overline{\cD}\times[0,T];\R^d)$, the terminal It\^o integral
    \begin{equation}
        \int_0^T v(X_t,t)^\top\dd W_t
        \label{eq:coordinate_recovered_stochastic_integral}
    \end{equation}
    has a Borel version on $C([0,T];\overline{\cD})$.
\end{lemma}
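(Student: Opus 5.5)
The plan is to build everything from the Borel maps $\mathfrak l$ and $\mathfrak L$ of Lemma~\ref{lemma:normal_regulator_recovery}, which already give $L=\mathfrak L(X)$ almost surely. The proof then has two parts: recover $W$ as a Borel function of $X$, and then recover the stochastic integral~\eqref{eq:coordinate_recovered_stochastic_integral}.

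\emph{Recovering $W$.} Since $b_t=b(X_t,t)$ with $b$ continuous on the compact set $\overline{\cD}\times[0,T]$, the map
\[
\zeta\mapsto \Bigl(t\mapsto \zeta(t)-\zeta(0)-\int_0^t b(\zeta(s),s)\,\dd s-\mathfrak L(\zeta)_t\Bigr)
\]
is Borel from $C([0,T];\overline{\cD})$ to $C([0,T];\R^d)$. Evaluated at $X$, it equals the continuous martingale $M_t:=\int_0^t g(s)\,\dd W_s$ almost surely, by~\eqref{eq:recovery_reflected_sde}. Because $g\in C([0,T])$ is deterministic and bounded away from zero, stochastic-integral associativity \citep[Theorem~5.6]{le2016brownian} gives $W_t=\int_0^t g(s)^{-1}\,\dd M_s$. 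Since $g^{-1}$ is deterministic and continuous, hence of bounded variation on $[0,T]$ for this schedule, integration by parts yields the pathwise formula
\[
W_t=g(t)^{-1}M_t-\int_0^t M_s\,\dd\bigl(g^{-1}\bigr)(s).
\]
This expression is a Borel (indeed continuous) functional of the path $M$. Composing it with the previous map expresses $W$ as a Borel function of $X$.

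\emph{Recovering the stochastic integral.} For continuous $v$, I would not try to define the Itô integral pathwise directly. Instead I would use Riemann sums along dyadic partitions $t_k=kT2^{-n}$:
\[
I_n(\zeta):=\sum_k v(\zeta(t_k),t_k)^\top\bigl(\mathfrak W(\zeta)_{t_{k+1}}-\mathfrak W(\zeta)_{t_k}\bigr),
\]
where $\mathfrak W$ is the Borel map from the first step. Each $I_n$ is Borel. Because $v(X_t,t)$ is continuous, adapted and bounded, $I_n(X)$ converges in probability to the integral \citep[Proposition~5.9]{le2016brownian}. Passing to a subsequence $n_j$ gives almost-sure convergence. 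We then define the Borel version as $\limsup_j I_{n_j}(\zeta)$ where this is finite, and $0$ otherwise.

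\emph{Main obstacle.} The delicate point is that the subsequence $n_j$ must be fixed deterministically, independently of $\zeta$. Otherwise the limit is not a fixed Borel map on path space. It would be chosen from convergence in probability under the given construction law, so the resulting version is valid $\P$-almost surely for that law. That suffices for the density identification in~\eqref{eq:pushforward_density_target}. If a law-independent version is desired, one can choose $n_j$ with $\P(|I_{n_{j+1}}-I_{n_j}|>2^{-j})<2^{-j}$ and apply Borel--Cantelli. The remaining measurability checks, namely Borel measurability of the Lebesgue-integral map and of the finite sums, are routine: both are continuous in the uniform topology.
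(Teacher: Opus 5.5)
Your treatment of $L$, of the martingale $M=\int_0^\cdot g(s)\,\dd W_s$ (obtained by subtracting $X_0$, the drift integral and $\mathfrak L(X)$), and of the terminal integral matches the paper. For the integral, both use left-point sums, Le Gall's Proposition~5.9, an almost surely convergent subsequence, and a limit-type Borel version. For the schedule \eqref{eq:rve_noise_schedule}, your formula $W_t=g(t)^{-1}M_t-\int_0^t M_s\,\dd(g^{-1})(s)$ is the paper's explicit formula, with $X_0$ already subtracted.

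The gap is the recovery of $W$ in the generality of the statement. The lemma inherits from Lemma~\ref{lemma:normal_regulator_recovery} an arbitrary $g\in C([0,T])$ with $\inf_t|g(t)|>0$. Your step ``continuous, hence of bounded variation'' is false. For such $g$, neither $g^{-1}$ nor the paths of $M$ need have bounded variation, so $\int_0^t M_s\,\dd(g^{-1})(s)$ has no pathwise meaning. Consequently the map $\mathfrak W$, on which your $I_n$ is built, has not been constructed.

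The paper handles this case probabilistically. It writes $W_t=\int_0^t g(s)^{-1}\,\dd M_s$ by associativity and approximates this by deterministic left-point sums. It then selects one subsequence that converges almost surely simultaneously at all rational $t$, by a diagonal argument. The Borel version is the continuous extension of the rational-time limit where that extension exists, and the zero path otherwise.

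A slightly cleaner variant also works. Let $W^n_t:=\int_0^t g^{-1}(\lfloor s\rfloor_n)\,\dd M_s$, where $\lfloor s\rfloor_n$ is the left endpoint of the dyadic interval containing $s$. Doob's inequality gives
\[
\E\Bigl[\sup_{t\leq T}\lVert W^n_t-W_t\rVert_2^2\Bigr]\leq 4d\int_0^T\bigl(g^{-1}(\lfloor s\rfloor_n)-g^{-1}(s)\bigr)^2g(s)^2\,\dd s\longrightarrow0,
\]
by uniform continuity of $g^{-1}$. Hence a subsequence converges almost surely uniformly on $[0,T]$. Since the bound involves only $g$, this subsequence is even law-independent. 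Your second step then goes through unchanged; alternatively, take Riemann sums of $v/g$ against $M$ directly.

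Finally, your closing remark is wrong as stated. Selecting $n_j$ with $\P(|I_{n_{j+1}}-I_{n_j}|>2^{-j})<2^{-j}$ uses $\P$, so Borel--Cantelli does not produce a law-independent version of the integral. The paper explicitly claims only a version that may depend on the weak law and on $v$. As you note, that is all the density identification needs.
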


\begin{proof}
The preceding lemma already gives $L=\mathfrak L(X)$.  Put
\[
    B_t(X):=\int_0^t b(X_s,s)\,\dd s,
    \qquad
    S_t(X):=X_t-L_t-B_t(X).
\]
Then $S_t=X_0+\int_0^t g(s)\,\dd W_s$.  For the schedule in
\eqref{eq:rve_noise_schedule}, $h:=g^{-1}$ is $C^1$, and
integration by parts gives the explicit pathwise formula
\[
    W_t=h(t)S_t-h(0)S_0-\int_0^t S_s\,\dd h(s).
\]
For a general continuous nonvanishing $g$, stochastic-integral associativity
from \citet[Theorem~5.6]{le2016brownian} gives
\begin{equation}
    W_t=\int_0^t g(s)^{-1}\,\dd S_s.
    \label{eq:generic_brownian_recovery}
\end{equation}
\citet[Proposition~5.9]{le2016brownian} approximates this integral
by deterministic left-point sums.  A single diagonal almost-sure subsequence
over rational times and continuity give one Borel version of the full path
$W$: define the continuous extension on the Borel set where it exists, and
use the zero path elsewhere.  The same argument applied to $v(X_t,t)$ gives a Borel version of
\eqref{eq:coordinate_recovered_stochastic_integral}.
The selected subsequence, and hence the Borel version, may depend on the fixed
weak law and on $v$. No version simultaneous over all laws and integrands is
asserted.
\end{proof}

We can now combine the standard construction-space change with the recovery
lemmas.  Existence and uniqueness in law are assumed here rather than derived:
their only role is to make the two SDEs denote unambiguous path laws and to
identify the weak solution produced by Girsanov. An analogous result for a slightly different setting is available at \citet[Appendix~A]{holk2026statistical}.

\begin{theorem}
\label{theorem:reflected_girsanov}
Let $\cD\subset\R^d$ be bounded and convex with $C^\infty$ boundary.  Let
$\alpha,\beta\in C(\overline{\cD}\times[0,1];\R^d)$, let
$g\in C([0,1])$ satisfy $\inf_t\lvert g(t)\rvert>0$. In the two equations
below, $L^{\P}$ and $L^{\Q}$ denote the vector reflection terms derived from
the scalar regulators in their respective weak realizations:
\begin{equation}
\label{eq:reflected_girsanov_sdes}
    \dd X_t=\alpha(X_t,t)\,\dd t+g(t)\,\dd W_t^{\P}
      +\dd L_t^{\P},
    \qquad
    \dd X_t=\beta(X_t,t)\,\dd t+g(t)\,\dd W_t^{\Q}
      +\dd L_t^{\Q}
\end{equation}
Suppose these equations have a common initial law $\mu$, admit weak solutions,
and have unique path laws $\P$ and $\Q$ on
$C([0,1];\overline{\cD})$.
Then
$\P\ll\Q\ll\P$. Let
$(\Omega,\cF,(\cF_t),\mathbf Q;\xi_0,W^{\Q},X,\ell)$ be any
weak realization
of the second SDE in \eqref{eq:reflected_girsanov_sdes}. Then,
$\mathbf Q$-almost surely,
\begin{equation}
\label{eq:reflected_girsanov_density}
    \frac{\dd\Q}{\dd\P}(X)
    =\exp\left(
      -\int_0^1u(X_t,t)^\top\dd W_t^{\Q}
      +\frac12\int_0^1\lVert u(X_t,t)\rVert_2^2\,\dd t
    \right),
    \qquad
    u:=\frac{\alpha-\beta}{g}.
\end{equation}
Equivalently, on any weak realization of the first SDE with construction
probability $\mathbf P$, $\mathbf P$-almost surely,
\begin{equation}
    \frac{\dd\Q}{\dd\P}(X)
    =\exp\left(
      -\int_0^1u(X_t,t)^\top\dd W_t^{\P}
      -\frac12\int_0^1\lVert u(X_t,t)\rVert_2^2\,\dd t
    \right).
    \label{eq:reflected_girsanov_density_under_p}
\end{equation}
\end{theorem}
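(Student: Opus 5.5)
The argument combines the construction-space change of measure of Lemma~\ref{lemma:construction_space_reflected_girsanov} with the path-recovery results of Lemmas~\ref{lemma:normal_regulator_recovery} and~\ref{lemma:regulator_brownian_coordinate_recovery}.

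\emph{Step 1: change of measure on a realization of $\Q$.} Start from any weak realization $(\Omega,\cF,(\cF_t),\mathbf Q;\xi_0,W^{\Q},X,\ell)$ of the second SDE. Set $\vartheta_t:=u(X_t,t)$ with $u=(\alpha-\beta)/g$. By continuity on the compact set $\overline{\cD}\times[0,1]$ and $\inf|g|>0$, $\vartheta$ is bounded, continuous and adapted, hence predictable. Lemma~\ref{lemma:construction_space_reflected_girsanov}, applied with $b_t=\beta(X_t,t)$, gives an equivalent probability $\dd\widetilde{\mathbf P}=\mathcal E_1^\vartheta\,\dd\mathbf Q$. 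Under $\widetilde{\mathbf P}$, the same pair $(X,\ell)$ solves the first SDE with drift $\beta+g u=\alpha$, driven by $\widetilde W=W^{\Q}-\int_0^\cdot\vartheta_s\,\dd s$, and keeps the initial law $\mu$. Uniqueness of the path law $\P$ then gives $X_\#\widetilde{\mathbf P}=\P$.

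\emph{Step 2: descend the density to path space.} The conditioning/Doob--Dynkin argument written out before Lemma~\ref{lemma:normalized_boundary_coordinate} shows that $\dd\P/\dd\Q(X)=\E_{\mathbf Q}[\mathcal E_1^\vartheta\mid\sigma(X)]$. The main obstacle is proving that $\mathcal E_1^\vartheta$ is itself $\sigma(X)$-measurable up to null sets, so that this conditional expectation equals $\mathcal E_1^\vartheta$. This is where the recovery lemmas enter:
\begin{itemize}
\item Lemma~\ref{lemma:regulator_brownian_coordinate_recovery}, with $b=\beta$ continuous, shows that $W^{\Q}$ is almost surely a Borel function of $X$.
\item Its stochastic-integral clause, with $v=u$, gives a Borel version of $\int_0^1u(X_t,t)^\top\dd W^{\Q}_t$.
\item The Lebesgue integral $\int_0^1\lVert u(X_t,t)\rVert_2^2\,\dd t$ is a Borel functional of $X$ directly.
\end{itemize}
Hence $\dd\P/\dd\Q(X)=\mathcal E_1^\vartheta$ $\mathbf Q$-a.s. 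Since $\mathcal E_1^\vartheta>0$, this gives $\P\ll\Q$. Taking reciprocals yields exactly \eqref{eq:reflected_girsanov_density}; the sign flips and the $+\tfrac12$ appears.

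\emph{Step 3: the reverse direction and the $\P$-form of the density.} Running Steps~1--2 symmetrically on a realization of the first SDE with $\vartheta=-u(X,\cdot)$ gives $\Q\ll\P$ and
\[
\frac{\dd\Q}{\dd\P}(X)=\exp\Bigl(-\int_0^1 u^\top\dd W^{\P}-\tfrac12\int_0^1\lVert u\rVert_2^2\,\dd t\Bigr)\qquad\mathbf P\text{-a.s.},
\]
which is \eqref{eq:reflected_girsanov_density_under_p}. Steps~2 and~3 together give the equivalence $\P\ll\Q\ll\P$.

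Finally, the two formulas agree on the same path. Under the Girsanov map, $W^{\P}=W^{\Q}-\int_0^\cdot u\,\dd t$, so
\[
-\int_0^1 u^\top\dd W^{\P}-\tfrac12\int_0^1\lVert u\rVert_2^2\,\dd t=-\int_0^1 u^\top\dd W^{\Q}+\tfrac12\int_0^1\lVert u\rVert_2^2\,\dd t.
\]
The remaining delicate point is that the Borel versions of the stochastic integrals from Lemma~\ref{lemma:regulator_brownian_coordinate_recovery} are selected per law. One must check that the version chosen under $\mathbf Q$ still represents the integral under the equivalent measure $\widetilde{\mathbf P}$. It does: almost-sure convergence of the left-point Riemann sums is preserved under equivalent probabilities, and stochastic integrals are invariant under such a change of measure.
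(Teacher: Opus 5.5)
Your proposal is correct and follows the paper's proof essentially step for step. It uses a construction-space Girsanov change via Lemma~\ref{lemma:construction_space_reflected_girsanov} with $\vartheta=\pm u(X,\cdot)$, identifies the changed coordinate law by uniqueness in law, and descends the density to path space through Lemmas~\ref{lemma:normal_regulator_recovery} and~\ref{lemma:regulator_brownian_coordinate_recovery}, in both directions; the paper merely starts from the $\P$-realization instead of the $\Q$-realization.

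One harmless slip: the existence of the density $\dd\P/\dd\Q$ already gives $\P\ll\Q$, and it is the strict positivity of $\mathcal E_1^\vartheta$ that yields $\Q\ll\P$. Step~3 covers that direction anyway, and your closing consistency check is not needed for the statement as posed.
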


\begin{proof}
Start from an arbitrary weak realization
$(\Omega,\cF,(\cF_t),\mathbf P;\xi_0,W^{\P},X,\ell)$ of the first
SDE. The function $u=(\alpha-\beta)/g$ is bounded and continuous on the
compact cylinder. By the definition of a weak realization, $X$ and
$\ell$ are continuous and adapted, $\xi_0=X_0$ is
$\cF_0$-measurable, and $W^{\P}$ is Brownian relative to the construction
filtration. Thus the hypotheses of
Lemma~\ref{lemma:construction_space_reflected_girsanov} hold with
$\vartheta_t=-u(X_t,t)$.  Under the resulting equivalent probability,
\[
    W_t^{\Q}=W_t^{\P}+\int_0^t u(X_s,s)\,\dd s
\]
is Brownian, and
$(\Omega,\cF,(\cF_t),\widetilde{\mathbf Q};
  \xi_0,W^{\Q},X,\ell)$
is a weak realization of the second SDE. Its initial law is still $\mu$, and
uniqueness in law identifies its coordinate law with $\Q$.

Lemmas~\ref{lemma:normal_regulator_recovery} and
\ref{lemma:regulator_brownian_coordinate_recovery} show that $\ell$, hence
$L^{\P}$, as well as $W^{\P}$ and the stochastic integral in
\eqref{eq:reflected_girsanov_density_under_p} have Borel versions determined
by $X$.  Thus the construction-space density equals $z(X)$ for a Borel path
functional $z$.  Hence, for every bounded Borel path functional $F$,
\[
    \E_{\Q}[F]
    =\E_{\P}[zF].
\]
This is exactly the statement that $\dd\Q/\dd\P=z$ on coordinate path space.
Strict positivity of $z$ gives equivalence.

To obtain the formula on an arbitrary realization of the second SDE, its
weak-solution properties provide the same adaptedness, scalar-regulator,
Brownian-motion, and $\cF_0$-measurability hypotheses. Apply the same argument in the
opposite direction with
$\vartheta_t=u(X_t,t)$.  The changed drift is
$\beta+gu=\alpha$, so uniqueness in law identifies the changed coordinate
law with $\P$.  Recovery and pushforward therefore give
\[
    \frac{\dd\P}{\dd\Q}(X)
    =\exp\left(
      \int_0^1u(X_t,t)^\top\dd W_t^{\Q}
      -\frac12\int_0^1\lVert u(X_t,t)\rVert_2^2\,\dd t
    \right).
\]
Taking the reciprocal proves \eqref{eq:reflected_girsanov_density} on every
such realization.
\end{proof}

\section{Formal statements and proofs for other main-text results}

\subsection{Optimal path proposal and terminal tilt}

\begin{proposition}
    \label{proposition:optimal_proposal}
    Suppose Assumptions~\ref{assumption:bounded_convex_domain},
    \ref{assumption:theoretical_domain}, and
    \ref{assumption:positive_target_reduction} hold. Then
    \begin{equation}
        \frac{\dd\Qf^\star}{\dd\Pf}(X)
        =\frac{\pi(X_0)}{\nu(X_0)}.
        \label{eq:optimal_forward_density}
    \end{equation}
    Reversing both path laws gives
    \[
        \frac{\dd\Qb^\star}{\dd\Pb}(Y)
        =\frac{\pi(Y_1)}{\nu(Y_1)}.
    \]
    For every proposal law $\Qf$ equivalent to $\Pf$, the importance weight
    in \eqref{eq:path_importance_weight} satisfies
    \[
        W_{\Qf}(X)
        =Z\frac{\dd\Qf^\star}{\dd\Qf}(X),
        \qquad
        \Var_{\Qf}(W_{\Qf})
        =Z^2\chi^2(\Qf^\star\Vert\Qf).
    \]
    Consequently, $\Qf^\star$ is the unique variance-minimizing proposal and $W_{\Qf^\star}=Z$ almost surely.
\end{proposition}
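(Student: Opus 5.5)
The plan is to prove \eqref{eq:optimal_forward_density} by disintegrating both forward laws over their initial point. By Lemma~\ref{lemma:forward_reflected_well_posedness}, the RVE SDE \eqref{eq:rve_sde} is unique in law for every initial distribution. Write $\mathsf K_x$ for the path law started from the Dirac mass $\delta_x$. The key intermediate claim is that $x\mapsto\mathsf K_x(E)$ is Borel for every Borel set $E$ of path space, and that for every initial law $\mu$,
\[
    \Law(X)=\int_{\overline{\cD}}\mathsf K_x\,\mu(\dd x).
\]

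I would establish this from pathwise uniqueness and strong existence. By Yamada--Watanabe, the strong solution is $X=\Phi(X_0,W)$ for a measurable functional $\Phi$ that does not depend on the initial law. Conditioning on $X_0=x$, with $X_0$ independent of $W$, then gives both the measurability of $x\mapsto\mathsf K_x$ and the mixture formula. This is the step I expect to need the most care. If one prefers to avoid the full Yamada--Watanabe functional, an alternative is to check the regular conditional law directly: for $\mu$-a.e.\ $x$, the conditional law of $X$ given $X_0=x$ is a weak solution started at $\delta_x$, and uniqueness in law then identifies it with $\mathsf K_x$.

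Once the mixture formula holds, take $F$ bounded and Borel. Then
\[
    \E_{\Qf^\star}[F]=\int\!\!\int F\,\dd\mathsf K_x\,\pi(x)\,\dd x
    =\int\!\!\int F(\zeta)\frac{\pi(\zeta(0))}{\nu(\zeta(0))}\,\mathsf K_x(\dd\zeta)\,\nu(x)\,\dd x ,
\]
using that $\zeta(0)=x$ holds $\mathsf K_x$-a.s. The right-hand side equals $\E_{\Pf}\!\left[F\,\pi(X_0)/\nu(X_0)\right]$, which is \eqref{eq:optimal_forward_density}. Assumption~\ref{assumption:positive_target_reduction} makes the density bounded and bounded away from zero, so $\Qf^\star$ and $\Pf$ are equivalent.

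The reversed statement is a pure pushforward. The time-reversal map $\zeta\mapsto\zeta(1-\cdot)$ is a homeomorphism of path space, so Radon--Nikodym derivatives transport through it, and $X_0=Y_1$.

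For the importance weight, fix a proposal $\Qf$ equivalent to $\Pf$ and use the chain rule:
\[
    W_{\Qf}=\frac{r(X_0)}{\nu(X_0)}\frac{\dd\Pf}{\dd\Qf}
    =Z\frac{\pi(X_0)}{\nu(X_0)}\frac{\dd\Pf}{\dd\Qf}
    =Z\frac{\dd\Qf^\star}{\dd\Pf}\frac{\dd\Pf}{\dd\Qf}
    =Z\frac{\dd\Qf^\star}{\dd\Qf}.
\]
This gives $\E_{\Qf}[W_{\Qf}]=Z$. Hence
\[
    \Var_{\Qf}(W_{\Qf})=Z^2\,\E_{\Qf}\!\left[\left(\frac{\dd\Qf^\star}{\dd\Qf}-1\right)^2\right]=Z^2\chi^2(\Qf^\star\Vert\Qf),
\]
with both sides allowed to equal $+\infty$.

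For minimality and uniqueness: the variance is zero if and only if $\dd\Qf^\star/\dd\Qf=1$ holds $\Qf$-a.s., which is equivalent to $\Qf=\Qf^\star$. In that case the computation above gives $W_{\Qf^\star}=Z$ almost surely.
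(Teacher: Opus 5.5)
Your proposal is correct and follows essentially the same route as the paper. Both arguments disintegrate the two forward laws over $X_0$ using a common conditional path law coming from the deterministic-start solution map, read off the density $\pi(X_0)/\nu(X_0)$, and transport it through time reversal. The weight identity and the $Z^2\chi^2(\Qf^\star\Vert\Qf)$ variance then follow from the Radon--Nikodym chain rule. Your explicit appeal to Yamada--Watanabe, or to the regular-conditional-law alternative, fills in the measurability of $x\mapsto\mathsf K_x$, which the paper asserts in one line.
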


\begin{proof}
The laws $\Pf$ and $\Qf^\star$ have the same reflected dynamics and differ
only in their initial distributions, which are $\nu$ and $\pi$, respectively.
Conditional on $X_0=x$, their trajectory laws therefore coincide. Formally,
the deterministic-start solution map supplies a common measurable version of
these conditional path laws. To turn this observation into a path-law
identity, let $F:\cC\to\R$ be bounded and Borel measurable.
Conditioning on $X_0$ gives
\[
\begin{aligned}
    \EEs{\Qf^\star}{F(X)}
    &=\int_{\cD}
      \EEs{\Qf^\star}{F(X)\mid X_0=x}\,\pi(x)\,\dd x\\
    &=\int_{\cD}
      \EEs{\Pf}{F(X)\mid X_0=x}
      \frac{\pi(x)}{\nu(x)}\,\nu(x)\,\dd x\\
    &=\EEs{\Pf}{
      F(X)\frac{\pi(X_0)}{\nu(X_0)}}.
\end{aligned}
\]
Since this holds for every bounded Borel $F$, it proves
\eqref{eq:optimal_forward_density}. Assumption~
\ref{assumption:positive_target_reduction} directly makes $\pi/\nu$ strictly
positive and bounded above and away from zero on $\overline{\cD}$, so $\Pf$
and $\Qf^\star$ are equivalent.

Under path reversal, $Y_{\rt}=X_{1-\rt}$ and therefore $X_0=Y_1$. Applying the
preceding expectation identity to the reversed-path functional gives, for
every bounded Borel $F$,
\[
    \EEs{\Qb^\star}{F(Y)}
    =\EEs{\Pb}{
      F(Y)\frac{\pi(Y_1)}{\nu(Y_1)}}.
\]
Thus the terminal factor
changes the entire reverse path law from $\Pb$ to $\Qb^\star$, whose initial
and terminal marginals are $p_1$ and $\pi$.

Now let $\Qf$ be any path law equivalent to $\Pf$. Using
$r=Z\pi$, \eqref{eq:path_importance_weight},
\eqref{eq:optimal_forward_density}, and the Radon--Nikodym chain rule gives
\[
\begin{aligned}
    W_{\Qf}(X)
    &=\frac{r(X_0)}{\nu(X_0)}
      \frac{\dd\Pf}{\dd\Qf}(X)=Z\frac{\dd\Qf^\star}{\dd\Pf}(X)
      \frac{\dd\Pf}{\dd\Qf}(X)
      =Z\frac{\dd\Qf^\star}{\dd\Qf}(X).
\end{aligned}
\]
In particular, $\mathbb E_{\Qf}[W_{\Qf}]=Z$. Allowing infinite variance,
\[
\begin{aligned}
    \Var_{\Qf}(W_{\Qf})
    &=\mathbb E_{\Qf}\left[(W_{\Qf}-Z)^2\right]=Z^2\mathbb E_{\Qf}\left[
      \left(\frac{\dd\Qf^\star}{\dd\Qf}-1\right)^2\right]
      =Z^2\chi^2(\Qf^\star\Vert\Qf).
\end{aligned}
\]
The right-hand side is
nonnegative and is zero exactly when
$\dd\Qf^\star/\dd\Qf=1$ almost surely, that is, exactly when
$\Qf=\Qf^\star$. For this proposal,
$W_{\Qf^\star}=Z$ almost surely.
\end{proof}

\subsection{Continuous training KL}
\label{theorem:kl_reference_proof}

\begin{theorem}
\label{theorem:kl_reference}
Suppose Assumptions~\ref{assumption:bounded_convex_domain},
\ref{assumption:theoretical_domain},
\ref{assumption:positive_target_reduction}, \ref{assumption:control_continuity}, and
\ref{assumption:control_spatial_lipschitz} hold. Then the reflected SDEs
\eqref{eq:reverse_reference_sde} and
\eqref{eq:learnable_reverse_rve_sde} have unique strong solutions and are
unique in law. Their unique path laws are $\Pb$ and $\Qb_\theta$,
respectively, and the training identity \eqref{eq:training_loss} holds.
\end{theorem}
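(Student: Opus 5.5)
Well-posedness comes first, then the density formula \eqref{eq:learned_reference_density_informal} via Theorem~\ref{theorem:reflected_girsanov}, and finally the KL identity by the chain rule with Proposition~\ref{proposition:optimal_proposal}.

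\textbf{Well-posedness.} For \eqref{eq:reverse_reference_sde}, Lemma~\ref{lemma:rve_stationary_reversal} already gives strong existence, pathwise uniqueness and uniqueness in law, and identifies the unique law with $\Pb$. For \eqref{eq:learnable_reverse_rve_sde}, I apply Lemma~\ref{lemma:forward_reflected_well_posedness} with $T=1$ and initial law $\nu$, using the coefficients
\[
    b(\rt,x):=g^2(1-\rt)s_\theta(x,1-\rt),
    \qquad
    \sigma(\rt,x):=g(1-\rt)I_d .
\]
Assumption~\ref{assumption:control_continuity} and the continuity of $g$ from \eqref{eq:rve_noise_schedule} make $b$ bounded and jointly continuous on the compact cylinder. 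Assumption~\ref{assumption:control_spatial_lipschitz} makes $b$ uniformly spatially Lipschitz, since $g$ is bounded. So $\Qb_\theta$ is the unique law.

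\textbf{Density formula.} I invoke Theorem~\ref{theorem:reflected_girsanov} in reversed time with $\alpha\equiv0$ (law $\P=\Pb$), $\beta=g^2(1-\cdot)s_\theta(\cdot,1-\cdot)$ (law $\Q=\Qb_\theta$), diffusion coefficient $g(1-\cdot)$ and common initial law $\nu$. The theorem's hypotheses hold: $g(1-\cdot)$ is continuous with $\inf g>0$, and weak existence and uniqueness in law come from the first step. Then
\[
    u=-g(1-\rt)\,s_\theta(\cdot,1-\rt),
\]
and \eqref{eq:reflected_girsanov_density} on a realization of the learned SDE gives exactly \eqref{eq:learned_reference_density_informal}. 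In particular $\Qb_\theta\ll\Pb\ll\Qb_\theta$.

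\textbf{KL identity.} By Proposition~\ref{proposition:optimal_proposal}, $\dd\Qb^\star/\dd\Pb=\pi(Y_1)/\nu(Y_1)$, which is bounded above and away from zero by Assumption~\ref{assumption:positive_target_reduction}. Hence all three laws are equivalent and the chain rule gives
\[
    \log\frac{\dd\Qb_\theta}{\dd\Qb^\star}
    =\log\frac{\dd\Qb_\theta}{\dd\Pb}+\log\frac{\nu(Y_1)}{r(Y_1)}+\log Z .
\]
I take $\E_{\Qb_\theta}$ of both sides. The $\log(\nu/r)$ term is bounded. The Lebesgue integral $\tfrac12\int_0^1 g^2\lVert s_\theta\rVert_2^2\,\dd t$ is bounded. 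The stochastic integral $\int_0^1 g(1-t)s_\theta^\top\dd\overline W_t$ has a bounded integrand, so it is a true $L^2$ martingale with zero mean under the construction probability of a weak realization of $\Qb_\theta$. This yields $\KL{\Qb_\theta}{\Qb^\star}=\cL(\theta)+\log Z$.

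\textbf{Main obstacle.} The delicate point is that last expectation: the density must be evaluated on the coordinate path space, while the zero-mean property holds on a construction space. Lemma~\ref{lemma:regulator_brownian_coordinate_recovery} supplies a Borel version of the stochastic integral as a functional of $Y$, with $\overline W$ recovered from the path. So the expectation under $\Qb_\theta$ equals the construction-space expectation, which is zero. I would also check that the time-reversed indexing fits the theorem's hypotheses. This is immediate, because the reversed SDEs are themselves forward SDEs in $\rt$ with continuous coefficients.
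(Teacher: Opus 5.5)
Your proposal is correct and follows the paper's proof essentially step for step. The steps are: well-posedness from Lemma~\ref{lemma:rve_stationary_reversal} and Lemma~\ref{lemma:forward_reflected_well_posedness} with $b=g^2(1-\rt)s_\theta$, $\sigma=g(1-\rt)I_d$; the density from Theorem~\ref{theorem:reflected_girsanov} with $\alpha=0$ and $u=-g(1-\rt)s_\theta$; then the chain rule through $\Pb$ using Proposition~\ref{proposition:optimal_proposal}, with the bounded-integrand stochastic integral having mean zero. Your ``main obstacle'' is resolved in the paper more simply. Since Theorem~\ref{theorem:reflected_girsanov} gives the density formula almost surely on any weak realization, the $\Qb_\theta$-expectation of the Borel log-density is a construction-space expectation by pushforward, so no separate Borel version of the stochastic integral is needed, though invoking one is also fine.
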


\begin{proof}
Under Assumptions~\ref{assumption:bounded_convex_domain} and
\ref{assumption:theoretical_domain},
Lemma~\ref{lemma:rve_stationary_reversal} gives a pathwise-unique strong
solution of \eqref{eq:reverse_reference_sde} and identifies its unique
solution law as $\Pb$.

For the learned equation, take
\[
    b(\rt,y):=g(1-\rt)^2s_\theta(y,1-\rt),
    \qquad
    \sigma(\rt,y):=g(1-\rt)I_d.
\]
Assumption~\ref{assumption:control_continuity}, compactness of
$[0,1]\times\overline{\cD}$, and \eqref{eq:rve_noise_schedule} make $b$
bounded and jointly continuous. Assumption~
\ref{assumption:control_spatial_lipschitz} makes it uniformly Lipschitz in
space. The same schedule makes $\sigma$ bounded and continuous, with
spatial Lipschitz constant zero. Together with the domain hypotheses in
Assumptions~\ref{assumption:bounded_convex_domain} and
\ref{assumption:theoretical_domain}, these allow
Lemma~\ref{lemma:forward_reflected_well_posedness} to give a pathwise-unique
strong solution and uniqueness in law for $T=1$ and initial law $\nu$.
This defines
$\Qb_\theta$ and,
together with Lemma~\ref{lemma:rve_stationary_reversal}, verifies the
existence-and-uniqueness-in-law hypotheses of
Theorem~\ref{theorem:reflected_girsanov}.

The continuity and positive lower bound of the schedule in
\eqref{eq:rve_noise_schedule}, together with the coefficient continuity just
checked, verify the remaining hypotheses of
Theorem~\ref{theorem:reflected_girsanov}. Both $\Pb$ and $\Qb_\theta$ start
from $\nu$, so the theorem applies to this pair. Specifically, take reference
drift $\alpha=0$, learned drift
$\beta(y,\rt)=g(1-\rt)^2s_\theta(y,1-\rt)$, and diffusion coefficient
$g(1-\rt)$. Then $u=-g(1-\rt)s_\theta$, so under $\Qb_\theta$,
\begin{equation}
    \log\frac{\dd\Qb_\theta}{\dd\Pb}(Y)
    =\int_0^1g(1-\rt)s_\theta(Y_{\rt},1-\rt)^\top
      \dd\overline W_{\rt}
    +\frac12\int_0^1g(1-\rt)^2
      \lVert s_\theta(Y_{\rt},1-\rt)\rVert_2^2\,\dd\rt.
    \label{eq:learned_to_reference_girsanov_density}
\end{equation}
The theorem is not applied directly
to $\Qb_\theta$ and $\Qb^\star$, whose initial distributions are $\nu$ and
$p_1$.
Here $\overline W$ is Brownian under the construction probability inducing
$\Qb_\theta$. Its integrand is predictable because $Y$ is adapted and
continuous, and Assumption~\ref{assumption:control_continuity}, compactness,
and boundedness of $g$ make it bounded. The stochastic integral is therefore
a square-integrable martingale and has mean zero.
Proposition~\ref{proposition:optimal_proposal} gives the terminal tilt. Using $\Pb$ as the intermediate law, the
Radon--Nikodym chain rule gives
\[
    \frac{\dd\Qb_\theta}{\dd\Qb^\star}(Y)
    =\frac{\dd\Qb_\theta}{\dd\Pb}(Y)
      \frac{\nu(Y_1)}{\pi(Y_1)}.
\]
Taking logarithms and expectation under $\Qb_\theta$ yields
\[
    \KL{\Qb_\theta}{\Qb^\star}
    =\EEs{\Qb_\theta}{
      \log\frac{\dd\Qb_\theta}{\dd\Pb}(Y)
      +\log\frac{\nu(Y_1)}{\pi(Y_1)}}.
\]
The strict positivity in
Assumption~\ref{assumption:positive_target_reduction} gives
$\log(\nu/\pi)=\log(\nu/r)+\log Z$, while its two-sided bounds make this
terminal logarithm bounded. This proves \eqref{eq:training_loss}.
\end{proof}

\subsection{Normalizing-constant accuracy for the continuous reflected sampler}
\label{sec:continuous_normalizing_constant_accuracy}

Theorem~\ref{theorem:continuous_normalizing_constant_accuracy} gives a
total-variation accuracy bound for the normalizing-constant estimator
\eqref{eq:importance_sampling_estimator}. It is a simplified reflected
version of \citet[Theorem~5]{guo2026complexity}, formulated for continuous
ideal and learned path laws on $[0,1]$.

In that argument, the formal derivation of the mixed path-density formula
introduces stationary Wiener measure as an intermediate reference.
This is the Brownian path measure with Lebesgue marginals.
The use of a Wiener reference also appears in the stochastic-control
formulations of \citet[Section~III]{chen2016relation} and
\citet[Section~4.4]{chen2021stochastic}.
A general forward--backward factorization through a common reference is
given by \citet[Proposition~2.2 and Appendix~E.1]{vargas2024transport}.

We instead use $\Pb$, the path law of a deterministic time change of
stationary normally reflected Brownian motion initialized from the uniform
distribution $\nu$. This is a probability measure whose marginals remain
uniform throughout $[0,1]$, as shown in
Lemma~\ref{lemma:rve_stationary_reversal}.
Proposition~\ref{proposition:optimal_proposal} and
Theorem~\ref{theorem:reflected_girsanov} provide the two density factors
relative to $\Pb$ needed for the comparison.

\begin{theorem}
    \label{theorem:continuous_normalizing_constant_accuracy}
    Suppose Assumptions~\ref{assumption:bounded_convex_domain},
    \ref{assumption:theoretical_domain},
    \ref{assumption:positive_target_reduction},
    \ref{assumption:control_continuity}, and
    \ref{assumption:control_spatial_lipschitz} hold.
    Fix $\theta$ and an integer $N\geq1$, and let
    $Y^{(1)},\ldots,Y^{(N)}$ be paths, each with law $\Qb_\theta$.
    Let $\widehat Z$ be the estimator
    \eqref{eq:importance_sampling_estimator}. Expectations and probabilities
    below are over these sampled paths. Then $\mathbb E[\widehat Z]=Z$ and,
    for every $\varepsilon>0$,
    \begin{equation}
        \Pr\!\left(\left|\frac{\widehat Z}{Z}-1\right|
            \geq\varepsilon\right)
        \leq\frac{2\|\Qb^\star-\Qb_\theta\|_{\mathrm{TV}}}{\varepsilon}.
        \label{eq:continuous_normalizing_constant_tail}
    \end{equation}

    In particular, for every $\varepsilon>0$ and $\delta\in(0,1)$,
    \begin{equation}
        \|\Qb^\star-\Qb_\theta\|_{\mathrm{TV}}
        \leq\frac{\varepsilon\delta}{2}
        \quad\Longrightarrow\quad
        \Pr\!\left(\left|\frac{\widehat Z}{Z}-1\right|
            \geq\varepsilon\right)\leq\delta.
        \label{eq:continuous_normalizing_constant_confidence}
    \end{equation}
\end{theorem}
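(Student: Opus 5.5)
The plan is to rewrite each importance weight in \eqref{eq:importance_sampling_estimator} as $Z$ times a Radon--Nikodym derivative between the ideal and learned reverse laws. Unbiasedness then follows from linearity, and the tail bound from Markov's inequality applied to an $L^1$ distance, which is exactly twice the total variation.

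\emph{Step 1: identify the weight.} Theorem~\ref{theorem:kl_reference} shows that $\Pb$ and $\Qb_\theta$ are the unique solution laws of \eqref{eq:reverse_reference_sde} and \eqref{eq:learnable_reverse_rve_sde}. Both start from $\nu$, so Theorem~\ref{theorem:reflected_girsanov} applies to this pair. It gives $\Pb\ll\Qb_\theta\ll\Pb$. It also identifies $\dd\Pb/\dd\Qb_\theta(Y)$ as the reciprocal of \eqref{eq:learned_to_reference_girsanov_density}, evaluated on the realization driven by $\overline W$. This is the step I expect to need the most care. The computable factor in the estimator is written with the construction-space Brownian motion, whereas a Radon--Nikodym derivative must be a Borel functional of the path. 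The recovery Lemmas~\ref{lemma:normal_regulator_recovery} and~\ref{lemma:regulator_brownian_coordinate_recovery}, already packaged in Theorem~\ref{theorem:reflected_girsanov}, close this gap: the Girsanov exponent agrees almost surely with a Borel path functional. I will simply invoke that conclusion.

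Next I combine this with the terminal tilt $\dd\Qb^\star/\dd\Pb(Y)=\pi(Y_1)/\nu(Y_1)$ from Proposition~\ref{proposition:optimal_proposal}. Assumption~\ref{assumption:positive_target_reduction} makes this factor strictly positive and bounded, so $\Qb^\star$ and $\Pb$ are equivalent. Using $r=Z\pi$ and the chain rule, I obtain, $\Qb_\theta$-almost surely,
\[
    \frac{r(Y_1)}{\nu(Y_1)}\frac{\dd\Pb}{\dd\Qb_\theta}(Y)
    =Z\,\frac{\dd\Qb^\star}{\dd\Pb}(Y)\frac{\dd\Pb}{\dd\Qb_\theta}(Y)
    =Z\,w(Y),
    \qquad w:=\frac{\dd\Qb^\star}{\dd\Qb_\theta}.
\]
Hence $\widehat Z/Z=N^{-1}\sum_{i=1}^N w(Y^{(i)})$.

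\emph{Step 2: unbiasedness and the tail bound.} Each $Y^{(i)}$ has law $\Qb_\theta$, and $\E_{\Qb_\theta}[w]=1$. Linearity then gives $\E[\widehat Z]=Z$; independence of the paths is not needed. For the tail, the triangle inequality gives
\[
    \bigl|\widehat Z/Z-1\bigr|\leq N^{-1}\sum_{i=1}^N\bigl|w(Y^{(i)})-1\bigr|.
\]
Taking expectations, each term equals
\[
    \E_{\Qb_\theta}|w-1|=\int\Bigl|\frac{\dd\Qb^\star}{\dd\Qb_\theta}-1\Bigr|\,\dd\Qb_\theta=2\|\Qb^\star-\Qb_\theta\|_{\mathrm{TV}}.
\]
The last equality is the standard fact that, for $\mu\ll\lambda$, the supremum in the definition of total variation is attained on $\{\dd\mu/\dd\lambda>1\}$, and $\int(w-1)\,\dd\Qb_\theta=0$. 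Markov's inequality now yields \eqref{eq:continuous_normalizing_constant_tail}.

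\emph{Step 3: the confidence statement.} Substituting $\|\Qb^\star-\Qb_\theta\|_{\mathrm{TV}}\leq\varepsilon\delta/2$ into \eqref{eq:continuous_normalizing_constant_tail} gives \eqref{eq:continuous_normalizing_constant_confidence} immediately.
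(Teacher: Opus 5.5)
Your proposal is correct and follows the paper's proof essentially step for step. The shared route is: write each weight as $Z\,\dd\Qb^\star/\dd\Qb_\theta$ using the terminal tilt from Proposition~\ref{proposition:optimal_proposal} and the Girsanov density from Theorem~\ref{theorem:kl_reference}; get unbiasedness from linearity; bound $\mathbb E|\widehat Z/Z-1|$ by the triangle inequality and the identity $\int|\dd\Qb^\star/\dd\Qb_\theta-1|\,\dd\Qb_\theta=2\|\Qb^\star-\Qb_\theta\|_{\mathrm{TV}}$; and finish with Markov's inequality, which, as in the paper, does not use independence of the $N$ paths.
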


\begin{proof}
Assumption~\ref{assumption:positive_target_reduction} and boundedness of
$\cD$ give $0<Z<\infty$ and make $\pi/\nu$ bounded above and away from zero.
Together with Assumptions~\ref{assumption:bounded_convex_domain} and
\ref{assumption:theoretical_domain}, these are the hypotheses of
Proposition~\ref{proposition:optimal_proposal}, which gives
$\Qb^\star\ll\Pb\ll\Qb^\star$ and
\[
    \frac{\dd\Qb^\star}{\dd\Pb}(Y)
    =\frac{\pi(Y_1)}{\nu(Y_1)}.
\]
Under the same domain and target assumptions,
Assumptions~\ref{assumption:control_continuity} and
\ref{assumption:control_spatial_lipschitz} allow
Theorem~\ref{theorem:kl_reference} to give the unique laws $\Pb$ and
$\Qb_\theta$. The Girsanov comparison in its proof establishes
$\Qb_\theta\ll\Pb\ll\Qb_\theta$ and the density formula
\eqref{eq:learned_to_reference_girsanov_density}. Define
\begin{equation}
    F(Y):=\frac{\dd\Qb^\star}{\dd\Qb_\theta}(Y)
    =\frac{r(Y_1)}{Z\nu(Y_1)}
       \frac{\dd\Pb}{\dd\Qb_\theta}(Y),
    \qquad \Qb_\theta\text{-almost surely}.
    \label{eq:continuous_normalized_weight}
\end{equation}
Since $\Qb^\star$ is a
probability measure,
\[
    \mathbb E_{\Qb_\theta}[F(Y)]
    =\int \frac{\dd\Qb^\star}{\dd\Qb_\theta}\,\dd\Qb_\theta
    =\int \dd\Qb^\star=1.
\]
Linearity of expectation therefore gives
\[
    \mathbb E\!\left[\frac{\widehat Z}{Z}\right]
    =\frac1N\sum_{i=1}^N\mathbb E_{\Qb_\theta}[F(Y)]
    =1,
\]
which proves unbiasedness. Then, we have the following bound:
\begin{equation}
    \mathbb E_{\Qb_\theta}\bigg[\left|\frac{\widehat Z}{Z}-1\right|\bigg]
    =\mathbb E_{\Qb_\theta}\bigg[\left|\frac1N\sum_{i=1}^N(F(Y^{(i)})-1)\right|\bigg]
    \leq\frac1N\sum_{i=1}^N\mathbb E_{\Qb_\theta}[|F(Y^{(i)})-1|]=\mathbb E_{\Qb_\theta}[|F(Y^{(1)})-1|].
    \label{eq:continuous_average_absolute_error}
\end{equation}

Under the total-variation
convention in Section~\ref{sec:notation},
\[
    \mathbb E_{\Qb_\theta}[|F(Y)-1|]
    =\int\left|\frac{\dd\Qb^\star}{\dd\Qb_\theta}-1\right|
        \,\dd\Qb_\theta
    =2\|\Qb^\star-\Qb_\theta\|_{\mathrm{TV}}.
\]
Substituting this identity into
\eqref{eq:continuous_average_absolute_error} yields
\begin{equation}
    \mathbb E
        \!\left[\left|\frac{\widehat Z}{Z}-1\right|\right]
    \leq2\|\Qb^\star-\Qb_\theta\|_{\mathrm{TV}}.
    \label{eq:continuous_relative_mean_absolute_error}
\end{equation}
Finally, Markov's inequality gives
\[
    \Pr
        \!\left(\left|\frac{\widehat Z}{Z}-1\right|\geq\varepsilon\right)
    \leq\frac{\mathbb E
        [|\widehat Z/Z-1|]}{\varepsilon}
    \leq\frac{2\|\Qb^\star-\Qb_\theta\|_{\mathrm{TV}}}{\varepsilon}.
\]
This proves \eqref{eq:continuous_normalizing_constant_tail}. If
$\|\Qb^\star-\Qb_\theta\|_{\mathrm{TV}}\leq\varepsilon\delta/2$, then
$2\|\Qb^\star-\Qb_\theta\|_{\mathrm{TV}}/\varepsilon\leq\delta$, proving
\eqref{eq:continuous_normalizing_constant_confidence}.
\end{proof}

The following theorem expresses the ideal-to-learned KL in terms of the
initial-law mismatch and score error.

\begin{theorem}
    \label{theorem:continuous_ideal_to_learned_kl}
    Let $Z$ be the normalizing constant defined in \eqref{eq:problem}.
    Suppose $Z>0$ and Assumptions~\ref{assumption:bounded_convex_domain},
    \ref{assumption:theoretical_domain},
    \ref{assumption:control_continuity},
    \ref{assumption:control_spatial_lipschitz},
    \ref{assumption:closed_endpoint_score_continuity}, and
    \ref{assumption:closed_endpoint_score} hold. 
    Fix $\theta$. Then
    \begin{equation}
        \KL{\Qb^\star}{\Qb_\theta}
        =\KL{p_1}{\nu}
        +\frac12\int_0^1g(t)^2
        \mathbb E_{X\sim p_t}\!\left[
            \left\|s_\theta(X,t)-\nabla\log p_t(X)\right\|_2^2
        \right]\,\dd t.
        \label{eq:continuous_ideal_to_learned_kl}
    \end{equation}
\end{theorem}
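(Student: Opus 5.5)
The plan is to apply the reflected Girsanov theorem (Theorem~\ref{theorem:reflected_girsanov}) to the pair $\Qb^\star$ and $\Qb_\theta$. Their initial laws differ, so we first separate out the initial-law term by disintegrating over $Y_0$. Concretely, let $\Qb^{\star,\nu}$ be the law of \eqref{eq:full_interval_ideal_sde} started from $\nu$ instead of $p_1$. Under Assumptions~\ref{assumption:closed_endpoint_score_continuity} and \ref{assumption:closed_endpoint_score}, Lemma~\ref{lemma:forward_reflected_well_posedness} gives this equation a unique law for every initial law, as in Step~4 of Lemma~\ref{lemma:scheduled_reflected_time_reversal}. By that lemma, $\Qb^\star$ is the law with initial law $p_1$.

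\emph{Step 1: separate the initial law.} Uniqueness in law for deterministic starts supplies a measurable family of conditional path laws $\Qb^\star(\cdot\mid Y_0=y)$. Exactly as in the proof of Proposition~\ref{proposition:optimal_proposal}, this gives
\[
\frac{\dd\Qb^\star}{\dd\Qb^{\star,\nu}}(Y)=\frac{p_1(Y_0)}{\nu(Y_0)}.
\]
Since $p_1$ is strictly positive and continuous on the compact set $\overline{\cD}$, this ratio is bounded above and away from zero. The chain rule then gives
\[
\KL{\Qb^\star}{\Qb_\theta}=\KL{p_1}{\nu}+\E_{\Qb^\star}\!\left[\log\frac{\dd\Qb^{\star,\nu}}{\dd\Qb_\theta}(Y)\right].
\]

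\emph{Step 2: Girsanov for the conditional dynamics.} Both $\Qb^{\star,\nu}$ and $\Qb_\theta$ start from $\nu$. Their drifts $\alpha=g(1-\rt)^2\nabla\log p_{1-\rt}$ and $\beta=g(1-\rt)^2 s_\theta(\cdot,1-\rt)$ are continuous by Assumptions~\ref{assumption:closed_endpoint_score_continuity} and \ref{assumption:control_continuity}, and both laws are unique (for $\Qb_\theta$ by Theorem~\ref{theorem:kl_reference}). Theorem~\ref{theorem:reflected_girsanov} therefore applies with $u=g(1-\rt)(\nabla\log p_{1-\rt}-s_\theta)$. Equation \eqref{eq:reflected_girsanov_density_under_p}, with the roles of the two laws swapped, gives on any realization of the ideal SDE
\[
\log\frac{\dd\Qb^{\star,\nu}}{\dd\Qb_\theta}=\int_0^1 u^\top\dd\overline W_{\rt}+\frac12\int_0^1\lVert u\rVert_2^2\,\dd\rt .
\]
The Radon--Nikodym derivative is a path functional, so this identity holds $\Qb^{\star,\nu}$-a.s. and therefore also $\Qb^\star$-a.s. by equivalence. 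On a weak realization of $\Qb^\star$ (Lemma~\ref{lemma:scheduled_reflected_time_reversal}), we need the Borel path version of the stochastic integral from Lemma~\ref{lemma:regulator_brownian_coordinate_recovery} to coincide with the Itô integral against that realization's Brownian motion. Its expectation then vanishes, because $u$ is bounded and so the integral is a square-integrable martingale.

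\emph{Step 3: marginals.} Under $\Qb^\star$ we have $Y_{\rt}=X_{1-\rt}\sim p_{1-\rt}$. The expected running cost is therefore $\frac12\int_0^1 g(1-\rt)^2\E_{p_{1-\rt}}\lVert s_\theta-\nabla\log p_{1-\rt}\rVert_2^2\,\dd\rt$, which becomes \eqref{eq:continuous_ideal_to_learned_kl} after the substitution $t=1-\rt$ and Tonelli.

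I expect the main obstacle to be the measure-theoretic bookkeeping in Step~2. Theorem~\ref{theorem:reflected_girsanov} is stated for realizations of one of the two equations with common initial law. Transferring the Borel version of the density, which is law-dependent as noted after Lemma~\ref{lemma:regulator_brownian_coordinate_recovery}, from $\Qb^{\star,\nu}$ to $\Qb^\star$ requires that the two Borel versions agree almost surely. I would argue this by noting that $\Qb^\star$ and $\Qb^{\star,\nu}$ are equivalent and that the stochastic integral is characterized by the same left-point sum limits under both.
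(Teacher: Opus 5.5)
Your proof is correct, but it puts the initial-law tilt on the opposite side from the paper. The paper introduces $\Qb_\theta^{p_1}$, the learned SDE started from $p_1$, with $\dd\Qb_\theta^{p_1}/\dd\Qb_\theta=p_1(Y_0)/\nu(Y_0)$. It then applies Theorem~\ref{theorem:reflected_girsanov} to $\Qb^\star$ and $\Qb_\theta^{p_1}$, which share the initial law $p_1$. Because $\Qb^\star$ is itself one of the two compared laws, the theorem's density formula holds on any weak realization of $\Qb^\star$, in particular the one from Lemma~\ref{lemma:scheduled_reflected_time_reversal}. The It\^o integral against $\overline W^\star$ then has zero mean immediately, with no transfer step.

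Your intermediate law $\Qb^{\star,\nu}$ only gives the Girsanov formula on realizations of the $\nu$-started ideal SDE, which is why you need the transfer. Your sketch of that step is sound. The regulator map $\mathfrak l$ in Lemma~\ref{lemma:normal_regulator_recovery} does not depend on the law, and for this schedule $\overline W$ is recovered by an explicit pathwise formula. So the left-point sums are common Borel path functionals. A subsequence converging $\Qb^{\star,\nu}$-a.s. also converges $\Qb^\star$-a.s. by absolute continuity, and its limit must agree with the in-probability limit on the $\Qb^\star$ realization.

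You can bypass the transfer entirely by evaluating $\E_{\Qb^\star}\bigl[\log\tfrac{\dd\Qb^{\star,\nu}}{\dd\Qb_\theta}\bigr]=\E_{\Qb^{\star,\nu}}\bigl[\tfrac{p_1(Y_0)}{\nu(Y_0)}\log\tfrac{\dd\Qb^{\star,\nu}}{\dd\Qb_\theta}\bigr]$ on a realization of $\Qb^{\star,\nu}$. The weight is bounded and $\cF_0$-measurable, so the stochastic integral still has zero mean. The energy term is an ordinary path functional, so its reweighted expectation is just its $\Qb^\star$-expectation.

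Both routes use exactly the same hypotheses. Yours needs the score Lipschitz bound to make $\Qb^{\star,\nu}$ well posed; the paper needs the control Lipschitz bound to make $\Qb_\theta^{p_1}$ well posed, and both bounds are needed anyway. The paper's arrangement is therefore simply the more economical one.

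One small fix: cite uniqueness of $\Qb_\theta$ from Lemma~\ref{lemma:forward_reflected_well_posedness} directly rather than from Theorem~\ref{theorem:kl_reference}. That theorem also assumes Assumption~\ref{assumption:positive_target_reduction}, which is not among the hypotheses here.
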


\begin{proof}
\emph{Step 1: separate the initial-law change from the drift change.}
The standing integrability of $r$ in \eqref{eq:problem}, together with $Z>0$,
makes $\pi=r/Z$ a probability density. Assumptions~
\ref{assumption:bounded_convex_domain} and \ref{assumption:theoretical_domain}
together with Assumptions~\ref{assumption:closed_endpoint_score_continuity}
and \ref{assumption:closed_endpoint_score} are precisely the hypotheses of
the full-interval uniqueness conclusion in
Lemma~\ref{lemma:scheduled_reflected_time_reversal}. Hence $\Qb^\star$ is
the unique law of \eqref{eq:full_interval_ideal_sde}.
Theorem~\ref{theorem:reflected_girsanov} requires a common initial law,
so we cannot apply it directly to $\Qb^\star$ and $\Qb_\theta$:
the former starts from $p_1$ and the latter from $\nu$.

To handle this difference, let $\Qb_\theta^{p_1}$ be the law of the
learned SDE \eqref{eq:learnable_reverse_rve_sde} started from $p_1$.
Assumption~\ref{assumption:control_continuity}, compactness of
$[0,1]\times\overline{\cD}$, and \eqref{eq:rve_noise_schedule} make its
drift bounded and jointly continuous. Assumption~
\ref{assumption:control_spatial_lipschitz} gives uniform spatial
Lipschitzness. Its diffusion coefficient $g(1-\rt)I_d$ is bounded,
continuous, and independent of space by \eqref{eq:rve_noise_schedule}.
Assumptions~\ref{assumption:bounded_convex_domain} and
\ref{assumption:theoretical_domain} supply the domain hypotheses of
Lemma~\ref{lemma:forward_reflected_well_posedness}. Applying it with $T=1$
and initial laws $\nu$ and $p_1$ gives the unique strong solutions and
unique laws $\Qb_\theta$ and $\Qb_\theta^{p_1}$, respectively.
These laws have the same conditional trajectory law given their
initial point: for a fixed initial value they solve the same uniquely
solvable SDE.

For every bounded Borel path functional $H$, conditioning on the initial
point and using this common conditional law gives
\[
\begin{aligned}
    \mathbb E_{\Qb_\theta^{p_1}}[H(Y)]
    &=\int_{\cD}\mathbb E_{\Qb_\theta}[H(Y)\mid Y_0=y]\,p_1(y)\,\dd y\\
    &=\int_{\cD}\mathbb E_{\Qb_\theta}[H(Y)\mid Y_0=y]
        \frac{p_1(y)}{\nu(y)}\,\nu(y)\,\dd y\\
    &=\mathbb E_{\Qb_\theta}
        \!\left[H(Y)\frac{p_1(Y_0)}{\nu(Y_0)}\right].
\end{aligned}
\]
This is the defining identity for
\begin{equation}
    \frac{\dd\Qb_\theta^{p_1}}{\dd\Qb_\theta}(Y)
    =\frac{p_1(Y_0)}{\nu(Y_0)}.
    \label{eq:continuous_learned_initial_tilt}
\end{equation}
For the probability density $\pi$, the positive-time density properties in
Section~\ref{sec:theoretical_assumptions} give a continuous, strictly positive
$p_1$ on the compact closure. Thus $p_1/\nu$ and its reciprocal are bounded,
and its logarithm is integrable.

It remains to compare $\Qb_\theta^{p_1}$ with $\Qb^\star$.
These laws now have the same initial law $p_1$, diffusion coefficient
$g(1-\rt)$, and normal reflection. Their drifts are
\[
    \alpha(y,\rt)=g(1-\rt)^2s_\theta(y,1-\rt),
    \qquad
    \beta(y,\rt)=g(1-\rt)^2\nabla\log p_{1-\rt}(y).
\]
Assumptions~\ref{assumption:control_continuity} and
\ref{assumption:closed_endpoint_score_continuity}, compactness, and the
schedule \eqref{eq:rve_noise_schedule} make both drifts continuous and bounded.
The domain conditions
and the common nonvanishing diffusion coefficient therefore allow
Theorem~\ref{theorem:reflected_girsanov} to compare these two laws.

Define the drift difference in diffusion coordinates by
\[
    e(y,\rt):=\frac{\beta(y,\rt)-\alpha(y,\rt)}{g(1-\rt)}
    =g(1-\rt)\bigl(\nabla\log p_{1-\rt}(y)-s_\theta(y,1-\rt)\bigr).
\]
In Theorem~\ref{theorem:reflected_girsanov}, the field
$u=(\alpha-\beta)/g(1-\rt)$ is therefore $-e$.
Its formula, written under a weak realization of $\Qb^\star$, gives
\begin{equation}
    \log\frac{\dd\Qb^\star}{\dd\Qb_\theta^{p_1}}(Y)
    =\int_0^1e(Y_{\rt},\rt)^\top\,\dd\overline W^\star_{\rt}
     +\frac12\int_0^1\|e(Y_{\rt},\rt)\|_2^2\,\dd\rt,
    \label{eq:continuous_same_initial_drift_density}
\end{equation}
where $\overline W^\star$ is the Brownian motion in that ideal-law
realization. In particular, the positive half-energy term is the one in
the density formula under the numerator law $\Qb^\star$.

We can now combine the two changes by the Radon--Nikodym chain rule:
\[
    \frac{\dd\Qb^\star}{\dd\Qb_\theta}
    =\frac{\dd\Qb^\star}{\dd\Qb_\theta^{p_1}}
      \frac{\dd\Qb_\theta^{p_1}}{\dd\Qb_\theta}.
\]
Taking logarithms and substituting
\eqref{eq:continuous_learned_initial_tilt} and
\eqref{eq:continuous_same_initial_drift_density} yields
\begin{equation}
    \log\frac{\dd\Qb^\star}{\dd\Qb_\theta}(Y)
    =\log\frac{p_1(Y_0)}{\nu(Y_0)}
     +\int_0^1e(Y_{\rt},\rt)^\top\,\dd\overline W^\star_{\rt}
     +\frac12\int_0^1\|e(Y_{\rt},\rt)\|_2^2\,\dd\rt.
    \label{eq:continuous_ideal_to_learned_density}
\end{equation}

\emph{Step 2: take the ideal-law expectation.}
KL is the expectation of this log density under its numerator law.
The initial term is
\[
    \mathbb E_{\Qb^\star}\!\left[\log\frac{p_1(Y_0)}{\nu(Y_0)}\right]
    =\int_{\cD}p_1(y)\log\frac{p_1(y)}{\nu(y)}\,\dd y
    =\KL{p_1}{\nu},
\]
because $Y_0\sim p_1$ under $\Qb^\star$.
The field $e$ is continuous on the compact domain
$\overline{\cD}\times[0,1]$, so there is a finite constant $M_e$ with
$\|e(y,\rt)\|_2\leq M_e$. Hence
$\mathbb E_{\Qb^\star}\int_0^1\|e(Y_{\rt},\rt)\|_2^2\,\dd\rt
\leq M_e^2<\infty$.
The integrand is also predictable. By
\citet[Eq.~(5.13)]{le2016brownian} and the subsequent moment identities,
its It\^o integral is therefore a square-integrable martingale started at
zero. In particular,
\[
    \mathbb E_{\Qb^\star}
    \!\left[\int_0^1e(Y_{\rt},\rt)^\top\,\dd\overline W^\star_{\rt}\right]=0,
    \qquad
    \mathbb E_{\Qb^\star}\int_0^1\|e(Y_{\rt},\rt)\|_2^2\,\dd\rt
    \leq M_e^2<\infty.
\]
Taking $\Qb^\star$-expectations in \eqref{eq:continuous_ideal_to_learned_density}
therefore gives
\[
    \KL{\Qb^\star}{\Qb_\theta}
    =\KL{p_1}{\nu}
     +\frac12\mathbb E_{\Qb^\star}
        \int_0^1\|e(Y_{\rt},\rt)\|_2^2\,\dd\rt.
\]
Finally, $Y_{\rt}\sim p_{1-\rt}$ under the ideal law. Tonelli's theorem
allows us to move the expectation inside the nonnegative energy integral,
and the substitution $t=1-\rt$ returns to forward time:
\[
\begin{aligned}
    \mathbb E_{\Qb^\star}\int_0^1\|e(Y_{\rt},\rt)\|_2^2\,\dd\rt
    &=\int_0^1g(1-\rt)^2
        \mathbb E_{\Qb^\star}\!\left[
        \left\|\nabla\log p_{1-\rt}(Y_{\rt})
                    -s_\theta(Y_{\rt},1-\rt)\right\|_2^2\right]\,\dd\rt\\
    &=\int_0^1g(t)^2\mathbb E_{X\sim p_t}\!\left[
        \left\|\nabla\log p_t(X)-s_\theta(X,t)\right\|_2^2\right]\,\dd t.
\end{aligned}
\]
Substituting this identity proves \eqref{eq:continuous_ideal_to_learned_kl}.
\end{proof}

\begin{remark}
    \label{remark:continuous_kl_directions}
Training uses $\KL{\Qb_\theta}{\Qb^\star}$ because its expectation is
under the learned law, from which we can generate trajectories.
The other direction, $\KL{\Qb^\star}{\Qb_\theta}$, is useful for describing
approximation error. Equation~\eqref{eq:continuous_ideal_to_learned_kl}
separates the mismatch between $p_1$ and $\nu$ from the error in the learned
score. That score error is averaged under the ideal marginals $p_t$,
which are fixed as $\theta$ changes. This gives an interpretable
sufficient condition for accurate estimation, but not a directly usable
training loss: the true scores and ideal-law expectations are generally
unavailable.

The two KLs compare the same continuous path laws on $[0,1]$, but neither
direction generally bounds the other. Pinsker's inequality gives
\begin{equation}
    2\|\Qb^\star-\Qb_\theta\|_{\mathrm{TV}}
    \leq\sqrt{2\min\{\KL{\Qb^\star}{\Qb_\theta},\KL{\Qb_\theta}{\Qb^\star}\}}.
    \label{eq:continuous_pinsker_comparison}
\end{equation}
Combining \eqref{eq:continuous_pinsker_comparison} with
\eqref{eq:continuous_normalizing_constant_tail} yields the KL accuracy bounds
in either direction. In particular, for $\varepsilon>0$ and $\delta\in(0,1)$,
either divergence being at most $\varepsilon^2\delta^2/2$ suffices for the
confidence guarantee \eqref{eq:continuous_normalizing_constant_confidence}.
\end{remark}

\begin{remark}
    \label{remark:dependence_on_n}
It can be noted that there is no dependence on $N$ in bound \eqref{eq:continuous_normalizing_constant_tail}. Dependence on $N$ can be recovered by using the variance.
For independent sampled paths, when $\chi^2(\Qb^\star\Vert\Qb_\theta)<\infty$, the weight identity
\eqref{eq:continuous_normalized_weight} and independence give
$\operatorname{Var}_{\Qb_\theta^{\otimes N}}(\widehat Z/Z)
=\chi^2(\Qb^\star\Vert\Qb_\theta)/N$.
Chebyshev's inequality then yields, for every $\varepsilon>0$,
\begin{equation}
    \PPs{\Qb_\theta^{\otimes N}}
        {\left|\frac{\widehat Z}{Z}-1\right|\geq\varepsilon}
    \leq\frac{\chi^2(\Qb^\star\Vert\Qb_\theta)}{N\varepsilon^2}.
    \label{eq:continuous_chebyshev_bound}
\end{equation}
This gives an explicit dependence on $N$, but requires control of the
$\chi^2$ divergence, which small KL alone does not provide.
\end{remark}

\subsection{Estimating gradients of the objective via REINFORCE}
\label{theorem:reinforce_gradient_proof}
Throughout this subsection, $\nabla_\theta f(y;\theta)$ denotes the
gradient with respect to the explicit parameter dependence of $f$, with
the state argument $y$ held fixed. Dependence of sampled states on
$\theta$ is accounted for by the score-function terms. We use
detached trajectories when constructing an autodifferentiable surrogate loss.
\begin{lemma} 
\label{lemma:reflected_score_identity}
Let $X^{\prime} \mid X=x \sim p(\cdot\mid x; \theta)$, where \(p(x'\mid x; \theta)\) is differentiable with respect to \(\theta\), and let $Y = T(X^{\prime})$, where \(T\) is a deterministic reflection map. Under regularity conditions, we have
\begin{equation} \label{eq:reflected_fisher_identity} 
\nabla_\theta \log q(y\mid x; \theta) = \E\left[ \left. \nabla_\theta \log p(X'\mid x; \theta) \,\right|\, Y=y,\;X=x \right]. \end{equation} Consequently, for any integrable function \(h:\mathcal D\to\mathbb R\), \begin{equation} \label{eq:reflected_score_expectation} \E\left[ h(Y) \nabla_\theta \log q(Y\mid X; \theta) \right] = \E\left[ h(Y) \nabla_\theta \log p(X'\mid X; \theta) \right]. 
\end{equation} 
\end{lemma}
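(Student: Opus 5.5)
The plan is to write the reflected transition density as a pushforward integral and differentiate it under the integral sign. The underlying fact is the Fisher identity for a latent-variable model: the score of a marginal equals the conditional expectation of the score of the complete-data likelihood.

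\emph{Step 1: express the reflected density.} Fix $x$ and let $q(\cdot\mid x;\theta)$ denote the density of $Y=T(X')$ with respect to Lebesgue measure on $\cD$. We assume it exists and is positive on $\cD$. For the cube reflection, $T$ is piecewise isometric with countably many preimages, so $q(y\mid x;\theta)=\sum_{x'\in T^{-1}(y)}p(x'\mid x;\theta)$. In general we only use the weak form
\[
\int_{\cD} \phi(y)\,q(y\mid x;\theta)\,\dd y=\int_{\R^d}\phi(T(x'))\,p(x'\mid x;\theta)\,\dd x'
\]
for bounded Borel $\phi$. The regularity conditions we impose are: differentiability of $p$ in $\theta$, and a dominating integrable envelope for $\nabla_\theta p$ that is locally uniform in $\theta$. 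For the Gaussian \eqref{eq:latent_transition_density} with $s_\theta$ bounded and $C^1$ in $\theta$ on compact parameter sets, such an envelope holds.

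\emph{Step 2: differentiate.} Under the envelope, dominated convergence lets us pass $\nabla_\theta$ inside both sides of the weak form. This gives
\[
\int\phi(y)\,\nabla_\theta q(y\mid x;\theta)\,\dd y=\int\phi(T(x'))\,\nabla_\theta\log p(x'\mid x;\theta)\,p(x'\mid x;\theta)\,\dd x'=\E\!\left[\phi(Y)\,\nabla_\theta\log p(X'\mid x;\theta)\,\middle|\,X=x\right].
\]
By the defining property of conditional expectation given $\sigma(Y)$, the right-hand side equals
\[
\int\phi(y)\,m(y)\,q(y\mid x;\theta)\,\dd y, \qquad m(y):=\E\!\left[\nabla_\theta\log p(X'\mid x;\theta)\mid Y=y,X=x\right].
\]
Since $\phi$ is an arbitrary bounded Borel function, $\nabla_\theta q=m\,q$ almost everywhere. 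Dividing by $q>0$ gives \eqref{eq:reflected_fisher_identity}. In the countable-preimage case this can be checked directly: $m(y)$ is the average of $\nabla_\theta\log p$ over the preimages, weighted by the posterior weights $p(x'\mid x;\theta)/q(y\mid x;\theta)$.

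\emph{Step 3: derive the expectation identity.} Condition on $X$ and apply the tower property over $\sigma(X,Y)$:
\[
\E\!\left[h(Y)\,\nabla_\theta\log q(Y\mid X;\theta)\right]=\E\!\left[h(Y)\,\E\!\left[\nabla_\theta\log p(X'\mid X;\theta)\mid Y,X\right]\right]=\E\!\left[h(Y)\,\nabla_\theta\log p(X'\mid X;\theta)\right].
\]
The second equality pulls the $\sigma(X,Y)$-measurable factor $h(Y)$ inside the conditional expectation. This step needs $h(Y)\,\nabla_\theta\log p(X'\mid X;\theta)$ to be integrable. For the Gaussian transition, $\nabla_\theta\log p$ is affine in the noise $\xi$ with bounded coefficients, so it has all moments. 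The identity therefore holds for bounded $h$, or more generally for $h(Y)\in L^2$, and we would state the regularity conditions to include this.

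\emph{Main obstacle.} The substantive step is justifying the interchange of $\nabla_\theta$ with the integral in Step 2. This requires a uniform envelope for $\nabla_\theta p$. We would obtain it from Gaussian tail bounds: the mean $x+g_k^2s_\theta$ varies in a bounded set, so $\sup_{\theta'}\lVert\nabla_\theta p\rVert$ is dominated by a polynomial times a Gaussian with slightly inflated variance. A secondary point is ensuring $q>0$ on $\cD$, so that division is legitimate. This holds because every $y\in\cD$ has itself as a preimage, where the Gaussian density is strictly positive.
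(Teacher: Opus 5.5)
Your proposal is correct and follows the same route as the paper. The paper writes $q$ as the pushforward of $p$ under $T$, differentiates under the integral, identifies the result as a conditional expectation given $(Y,X)$, and finishes with the tower property; the only difference is that the paper uses a formal Dirac delta and Bayes' formula where you use test functions and the defining property of conditional expectation. Your version also makes explicit the domination, the positivity of $q$, and the integrability of $h(Y)\nabla_\theta\log p$ that the paper leaves under ``regularity conditions''. One small addition is needed: moving $\nabla_\theta$ inside the left-hand integral requires an envelope for $\nabla_\theta q$, not just $\nabla_\theta p$, and in the cube case this follows from your envelope by summing it over preimages.
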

\begin{proof}
Since $Y=T(X')$ is a deterministic map of $X'$, its conditional density can be represented as
\begin{equation} \label{eq:pushforward_density_reflection} q(y\mid x; \theta) = \int \delta\!\left(T(x')-y\right) p(x'\mid x; \theta) \,\dd x', \end{equation} 
where $\delta$ denotes the Dirac delta. Therefore, differentiating \eqref{eq:pushforward_density_reflection} with respect to \(\theta\) gives 
\begin{align} 
\nabla_\theta q(y\mid x; \theta) &= \int \delta\!\left(T(x')-y\right) \nabla_\theta p(x'\mid x; \theta) \,\dd x' \notag\\ &= \int \delta\!\left(T(x')-y\right) p(x'\mid x; \theta) \nabla_\theta\log p(x'\mid x; \theta) \,\dd x'. \label{eq:differentiate_pushforward} 
\end{align}
Dividing by \(q(y\mid x; \theta)\), we obtain \begin{align} 
\nabla_\theta\log q(y\mid x; \theta) &= \frac{ \int \delta(T(x')-y) p(x'\mid x; \theta) \nabla_\theta\log p(x'\mid x; \theta) \,\dd x' }{ q(y\mid x; \theta) } \notag\\ &= \E\left[ \left. \nabla_\theta\log p(X'\mid x; \theta) \,\right|\, Y=y,\;X=x \right],
\end{align}
where the last equality follows from the Bayes formula for $p(x^{\prime} \mid y, x; \theta)$. Then for any integrable function \(h:\mathcal D\to\mathbb R\), we have
\begin{align} 
&\EE{h(Y) \nabla_\theta\log p(X'\mid X; \theta)} = \EE{\EE{h(Y) \nabla_\theta\log p(X'\mid X; \theta) \mid Y, X}}\notag\\
&=\EE{h(Y)\,\EE{\nabla_\theta\log p(X'\mid X; \theta) \mid Y, X}} = \EE{h(Y) \nabla_\theta\log q(Y \mid X; \theta)}.
\end{align} 
\end{proof}

\begin{theorem}
\label{theorem:reinforce_gradient}
Suppose Assumptions~\ref{assumption:bounded_convex_domain} and
\ref{assumption:positive_target_reduction} hold. The gradient of
\eqref{eq:discrete_objective_reinforce} satisfies
\begin{equation}
    \label{eq:reinforce_gradient}
    \nabla_\theta\cL_{\text{RP}}(\theta)
    =
    \EEs{\widetilde{\tau} \sim \widetilde{q}(\cdot;\theta)}{
        \sum_{k=0}^{K-1}
        \nabla_\theta
        \phi_k(Y_k;\theta)
        +
        \sum_{k=0}^{K-1}
        c_k\,
        \nabla_\theta
        \log
        \widetilde q_k(
            \widetilde Y_{k+1}\mid Y_k;\theta
        )
    },
\end{equation}
where
\begin{equation}
    c_k :=
    \sum_{j=k+1}^{K-1}\phi_j(Y_j;\theta)
    +\psi(Y_K).
\end{equation}
\end{theorem}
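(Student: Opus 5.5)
The plan is to write $\cL_{\text{RP}}$ as an expectation over the augmented trajectory law $\widetilde q(\cdot;\theta)$ and differentiate under the integral. The derivative has two parts: the explicit $\theta$-dependence of the costs $\phi_k(\cdot;\theta)$, and the $\theta$-dependence of the sampling law. The second part is then handled by the score-function identity and a causality argument.

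\textbf{Factorization.} The augmented law factorizes as
\[
\widetilde q(\widetilde\tau;\theta)=\nu(Y_0)\prod_{k=0}^{K-1}\widetilde q_k(\widetilde Y_{k+1}\mid Y_k;\theta)\,\delta\bigl(Y_{k+1}-\operatorname{refl}_{\cD}(\widetilde Y_{k+1})\bigr).
\]
Neither $\nu$ nor the deterministic reflection factor depends on $\theta$. Hence
\[
\nabla_\theta\log\widetilde q(\widetilde\tau;\theta)=\sum_k\nabla_\theta\log\widetilde q_k(\widetilde Y_{k+1}\mid Y_k;\theta),
\]
which involves only the Gaussian densities \eqref{eq:latent_transition_density}.

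The cost $C(\widetilde\tau;\theta)=\sum_k\phi_k(Y_k;\theta)+\psi(Y_K)$ depends on $\widetilde\tau$ only through the reflected states $Y_0,\ldots,Y_K$. Therefore its expectation under $\widetilde q$ equals \eqref{eq:discrete_objective_reinforce}, because the marginal of $\widetilde q$ on $(Y_k)$ is $q$. Equivalently, Lemma~\ref{lemma:reflected_score_identity} applied transition by transition lets one replace the intractable reflected score by the pre-reflection Gaussian score. Working directly with $\widetilde q$ avoids this detour.

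\textbf{Score-function step.} I would differentiate
\[
\int C(\widetilde\tau;\theta)\,\widetilde q(\widetilde\tau;\theta)\,\dd\widetilde\tau
\]
by passing the gradient under the integral. This gives
\[
\E\Bigl[\textstyle\sum_k\nabla_\theta\phi_k(Y_k;\theta)\Bigr]+\E\Bigl[C\,\textstyle\sum_k\nabla_\theta\log\widetilde q_k\Bigr].
\]
The interchange is justified by dominated convergence. Under Assumption~\ref{assumption:positive_target_reduction}, $\psi$ is bounded because $\log(\nu/r)\in[\log(\nu/M_r),\log(\nu/m_r)]$. If $s_\theta$ and $\nabla_\theta s_\theta$ are bounded and continuous on $\overline{\cD}\times[0,1]$ locally uniformly in $\theta$, then $\phi_k$ is bounded. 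The Gaussian score $\nabla_\theta\log\widetilde q_k$ is $g_k^{-2}\,(\widetilde Y_{k+1}-Y_k-g_k^2 s_\theta)^\top g_k^2\nabla_\theta s_\theta=g_k\,\xi_k^\top\nabla_\theta s_\theta$. It therefore has Gaussian moments, and the Gaussian density varies smoothly in $\theta$ with an integrable envelope. I expect stating these regularity conditions cleanly to be the main technical obstacle; the statement only says ``under the assumptions'', so I would add them as the implicit smoothness of $s_\theta$.

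\textbf{Causality reduction.} To go from the full cost $C$ to the cost-to-go $c_k$, I would show that for $j\le k$,
\[
\E\bigl[\phi_j(Y_j;\theta)\,\nabla_\theta\log\widetilde q_k(\widetilde Y_{k+1}\mid Y_k;\theta)\bigr]=0.
\]
The argument is to condition on $\mathcal G_k=\sigma(Y_0,\widetilde Y_1,\ldots,Y_k)$. The factor $\phi_j(Y_j)$ is $\mathcal G_k$-measurable. Given $\mathcal G_k$, the variable $\widetilde Y_{k+1}$ has density $\widetilde q_k(\cdot\mid Y_k;\theta)$, and by the standard score identity
\[
\E\bigl[\nabla_\theta\log\widetilde q_k\mid\mathcal G_k\bigr]=\nabla_\theta\int\widetilde q_k\,\dd\widetilde y=0,
\]
which again uses differentiation under the integral of a Gaussian. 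After removing these terms, the coefficient multiplying $\nabla_\theta\log\widetilde q_k$ is $\sum_{j\ge k+1}\phi_j+\psi(Y_K)=c_k$. This yields \eqref{eq:reinforce_gradient}.

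The same conditional-zero-mean property is what justifies subtracting the leave-one-out baseline $b_k$ later, since $b_k$ is independent of trajectory $i$.
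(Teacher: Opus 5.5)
Your proof is correct, and it reaches the result by a more direct route than the paper's.

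The paper differentiates under the reflected trajectory law $q(\tau;\theta)=\nu(Y_0)\prod_k q_k(Y_{k+1}\mid Y_k;\theta)$. It performs the cost-to-go reduction with the reflected transition scores $\nabla_\theta\log q_k$ and the filtration $\sigma(Y_0,\ldots,Y_k)$. Only afterwards does it replace $\nabla_\theta\log q_k$ by the Gaussian score. That swap uses the Markov property, $\E[c_k\mid Y_k,\widetilde Y_{k+1}]=h_k(Y_{k+1})$, together with Lemma~\ref{lemma:reflected_score_identity}, which identifies $\nabla_\theta\log q_k(Y_{k+1}\mid Y_k;\theta)$ with $\E[\nabla_\theta\log\widetilde q_k(\widetilde Y_{k+1}\mid Y_k;\theta)\mid Y_k,Y_{k+1}]$.

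You instead differentiate under the augmented law from the outset. Its $\theta$-score is already a sum of Gaussian scores, because the reflection is deterministic and $\theta$-free, and you run the causality argument on the augmented filtration $\mathcal G_k$. This has concrete advantages:
\begin{itemize}
\item You never need the reflected densities $q_k$ (infinite image sums) to exist, be positive, or be differentiable in $\theta$.
\item You avoid the Dirac-delta/Bayes manipulation behind the lemma.
\item The interchange of $\nabla_\theta$ and the integral reduces to an explicit Gaussian envelope, whereas the paper leaves it to unstated ``regularity conditions''.
\end{itemize}
To avoid taking a logarithm of a delta, state the factorization as follows. $\widetilde\tau$ is a fixed measurable function of $(Y_0,\widetilde Y_1,\ldots,\widetilde Y_K)$. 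These variables have joint Lebesgue density $\nu(y_0)\prod_{k}\widetilde q_k(\widetilde y_{k+1}\mid \operatorname{refl}_{\cD}(\widetilde y_k);\theta)$, with the $k=0$ factor conditioned on $y_0$, taken against a $\theta$-free measure.

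What the paper's route buys in exchange is the Fisher-type identity itself. Conditioning the implemented Gaussian-score estimator on the reflected path yields the reflected-score estimator, so the latter is a Rao--Blackwellization of what is actually used. Your derivation does not expose this relation.
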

\begin{proof}
Assumption~\ref{assumption:positive_target_reduction} makes
$\psi=\log(\nu/r)$ bounded and Borel measurable. For brevity, define
$\mu(\tau;\theta):=\sum_{j=0}^{K-1}\phi_j(Y_j;\theta)+\psi(Y_K)$. The density of the reflected trajectory factorizes as
\begin{equation}
    q(\tau;\theta)
    =
    \nu(Y_0)
    \prod_{k=0}^{K-1}
    q_k(Y_{k+1}\mid Y_k;\theta).
\end{equation}
Since $\nu$ does not depend on $\theta$, differentiating
    $\cL_{\mathrm{RP}}(\theta)
    =
    \E_{\tau\sim q(\cdot;\theta)}
    [\mu(\tau;\theta)]$
and applying the log-derivative identity gives
\begin{align}
\nabla_\theta \cL_{\mathrm{RP}}(\theta)
&=
\E_{\tau\sim q(\cdot;\theta)}
\left[
\sum_{j=0}^{K-1}
\nabla_\theta
\phi_j(Y_j;\theta)
+
\sum_{k=0}^{K-1}
\mu(\tau;\theta)
\nabla_\theta
\log q_k(Y_{k+1}\mid Y_k;\theta)
\right].
\label{eq:reinforce_proof_full_return}
\end{align}
For a fixed $k$, define 
\begin{equation}
    \mu(\tau;\theta)
    =
    \sum_{j=0}^{k}\phi_j(Y_j;\theta)
    +
    c_k,
\end{equation}
where $
    c_k
    =
    \sum_{j=k+1}^{K-1}\phi_j(Y_j;\theta)
    +\psi(Y_K)
$. Let $\mathcal F_k=\sigma(Y_0,\ldots,Y_k)$. The sum $\sum_{j=0}^{k}\phi_j(Y_j;\theta)$ is
$\mathcal F_k$-measurable, while the transition score has zero conditional
mean:
\begin{align}
\E\left[
\left.
\nabla_\theta
\log q_k(Y_{k+1}\mid Y_k;\theta)
\right|
\mathcal F_k
\right]
&=
\int
q_k(y\mid Y_k;\theta)
\nabla_\theta\log q_k(y\mid Y_k;\theta)
\,\dd y\notag\\
&=
\nabla_\theta
\int q_k(y\mid Y_k;\theta)\,\dd y
=0.
\end{align}
Consequently, by using a tower property of conditional expectation
\begin{equation}
\E\left[
\mu(\tau;\theta)
\nabla_\theta
\log q_k(Y_{k+1}\mid Y_k;\theta)
\right]
=
\E\left[
c_k
\nabla_\theta
\log q_k(Y_{k+1}\mid Y_k;\theta)
\right].
\label{eq:reinforce_proof_cost_to_go}
\end{equation}

It remains to replace the reflected transition score by the score of the
tractable pre-reflection Gaussian transition. According to \eqref{eq:euler_scheme}, the pre-reflection state is sampled as $
    \widetilde Y_{k+1}
    \sim
    \widetilde q_k(\cdot\mid Y_k;\theta)$, after which the reflected state is obtained by $
    Y_{k+1}
    =
    \operatorname{refl}_{\cD}(\widetilde Y_{k+1})
$. Define $h_k(Y_{k+1}):=\E[c_k\mid Y_{k+1}]$. By the Markov structure of the discretized process, once $Y_{k+1}$ is known,
the future reflected trajectory $(Y_{k+2},\ldots,Y_K)$ is conditionally
independent of the particular preimage $\widetilde Y_{k+1}$. Hence, $\E[c_k \mid Y_k,\widetilde Y_{k+1}]=h_k(Y_{k+1})$. Using the tower property,
\begin{equation}
\E_{\widetilde\tau\sim\widetilde q(\cdot;\theta)}
\left[
c_k
\nabla_\theta
\log
\widetilde q_k(
\widetilde Y_{k+1}\mid Y_k;\theta
)
\right]
=
\E_{\widetilde\tau\sim\widetilde q(\cdot;\theta)}
\left[
h_k(Y_{k+1})
\nabla_\theta
\log
\widetilde q_k(
\widetilde Y_{k+1}\mid Y_k;\theta
)
\right].
\label{eq:reinforce_proof_future_conditioning}
\end{equation}
Applying Lemma~\ref{lemma:reflected_score_identity} to $h_k$ with $X=Y_k, X^{\prime}=\widetilde Y_{k+1}$ and $Y=Y_{k+1}$ yields
\begin{align}
&\E_{\widetilde\tau\sim\widetilde q(\cdot;\theta)}
\left[
h_k(Y_{k+1})
\nabla_\theta
\log
\widetilde q_k(
\widetilde Y_{k+1}\mid Y_k;\theta
)
\right]\notag\\
&=
\E_{\tau \sim q(\cdot;\theta)}
\left[
h_k(Y_{k+1})
\nabla_\theta
\log q_k(
Y_{k+1}\mid Y_k;\theta
)
\right]\notag\\
&=
\E_{\tau \sim q(\cdot;\theta)}
\left[
c_k
\nabla_\theta
\log q_k(
Y_{k+1}\mid Y_k;\theta
)
\right].
\label{eq:reinforce_proof_fisher}
\end{align}

Substituting \eqref{eq:reinforce_proof_cost_to_go} and
\eqref{eq:reinforce_proof_fisher} into
\eqref{eq:reinforce_proof_full_return}, and observing that
$\widetilde q$ pushes forward to $q$ after discarding the pre-reflection
states, we obtain

$$
    \nabla_\theta\cL_{\text{RP}}(\theta)
    =
    \EEs{\widetilde{\tau}\sim\widetilde q(\cdot;\theta)}{
        \sum_{k=0}^{K-1}
        \nabla_\theta
        \phi_k(Y_k;\theta)
        +
        \sum_{k=0}^{K-1}
        c_k
        \nabla_\theta
        \log
        \widetilde q_k(
            \widetilde Y_{k+1}\mid Y_k;\theta
        )
    },
$$

which proves the claim.
\end{proof}
\subsection{Constructing unbiased estimate after discretization}
\begin{proposition}
\label{prop:discrete_estimator_unbiased}
Fix a discretization $\Delta$ as in Section~\ref{sec:practical}. Let
$(Y_k)_{k=0}^K$ be the Markov chain obtained by discretizing the reference
dynamics associated with $\Pb$, with $Y_0\sim\nu$ and
\[
    \widetilde Y_{k+1}=Y_k+g_k\xi_k,
    \qquad
    Y_{k+1}=\operatorname{refl}_{\cD}(\widetilde Y_{k+1}),
    \qquad
    \xi_k\stackrel{\mathrm{i.i.d.}}{\sim}\cN(0,I_d).
\]
Assume that each transition of this reference chain leaves $\nu$ invariant.
Then, for any fixed $\theta$, the estimator
\eqref{eq:discrete_Z_estimator}, computed from trajectories generated by
\eqref{eq:euler_scheme}, is unbiased:
\[
    \E[\widehat Z]=Z.
\]
\end{proposition}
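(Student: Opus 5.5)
Since the weight in \eqref{eq:discrete_Z_estimator} is computed entirely from the pre-reflection Gaussian increments, I will work on the augmented space of trajectories $\widetilde\tau=(Y_0,\widetilde Y_1,Y_1,\ldots,\widetilde Y_K,Y_K)$. On this space both the learned chain and the reference chain have tractable densities. Under the learned chain \eqref{eq:euler_scheme} the augmented law is
\[
\widetilde q(\widetilde\tau;\theta)=\nu(Y_0)\prod_{k=0}^{K-1}\widetilde q_k(\widetilde Y_{k+1}\mid Y_k;\theta)\,\delta\bigl(Y_{k+1}-\operatorname{refl}_{\cD}(\widetilde Y_{k+1})\bigr).
\]
The reference chain of the proposition has the same form, with the Gaussian factors $\cN(\widetilde Y_{k+1};Y_k,g_k^2I_d)$. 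The deterministic reflection factors are identical in the two laws, so they cancel in the ratio. This gives an honest density ratio of Gaussians with common covariance,
\[
\frac{\widetilde p}{\widetilde q}(\widetilde\tau)=\prod_{k}\frac{\cN(\widetilde Y_{k+1};Y_k,g_k^2I)}{\cN(\widetilde Y_{k+1};Y_k+g_k^2s_k,g_k^2I)},
\]
where $s_k:=s_\theta(Y_k,1-t_k)$. This ratio is positive everywhere, so the two augmented laws are mutually absolutely continuous. Because $s_\theta$ is bounded (Assumption~\ref{assumption:control_continuity}), all the exponential moments needed below are finite.

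The first step is to evaluate this ratio along a sample of the learned chain. There $\widetilde Y_{k+1}-Y_k=g_k^2s_k+g_k\xi_k$. Expanding the quadratic forms, each factor equals
\[
\exp\bigl(-g_k s_k^\top\xi_k-\tfrac12 g_k^2\|s_k\|_2^2\bigr).
\]
So the exponential factor in \eqref{eq:discrete_Z_estimator} is exactly $(\widetilde p/\widetilde q)(\widetilde\tau)$, and each summand of $\widehat Z$ equals $\frac{r(Y_K)}{\nu(Y_K)}\frac{\widetilde p}{\widetilde q}(\widetilde\tau)$.

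Next, by linearity, $\E[\widehat Z]$ equals the expectation of a single summand. Changing measure, this is
\[
\E_{\widetilde q}\Bigl[\tfrac{r(Y_K)}{\nu(Y_K)}\tfrac{\widetilde p}{\widetilde q}\Bigr]=\E_{\widetilde p}\Bigl[\tfrac{r(Y_K)}{\nu(Y_K)}\Bigr].
\]
The change of measure is justified by equivalence of the augmented laws and by nonnegativity of the integrand, via Tonelli, so no integrability issue arises.

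Finally, under $\widetilde p$ the marginals $(Y_k)$ form the reference Markov chain started from $\nu$. Since each of its transitions leaves $\nu$ invariant, induction on $k$ gives $Y_K\sim\nu$. Then
\[
\E_{\widetilde p}\bigl[r(Y_K)/\nu(Y_K)\bigr]=\int_{\cD}r=Z.
\]

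The main obstacle is making the density-ratio step rigorous despite the reflection map. The reflected transition $Y_k\mapsto Y_{k+1}$ has no simple density, so the ratio must not be taken on reflected states. I avoid this by working on the augmented space, where the reflection is a common deterministic kernel in both laws. Concretely, I will write $\E_{\widetilde q}[F]$ as iterated Gaussian integrals over $\widetilde y_1,\ldots,\widetilde y_K$, with $y_{k+1}=\operatorname{refl}_{\cD}(\widetilde y_{k+1})$ substituted inside. The Gaussian densities can then be swapped factor by factor without any Dirac-delta formalism, and this requires only that $\operatorname{refl}_{\cD}$ is Borel measurable.
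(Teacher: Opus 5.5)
Your proposal is correct and follows essentially the same route as the paper. You pass to the augmented trajectory $\widetilde\tau$, cancel the common deterministic reflection so that the Radon--Nikodym derivative is a product of pre-reflection Gaussian ratios equal to the exponential weight, change measure to the reference chain, and use invariance of $\nu$ to get $Y_K\sim\nu$. Your Tonelli argument and the iterated-integral formulation make explicit some rigor that the paper leaves implicit. The appeal to boundedness of $s_\theta$ is not needed, and that assumption is not among the proposition's hypotheses: nonnegativity of the weight, together with strict positivity of the Gaussian densities, already justifies the change of measure.
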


\begin{proof}
Let $\widetilde Q_{\theta,\Delta}$ denote the law of the augmented
trajectory
\[
    \widetilde\tau
    =
    (Y_0,\widetilde Y_1,Y_1,\ldots,\widetilde Y_K,Y_K)
\]
generated by \eqref{eq:euler_scheme}, and let
$\widetilde P_\Delta$ denote the corresponding reference law obtained
by setting $s_\theta\equiv0$ in \eqref{eq:euler_scheme}. Their
pre-reflection transition densities are
\[
    \widetilde q_k(\widetilde y\mid y;\theta)
    =
    \cN\!\left(
        \widetilde y;
        y+g_k^2s_\theta(y,1-t_k),
        g_k^2I_d
    \right),
    \qquad
    \widetilde p_k(\widetilde y\mid y)
    =
    \cN(\widetilde y;y,g_k^2I_d).
\]
Since the two laws have the same initial distribution and use the same
deterministic reflection map,
\[
    \frac{\dd \widetilde P_\Delta}
         {\dd \widetilde Q_{\theta,\Delta}}
    (\widetilde\tau)
    =
    \prod_{k=0}^{K-1}
    \frac{
        \widetilde p_k(\widetilde Y_{k+1}\mid Y_k)
    }{
        \widetilde q_k(\widetilde Y_{k+1}\mid Y_k;\theta)
    }.
\]
Using the Gaussian representation in \eqref{eq:euler_scheme} gives
\[
    \log
    \frac{
        \widetilde p_k(\widetilde Y_{k+1}\mid Y_k)
    }{
        \widetilde q_k(\widetilde Y_{k+1}\mid Y_k;\theta)
    }
    =
    -g_k s_\theta(Y_k,1-t_k)^\top\xi_k
    -\frac12g_k^2
    \|s_\theta(Y_k,1-t_k)\|_2^2.
\]
Thus the exponential factor in \eqref{eq:discrete_Z_estimator} is
exactly
$\dd\widetilde P_\Delta/\dd\widetilde Q_{\theta,\Delta}$.
By assumption, each reference transition preserves $\nu$. Since
$Y_0\sim\nu$, it follows inductively that $Y_K\sim\nu$ under
$\widetilde P_\Delta$. For one trajectory, changing measure therefore gives
\begin{align*}
    &\E_{\widetilde Q_{\theta,\Delta}}
    \left[
        \frac{r(Y_K)}{\nu(Y_K)}
        \frac{\dd\widetilde P_\Delta}
             {\dd\widetilde Q_{\theta,\Delta}}
    \right]=
    \E_{\widetilde P_\Delta}
    \left[
        \frac{r(Y_K)}{\nu(Y_K)}
    \right]
    =\int_{\cD}\frac{r(y)}{\nu(y)}\nu(y)\,\dd y
    =Z.
\end{align*}
Averaging $N$ trajectories preserves the expectation, which proves
$\E[\widehat Z]=Z$.
\end{proof}
\begin{remark}
\label{remark:discrete_reference_invariance}
The invariance assumption in Proposition~\ref{prop:discrete_estimator_unbiased}
holds for the coordinate-wise folding reflection on
$\overline{\cD}=[0,1]^d$ used in our experiments. It is enough to consider one
coordinate. Let $\varphi_g$ denote the density of $\cN(0,g^2)$. For $x,y\in[0,1]$, the transition density of $Y=\operatorname{refl}_{[0,1]}(x+g\xi)$, where $\xi\sim\cN(0,1)$,
is
\[
    K_g(y\mid x)
    =
    \sum_{m\in\mathbb Z}
    \left[
        \varphi_g(y-x+2m)
        +
        \varphi_g(y+x+2m)
    \right].
\]
Since $\varphi_g(z)=\varphi_g(-z)$, reindexing $m\mapsto -m$ shows that
\[
    K_g(y\mid x)=K_g(x\mid y).
\]
As $K_g$ is a Markov kernel,
\[
    \int_0^1 K_g(x\mid y)\,\dd x=1,
\]
and therefore, by symmetry,
\[
    \int_0^1 K_g(y\mid x)\,\dd x=1.
\]
Hence the uniform distribution on $[0,1]$ is invariant under every
reference transition. Since both the Gaussian noise and the reflection
act coordinate-wise, the $d$-dimensional transition kernel factorizes
across coordinates, and $\nu=\operatorname{Unif}([0,1]^d)$ is invariant
as well.
\end{remark}

\section{Experimental details}\label{app_sec:experiments}

\columnratio{0.68}

\subsection{Benchmarks}
\label{app_sec:benchmarks}
\paragraph{Genz integrals.}
The Genz family provides a standard collection of numerical integration test functions, used in literature \citep{ott2023bayesian, warren2024fast}, designed to exhibit distinct forms of difficulty. We consider five out of six Genz integrals - we exclude the periodic integrand as it takes negative values, which does not fit our study. All Genz targets used here admit closed-form normalization constants \cite{kaarnioja2025explicit}, which allows to produce an exact reference value for any integration result. The five Genz targets span substantially different forms of concentration. Continuous and Gaussian Peak provide smooth interior modes, Product Peak has a factorized rational structure, Corner Peak concentrates mass at a boundary corner and Discontinuous combines a hard support constraint with an exponential trend within the support. The dimensional sweep from $5$ to $30$ further tests whether the sampler can preserve accurate mass allocation as the number of coordinates increases.

For the Genz experiments, several dimensions are considered, which are specified in the description of the corresponding problem. The concentration parameters are varied with dimension in order to prevent the benchmark from becoming either nearly uniform or excessively degenerate as $d$ increases. Unless otherwise stated, scalar parameters are shared across dimensions, so that the resulting target is symmetric with respect to coordinate permutations.

\begin{paracol}{2}
\paragraph{Continuous Genz.}
The Continuous Genz target is defined by
\[
    r(x) = \exp \left( -\sum_{k=1}^d a_k |x_k - u_k| \right), \qquad x\in[0,1]^d.
\]

The parameter $u_k$ specifies the location of the peak along dimension $k$, while $a_k$ controls its concentration. For all coordinates we take $u_k = 0.55$ and the scalar concentration parameter is chosen as $a_k = a \in \{8.0, 5.0, 4.0, 3.0 \}$ for $d \in \{5,10,20,30\}$, respectively. Geometrically, the target has a sharp non-smooth peak at $u=(u_k)_{k=1}^d$ and decays exponentially in the $\ell_1$ distance from it, producing diamond-shaped level sets with axis-aligned corners. As the dimension grows, its mass becomes increasingly concentrated in a small region of the domain, making accurate exploration and integration progressively more challenging.
\switchcolumn
\centering
\vspace{0.8em}
\includegraphics[width=\linewidth]{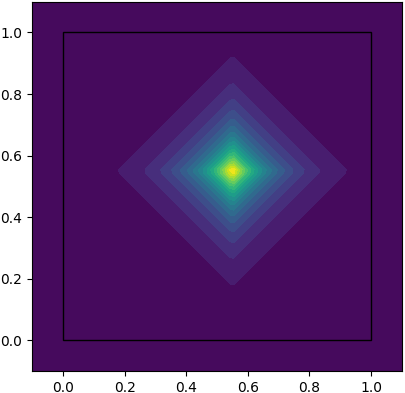}
\end{paracol}

\begin{paracol}{2}
\paragraph{Corner Peak Genz.}
The Corner Peak target is given by
\[
    r(x) = \left( 1 + \sum_{k=1}^d a_k x_k \right)^{-(d + 1)}, \qquad x\in[0,1]^d.
\]
The density is largest at the corner $x=0$ and decays as the weighted sum of the coordinates increases. For all coordinates we take $a_k=a \in \{ 0.50, 0.25, 0.10, 0.08 \}$ for $d \in \{5,10,20,30\}$, respectively. Because the dominant probability mass is associated with a boundary corner rather than an interior mode, this benchmark probes the ability of the sampler to discover and represent boundary-concentrated distributions. The progressively smaller $a_k$ values compensate for the increasing dimensionality and preserve a meaningful level of concentration across the dimensional sweep.
\switchcolumn
\centering
\vspace{0.8em}
\includegraphics[width=\linewidth]{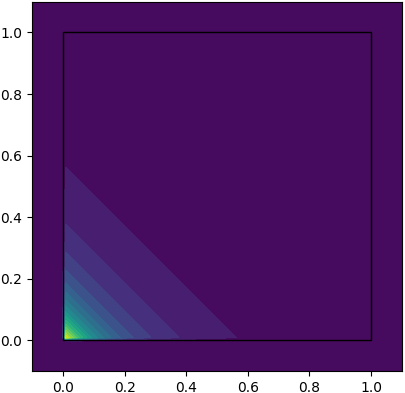}
\end{paracol}

\begin{paracol}{2}
\paragraph{Discontinuous Genz.}
The Discontinuous Genz is defined by
\[
    r(x) = \begin{cases}
        \exp\!\left( \sum_{k=1}^{d} a_k x_k \right), &x_k\leq u_k\ \forall k,\\
        0, &\text{otherwise}.
    \end{cases}
\]
We set $u_k=0.5$ for all coordinates. The shared concentration parameter is $a_k=a\in\{5.0,4.0,3.0,3.0\}$ for $d\in\{5,10,20,30\}$, respectively. This benchmark combines two distinct difficulties. First, the support is restricted to the subcube $[0,0.5]^d$, which introduces a sharp discontinuity at every boundary of the feasible region. Second, within the support the density increases exponentially with the coordinates, so that the mode is located at the upper corner $ (0.5,\ldots,0.5)^{\top}$ of the active region. 
\switchcolumn
\centering
\vspace{0.8em}
\includegraphics[width=\linewidth]{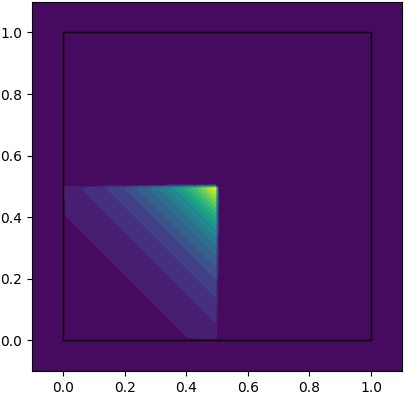}
\end{paracol}
Consequently, the sampler must simultaneously identify a discontinuous support restriction and concentrate probability near the boundary of that restricted support. In the implementation, we add a small positive correction of $\varepsilon=10^{-10}$ to make a target reward strictly positive and then subtract $\varepsilon \vol(\cD)$ in the final estimate.

\begin{paracol}{2}
\paragraph{Gaussian Peak Genz.}
The Gaussian Peak Genz is defined by 
\[
    r(x) = \exp \left( -\sum_{k=1}^d a_k^2 (x_k - u_k)^2 \right), \qquad x\in[0,1]^d.
\]
We use $u_k=0.5$ for all coordinates and set $a_k=a \in \{ 8.0, 5.0, 4.0, 3.0 \}$ for $d \in \{ 5, 10, 20, 30 \}$, respectively.

The resulting distribution is a sharply concentrated, smooth peak centred at the middle of the hypercube. Relative to the Continuous target, the Gaussian Peak is globally differentiable and has rapidly decaying tails away from its mode, making the pair useful for distinguishing the effect of nonsmoothness from that of concentration alone.
\switchcolumn
\centering
\vspace{0.8em}
\includegraphics[width=\linewidth]{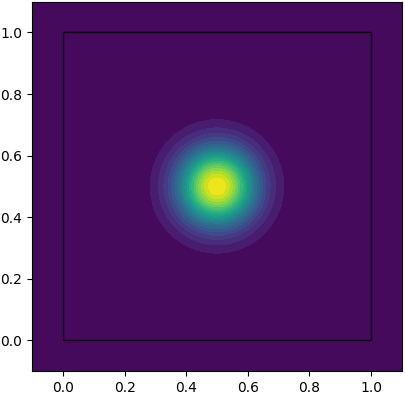}
\end{paracol}

\begin{paracol}{2}
\paragraph{Product Peak Genz.}
The Product Peak Genz target is given by
\[
    r(x) = \prod_{k=1}^d \frac{1}{a_k^{-2}+(x_k-u_k)^2}, \qquad x\in[0,1]^d.
\]

We take $u_k=0.5$ in every coordinate. The shared coefficient $a_k=a$ takes values $8.0$, $5.0$, $4.0$, and $4.0$ for dimensions $d=5$, $10$, $20$, and $30$, respectively.

The function factorizes across dimensions, but each factor exhibits a sharply localized peak around $u_k=0.5$. In the implementation, we rescale $r(x)$ by the maximum reward for numerical stability and adjust the analytic normalizing constant accordingly.
\switchcolumn
\centering
\vspace{0.8em}
\includegraphics[width=\linewidth]{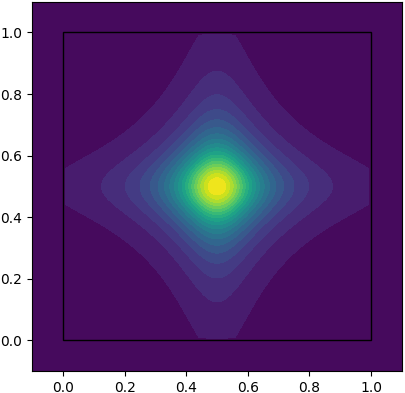}
\end{paracol}

\begin{paracol}{2}
\paragraph{GMM Cube.}
The GMM Cube target is a bounded multimodal distribution constructed by placing a mixture of five isotropic Gaussian components inside the unit hypercube. For a $d$-dimensional target
\[
    r(x) = \frac{1}{5} \sum_{j=1}^5 \mathcal{N}(x; \mu_j, \sigma^2 I_d), \qquad x\in[0,1]^d,
\]
with $\sigma = 0.1$. The component locations are sampled once from the interior region $[0.01, 0.99]^d$, thereby ensuring all modes remain separated from the boundary.
\switchcolumn
\centering
\vspace{0.8em}
\includegraphics[width=\linewidth]{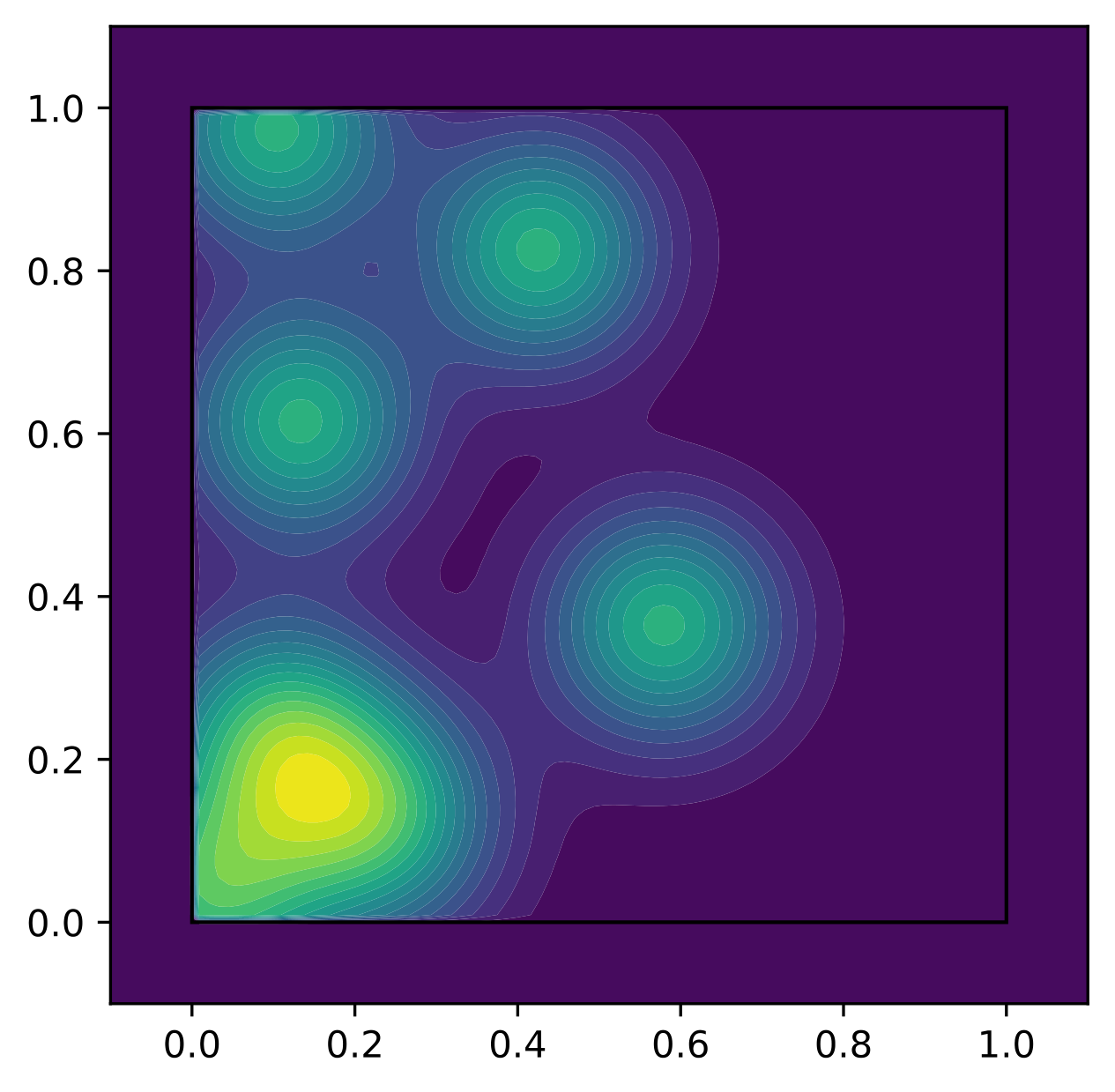}
\end{paracol}
The target is subsequently restricted to the unit cube. This construction produces a genuinely multimodal bounded target in which the sampler must discover several well-separated regions of high probability rather than a single dominant mode.

Because the Gaussian mixture is initially defined on the full Euclidean space and then restricted to the cube, its normalization constant is the probability that a draw from the unconstrained mixture falls inside $[0,1]^d$. This quantity is estimated in the implementation by Monte Carlo sampling from the unconstrained mixture. The target therefore tests both multimodal exploration and the ability to account for truncation by the bounded domain.

\paragraph{Stripe Cube.}
The Stripe Cube target is designed to concentrate probability near the boundary of the unit hypercube while retaining a small positive density throughout its interior. For a point $x \in [0,1]^d$, define the distance to the closest boundary face by $\delta(x) = \min_{i} \{ x_i, 1 - x_i \}$. We map this distance to a normalized boundary score $s(x) = 1 - 2 \delta(x)$, which equals zero at the centre of the cube and one on the boundary.
\begin{paracol}{2}
The target density is then
\[
    r(x) = \alpha + (\beta - \alpha) s^{\gamma}(x), \qquad x\in[0,1]^d,
\]
where $\alpha=10^{-10}, \beta=1$ and $\gamma=8$. Hence the density rises steeply as a point approaches any boundary face, producing a set of high-probability boundary ''stripes'' around the cube while retaining a numerically nonzero floor in its interior.

Whereas the Corner Peak benchmark concentrates mass near a single corner, Stripe Cube assigns high density collectively to all boundary faces.

\switchcolumn
\centering
\vspace{0.8em}
\includegraphics[width=\linewidth]{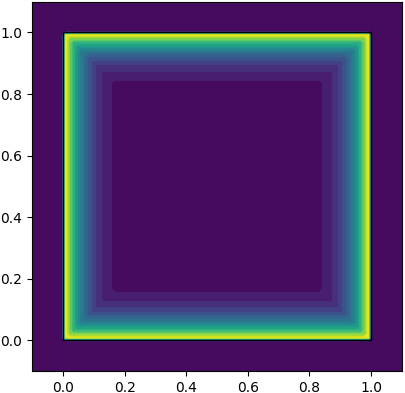}
\end{paracol}
The resulting target therefore contains a large, geometrically thin high-density region whose total structure becomes increasingly complex with dimension. The normalization constant is available analytically
\[
Z
=
\alpha
+
\left(\beta-\alpha\right)
\frac{d}{d+\gamma}.
\]

\begin{paracol}{2}
\paragraph{Funnel Cube.}
The Funnel Cube target is a bounded analogue of the classical funnel distribution and is designed to test adaptation to strongly varying conditional scales. The first coordinate follows a Gaussian density,
\[
    x_1 \sim \mathcal{N} (0.6, 0.22^2),
\]
restricted to the interval $[0, 1]$. Conditional on $x_1$, the remaining $d-1$ coordinates are independent Gaussian variables centered at $0.5$ with a standard deviation that changes exponentially across the first coordinate:
\[
    \log \sigma(x_1) = \log \sigma_{\mathrm{min}} + x_1 \left( \log \sigma_{\mathrm{max}} - \log \sigma_{\mathrm{min}} \right),
\]
where $\sigma_{\mathrm{min}} = 0.02$ and $\sigma_{\mathrm{max}} = 0.30$.
\switchcolumn
\centering
\vspace{0.8em}
\includegraphics[width=\linewidth]{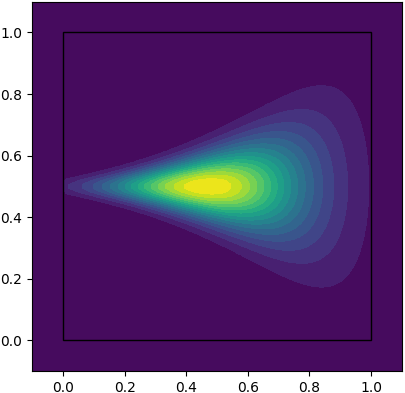}
\end{paracol}
The target density can therefore be written as
\[
    r(x) = \mathcal{N}(x_1;0.6,0.22^2) \prod_{k=2}^{d} \mathcal N\!\left( x_k;0.5,\sigma^2(x_1) \right), \qquad x\in[0,1]^d.
\]
The defining difficulty of this benchmark is the strong dependence of the conditional width on $x_1$. Samples associated with one region of the first coordinate lie in a very narrow tube around $0.5$, whereas samples associated with another region can occupy a considerably wider cross-section. The target therefore tests whether a learned sampler can represent pronounced heteroscedasticity rather than simply locating isolated modes. Its normalization constant is computed by reducing the $d$-dimensional integral to a one-dimensional numerical integration over $x_1$, where the probability that every remaining coordinate lies in $[0,1]$ is evaluated analytically.

\paragraph{Asian option.}
We consider Asian option pricing as an application-oriented high-dimensional benchmark with a nonsmooth payoff and a strong temporal dependence structure, following the standard construction \citet{l2009quasi}. The implementation is based on a discretised Black--Scholes asset path. A point $x=(x_1,\ldots,x_d)\in[0,1]^d$ is first transformed into independent standard-normal shocks through the inverse Gaussian cumulative distribution function, $\epsilon_i=\Phi^{-1}(x_i)$. For monitoring times $0=t_0<t_1<\cdots<t_d=T$, the Brownian path is then constructed as
\[
W_{t_j}=\sum_{i=1}^{j}\sqrt{t_i-t_{i-1}}\,\epsilon_i,
\]
and the corresponding asset price at time $t_j$ is
\[
S_{t_j}
=
S_0\exp\left[
\left(r-\frac12\sigma^2\right)t_j+\sigma W_{t_j}
\right].
\]
The default parameters are $S_0=100$, $r=\log(1.09)$, $\sigma=0.2$, and $T=\frac{120}{365}$. The reward is given by the discounted arithmetic-average call payoff
\[
r(x)=e^{-rT}
\max(0,\frac1d\sum_{i=1}^{d}S_{t_i}-K),
\]
where $K$ is the strike price. The arithmetic average couples the payoff to the entire simulated trajectory, while the positive-part operator introduces a nonsmooth boundary and makes the reward exactly zero over a substantial portion of the input domain.

We consider two principal configurations from Example 3 \citep{l2009quasi}. The easy problem has dimension $d=10$, uses a starting monitoring period of $111$ days, a strike of $K=90$, and has a ground-truth option value $Z=13.008$. The hard problem increases the dimension to $d=120$, starts monitoring at period $1$, retains the strike $K=90$, and has $Z=11.207$.

The difference between the two settings therefore changes both the number of Gaussian increments used to represent the asset path and the portion of the maturity interval entering the arithmetic average. The hard configuration consequently presents a substantially higher-dimensional integration problem with a longer temporal dependence structure, providing a stringent test of the sampler's scalability.

As for the Genz Discontinuous benchmark, we add a small constant $\varepsilon=10^{-10}$ to the reward to make it strictly positive. We remove this artificial contribution from the final integral estimate by subtracting $\varepsilon\vol(\mathcal D)$.

\paragraph{Digits.}
We adapt the Digits benchmark from \citet{blessing2024beyond}, where the target is a NICE normalizing-flow density \citep{dinh2014nice} trained on downsampled $14\times14$ MNIST images. The original target is a normalized continuous density $p_{\mathrm{NICE}}$ on $\mathbb{R}^{196}$. To obtain a bounded-domain integration problem, we apply the coordinate-wise logit transformation
and define the cube target by the corresponding change of variables,
\[
r(x)
=
p_{\mathrm{NICE}}(\operatorname{logit}(x))
\prod_{j=1}^{196}\frac{1}{x_j(1-x_j)}.
\]
This transformation preserves the normalizing constant, so $Z=1$. Samples generated on the cube can be mapped back through the logit transformation and reshaped into $14\times14$ grayscale images. Beyond integration, this benchmark provides a challenging high-dimensional sampling task with complex learned structure, allowing us to test whether RDS can recover meaningful samples from the target distribution itself.

\begin{paracol}{2}
\paragraph{Elliptic Two-Loop.}
The Elliptic Two-Loop target is a Feynman-parameter integral arising in
collider phenomenology. It corresponds to the planar two-loop master integral
$I^A_{1,1,0,1,1,1,1,0,0}$ with full heavy-quark mass dependence, which
contributes to the next-to-leading-order corrections to Higgs-plus-jet
production \citep{Bonciani_2016}. This sector is distinguished by the
appearance of elliptic functions in its analytic solution.

\switchcolumn
\centering
\includegraphics[width=\linewidth]{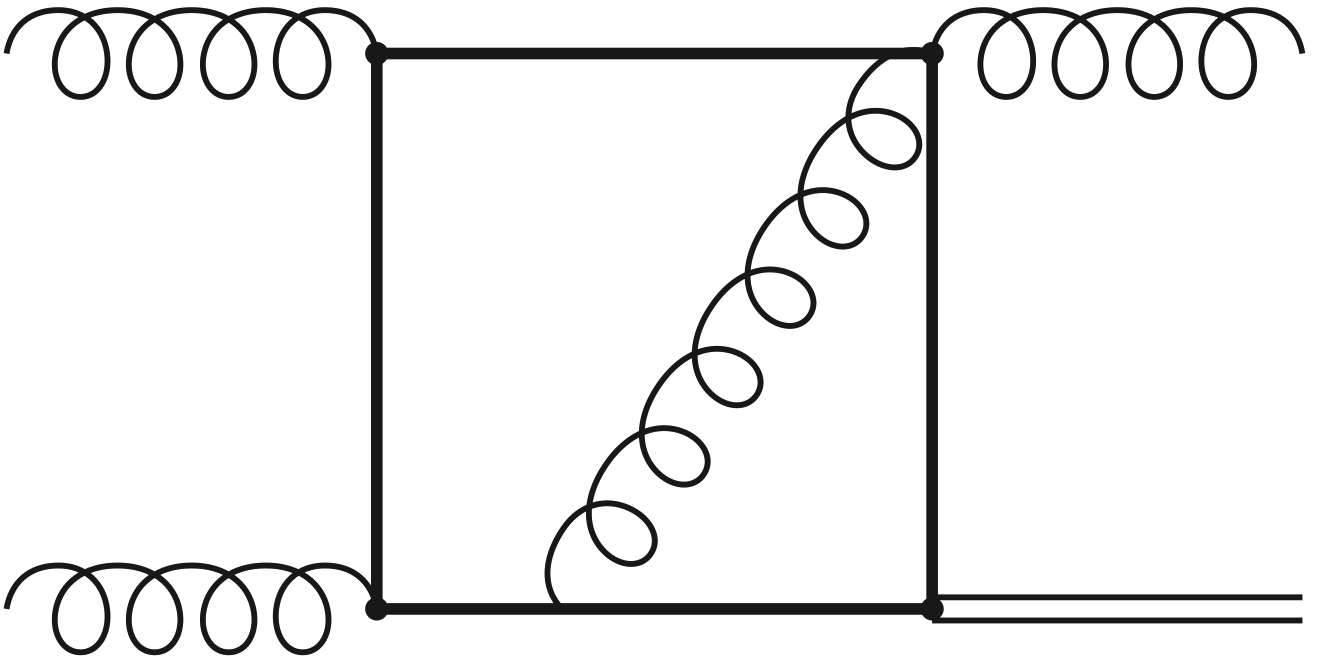}
\captionof{figure}{Two-loop six-propagator graph giving rise to elliptic functions \citep{Borowka_2018}.}
\end{paracol}

Using the propagator notation of \citet{Bonciani_2016}, the six active
propagators are
\[
    D_1 = k_1^2 - m^2, \qquad
    D_2 = (k_1 + p_1 + p_2)^2 - m^2, \qquad
    D_4 = (k_2 + p_1 + p_2)^2 - m^2,
\]
\[
    D_5 = (k_1 + p_1)^2 - m^2, \qquad
    D_6 = (k_1 - k_2)^2, \qquad
    D_7 = (k_2 - p_3)^2 - m^2.
\]
We evaluate the integral at the Euclidean kinematic point
\[
    s = (p_1+p_2)^2 = -\tfrac{4}{3}, \qquad
    t = (p_1+p_3)^2 = -\tfrac{16}{5}, \qquad
    p_4^2 = -\tfrac{100}{39}, \qquad
    m^2 = 1,
\]
used in the numerical benchmark of \citet{Borowka_2018}.

Feynman parametrization gives an integral over the five-dimensional simplex
$\{\alpha_i\geq0,\ \sum_{i=1}^6\alpha_i=1\}$. To express it on the unit
hypercube, we use the stick-breaking transformation
\[
    \alpha_i
    =
    z_i\prod_{j<i}(1-z_j),
    \qquad i=1,\ldots,5,
    \qquad
    \alpha_6
    =
    \prod_{j=1}^{5}(1-z_j),
\]
for $z\in[0,1]^5$, whose Jacobian is $J(z)=\prod_{j=1}^{4}(1-z_j)^{5-j}$.
In four spacetime dimensions, the corresponding cube target is
\[
    r(z)
    =
    \left(-\frac{s}{m^2}\right)^{3/2}
    \frac{J(z)}
         {\mathcal F(\alpha(z))^2},
    \qquad z\in[0,1]^5,
\]
where $\mathcal F$ is the second Symanzik polynomial. The Euclidean
kinematics make the target real and non-negative.

The benchmark combines a non-factorized Feynman integrand with nontrivial
boundary structure, providing an application-oriented test beyond the
synthetic targets considered above. Its normalization is known analytically
and, in the normalization used here, is
\[
    Z = 0.247074199140732131068066,
\]
in agreement with the numerical and analytic values reported by
\citet{Borowka_2018}.

\subsection{Evaluation protocol and training details}
\label{app_sec:experimental_details}

\paragraph{Evaluation protocol.}
We compare methods across different budgets corresponding to the number of points used for the integral estimation. For neural network-based methods, namely PIS, ZüNIS and RDS, for a fixed number of points used for evaluation, the same number of points is used for training. For these methods we use a batch size of $512$ in all experiments. We report the mean absolute error $|\widehat Z-Z|$ over independent runs as a function of the number of points. For all experiments we present mean and std of the error across 3 random seeds.

\paragraph{Quasi-Monte Carlo.}
As a non-adaptive baseline, we use randomized quasi-Monte Carlo (QMC) with scrambled Sobol sequences \cite{l2009quasi}. For a budget $N=2^m$, we generate $N$ points $x^{(1)},\ldots,x^{(N)}\in[0,1]^d$ from a scrambled Sobol sequence and estimate
\begin{equation}
\widehat Z_{\mathrm{QMC}}
=
\frac{1}{N}
\sum_{i=1}^{N} r(x^{(i)}).
\end{equation}
Independent random scramblings are used across runs. Using powers of two preserves the digital-net structure of the Sobol construction. QMC does not involve any optimization, and its target-evaluation budget is exactly $N$. We use SciPy implementation~\cite{virtanen2020scipy}.

\paragraph{Fast Fourier Bayesian Quadrature.}
We use the public implementation of Fast Fourier Bayesian Quadrature (FFBQ) by \citet{warren2024fast}. FFBQ approximates the integrand with a Gaussian process and computes the resulting Bayesian-quadrature estimate using scalable convolution operators. In our experiments, we fit a low-rank GP with an RBF kernel using $5{,}000$ uniformly sampled training points and use an independent set of $5{,}000$ points for quadrature evaluation. We employ the sparse QMC convolution operator provided by the authors. Low-rank GP was chosen, because of the scalability issues with dimension of FFBQ. The GP training and evaluation sets are fixed across experiments, while we vary the number $N$ of QMC samples used to approximate the final convolution integral. 

\paragraph{Path Integral Sampler.}
We include the Path Integral Sampler (PIS) \citep{zhang2022path}, an on-policy diffusion sampler derived from a stochastic-control formulation of sampling. To apply PIS on the bounded domain, we instead use a change of variables between $\mathbb{R}^d$ and $[0,1]^d$. Specifically, for $z\in\mathbb{R}^d$, we set $x=\sigma(z)$, where $\sigma(\cdot)$ is a sigmoid function. Accordingly, PIS is trained on the transformed reward
$
    \widetilde r(z)
    =
    r\!\left(\sigma(z)\right)
    \left|\det J_{\sigma}(z)\right|,
$ which preserves the original integral by the change-of-variables formula. The control is parameterized by an MLP with four hidden layers of width $256$. We train PIS with Adam using a learning rate of $10^{-3}$. Each diffusion trajectory is discretized using $64$ time steps. The control is trained using the standard path-space objective consisting of the accumulated quadratic control cost and the terminal target-density contribution. The resulting estimate of $Z$ is obtained using the path-integral importance weights associated with the learned controlled process. For PIS, we used a public implementation from \cite{blessing2024beyond}.

\paragraph{ZüNIS.}
We use the public implementation of ZüNIS \citep{deutschmann2024accelerating}. The proposal is parameterized by a bounded piecewise-quadratic coupling flow with $8$ coupling layers, $10$ bins per scalar transformation, and conditioner networks with two hidden layers of width $256$. We optimize the proposal with the reverse-KL objective using Adam with learning rate $10^{-3}$ and global gradient-norm clipping at $1$. The first batch is sampled uniformly from the cube, while all subsequent batches are drawn from the current flow proposal. Each evaluated survey batch is used for a single optimization update. At evaluation time, we draw fresh samples from the learned proposal and estimate the integral using importance sampling,
\begin{equation}
\widehat Z_{\mathrm{Z\ddot{u}NIS}}=
\frac{1}{N}
\sum_{i=1}^{N}
\frac{r(x^{(i)})}{q_\theta(x^{(i)})},
\qquad
x^{(i)}\sim q_\theta.
\end{equation}

\paragraph{RDS-RP and RDS-RF.}

Both variants use the same reflected diffusion and differ only in the gradient estimator used for training. We employ the Langevin
parameterization of the drift, following
\citet{zhang2022path} and \citet{blessing2024beyond}:
\begin{equation}
    s_\theta(x,t)
    =
    \operatorname{clip}
    \left(
        a_\theta(x,t)
        +
        b_\theta(t) \odot
        \operatorname{clip}
        \bigl(\nabla_x \log r(x), -10^2, 10^2\bigr), -10^{4}, 10^4
    \right),
\end{equation}
where $a_\theta(x,t)$ is a time-conditioned neural network,
$b_\theta(t)\in\mathbb R^d$ is a learned time-dependent
coordinate-wise scaling function, and $\odot$ denotes element-wise
multiplication. This parameterization incorporates the target score directly into
the drift while allowing the neural network to learn a flexible time-dependent correction. For targets with zero-valued regions we do not incorporate $\nabla_x \log r(x)$ to the parameterization. The score network is parameterized by a $4$-layer MLP with hidden width $256$. We use $\sigma_0=10^{-3}$ and $\sigma_1=1$, and discretize the diffusion into $64$ reflected steps for integration tasks. Both models are trained on-policy with Adam using the same optimization hyperparameters and global gradient-norm clipping at $1$. 

For RDS-RP, gradients are propagated through the simulated trajectory using the reparameterized Gaussian increments and the coordinate-wise reflection map. Since reflection on the cube is piecewise linear, it is differentiable almost everywhere and can be handled directly by automatic differentiation. RDS-RF instead uses the score-function estimator described in Section~\ref{sec:practical}. The sampled states are treated as stop-gradient quantities, while the dependence on the proposal parameters is captured through the scores of the Gaussian transitions and their corresponding cost-to-go terms. Thus, RDS-RP and RDS-RF use the same proposal family, discretization, and training objective, providing a direct comparison between pathwise and score-function optimization.

Our implementations are based upon the published codebase of~\cite{blessing2024beyond}.

\paragraph{Digits experiments.}
For the Digits benchmark, we use the PIS architecture and remaining training
hyperparameters following \citet{blessing2024beyond}. Both PIS and RDS are
discretized using $128$ diffusion steps and trained with Adam using a learning
rate of $5\times10^{-4}$, batch size of $2048$ and $40,000$ iterations. For RDS, we set the terminal noise scale to
$\sigma_1=0.5$.

We evaluate sample quality using the $2$-Wasserstein distance and the evidence
lower bound (ELBO), following \citet{blessing2024beyond}. Since the NICE target
admits direct sampling, we draw an independent set of reference samples from the
target and compare them with samples generated by each method. For RDS, samples
generated on $[0,1]^{196}$ are first mapped back to the original
$\R^{196}$ space using the coordinate-wise logit transformation. We compute the
entropy-regularized empirical $2$-Wasserstein distance between the generated and
reference samples using the squared Euclidean transport cost and regularization
$\varepsilon=10^{-3}$.

The ELBO can be computed directly from the same path-space importance weights
used for normalizing-constant estimation. For a trajectory
$\widetilde\tau$ generated by RDS, let
\[
    w(\widetilde\tau)
    :=
    \frac{r(Y_K)}{\nu(Y_K)}
    \exp\left(
        -\sum_{k=0}^{K-1}
        \left[
            g_k s_\theta(Y_k,1-t_k)^\top\xi_k
            +\frac12 g_k^2
            \|s_\theta(Y_k,1-t_k)\|_2^2
        \right]
    \right),
\]
so that $\E[w(\widetilde\tau)]=Z$ by
Proposition~\ref{prop:discrete_estimator_unbiased}. The corresponding
path-space ELBO is
\[
    \mathrm{ELBO}
    :=
    \E[\log w(\widetilde\tau)]
    \leq
    \log \E[w(\widetilde\tau)]
    =
    \log Z,
\]
where the inequality follows from Jensen's inequality. We estimate it using
\[
    \widehat{\mathrm{ELBO}}
    =
    \frac{1}{N}\sum_{i=1}^N
    \log w(\widetilde\tau^{(i)}).
\]
For PIS, we analogously use its path-space importance weights. Higher ELBO and
lower Wasserstein distance indicate better agreement with the target
distribution.

\subsection{Full experimental results}
\label{app_sec:experiments_results}
\begin{figure*}[h!]
    \centering

    \includegraphics[width=0.32\textwidth]{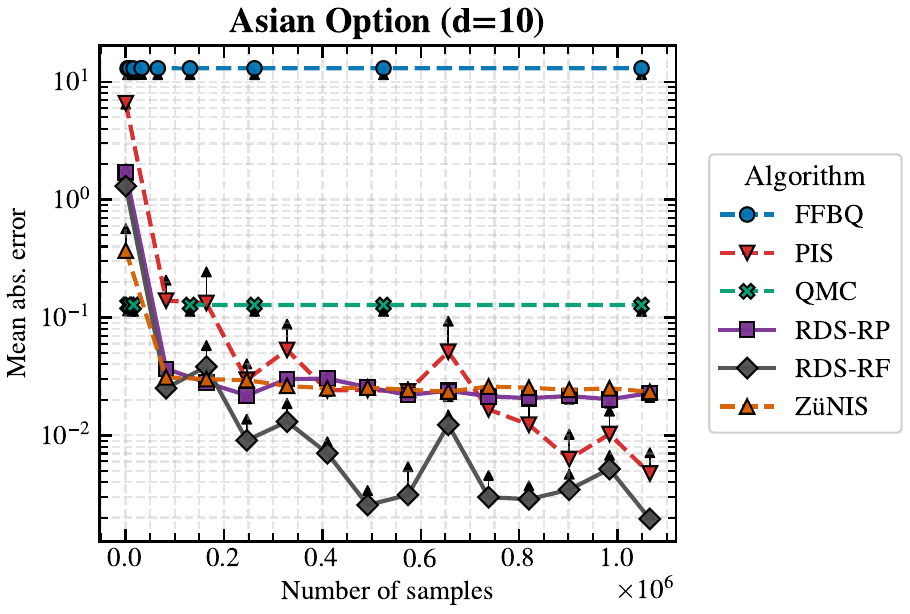}
    \hfill
    \includegraphics[width=0.32\textwidth]{figures/nfe_err/Asian_Option_d-120.pdf}
    \hfill
    \includegraphics[width=0.32\textwidth]{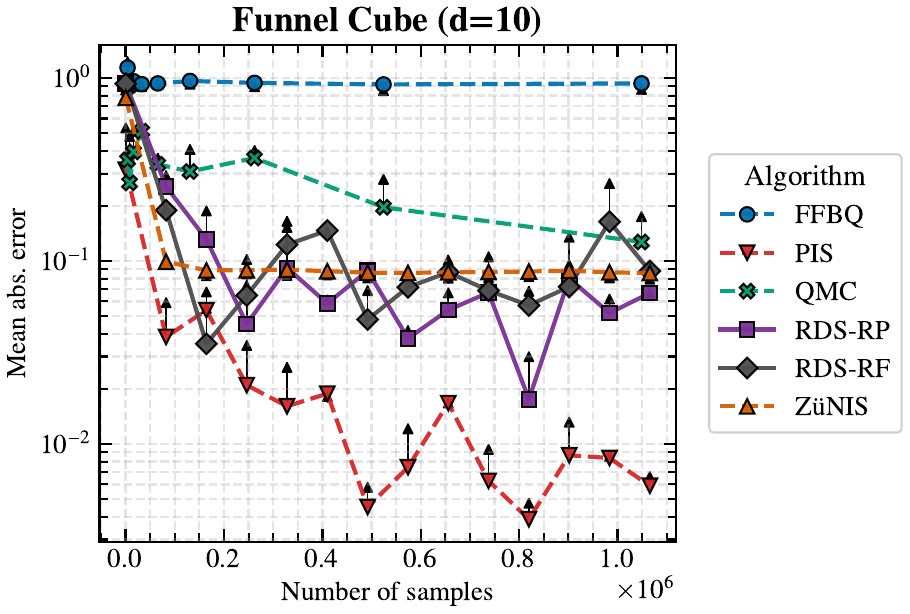}

    \vspace{0.3em}

    \includegraphics[width=0.32\textwidth]{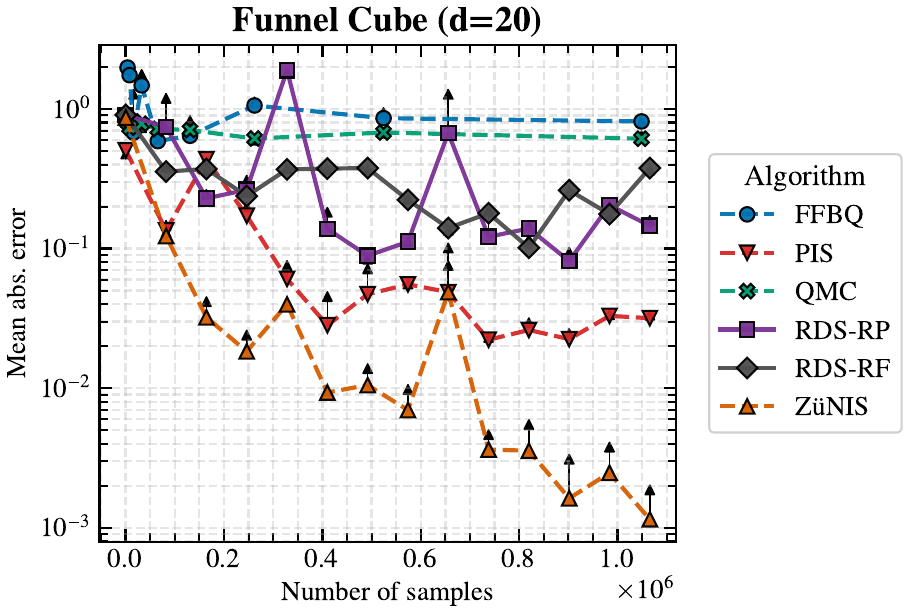}
    \hfill
    \includegraphics[width=0.32\textwidth]{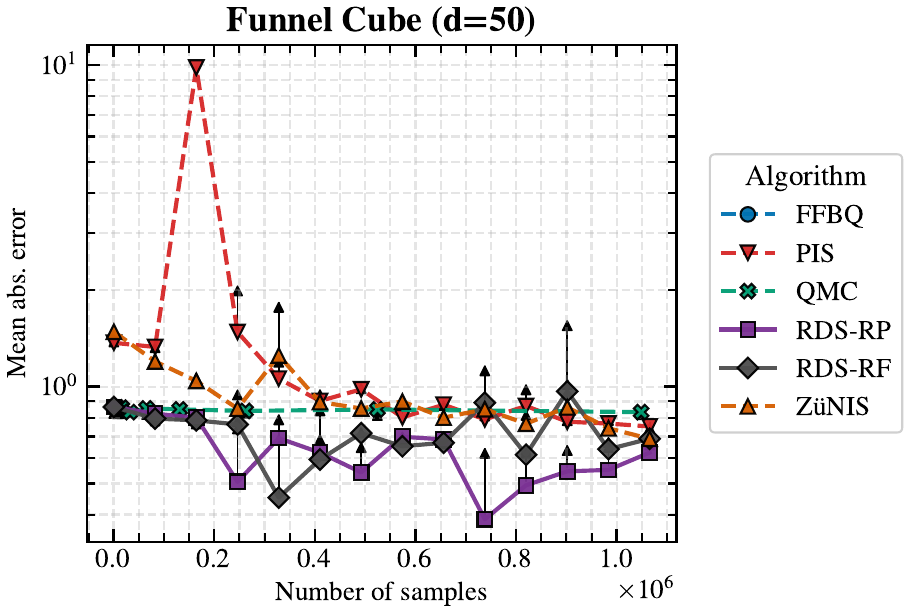}
    \hfill
    \includegraphics[width=0.32\textwidth]{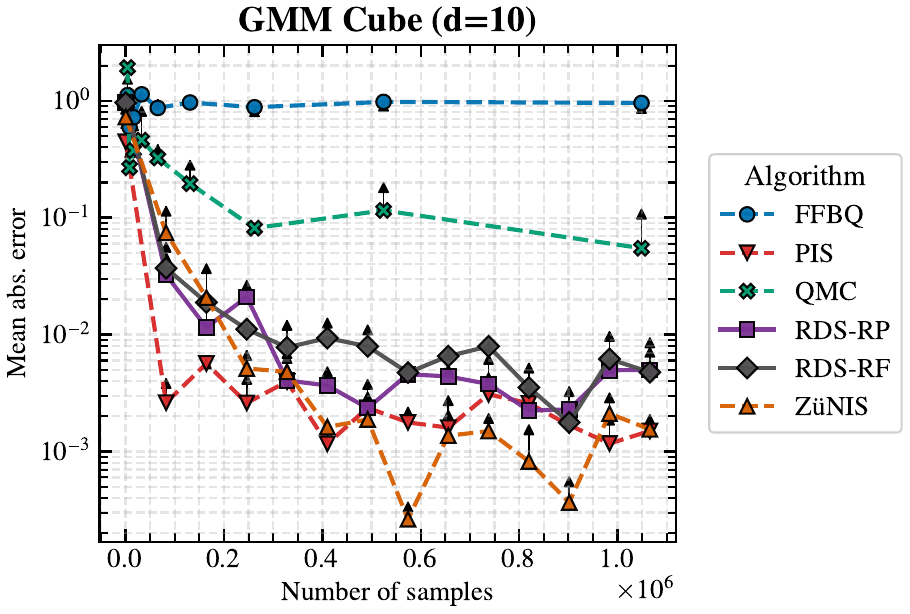}

    \vspace{0.3em}

    \includegraphics[width=0.32\textwidth]{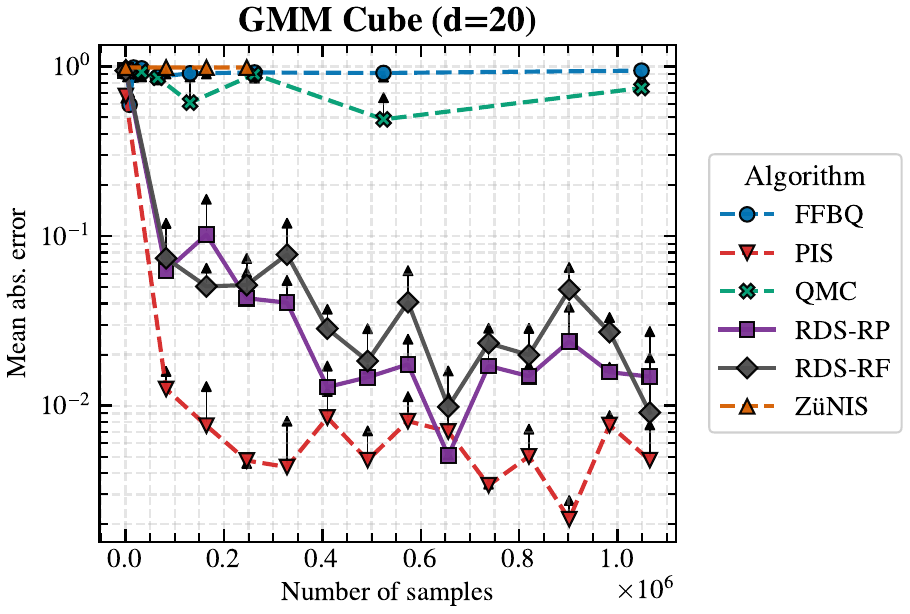}
    \hfill
    \includegraphics[width=0.32\textwidth]{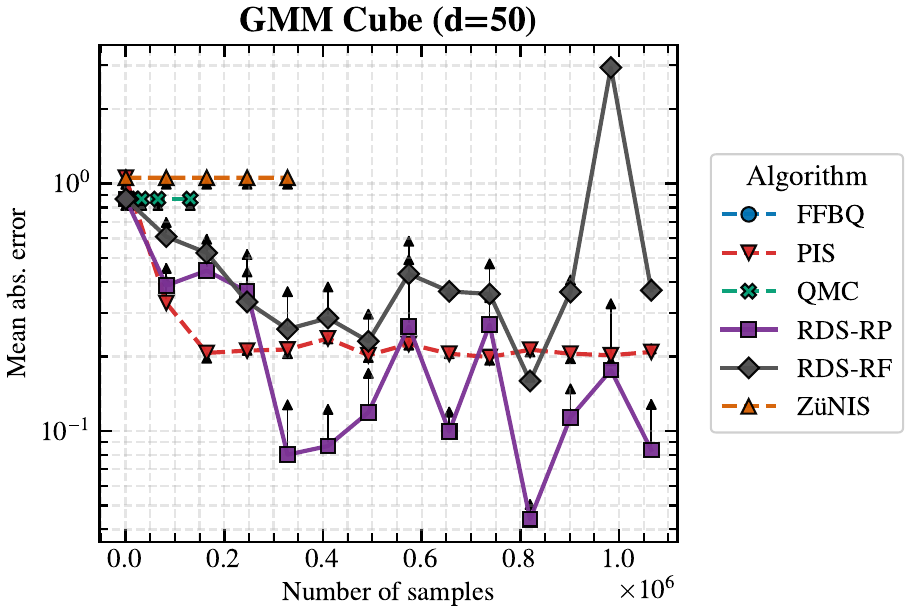}
    \hfill
    \includegraphics[width=0.32\textwidth]{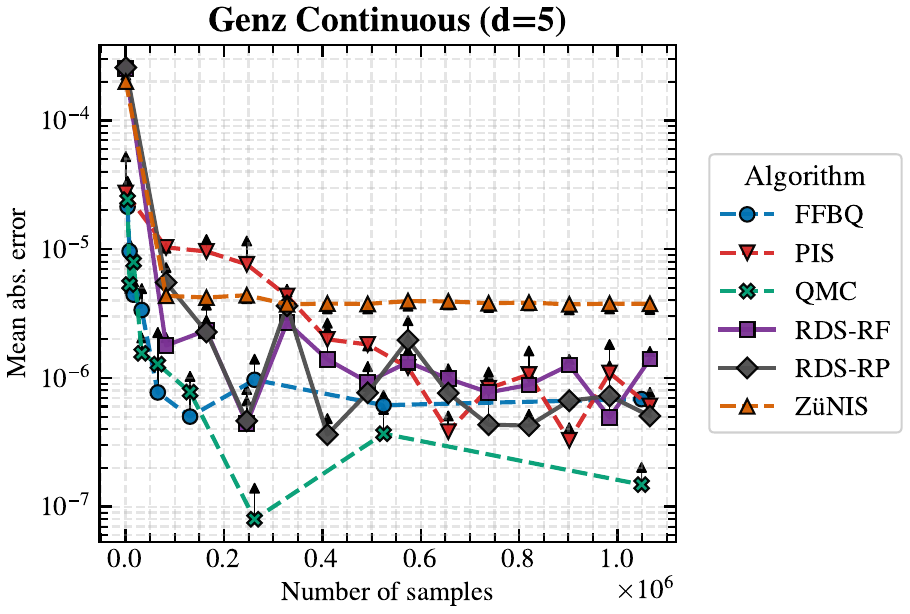}

    \vspace{0.3em}

    \includegraphics[width=0.32\textwidth]{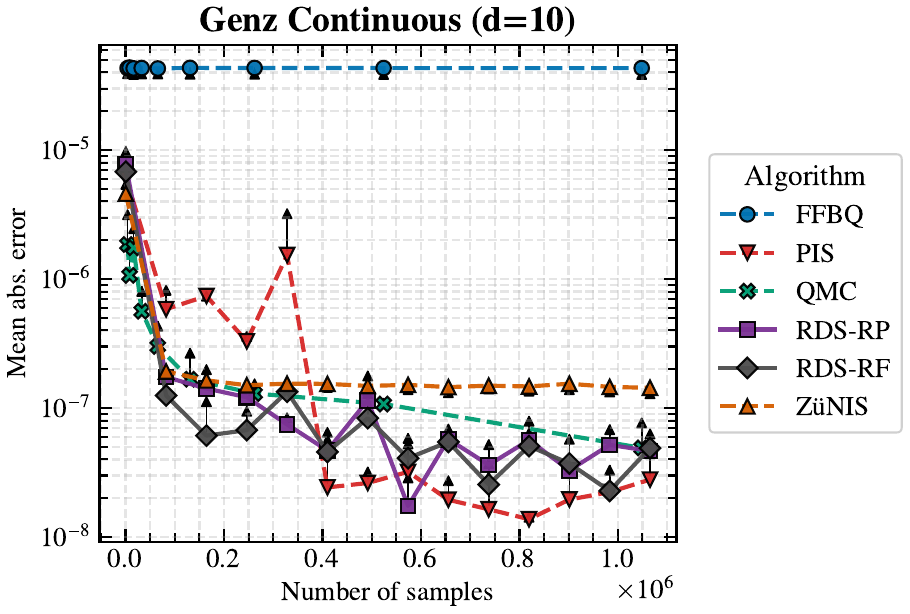}
    \hfill
    \includegraphics[width=0.32\textwidth]{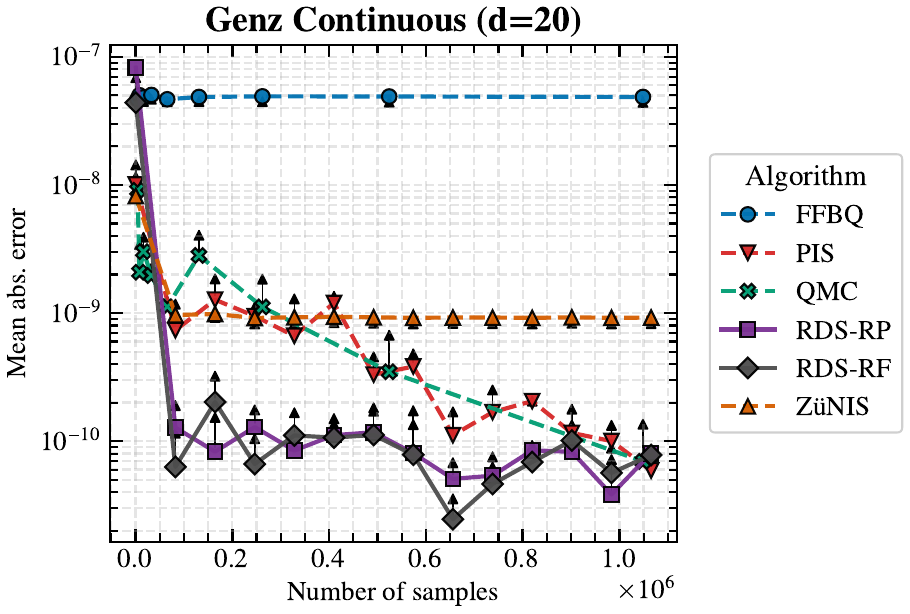}
    \hfill
    \includegraphics[width=0.32\textwidth]{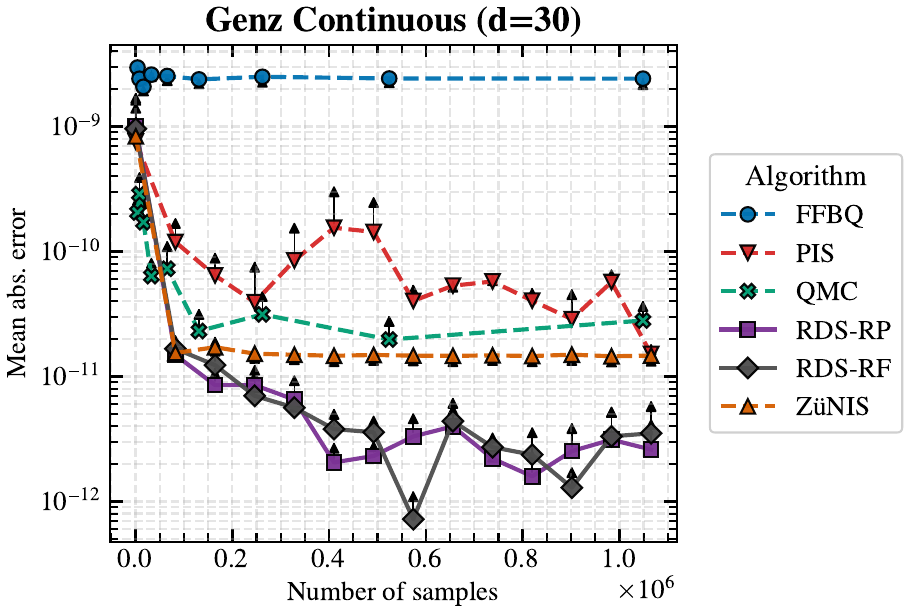}

    \caption{
        Mean absolute error as a function of the number of samples used for estimation
        across the considered integration benchmarks and dimensionalities.
    }
    \label{fig:nfe_scaling_full_1}
\end{figure*}

\begin{figure*}[p]
    \centering

    \includegraphics[width=0.32\textwidth]{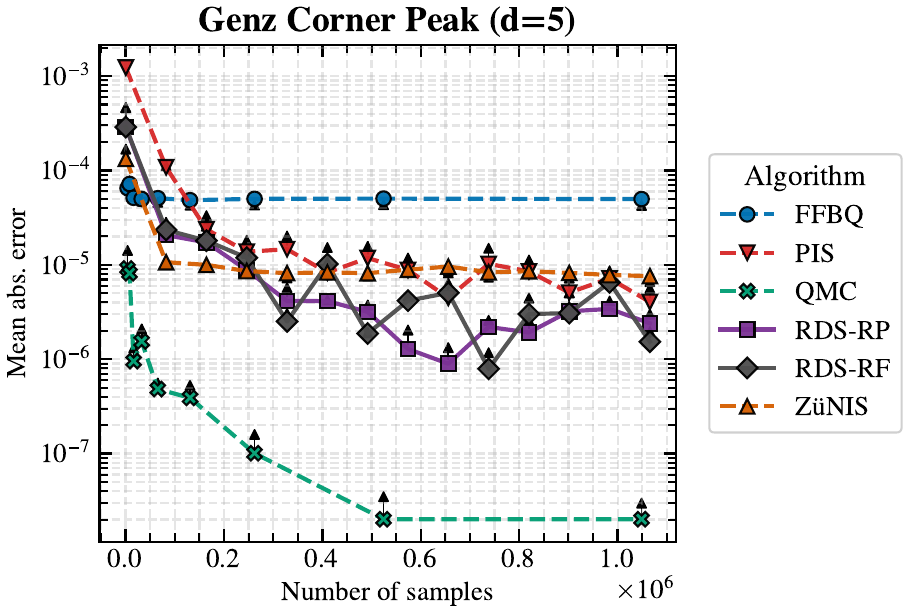}
    \hfill
    \includegraphics[width=0.32\textwidth]{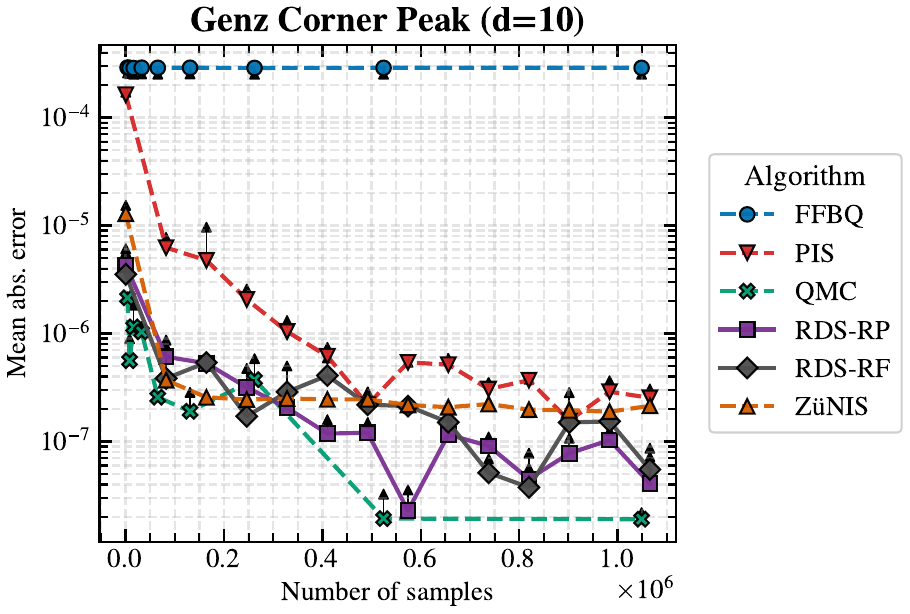}
    \hfill
    \includegraphics[width=0.32\textwidth]{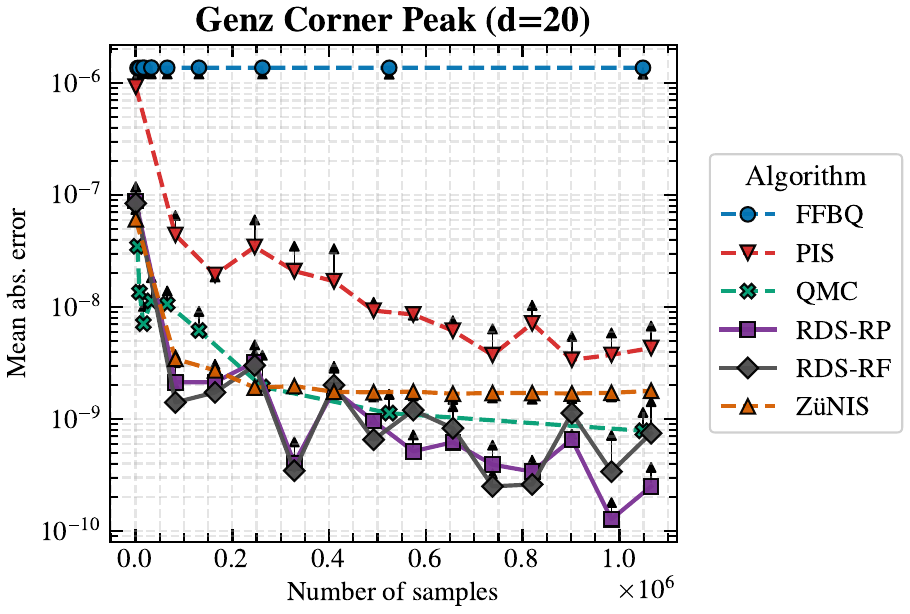}

    \vspace{0.3em}

    \includegraphics[width=0.32\textwidth]{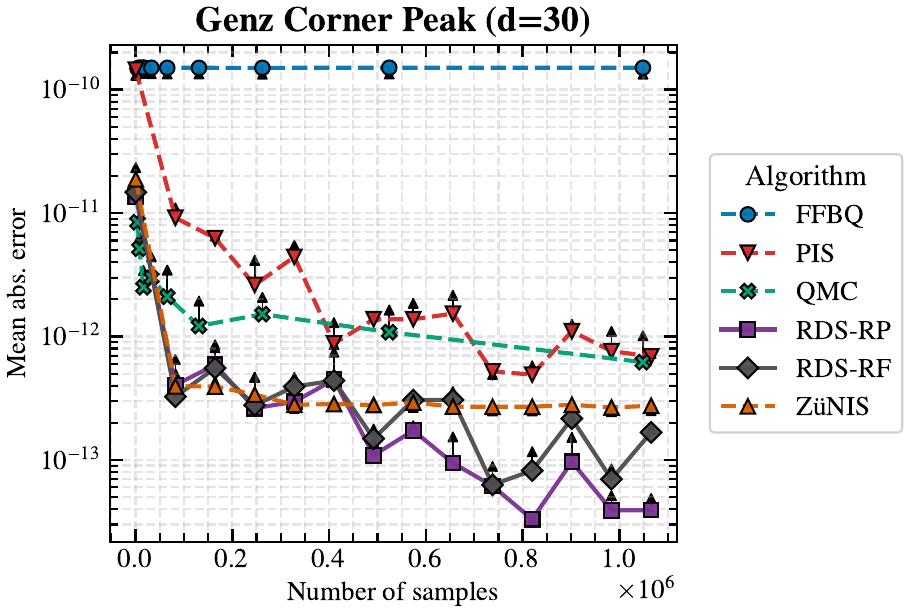}
    \hfill
    \includegraphics[width=0.32\textwidth]{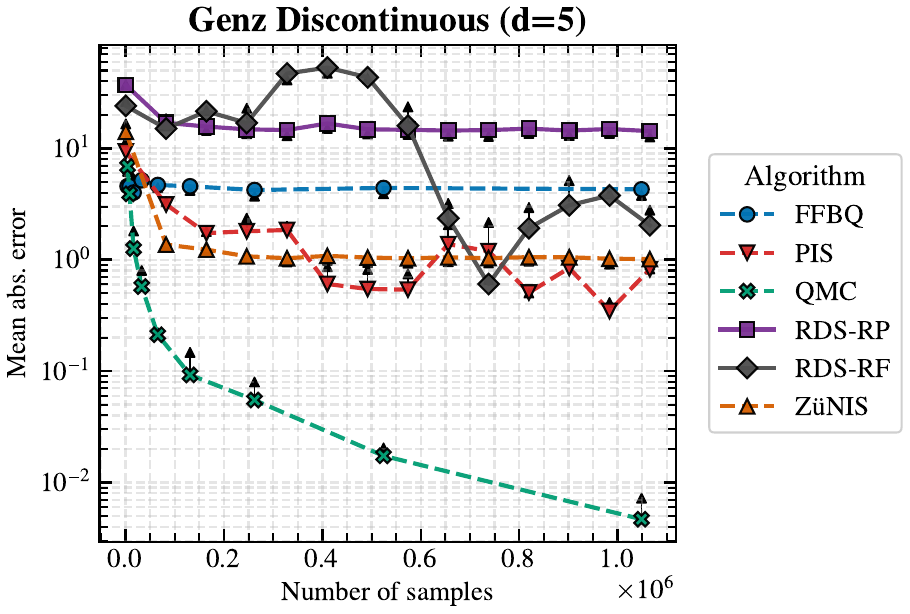}
    \hfill
    \includegraphics[width=0.32\textwidth]{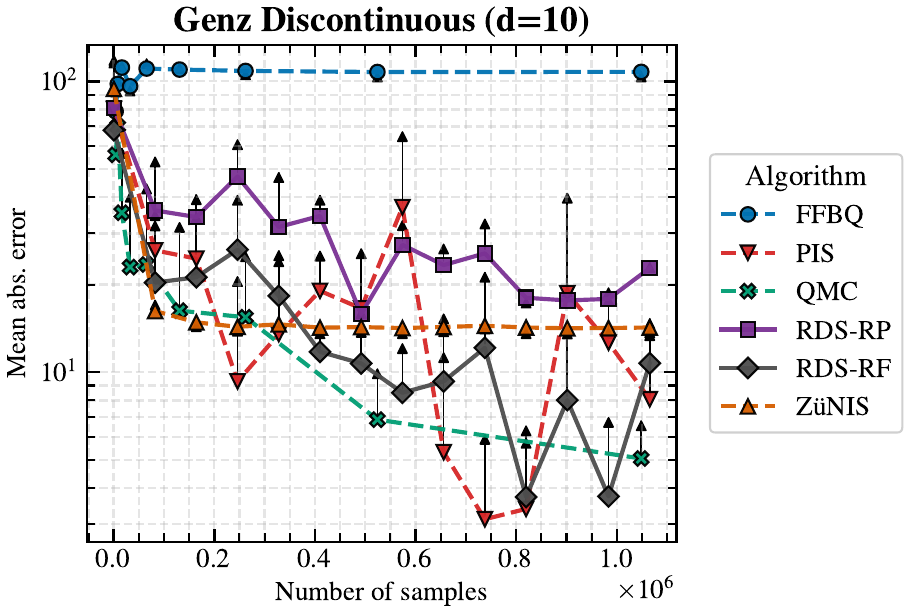}

    \vspace{0.3em}

    \includegraphics[width=0.32\textwidth]{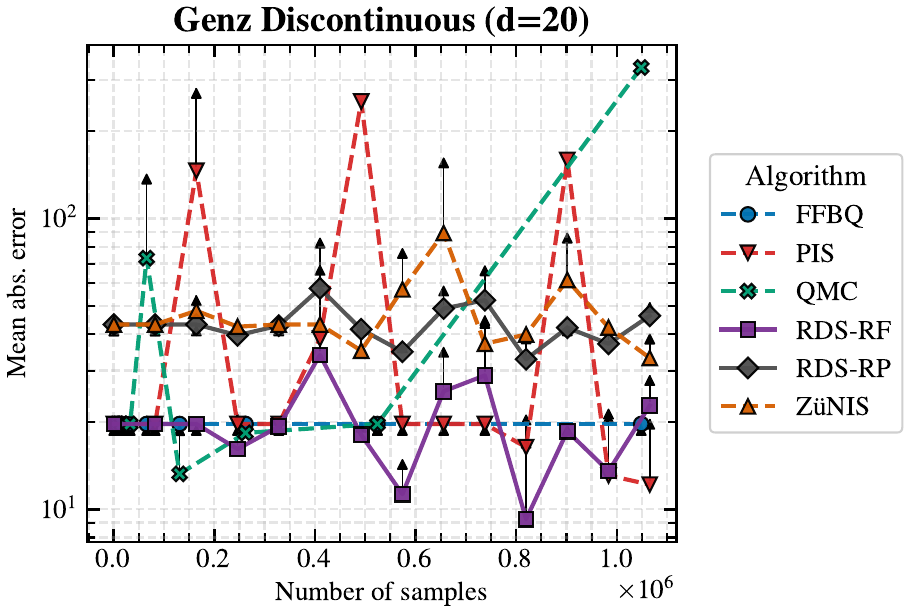}
    \hfill
    \includegraphics[width=0.32\textwidth]{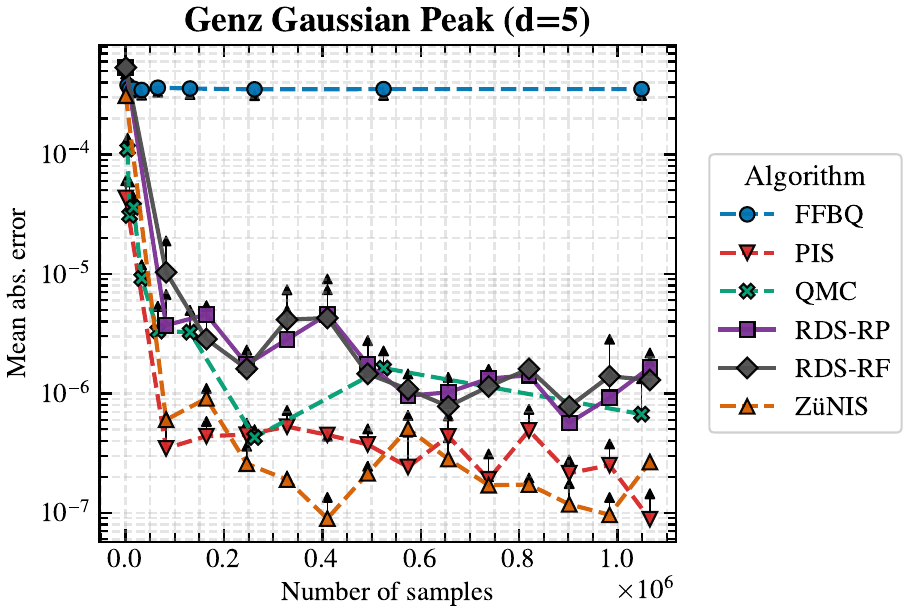}
    \hfill
    \includegraphics[width=0.32\textwidth]{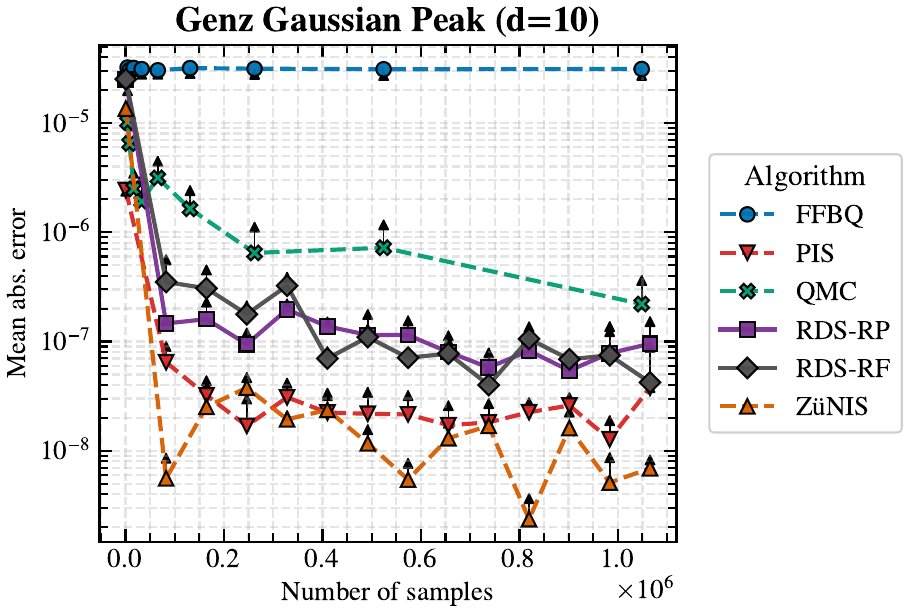}

    \vspace{0.3em}

    \includegraphics[width=0.32\textwidth]{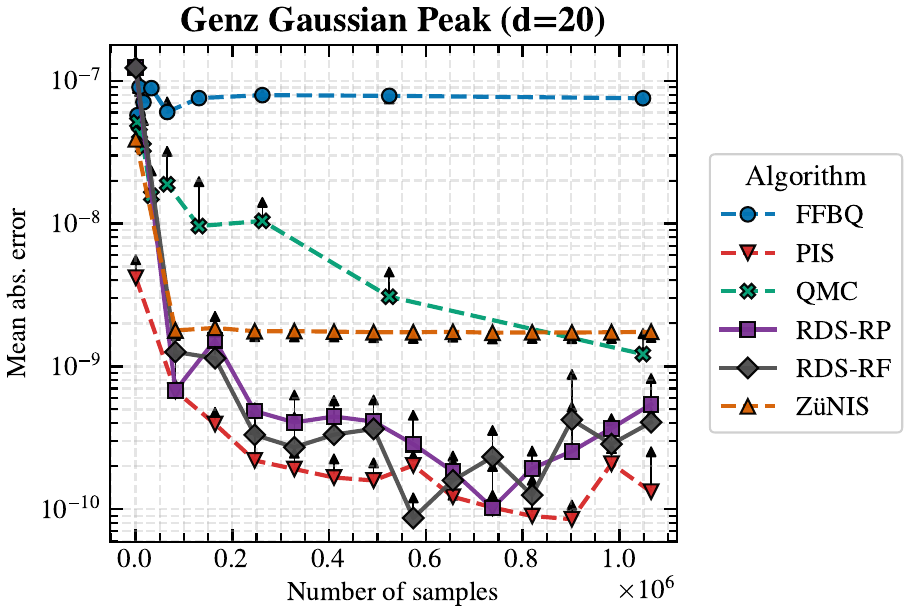}
    \hfill
    \includegraphics[width=0.32\textwidth]{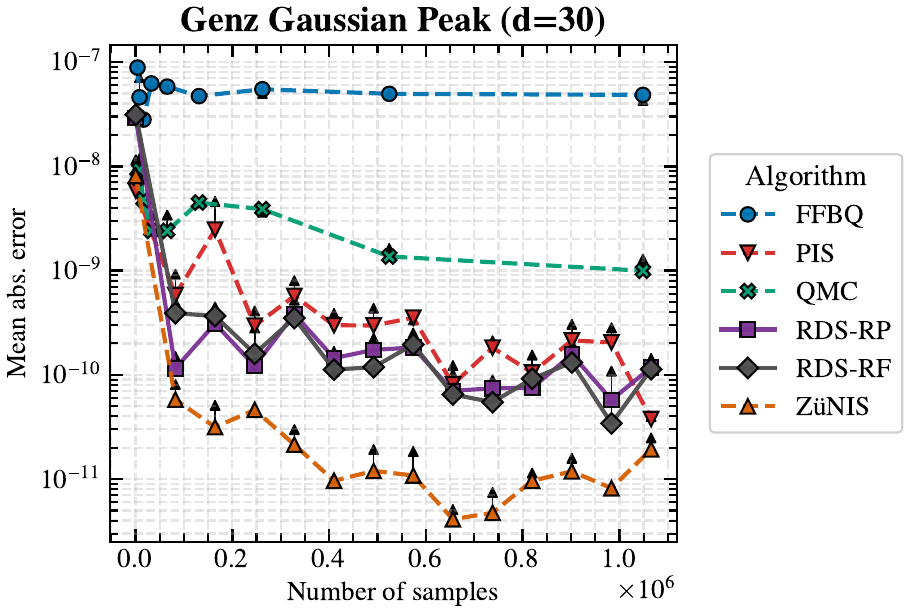}
    \hfill
    \includegraphics[width=0.32\textwidth]{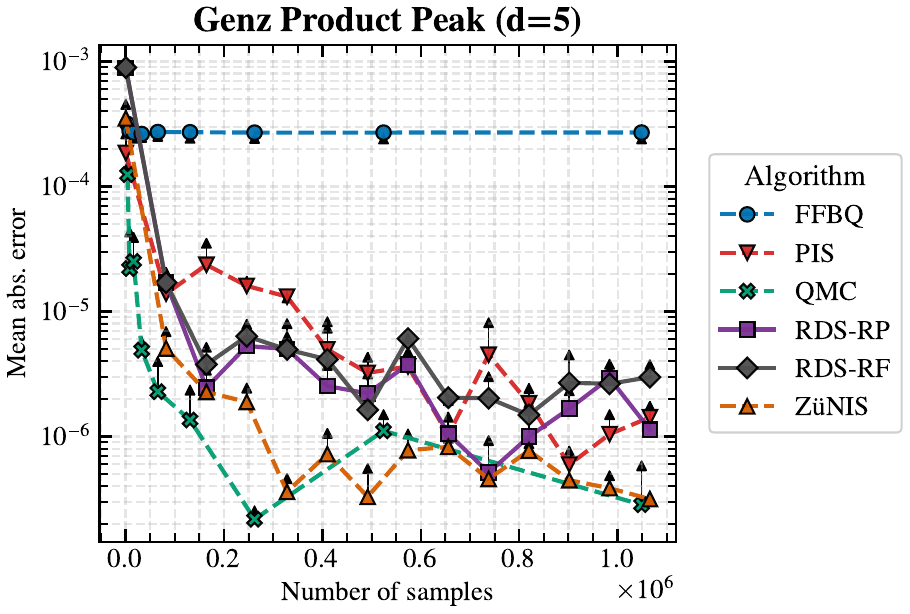}

    \vspace{0.3em}

    \includegraphics[width=0.32\textwidth]{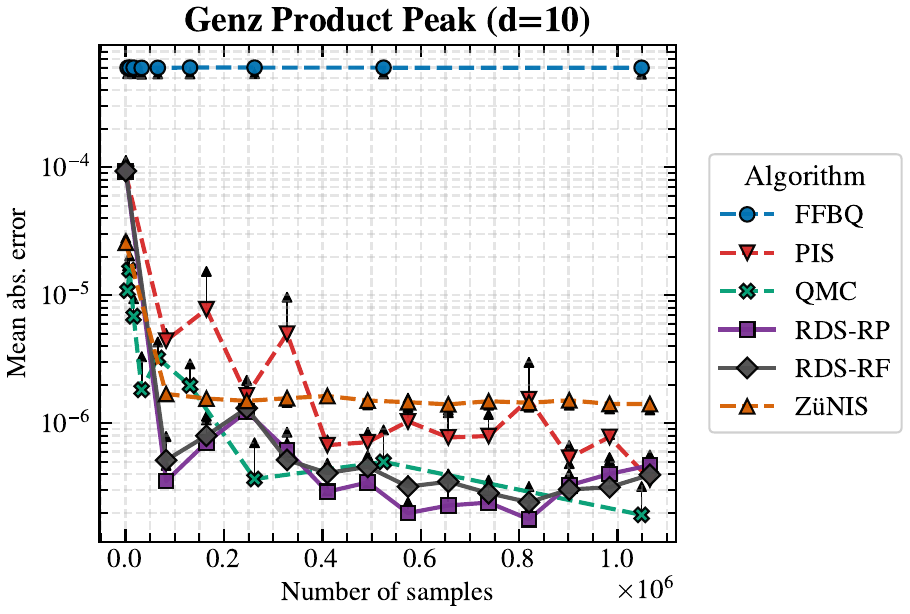}
    \hfill
    \includegraphics[width=0.32\textwidth]{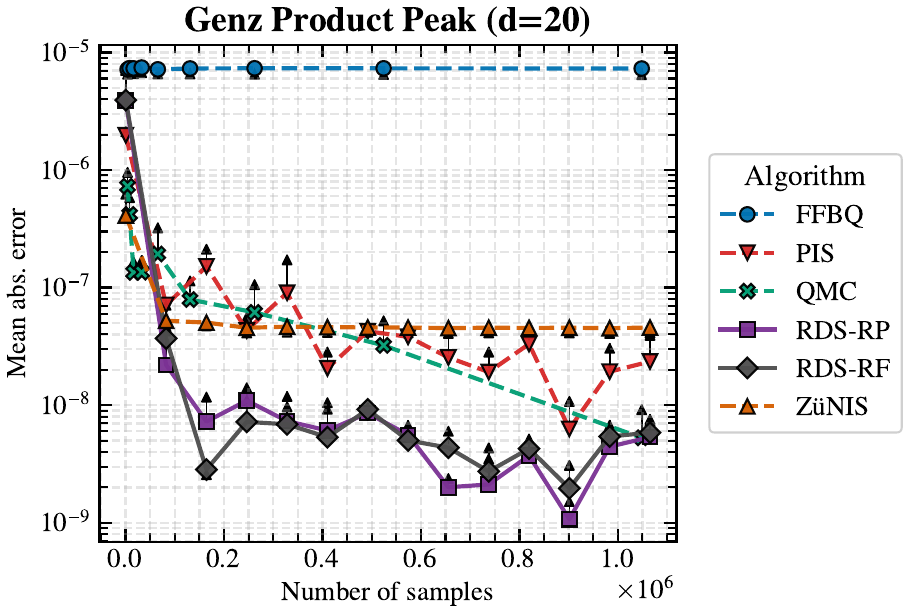}
    \hfill
    \includegraphics[width=0.32\textwidth]{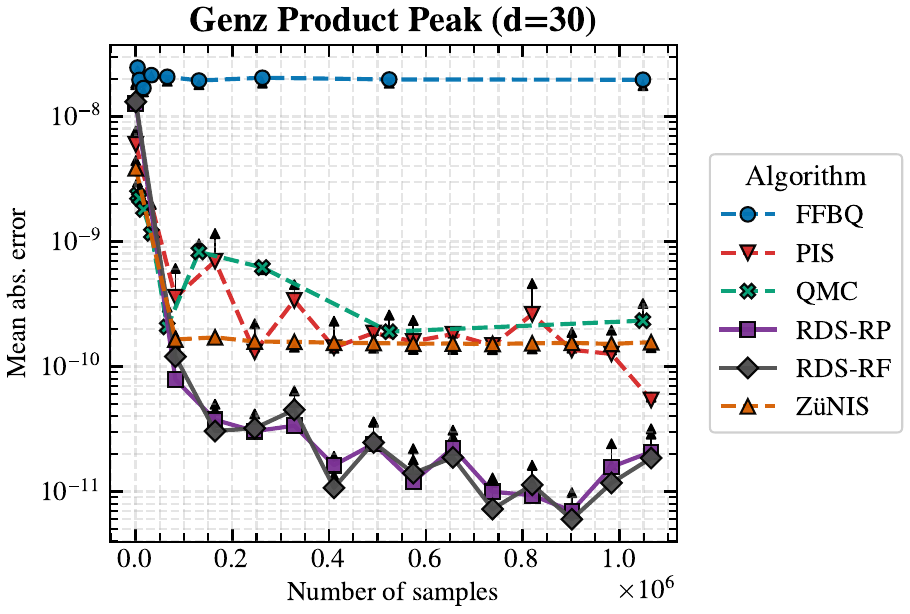}

    \vspace{0.3em}

    \includegraphics[width=0.32\textwidth]{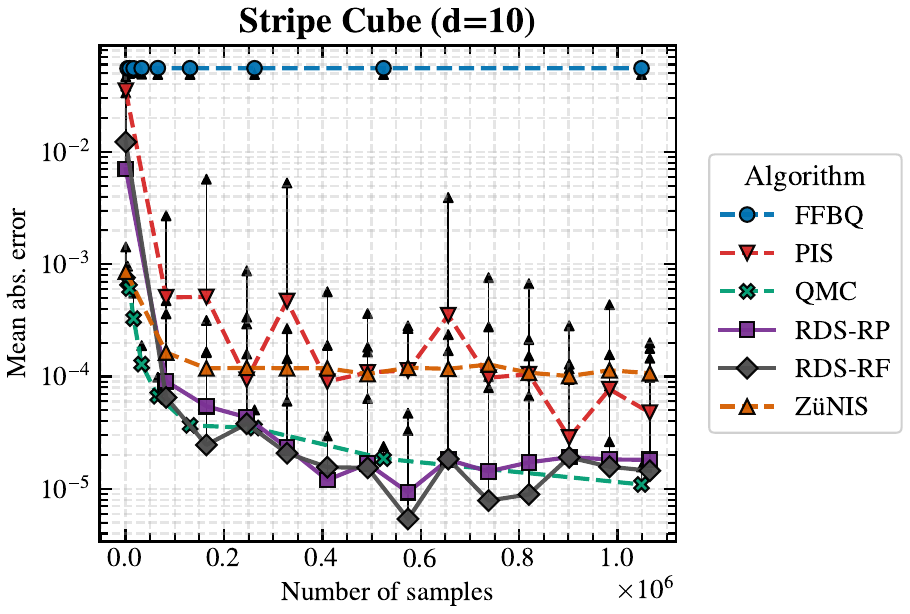}
    \hfill
    \includegraphics[width=0.32\textwidth]{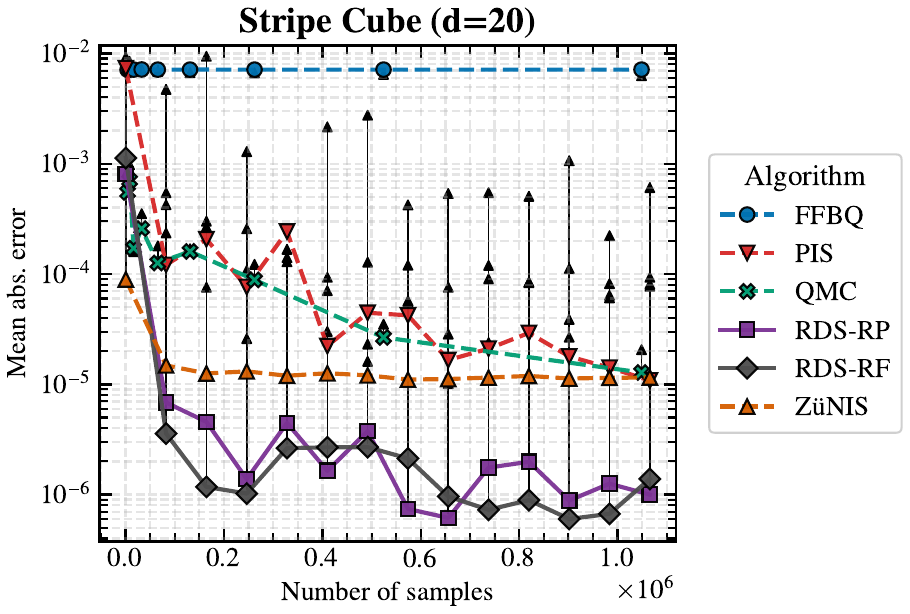}
    \hfill
    \includegraphics[width=0.32\textwidth]{figures/nfe_err/Stripe_Cube_d-50.pdf}

    \caption{
        Mean absolute error as a function of the number of samples used for estimation across the considered integration benchmarks and dimensionalities
        (continued).
    }
    \label{fig:nfe_scaling_full_2}
\end{figure*}

\begin{table*}[t]
\centering
\caption{Mean absolute error of the normalizing-constant estimate using $N=2^{20}$ samples. Values are mean $\pm$ standard deviation. Best result in each row is shown in bold.}
\label{tab:main_results}
\small
\setlength{\tabcolsep}{4pt}
\resizebox{\textwidth}{!}{%
\begin{tabular}{llrrrrrr}
\toprule
Target & $d$ & FFBQ & PIS & QMC & RDS-RF & RDS-RP & ZüNIS \\
\midrule
\multirow{4}{*}{Genz Continuous} & 5 & $6.84\mathrm{e}{-7} \pm 1.87\mathrm{e}{-7}$ & $2.31\mathrm{e}{-7} \pm 1.53\mathrm{e}{-7}$ & $\bm{1.48\mathrm{e}{-7} \pm 9.57\mathrm{e}{-8}}$ & $5.66\mathrm{e}{-7} \pm 4.46\mathrm{e}{-7}$ & $3.99\mathrm{e}{-7} \pm 3.00\mathrm{e}{-7}$ & $3.73\mathrm{e}{-6} \pm 5.46\mathrm{e}{-8}$ \\
 & 10 & $4.32\mathrm{e}{-5} \pm 3.58\mathrm{e}{-8}$ & $2.12\mathrm{e}{-8} \pm 1.15\mathrm{e}{-8}$ & $4.91\mathrm{e}{-8} \pm 4.35\mathrm{e}{-8}$ & $3.78\mathrm{e}{-8} \pm 9.14\mathrm{e}{-9}$ & $\bm{1.05\mathrm{e}{-8} \pm 8.67\mathrm{e}{-9}}$ & $1.44\mathrm{e}{-7} \pm 1.72\mathrm{e}{-9}$ \\
 & 20 & $4.86\mathrm{e}{-8} \pm 7.84\mathrm{e}{-10}$ & $3.89\mathrm{e}{-11} \pm 2.60\mathrm{e}{-11}$ & $6.96\mathrm{e}{-11} \pm 9.46\mathrm{e}{-11}$ & $3.86\mathrm{e}{-11} \pm 4.78\mathrm{e}{-11}$ & $\bm{2.28\mathrm{e}{-11} \pm 2.57\mathrm{e}{-11}}$ & $9.14\mathrm{e}{-10} \pm 2.56\mathrm{e}{-12}$ \\
 & 30 & $2.42\mathrm{e}{-9} \pm 2.00\mathrm{e}{-11}$ & $1.82\mathrm{e}{-11} \pm 5.50\mathrm{e}{-12}$ & $2.81\mathrm{e}{-11} \pm 1.62\mathrm{e}{-11}$ & $1.03\mathrm{e}{-12} \pm 6.53\mathrm{e}{-13}$ & $\bm{9.71\mathrm{e}{-13} \pm 3.39\mathrm{e}{-13}}$ & $1.46\mathrm{e}{-11} \pm 2.36\mathrm{e}{-13}$ \\
\midrule
\multirow{4}{*}{Genz Gaussian Peak} & 5 & $3.52\mathrm{e}{-4} \pm 4.76\mathrm{e}{-7}$ & $1.56\mathrm{e}{-7} \pm 1.37\mathrm{e}{-7}$ & $6.70\mathrm{e}{-7} \pm 9.57\mathrm{e}{-7}$ & $1.04\mathrm{e}{-6} \pm 1.66\mathrm{e}{-6}$ & $1.11\mathrm{e}{-6} \pm 1.59\mathrm{e}{-6}$ & $\bm{4.21\mathrm{e}{-8} \pm 1.51\mathrm{e}{-8}}$ \\
 & 10 & $3.12\mathrm{e}{-5} \pm 8.43\mathrm{e}{-8}$ & $9.71\mathrm{e}{-9} \pm 3.57\mathrm{e}{-9}$ & $2.20\mathrm{e}{-7} \pm 2.32\mathrm{e}{-7}$ & $2.35\mathrm{e}{-8} \pm 1.50\mathrm{e}{-8}$ & $1.59\mathrm{e}{-8} \pm 1.15\mathrm{e}{-8}$ & $\bm{3.25\mathrm{e}{-9} \pm 4.64\mathrm{e}{-10}}$ \\
 & 20 & $7.55\mathrm{e}{-8} \pm 6.82\mathrm{e}{-9}$ & $\bm{4.05\mathrm{e}{-11} \pm 2.96\mathrm{e}{-11}}$ & $1.22\mathrm{e}{-9} \pm 7.95\mathrm{e}{-10}$ & $1.20\mathrm{e}{-10} \pm 1.53\mathrm{e}{-10}$ & $7.44\mathrm{e}{-11} \pm 1.08\mathrm{e}{-10}$ & $1.73\mathrm{e}{-9} \pm 1.17\mathrm{e}{-11}$ \\
 & 30 & $4.84\mathrm{e}{-8} \pm 1.02\mathrm{e}{-9}$ & $4.48\mathrm{e}{-10} \pm 3.23\mathrm{e}{-10}$ & $9.95\mathrm{e}{-10} \pm 6.31\mathrm{e}{-10}$ & $6.25\mathrm{e}{-11} \pm 2.09\mathrm{e}{-11}$ & $3.14\mathrm{e}{-11} \pm 1.96\mathrm{e}{-11}$ & $\bm{6.76\mathrm{e}{-12} \pm 7.24\mathrm{e}{-12}}$ \\
\midrule
\multirow{3}{*}{Genz Discontinuous} & 5 & $4.29\mathrm{e}{0} \pm 3.19\mathrm{e}{-2}$ & $3.97\mathrm{e}{-1} \pm 1.09\mathrm{e}{-1}$ & $\bm{4.68\mathrm{e}{-3} \pm 4.32\mathrm{e}{-3}}$ & $5.32\mathrm{e}{-1} \pm 2.02\mathrm{e}{-1}$ & $1.46\mathrm{e}{1} \pm 4.71\mathrm{e}{-1}$ & $1.01\mathrm{e}{0} \pm 1.49\mathrm{e}{-2}$ \\
 & 10 & $1.08\mathrm{e}{2} \pm 1.26\mathrm{e}{0}$ & $7.34\mathrm{e}{0} \pm 9.01\mathrm{e}{0}$ & $5.05\mathrm{e}{0} \pm 2.07\mathrm{e}{0}$ & $\bm{3.99\mathrm{e}{0} \pm 4.77\mathrm{e}{0}}$ & $1.72\mathrm{e}{1} \pm 3.93\mathrm{e}{0}$ & $1.41\mathrm{e}{1} \pm 5.23\mathrm{e}{-2}$ \\
 & 20 & $1.97\mathrm{e}{1} \pm 2.34\mathrm{e}{-6}$ & $2.27\mathrm{e}{1} \pm 7.47\mathrm{e}{0}$ & $3.31\mathrm{e}{2} \pm 5.27\mathrm{e}{2}$ & $\bm{9.87\mathrm{e}{0} \pm 4.00\mathrm{e}{0}}$ & $3.34\mathrm{e}{1} \pm 4.00\mathrm{e}{0}$ & $4.02\mathrm{e}{1} \pm 1.24\mathrm{e}{0}$ \\
\midrule
\multirow{4}{*}{Genz Corner Peak} & 5 & $4.98\mathrm{e}{-5} \pm 8.43\mathrm{e}{-8}$ & $5.18\mathrm{e}{-6} \pm 2.19\mathrm{e}{-6}$ & $\bm{2.02\mathrm{e}{-8} \pm 1.85\mathrm{e}{-8}}$ & $4.10\mathrm{e}{-6} \pm 3.45\mathrm{e}{-6}$ & $2.00\mathrm{e}{-6} \pm 1.10\mathrm{e}{-6}$ & $7.77\mathrm{e}{-6} \pm 2.41\mathrm{e}{-7}$ \\
 & 10 & $2.89\mathrm{e}{-4} \pm 2.78\mathrm{e}{-7}$ & $2.66\mathrm{e}{-7} \pm 4.40\mathrm{e}{-8}$ & $\bm{1.91\mathrm{e}{-8} \pm 8.41\mathrm{e}{-9}}$ & $1.38\mathrm{e}{-7} \pm 6.37\mathrm{e}{-8}$ & $4.76\mathrm{e}{-8} \pm 6.61\mathrm{e}{-8}$ & $1.99\mathrm{e}{-7} \pm 1.38\mathrm{e}{-8}$ \\
 & 20 & $1.37\mathrm{e}{-6} \pm 2.87\mathrm{e}{-9}$ & $2.90\mathrm{e}{-9} \pm 2.36\mathrm{e}{-9}$ & $7.85\mathrm{e}{-10} \pm 6.42\mathrm{e}{-10}$ & $7.07\mathrm{e}{-10} \pm 3.33\mathrm{e}{-10}$ & $\bm{5.00\mathrm{e}{-10} \pm 1.18\mathrm{e}{-10}}$ & $1.76\mathrm{e}{-9} \pm 5.48\mathrm{e}{-11}$ \\
 & 30 & $1.50\mathrm{e}{-10} \pm 1.61\mathrm{e}{-13}$ & $1.03\mathrm{e}{-12} \pm 3.86\mathrm{e}{-13}$ & $6.20\mathrm{e}{-13} \pm 6.15\mathrm{e}{-13}$ & $1.27\mathrm{e}{-13} \pm 4.52\mathrm{e}{-14}$ & $\bm{1.10\mathrm{e}{-14} \pm 5.46\mathrm{e}{-15}}$ & $2.57\mathrm{e}{-13} \pm 2.60\mathrm{e}{-15}$ \\
\midrule
\multirow{4}{*}{Genz Product Peak} & 5 & $2.70\mathrm{e}{-4} \pm 4.66\mathrm{e}{-7}$ & $10.00\mathrm{e}{-7} \pm 3.09\mathrm{e}{-7}$ & $2.85\mathrm{e}{-7} \pm 4.24\mathrm{e}{-7}$ & $4.23\mathrm{e}{-6} \pm 8.18\mathrm{e}{-7}$ & $1.06\mathrm{e}{-6} \pm 5.65\mathrm{e}{-7}$ & $\bm{1.81\mathrm{e}{-7} \pm 4.19\mathrm{e}{-8}}$ \\
 & 10 & $5.99\mathrm{e}{-4} \pm 3.81\mathrm{e}{-7}$ & $4.10\mathrm{e}{-7} \pm 9.22\mathrm{e}{-8}$ & $\bm{1.92\mathrm{e}{-7} \pm 1.95\mathrm{e}{-7}}$ & $4.05\mathrm{e}{-7} \pm 1.49\mathrm{e}{-7}$ & $3.98\mathrm{e}{-7} \pm 1.16\mathrm{e}{-7}$ & $1.40\mathrm{e}{-6} \pm 1.64\mathrm{e}{-8}$ \\
 & 20 & $7.29\mathrm{e}{-6} \pm 4.77\mathrm{e}{-8}$ & $1.47\mathrm{e}{-8} \pm 5.30\mathrm{e}{-9}$ & $5.28\mathrm{e}{-9} \pm 5.98\mathrm{e}{-9}$ & $1.27\mathrm{e}{-9} \pm 7.77\mathrm{e}{-10}$ & $\bm{9.17\mathrm{e}{-10} \pm 7.95\mathrm{e}{-10}}$ & $4.50\mathrm{e}{-8} \pm 3.74\mathrm{e}{-10}$ \\
 & 30 & $1.97\mathrm{e}{-8} \pm 1.83\mathrm{e}{-10}$ & $2.76\mathrm{e}{-10} \pm 2.09\mathrm{e}{-10}$ & $2.33\mathrm{e}{-10} \pm 1.55\mathrm{e}{-10}$ & $1.00\mathrm{e}{-11} \pm 4.85\mathrm{e}{-12}$ & $\bm{8.56\mathrm{e}{-12} \pm 2.27\mathrm{e}{-12}}$ & $1.52\mathrm{e}{-10} \pm 1.51\mathrm{e}{-12}$ \\
\midrule
\multirow{3}{*}{GMM Cube} & 10 & $9.59\mathrm{e}{-1} \pm 1.81\mathrm{e}{-2}$ & $1.44\mathrm{e}{-3} \pm 1.29\mathrm{e}{-3}$ & $5.49\mathrm{e}{-2} \pm 7.75\mathrm{e}{-2}$ & $1.85\mathrm{e}{-3} \pm 1.43\mathrm{e}{-3}$ & $4.81\mathrm{e}{-3} \pm 1.47\mathrm{e}{-3}$ & $\bm{3.02\mathrm{e}{-4} \pm 1.68\mathrm{e}{-4}}$ \\
 & 20 & $9.44\mathrm{e}{-1} \pm 3.27\mathrm{e}{-3}$ & $\bm{2.24\mathrm{e}{-3} \pm 1.78\mathrm{e}{-3}}$ & $7.45\mathrm{e}{-1} \pm 2.32\mathrm{e}{-1}$ & $4.17\mathrm{e}{-2} \pm 4.77\mathrm{e}{-2}$ & $1.18\mathrm{e}{-2} \pm 5.76\mathrm{e}{-3}$ & $9.84\mathrm{e}{-1} \pm 0.00\mathrm{e}{0}$ \\
 & 50 & - & $1.98\mathrm{e}{-1} \pm 6.14\mathrm{e}{-3}$ & $8.65\mathrm{e}{-1} \pm 7.30\mathrm{e}{-8}$ & $3.00\mathrm{e}{-1} \pm 8.00\mathrm{e}{-2}$ & $\bm{1.68\mathrm{e}{-2} \pm 2.11\mathrm{e}{-2}}$ & $1.05\mathrm{e}{0} \pm 7.30\mathrm{e}{-8}$ \\
\midrule
\multirow{3}{*}{Stripe Cube} & 10 & $5.56\mathrm{e}{-2} \pm 1.67\mathrm{e}{-4}$ & $7.02\mathrm{e}{-5} \pm 1.86\mathrm{e}{-4}$ & $1.09\mathrm{e}{-5} \pm 8.50\mathrm{e}{-6}$ & $1.08\mathrm{e}{-5} \pm 1.99\mathrm{e}{-5}$ & $\bm{6.80\mathrm{e}{-6} \pm 6.73\mathrm{e}{-5}}$ & $1.01\mathrm{e}{-4} \pm 5.63\mathrm{e}{-5}$ \\
 & 20 & $7.14\mathrm{e}{-3} \pm 5.51\mathrm{e}{-5}$ & $2.09\mathrm{e}{-5} \pm 1.06\mathrm{e}{-3}$ & $1.28\mathrm{e}{-5} \pm 1.29\mathrm{e}{-5}$ & $\bm{1.44\mathrm{e}{-6} \pm 1.17\mathrm{e}{-4}}$ & $1.70\mathrm{e}{-6} \pm 7.08\mathrm{e}{-5}$ & $1.07\mathrm{e}{-5} \pm 6.51\mathrm{e}{-5}$ \\
 & 50 & - & $1.17\mathrm{e}{-4} \pm 4.98\mathrm{e}{-3}$ & $4.89\mathrm{e}{-5} \pm 3.66\mathrm{e}{-5}$ & $\bm{6.12\mathrm{e}{-7} \pm 5.64\mathrm{e}{-5}}$ & $8.10\mathrm{e}{-7} \pm 8.71\mathrm{e}{-5}$ & $1.53\mathrm{e}{-5} \pm 1.32\mathrm{e}{-5}$ \\
\midrule
\multirow{3}{*}{Funnel Cube} & 10 & $9.33\mathrm{e}{-1} \pm 4.96\mathrm{e}{-3}$ & $\bm{3.72\mathrm{e}{-3} \pm 2.28\mathrm{e}{-3}}$ & $1.27\mathrm{e}{-1} \pm 7.26\mathrm{e}{-2}$ & $4.76\mathrm{e}{-2} \pm 4.57\mathrm{e}{-3}$ & $1.16\mathrm{e}{-1} \pm 8.30\mathrm{e}{-2}$ & $8.55\mathrm{e}{-2} \pm 1.92\mathrm{e}{-4}$ \\
 & 20 & $8.15\mathrm{e}{-1} \pm 1.54\mathrm{e}{-1}$ & $1.79\mathrm{e}{-2} \pm 8.68\mathrm{e}{-3}$ & $6.13\mathrm{e}{-1} \pm 1.53\mathrm{e}{-1}$ & $1.99\mathrm{e}{-1} \pm 9.58\mathrm{e}{-2}$ & $8.78\mathrm{e}{-2} \pm 4.73\mathrm{e}{-2}$ & $\bm{9.32\mathrm{e}{-4} \pm 6.00\mathrm{e}{-4}}$ \\
 & 50 & - & $7.41\mathrm{e}{-1} \pm 5.28\mathrm{e}{-2}$ & $8.34\mathrm{e}{-1} \pm 2.37\mathrm{e}{-2}$ & $\bm{6.02\mathrm{e}{-1} \pm 8.33\mathrm{e}{-2}}$ & $6.50\mathrm{e}{-1} \pm 2.45\mathrm{e}{-2}$ & $6.90\mathrm{e}{-1} \pm 1.26\mathrm{e}{-2}$ \\
\midrule
\multirow{2}{*}{Asian Option} & 10 & $1.30\mathrm{e}{1} \pm 5.49\mathrm{e}{-3}$ & $1.25\mathrm{e}{-2} \pm 4.73\mathrm{e}{-3}$ & $1.27\mathrm{e}{-1} \pm 2.43\mathrm{e}{-5}$ & $\bm{1.28\mathrm{e}{-3} \pm 3.81\mathrm{e}{-4}}$ & $2.15\mathrm{e}{-2} \pm 3.60\mathrm{e}{-3}$ & $2.44\mathrm{e}{-2} \pm 1.29\mathrm{e}{-3}$ \\
 & 120 & - & $7.04\mathrm{e}{-1} \pm 5.81\mathrm{e}{-2}$ & $1.28\mathrm{e}{-1} \pm 1.93\mathrm{e}{-4}$ & $\bm{1.06\mathrm{e}{-3} \pm 1.47\mathrm{e}{-3}}$ & $2.15\mathrm{e}{-2} \pm 1.48\mathrm{e}{-3}$ & $5.66\mathrm{e}{-2} \pm 3.69\mathrm{e}{-4}$ \\
\midrule
\multirow{1}{*}{Elliptic Two-Loop} & 5 & $9.74\mathrm{e}{-3} \pm 5.71\mathrm{e}{-5}$ & $2.51\mathrm{e}{-3} \pm 2.58\mathrm{e}{-4}$ & $3.12\mathrm{e}{-3} \pm 1.35\mathrm{e}{-3}$ & $\bm{3.05\mathrm{e}{-4} \pm 2.12\mathrm{e}{-4}}$ & $7.32\mathrm{e}{-4} \pm 9.99\mathrm{e}{-5}$ & $1.02\mathrm{e}{-3} \pm 1.03\mathrm{e}{-5}$ \\
\bottomrule
\end{tabular}%
}
\end{table*}

\begin{figure*}
    \centering

    \begin{minipage}[t]{0.32\textwidth}
        \centering
        \includegraphics[width=\linewidth]{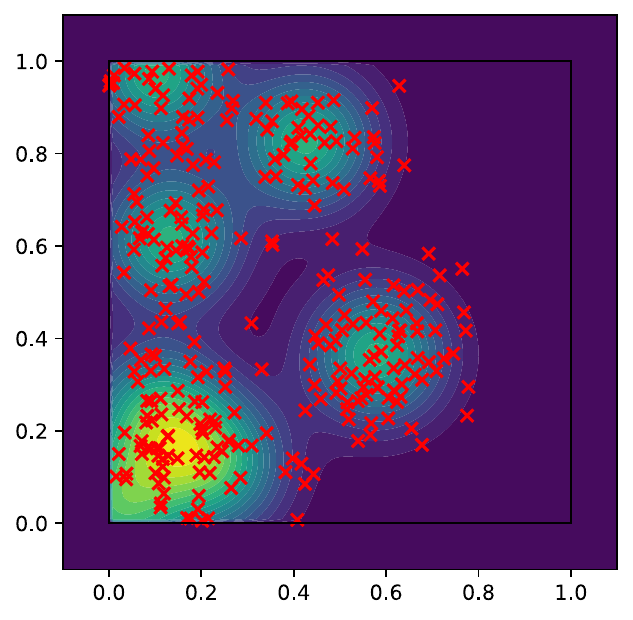}
        \vspace{0.2em}

        \small GMM Cube
    \end{minipage}
    \hfill
    \begin{minipage}[t]{0.32\textwidth}
        \centering
        \includegraphics[width=\linewidth]{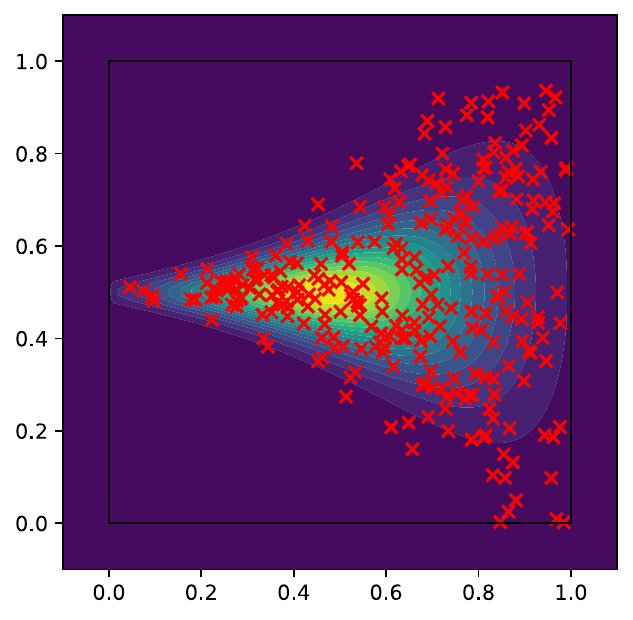}
        \vspace{0.2em}

        \small Funnel Cube
    \end{minipage}
    \hfill
    \begin{minipage}[t]{0.32\textwidth}
        \centering
        \includegraphics[width=\linewidth]{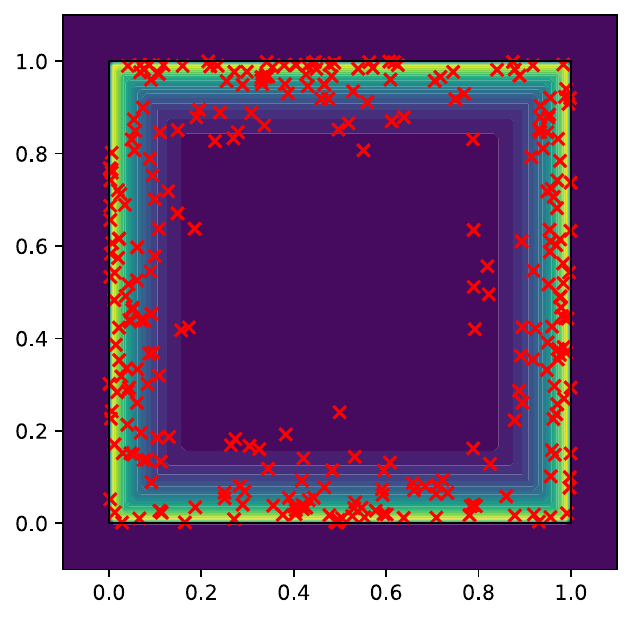}
        \vspace{0.2em}

        \small Stripe Cube
    \end{minipage}

    \caption{
        Samples generated by RDS-RP model.
    }
    \label{fig:synthetic_samples}
\end{figure*}

\end{document}